\documentclass[oneside,11pt]{article}

\usepackage[T1]{fontenc}
\usepackage[utf8]{inputenc}
\usepackage{amsmath,amssymb,amsthm}
\usepackage{graphicx}
\usepackage{tikz}
\usepackage{psfrag}
\usepackage{xcolor}
\usepackage{stmaryrd}
\usepackage{booktabs}
\usepackage[margin=1in]{geometry}
\usepackage[
  scientific-notation=true,
  separate-uncertainty=true,
  bracket-ambiguous-numbers=true,
  propagate-math-font=true
]{siunitx}
\usepackage{varwidth}
\usepackage[round,authoryear]{natbib}
\usepackage{placeins}
\usepackage{enumerate}
\usepackage{hyperref}
\usepackage[capitalize]{cleveref}

\newtheorem{thm}{Theorem}
\newtheorem{prop}{Proposition}
\newtheorem{assumption}{Assumption}
\newtheorem{lem}[prop]{Lemma}
\newtheorem{coro}[prop]{Corollary}
\theoremstyle{definition}
\newtheorem{df}{Definition}
\newtheorem{rem}{Remark}
\newtheorem*{rem*}{Remark}
\newtheorem{ex}{Example}
\theoremstyle{remark}

\providecommand{\noopsort}[1]{}

\newcommand{\cH}{\mathcal{H}}
\newcommand{\cX}{\mathcal{X}}

\newcommand{\cL}{\mathcal{L}}
\newcommand{\cD}{\mathcal{D}}
\newcommand{\cE}{\mathcal{E}}
\newcommand{\cR}{\mathcal{R}}
\newcommand{\cF}{\mathcal{F}}
\newcommand{\cG}{\mathcal{G}}
\newcommand{\cC}{\mathcal{C}}
\newcommand{\cA}{\mathcal{A}}

\newcommand{\NN}{\mathbb{N}}
\newcommand{\ZZ}{\mathbb{Z}}
\newcommand{\PP}{\mathbb{P}}  
\newcommand{\RR}{\mathbb{R}}
\newcommand{\EE}{\mathbb{E}}

\newcommand{\Kbf}{\mathbf{K}}
\newcommand{\Abf}{\mathbf{A}}
\newcommand{\Bbf}{\mathbf{B}}
\newcommand{\Cbf}{\mathbf{C}}
\newcommand{\Jbf}{\mathbf{J}}
\newcommand{\Ybf}{\mathbf{Y}}

\newcommand{\Ah}{\widehat A}
\newcommand{\uhl}{\widehat u_\lambda}
\newcommand{\ul}{u_\lambda}
\newcommand{\uh}{\widehat u}
\newcommand{\Sigmah}{\widehat \Sigma}
\newcommand{\Int}[1]{\mathrm{Int} (#1)}
\newcommand{\Input}{\cX}

\newcommand{\KD}{K^\cD}
\newcommand{\Hper}{H_{\mathrm{per}}}

\newcommand{\lb}{\llbracket}
\newcommand{\rb}{\rrbracket}

\newcommand{\function}[5]{
 #1: \begin{array}{rcl}
	  #2 & \longrightarrow & #3 \\
     #4 & \longmapsto & #5 \end{array}
    }

\DeclareMathOperator*{\argmin}{arg\,min}
\DeclareMathOperator{\ran}{ran}
\DeclareMathOperator{\Tr}{T}
\DeclareMathOperator{\op}{op}
\DeclareMathOperator{\Span}{span}
\DeclareMathOperator{\Ker}{Ker}

\DeclareMathOperator{\supp}{supp}
\DeclareMathOperator{\Unif}{Unif}

\makeatletter
\renewcommand*\env@matrix[1][\arraystretch]{%
  \edef\arraystretch{#1}%
  \hskip -\arraycolsep
  \let\@ifnextchar\new@ifnextchar
  \array{*\c@MaxMatrixCols c}}
\makeatother

\crefname{assumption}{assumption}{assumptions}

\title{PIKS: Universal Physics-Informed Kernel Methods}

\author{%
  Joachim Bona-Pellissier$^{1}$, Giacomo Meanti$^{1}$,\\
  Matteo Santacesaria$^{2}$, Lorenzo Rosasco$^{1,3}$\\[0.6em]
  \small $^1$MaLGa Center, DIBRIS, Università degli Studi di Genova,
    Genoa, Italy\\
  \small $^2$MaLGa Center, DIMA, Università degli Studi di Genova,
    Genoa, Italy\\
  \small $^3$Istituto Italiano di Tecnologia, Genoa, Italy\\[0.4em]
  \small \texttt{joachim.bona@edu.unige.it},
    \texttt{giacomo.meanti@edu.unige.it}\\
  \small \texttt{matteo.santacesaria@unige.it},
    \texttt{lorenzo.rosasco@unige.it}%
}
\date{}

\begin{document}

\maketitle

\begin{abstract}
Physics-informed machine learning incorporates physical principles—often expressed via differential operators—into data-driven models. While physics-informed neural networks (PINNs) dominate empirical applications, the complexity of neural network architectures and optimization landscapes hinders the development of a corresponding learning theory. In turn, kernel methods offer an appealing alternative with closed-form solutions and analytical tractability, yet existing guarantees primarily cover the well-specified setting where the target belongs to the native Reproducing Kernel Hilbert Space (RKHS). This imposes unrealistic regularity assumptions that physical targets often fail to satisfy. In this paper, we introduce and analyze Physics-Informed Kernel methodS (PIKS). We establish the universal consistency of PIKS for linear differential constraints, proving that for universal kernels (such as Gaussian or Matérn), the estimator asymptotically learns the target while satisfying physical constraints. We further derive finite-sample bounds under suitable source conditions. Our analysis is based on extending classical operator-theoretic analysis of kernel methods to  physics-informed machine learning. Numerical experiments demonstrate that PIKS can be competitive with PINNs and traditional finite element methods.
\end{abstract}

\medskip
\noindent\textbf{Keywords:} physics-informed machine learning, kernel methods,
universal learning, statistical learning theory, scientific machine learning

\section{Introduction} \label{Introduction-sec}

Physics-informed machine learning (PIML) combines data-driven statistical learning with additional knowledge about a problem's physical properties. 
Indeed, in many applications governed by physical principles, the target function is constrained by relations involving its derivatives or, more generally, the action of differential operators~\citep{cuomo2022scientific}. 
These relations provide further information that can be incorporated into the learning problem as functional constraints, such as prescribed gradients, partial differential equation (PDE) residuals, or conservation laws~\citep{raissi2019physics, karniadakis2021physics, rackauckas2020universal}.

There can be several benefits in leveraging structural knowledge for a learning task. 
When the amount of labeled data is limited, for instance because measurements are challenging to collect, physical knowledge about the target function can help circumvent data scarcity.
Furthermore, simply approximating the target (as in classical regression) may not be enough in certain settings where the physical consistency of a solution can be as important as its accuracy.
For example even a small violation of the conservation of energy in the force-field function may crash a molecular dynamics simulation~\citep{fu2023forces}.  
Augmenting the learning problem with structural constraints that the unknown target function must satisfy has been shown to solve this challenge effectively, dramatically improving sample efficiency, stability, and out-of-distribution robustness~\citep{cuomo2022scientific,quarteroni25review}.

Starting with the introduction of physics-informed neural networks (PINNs)~\citep{raissi2019physics}, the use of machine learning --- in particular of neural networks --- for tackling forward and inverse physics-informed problems has surged in popularity. Engineering and applied aspects of the problem have taken center stage: improved neural network architectures and new ways of framing the problems have resulted in numerous success stories across domains \citep[see e.g.][among many examples]{tocano25pikans,zhao2023pinnsformer}.
As is common with deep learning, however, theoretical understanding has lagged behind empirical progress.
In this paper we propose to tackle a basic yet fundamental question concerning the \emph{universal consistency} of learning methods for linear physics-informed problems. Given a model which is fixed a-priori, can we guarantee that for any target function, the model will learn the target as the data grows to infinity?
This question has been studied in standard supervised learning for many families of models including neural networks and kernel methods, and relies on the \emph{universality} of the corresponding function class, i.e. when a hypothesis class is rich enough to approximate any function, typically in a $L^2$ or $L^\infty$ sense. However, as we will show, standard notions of universality fall short in the physics-informed setting, as the additional structure requires a more careful analysis.

In this paper, we study the universal consistency of kernel methods for physics-informed machine learning. Kernel methods provide a tractable framework for obtaining learning guarantees \citep[see e.g.][]{smale2007learning,caponnetto2007optimal,steinwart2008support,blanchard2018optimal}.
They are also naturally suited to incorporating linear functional constraints in the learning objective, a property which has been used at least since the 1970s~\citep{kimeldorf1971some} and has been applied to Hermite-Birkhoff interpolation \citep{kimeldorf1971some}, meshless methods for PDEs \citep{fasshauer1996solving,wendland2004scattered}, or self-supervised learning with manifold constraints \citep{belkin2006manifold} among others.
We review these works extensively in \Cref{sec:related-settings}, but to the best of our knowledge, certain fundamental properties such as universal consistency have not yet been proven in the physics-informed case. Indeed, the error analyses in the literature usually hold in the \emph{well-specified} setting, in which the target function $u^*$ belongs to the same reproducing kernel Hilbert space (RKHS) $\cH$ from which the estimator is chosen. 
Instead, universal consistency is a property that is relevant in the \emph{misspecified} setting, i.e.~the case $u^*\notin \cH$. This setting is important in practice: for example a Sobolev space $H^s(\Omega)$ is an RKHS only when $s>d/2$ but the target function for real-world problems in high dimensions may not be smooth enough (i.e., it may belong to a Sobolev space with exponent $s \leq d/2$).
Analyses in the misspecified setting allow us to obtain learning guarantees even when the target is not smooth enough to belong to any RKHS, or when perfect knowledge about the physical process is not available, leading to an inaccurate choice of kernel (and thus of space $\cH$).

We focus on a specific kernel-based algorithm for physics-informed machine learning which we refer to as PIKS (for \emph{Physics Informed Kernel methodS}). In \Cref{sec:prob} we introduce the physics-informed learning problem, which amounts to regression with an additional structural constraint.
Like in PINNs, the constraint is enforced by adding a physics-informed term to the training loss.
Throughout the paper we emphasize that our general formalism can apply to diverse physics-informed settings.
In particular, we show in \Cref{sec:PDE-setting} that it covers the setting of pure PDE solving, where measurements of the target values are only available on the boundary of the domain (while the PDE provides information inside the domain). 
But in general, measurements of the target can be available anywhere in the domain, as PIKS covers any problem of learning with linear constraints. 
In \Cref{sec:PIKS}, we precisely formalize the PIKS algorithm and in \Cref{sec:theory} proceed with its theoretical analysis. In the latter, after listing the working assumptions, we state our main result, \Cref{main-theorem}, which shows consistency of PIKS. While \Cref{main-theorem} is formulated for general linear operators, in the following subsection, we apply this result in the context of differential operators on Sobolev spaces, which is very common in applications. We then focus more specifically on the setting of PDE solving (already described in \Cref{sec:PDE-setting}), where measurement data is only available on the boundary of the domain. Finally, we show that under additional assumptions, we can go further than the consistency, and obtain finite-sample bounds, as stated in \Cref{thm:finite-sample-rates-main}. 
In \Cref{sec:experiments} we study the empirical performance of PIKS in different settings, and show that it compares well to other methods in the literature. In summary, the main contributions of this work are:

\begin{enumerate}
    \item We establish a set of general assumptions under which the PIKS estimator is a \emph{universal learner}: it is able to both learn the target and approximate the physical constraint asymptotically, even in the \emph{misspecified setting} in which the target $u^*$ does not belong to the native RKHS of the estimator. We analyze in more detail the typical case of differential operators on Sobolev spaces.
    \item We make this general result concrete in two specific settings: one in which target measurements are available on the domain boundary (typical of PDE problems) and one in which they are available inside the domain. In the case of elliptic differential operators we show how to use the PDE's regularity with our main theorem to obtain stronger convergence results. 
    \item We provide convergence rates under stronger assumptions on the target, which take the form of a source condition defined as powers of an integral operator. We illustrate the rates with an example of the Laplacian on periodic functions. 
    \item We demonstrate empirically that PIKS is competitive with recent kernel-based methods as well as physics-informed neural approaches on representative PDE solving tasks. We further study in practice the impact of model misspecification on the performance of the algorithm. When the target function is rougher than the base kernel, the convergence is slower, but remains competitive with classical FEM solvers. 
\end{enumerate}

\section{Physics-informed statistical learning}\label{sec:prob}

We consider learning problems with an additional constraint in the form of a linear operator which encodes the problem's physics.
More precisely, let $u^*\in\cF$ where $\cF$ is a space of maps from $\cX \subseteq \RR^d$ to $\RR$. Let $\cG$ be another space of maps from $\cX$ to $\RR$ and let $\cD : \cF \to \cG$ be a linear operator.
Consider input data random variables $X\sim \rho_X$ and $Z\sim\rho_Z$ in $\cX$ as well as noise random variables $\epsilon$ and $\eta$ (centered and independent from $X$ and $Z$ respectively), and let
\begin{equation}\label{eq:data-generation}
    Y = u^*(X) + \epsilon, \qquad  W = \cD u^* (Z) + \eta.
\end{equation}
Two datasets $(x_i, y_i)_{i=1}^n$ and $(z_j, w_j)_{j=1}^m$ can be obtained by sampling i.i.d.~copies of $(X, Y)$ and of $(Z, W)$ respectively.
The objective is to minimize the expected risk
\begin{equation}\label{eq:exp-rsk}
    \min_{u\in\cF} \cR(u), \quad \cR(u) = 
    \mathbb{E}\big[(u(X) - Y)^2\big] + \mathbb{E}\big[(\cD u(Z) - W)^2\big]
\end{equation}
using the $n + m$ training samples. Note that the target function $u^*$ is a minimizer of $\cR$, and the problem can equivalently be viewed as learning $u^*$ satisfying \eqref{eq:data-generation}. We will see later how this point of view is useful when considering PDEs.

The PIML approach approximates the expectations in~\eqref{eq:exp-rsk} with empirical estimates and restricts the hypothesis space to a smaller $\cH\subset \cF$ resulting in the following empirical risk minimization problem
\begin{equation}\label{eq:erm}
    \min_{u\in\cH} \widehat{\cR}(u), \quad \widehat{\cR}(u) = \frac{1}{n}\sum_{i=1}^n (u(x_i) - y_i)^2 + \frac{1}{m}\sum_{j=1}^m (\cD u(z_j) - w_j)^2.
\end{equation}

In the following, we focus in particular on the setting where $\cD$ is a known linear differential operator (e.g., divergence, Laplacian). The function space $\cF$ is assumed to be a Sobolev space $H^s(\cX)$ with $s>0$ such that $\cD$ is well defined from $H^s(\cX)$ to $L^2(\cX)$. 
Note that the probability distributions of $X$ and $Z$ may not have the same support and, more generally, we need not observe $u^*$ and $\cD u^*$ at the same points. This allows us to connect the framework directly to PDE solving.

\subsection{Solving Partial Differential Equations with Machine Learning}\label{sec:PDE-setting}
PDEs are among the most important problem classes addressed by physics-informed learning, with applications ranging from climate modeling \citep{kashinath2021physics} to the cardiovascular system \citep{kissas2020machine} to permanent magnets \citep{kovacs2022conditional}. In particular, boundary value problems are ubiquitous in applied mathematics: they combine differential equations governing the interior of a domain with conditions imposed on its boundary.
This setting fits in our framework by allowing the laws of $X$ and $Z$ to be concentrated on different subsets: $X$ is sampled from the boundary and $Z$ is sampled from the interior.
More precisely, consider the goal of estimating the solution $u^*$ of a boundary value problem on a bounded Lipschitz domain $\Omega \subset \RR^d$
\begin{equation}\label{eq:PDE}
    \begin{cases}
    \cD u^*(x) = q(x) & x \in \Omega \\
    u^*(x) = h(x) & x \in \partial \Omega,
    \end{cases}
\end{equation}
where we have access to $\cD, q$ and $h$.
Then, let \(\cX=\overline{\Omega}\) and consider 
\(\rho_X,\rho_Z\) such that $\rho_X(\partial\Omega)=1$ and $\rho_Z(\Omega)=1$.
Thus, samples of \(X\) are boundary points whereas samples of \(Z\) are interior
collocation points. 
We can then proceed as in the previous section, possibly considering the noiseless case ($\epsilon, \eta=0$) if we have perfect knowledge of $q$ and $h$.
In this view, minimizing~\eqref{eq:erm} can be interpreted as numerically solving the PDE~\eqref{eq:PDE} using a random discretization.
This choice of sampling is standard in machine learning theory and can be contrasted to classic deterministic discretizations in PDE,  such as 
meshes or collocation points \citep[see e.g.][and references therein]{wendland2004scattered}.

A simple example of a boundary-value problem is the Poisson equation on a domain $\Omega$:
\begin{equation}
    \begin{cases}
        \Delta u(x) = q(x) & x\in\Omega \\
        u(x) = h(x) & x \in \partial\Omega.
    \end{cases}
\end{equation}
If $\Omega$, $q$ and $h$ are regular enough, we can consider $\cF = H^2(\Omega)$ and take $\cD=\Delta$, where $\Delta : H^2(\Omega)\to L^2(\Omega)$ is the Laplace operator \citep{evans2010partial}.

Before describing our approach, we provide an in-depth overview of related problems and results in the literature.

\subsection{Related Settings}\label{sec:related-settings}

Here we give an overview of the settings which are related to the one introduced above and have appeared in different contexts.

\paragraph{Scientific Machine Learning and PINNs.} 
Machine learning methods for solving PDEs have attracted a lot of interest under the umbrella terms physics-informed machine learning or scientific machine learning~\citep{rackauckas2020universal}, especially using methods such as physics-informed neural networks (PINNs)~\citep{raissi2017physics} and Sobolev training~\citep{czarnecki2017sobolev}.
PINNs have become a common approach for both forward and inverse PDE problems from fluid dynamics~\citep{cai2021physics}, to geophysics~\citep{rasht2022physics}, and medical sciences \citep{sahli2020physics}.
In our setting, PINNs correspond to considering~\cref{eq:erm} while  choosing neural networks as the hypothesis space $\cH$.
Recent work has made significant progress in establishing learning-theoretic guarantees and error estimates for PINNs \citep{shin2020convergence,mishra2023estimates,de2024error,zeinhofer2025unified,doumeche25convergence}. 
However, fully characterizing the behavior of neural network based estimators remains challenging due to the highly non-convex nature of the underlying optimization problem.
Such non-convexity is non trivial and can lead to failures in practice, as has been well observed in the literature \citep{krishnapriyan2021characterizing,wang2021understanding,rathore2024challenges}. 

\paragraph{Operator Learning.} A growing body of work focuses on operator learning, where the objective is to approximate infinite-dimensional mappings between function spaces (e.g., mapping parametric PDE coefficients). Examples include Deep Operator Networks (DeepONets)~\citep{lu2021learning} and Fourier Neural Operators (FNOs)~\citep{li2021fourier}. However, these operator-based methods require massive offline datasets of pre-computed, high-fidelity PDE solutions to learn the underlying physical mapping. In contrast, PIKS addresses the single-instance problem (analogously to standard PINNs) where the goal is to infer the solution of a specific PDE using the governing equations, boundary conditions, and sparse empirical measurements, and thus requires only small datasets.

\paragraph{Gaussian processes and kernel methods.}
In parallel with the development of PINNs, Gaussian Process (GP) based methods for 
scientific machine learning have also been proposed \citep{owhadi2015bayesian,raissi2017gps,raissi2018numerical}. 
These approaches estimate solutions, and potentially their uncertainty, by minimizing the marginal log-likelihood (instead of the empirical risk~\cref{eq:erm}).
Some extensions to non-linear PDEs have also  been proposed using the Gauss-Newton algorithm~\citep{chen2021solving} although convergence can only be locally guaranteed~\citep{batlle2025error}. 
Recently \citet{baptista_solving_2025} used the same algorithm to solve PDEs in the weak form with a non-smooth forcing term. \citet{doumeche2024physics} showed that empirical risk minimization with linear differential constraints is equivalent to defining a new, physics-informed kernel and then performing kernel ridge regression, yielding theoretical evidence that physical constraints can improve convergence rates, characterized through the effective dimension of the new kernel. Such a kernel is defined by a continuous constraint (instead of pointwise evaluations as is the case in the present work) and is a priori not available in closed form, which is why subsequent works focus on practical approximations~\citep{doumeche2025physics} and fast implementations~\citep{doumeche2025fast}.

\paragraph{Hermite-Birkhoff interpolation.}
Despite the recent revival under the hat of \emph{physics-informed learning}, 
the problem of learning a function from its values and its derivatives can be traced back to \citet{hermite1878formule} and \citet{birkhoff1906general}. 
It was formulated as a splines problem in the 1D case by \citet{kimeldorf1971some}, who considered the problem $\min_{u\in\cH} \sum_{j=1}^m(\cL_j u - w_j)^2$ with $\cL_j$ a linear functional defined on an RKHS (thus including the setting of differential operators $\cL_j u = \cD u(z_j)$). 
It was also considered in a multivariate setting using Radial Basis Functions (RBFs) \citep{zongmin1992hermite}.
More recently \citet{shi2010hermite} derived learning rates for the case of $\cL_j$ the gradient operator, showing that a weaker source condition is sufficient compared to standard regression.

\paragraph{Meshless methods for PDEs.}
Such methods have been widely used to approximate PDE solutions in a meshless way (i.e. without the strict mesh requirements of finite elements solvers). \citet{kansa1990multiquadrics,fasshauer1996solving} among others studied how to combine pointwise evaluations of differential constraints and boundary conditions using kernel bases.
This forms part of the more general literature on scattered data approximation~\citep{wendland2004scattered}, which is mainly concerned with the interpolation problem: the constraints on the data are to be enforced exactly instead of weakly as in \cref{eq:erm}.
The interpolant is decomposed into a basis generated by a type of kernel known as Radial Basis Function (RBF). 
On the theoretical side, the analysis typically considers settings where the hypothesis space $\cH$ (which is an RKHS) is the same space in which the true PDE solution lives \citep{franke1998convergence,franke1998solving}. Convergence to this solution is proved via fill-distance techniques~\citep{wendland2004scattered}.

\paragraph{Regularization with differential operators.}
When $W=0$ (or equivalently $q=0$ in \cref{eq:PDE}), the second term of \eqref{eq:exp-rsk} can be interpreted as a regularizer. Indeed, regularizers of the form $\int (\cD u(z))^2 dz$ have been explored in the spline smoothing literature~\citep{wahba1990spline}, in inverse problems \citep{hanke1992regularization,engl1996regularization,arridge2019solving} as well as in machine learning~\citep{poggio90splines,smola98splines}. When $\cD$ is simple, the estimator can be computed in closed form using Green's functions. The non-homogeneous case ($q \neq 0$) with more general PDEs is more complex and was explored more recently as spatial regression with PDE regularization~\citep{azzimonti2015blood,sangalli2021spatial,arnone2022some}. In this case the closed form is usually not available, but the estimator can be computed using finite element methods.

\paragraph{Manifold Regularization.}
In semi-supervised learning, the differential regularizer is unknown and must be approximated from available data~\citep{zhu2003semi,zhou2005regularization}. This leads to \emph{manifold regularization} approaches. For instance, \citet{belkin2006manifold} use a gradient-based regularizer of the form $\| \nabla_M u\|^2$ to penalize deviations of $u$ from the data manifold $M$, which itself is estimated from data using the graph Laplacian. See also \citet{slepcev2019analysis,cabannes2021overcoming} for recent results in this direction.

\paragraph{Misspecified kernel methods.}
The approximation capabilities of kernel methods depend on the size of the RKHS $\cH$, which itself depends on the kernel choice. Kernels for which $\cH$ is dense in the space $\cF$ in which the target lives (typically an $L^p$ space) are called universal. Universality of many common kernels has been well studied \citep{micchelli2006universal,sriperumbudur2011universality,simon2018kernel}, and for kernel ridge regression, universality implies asymptotic consistency in the misspecified setting \citep{de2005learning}.
In the case of Sobolev RKHS, finer results exist in the misspecified setting, with convergence rates that depend on the exact smoothness of the target, characterized with a source condition \citep{steinwart2009optimal,lin2020optimal,fischer2020sobolev,zhang2023optimality}. Convergence results have also been established for Gaussian processes in misspecified settings \citep{wynne2021convergence,wang2022gaussian}. In the physics-informed machine learning context, the study of the misspecified setting is much more limited. Building on the work of \citet{narcowich2006escape} who analyze the error of RBF interpolation in the misspecified setting, \citet{schrader2012extended} apply these results to a PDE context, and establish error estimates when the solution is less smooth than the considered hypothesis space. Similarly to all the works that belong to the RBF interpolation literature, the results are established for a deterministic and noiseless set of input points, and the estimates are expressed in terms of fill-distance.
Furthermore, the target function can be outside of the considered RKHS, but must still belong to a Sobolev RKHS, i.e.~a space $H^s(\Omega)$ with $s > d/2$. 
In contrast, the present work considers random design with noise, and the space $\cF$ in which the target lives need not be an RKHS. 
More recently, \citet{baptista_solving_2025} studied a kernel-based collocation method for PDEs with rough solutions, which could be outside the RKHS, but the method considers the PDE in a weak form, and relies on test functions, which is a different setting than the present one. 

Provided with the above discussion we next describe the approach we consider and analyze.

\section{Physics-informed kernel methods (PIKS)}\label{sec:PIKS}
In this section, we introduce the PIKS estimator, specify some initial assumptions on $\cD$, and  introduce the regularized problem and  the closed-form expression for PIKS.

We consider the hypothesis space $\cH$ to be a reproducing kernel Hilbert space (RKHS) of functions from $\cX$ to $\RR$. An RKHS is defined by a kernel function $K: \cX\times\cX\to\RR$ such that for all $x\in\cX$, the function $K_x: y\mapsto K(x, y)$ belongs to $\cH$ and satisfies the reproducing property
\begin{equation}\label{eq:reproducing-property}
    \forall u \in \cH,\quad \langle u , K_x \rangle_\cH = u(x).
\end{equation}
For all the computations in this section, we need the following assumption.
\begin{assumption}\label{ass:diff-eval} For all $x \in \cX$, the map $\cH \to \RR, u \mapsto\cD u(x)$
 is a bounded functional.
\end{assumption}
\noindent Under \Cref{ass:diff-eval}, the Riesz representation theorem guarantees that for all $x\in\cX$ there exists a representer $\KD_x\in\cH$ such that
\begin{equation}\label{eq:diff-reproducing-property} \forall u \in \cH, \quad \langle u , \KD_x \rangle_\cH \ = \ \cD u(x).\end{equation}
\Cref{ass:diff-eval} is formulated in a general setting where $\cD$ can be any linear operator. We can however make it more concrete in the case of differential operators, for which the reproducing property is well studied \citep{zhou2008derivative}. In such a case, the smoothness of the kernel is enough to guarantee that \Cref{ass:diff-eval} holds, as shown by the following result.
\begin{lem}\label{lem:smooth-kernel-implies-bounded-diff-eval}
Assume that $\cX$ is compact and satisfies $\overline{\Int{\cX}} = \cX$, and consider a linear differential operator of the form 
\begin{equation}\label{eq:diff-op-order-s}
    \cD = \sum_{|\alpha| \leq s} c_\alpha \partial^\alpha,\end{equation}
 for some integer $s\geq1$ and where the $c_\alpha \colon \cX \rightarrow \RR$ are continuous coefficient functions.
    If $K \in C^{2s}(\cX \times \cX)$, then \Cref{ass:diff-eval} holds. Furthermore the representer $\KD_x$ is available in closed form:
\begin{equation}\label{eq:PDE-representer-expression}
    \KD_x: y \mapsto \mathcal{D}_1 K(x,y). 
\end{equation}
\end{lem}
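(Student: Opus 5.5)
The plan is to invoke the classical derivative reproducing property for smooth kernels, in the form of \citet{zhou2008derivative}. For every multi-index $\alpha$ with $|\alpha|\le s$ and every $x\in\cX$, that result should give three facts. First, the partial derivative $(\partial^\alpha_1 K)(x,\cdot)$ belongs to $\cH$. Second, every $u\in\cH$ lies in $C^s(\cX)$. Third,
\[
\partial^\alpha u(x) = \langle u, (\partial^\alpha_1 K)(x,\cdot)\rangle_\cH \quad \text{for all } u\in\cH .
\]
Here derivatives at boundary points are understood as continuous extensions from $\Int{\cX}$. This is why the hypothesis $\overline{\Int{\cX}}=\cX$ is needed: it makes derivatives on $\cX$ well defined, and compactness lets us use uniform continuity of $\partial^{\alpha}_1\partial^{\alpha}_2 K$ on $\cX\times\cX$.

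I would sketch the argument behind this rather than just cite it. The induction is on $|\alpha|$. Fix an interior point $x$ and a coordinate direction $e_i$, and consider the difference quotients
\[
\frac{(\partial^{\beta}_1 K)(x+te_i,\cdot)-(\partial^{\beta}_1 K)(x,\cdot)}{t}\in\cH, \qquad |\beta|<|\alpha| .
\]
By the reproducing property, their $\cH$-inner products with each other are finite differences of $\partial^\beta_1\partial^\beta_2 K$ in both variables. Since $K\in C^{2s}$, these inner products converge as $t\to0$, which shows that the quotients are Cauchy in $\cH$. Their limit is $(\partial^{\beta+e_i}_1 K)(x,\cdot)$, because convergence in $\cH$ implies pointwise convergence via the reproducing property. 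Pairing the difference quotient with $u$ then yields the derivative formula at interior points. Boundary points are handled by a density and continuity argument. The map $x\mapsto(\partial^\alpha_1K)(x,\cdot)$ is continuous into $\cH$, since its squared increments are expressed through the continuous function $\partial_1^\alpha\partial_2^\alpha K$. Consequently the formula extends from $\Int{\cX}$ to all of $\cX$.

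Granting this, the lemma follows by linearity. For $u\in\cH$ and $x\in\cX$,
\[
\cD u(x)=\sum_{|\alpha|\le s}c_\alpha(x)\,\partial^\alpha u(x)=\Big\langle u,\ \sum_{|\alpha|\le s}c_\alpha(x)(\partial^\alpha_1K)(x,\cdot)\Big\rangle_\cH .
\]
The map $u\mapsto\cD u(x)$ is therefore bounded, with norm at most $\sum_\alpha|c_\alpha(x)|\,\big(\partial^\alpha_1\partial^\alpha_2K(x,x)\big)^{1/2}$. This gives \Cref{ass:diff-eval}. By uniqueness of the Riesz representer, $\KD_x=\sum_\alpha c_\alpha(x)(\partial_1^\alpha K)(x,\cdot)=\cD_1K(x,\cdot)$, which is \eqref{eq:PDE-representer-expression}.

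The main obstacle is the boundary treatment in the first step. The Cauchy argument at interior points is routine. At boundary points, however, one has to check that the continuous extension of $\partial^\alpha u$ agrees with the limit of $\langle u,(\partial^\alpha_1K)(x_k,\cdot)\rangle_\cH$ along interior points $x_k\to x$. Continuity of $x\mapsto(\partial^\alpha_1K)(x,\cdot)$ in $\cH$ gives exactly this, but only if $\cX$ is regular enough that $C^{2s}(\cX\times\cX)$ is meaningful. I would simply adopt the same convention as \citet{zhou2008derivative} for that.
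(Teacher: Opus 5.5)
Your proposal is correct and follows the same route as the paper, which also invokes \citet{zhou2008derivative} (Theorem 1) for the derivative reproducing property $\langle u, \partial^\alpha_1 K(x,\cdot)\rangle_\cH = \partial^\alpha u(x)$ and concludes by linearity with $\KD_x = \sum_{|\alpha|\le s} c_\alpha(x)\,\partial^\alpha_1 K(x,\cdot)$. Your sketch of Zhou's argument is a sound addition that the paper leaves to the citation: Cauchy difference quotients at interior points, then extension to the boundary through continuity of $x\mapsto \partial^\alpha_1K(x,\cdot)$ in $\cH$. The same holds for your explicit bound on the norm of the functional.
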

\Cref{lem:smooth-kernel-implies-bounded-diff-eval} is a simple consequence of a result from \citet[Theorem 1]{zhou2008derivative}, and is proved in the appendix (see \Cref{lem:smooth-kernel-implies-bounded-diff-eval-apdx} in \Cref{sec:rkhs-basics}). Then to verify \Cref{ass:diff-eval}, we simply need a smooth-enough kernel. For example the Gaussian kernel $K(x,y) = \exp(- \frac{\|x - y\|_2^2}{2 \sigma^2})$ is $C^{\infty}$ over $\mathbb{R}^d \times \mathbb{R}^d$, while the Matérn kernel $K_\nu(x,y)$ of order $\nu>0$ is $C^{2s}$ over $\mathbb{R}^d \times \mathbb{R}^d$ as soon as $\nu > s$. \Cref{lem:smooth-kernel-implies-bounded-diff-eval} also provides the expression of the representer, which is key for computing the estimator in closed form. 

If the  kernel $K$ satisfies \Cref{ass:diff-eval}, we define the PIKS estimator $\widehat u_\lambda$ as the minimizer of the regularized physics-informed empirical risk:
\begin{equation}\label{eq:PIKS-estimator}
\widehat u_\lambda : = \argmin_{u \in \cH} \widehat R_\lambda(u), \qquad \widehat R_\lambda(u) =\frac{1}{n}\sum_{i=1}^n ( u (x_i) - y_i)^2 + \frac{1}{m}\sum_{j=1}^m (\cD u(z_j) - w_j)^2 + \lambda \|u \|^2_\cH.
\end{equation}
For any $\lambda > 0$, the regularized loss \eqref{eq:PIKS-estimator} is strongly convex, so the PIKS estimator is well-defined and unique, and one can show (see \Cref{prop:closed-from-expression-apdx} in \Cref{sec:PIML-kernels-apdx}) that it can be decomposed as 
\begin{equation}\label{closed-form-expression-eq}
    \uhl = \sum_{i=1}^n \alpha_i K_{x_i} + \sum_{j=1}^m \beta_j \KD_{z_j},
\end{equation}
where $(\alpha, \beta) = \left( \Kbf + \lambda \Jbf  \right)^{-1} \Ybf \ \in \RR^{n+m}$. In this formulation $\Kbf$ is a $(n+m)\times (n+m)$ block kernel matrix, $\Jbf$ a diagonal regularization matrix, and $\Ybf$ the regressed variables:
\begin{equation}\label{block-kernel-matrix}
    \Kbf = 
    \begin{pmatrix}[2]
        \Abf & \Cbf \\ 
        \Cbf^\top & \Bbf 
    \end{pmatrix}
    \qquad
    \Jbf _{i,i} = \begin{cases} 
    n & \text{if } 1 \leq  i \leq n \\ 
    m & \text{if } n+1 \leq i \leq n+m
    \end{cases}
    \qquad
    \Ybf = \begin{pmatrix}
        y_1 \\
        \vdots \\
        y_n \\
        w_1 \\
        \vdots \\
        w_m
    \end{pmatrix}.
\end{equation}
The blocks of the kernel matrix are
\begin{align*}
& \Abf \in \RR^{n \times n}, & \quad & \Abf_{i,i'}  = \langle K_{x_i} , K_{x_{i'}}\rangle_\cH, & \quad & \forall i,i' \in \lb 1 , n \rb;\\
& \Bbf \in \RR^{m \times m}, & \quad & \Bbf_{j,j'} = \langle K^\cD_{z_j}, K^\cD_{z_{j'}} \rangle_\cH, & \quad & \forall j,j' \in \lb 1 , m \rb; \\
& \Cbf  \in \RR^{n \times m}, & \quad & \Cbf_{i,j'} = \langle K_{x_i} , K^\cD_{z_{j'}} \rangle_\cH , & \quad & \forall i \in \lb 1 , n \rb, \forall j' \in \lb 1, m \rb.
\end{align*}
where we denoted $\lb 1 , n \rb = \{1,\dots,n\}$. Note that $\langle K_{x_i} , K_{x_{i'}}\rangle_\cH = K(x_i, x_{i'})$, and in the case of differential operators discussed in \Cref{lem:smooth-kernel-implies-bounded-diff-eval}, we also have $\langle K^\cD_{z_j}, K^\cD_{z_{j'}} \rangle_\cH =  \cD_1 \cD_2 K(z_j, z_{j'}) $ and $\langle K_{x_i} , K^\cD_{z_{j'}} \rangle_\cH = \cD_2 K(x_i, z_{j'})$ where $\cD_1$ denotes the differential operator with respect to the first variable of the kernel and $\cD_2$ with respect to the second one.

The structure of $\Kbf$ arises from the presence of two types of data, $(x_i, y_i)_{i=1}^n$ and $(z_j, w_j)_{j=1}^m$. Diagonal blocks $\Abf$ and $\Bbf$ are the kernel matrices of the two separate datasets, and $\Cbf$ represents the cross terms.
Computationally the PIKS estimator \eqref{closed-form-expression-eq} requires storing and later inverting an $(n+m)\times(n+m)$ matrix with a cost of $O((n+m)^2)$ space and  $O((n+m)^3)$ time units. While this cost is high in general, there exists a rich literature on approximations which greatly reduce the computational complexity of kernel methods without compromising on accuracy~\citep{rahimi2007random,rudi2015less}. In the particular case of structured matrices involving derivatives such as~\eqref{block-kernel-matrix}, we can notably cite the works of \citet{eriksson2018scaling,padidar2021scaling,de2021high} in the case of Hermite-Birkhoff interpolation and of \citet{chen2025sparse} in the case of PDEs.

\begin{rem}
As we described in \Cref{sec:related-settings}, hybrid regression settings in RKHS mixing different types of linear functionals (and in particular differential operators) were studied at least since \cite{kimeldorf1971some}. Solving such problems leads to block matrices of the form \eqref{block-kernel-matrix}, which were observed in Hermite-Birkhoff problems \citep{zongmin1992hermite}, in RBF collocation methods to approximate PDE solutions~\citep{fasshauer1996solving,franke1998solving, wendland2004scattered}, as well as in recent uses of Gaussian processes for PDEs~\citep{raissi2017gps,chen2021solving,chen2025sparse}.
\end{rem}

\section{Theoretical analysis of the PIKS estimator}\label{sec:theory}

In this section we derive theoretical results about the asymptotic convergence of the PIKS estimator. Usual kernel analyses rely on the assumption that the model is well-specified, that is, that the target function $u^*$ belongs to the same RKHS from which the estimator is taken. However, in practice, one does not always know the exact regularity of the target function. Even worse, the target function might lack the smoothness required to belong to any RKHS. For instance, it is well known that a Sobolev space $H^s(\Omega)$ is an RKHS if and only if $s > d/2$, a condition that is harder to satisfy in high dimensions. Such a situation arises naturally in PDE problems: for a second-order
elliptic equation with sufficiently regular coefficients and $L^2$ data,
elliptic regularity theory guarantees that the solution belongs to
$H^2(\Omega)$ \citep{evans2010partial}, but in general to no smoother
Sobolev space. Since $H^2(\Omega)$ is an RKHS only when $d \leq 3$, the
natural regularity class of the solution fails to be an RKHS as soon as
$d \geq 4$. This motivates studying the misspecified setting where $u^* \not\in \cH$.

Next, we first introduce the main technical assumptions in \Cref{sec:functional-assumptions}, then we state the main result in \Cref{sec:main-result}, which we illustrate in the case of differential operators and Sobolev spaces in \Cref{sec:Sobolev-setting}, and in the particular case of PDE settings in \Cref{sec:boundary-setting-PDEs}. Finally, in \Cref{sec:rates-src-condition}, we derive convergence rates, which we illustrate in the example of the Laplacian on Sobolev spaces of periodic functions.

\subsection{Functional assumptions}\label{sec:functional-assumptions}
We collect here the functional assumptions used to prove our main result in \Cref{main-theorem}. They are formulated for general Hilbert spaces of functions $\cF$ and  $\cG$. Later, in \Cref{sec:Sobolev-setting}, we specialize these assumptions to Sobolev spaces and give concrete sufficient conditions under which they can be verified.

We start by providing further details on the setting introduced in \Cref{sec:prob}. 
Let us consider a compact domain $\cX \subset \RR^d$ and a target $u^* \in \cF$ where $\cF$ is a Hilbert space of functions embedded in $L^2(\cX)$ (endowed with the Lebesgue measure of $\RR^d$). We consider a bounded linear operator $\cD : \cF \to \cG$, where $\cG$ is another Hilbert space of functions embedded in $L^2(\cX)$. Since the elements of $\cF$ and $\cG$ are only defined almost everywhere, we need the following compatibility condition:
\begin{assumption}\label{ass:embeddings}
    The embeddings $\cF \hookrightarrow L^2(\rho_X)$ and $\cG \hookrightarrow L^2(\rho_Z)$ are well-defined and bounded.
\end{assumption}
\noindent \Cref{ass:embeddings} ensures that the random variables $u^*(X)$ and $\cD u^* (Z)$ introduced in \Cref{sec:prob} are well defined. Indeed, suppose for example that $u^* \in L^2(\cX)$. Such a function is only defined almost everywhere with respect to the Lebesgue measure, so for instance if $\rho_X$ is equal to the Dirac distribution $\delta_x$ (that is, we only sample at a single, deterministic location $x \in \cX$), then the variable $u^*(X)$ is not well defined, as the value of $u^*(x)$ is not uniquely defined a priori. A similar observation holds for $\cD u^*(Z)$. If instead, for this same $u^*$, $\rho_X$ and $\rho_Z$ are absolutely continuous with respect to the Lebesgue measure on $\cX$, with bounded densities, we easily see that \Cref{ass:embeddings} holds. Note that \Cref{ass:embeddings} also covers less trivial settings. One important example is the one discussed in \Cref{sec:PDE-setting,sec:boundary-setting-PDEs} where $\rho_X$ is only supported on the boundary of $\cX$. Then, the  part of \Cref{ass:embeddings} concerning $\rho_X$ is provided not by the embedding of $L^2(\cX)$ into $L^2(\rho_X)$ but rather by the trace theorems~\citep{evans2010partial}, as we discuss in \Cref{sec:boundary-setting-PDEs}.

We now introduce the universality assumption, which is central to our study.
\begin{assumption}[Universality]\label{ass:universality}
  The RKHS $\cH$ is densely embedded in $\cF$.
\end{assumption}
\noindent This assumption is standard in the kernel literature when $\cF = L^2(\cX)$, and is satisfied for a broad class of kernels which are referred to as universal~\citep{micchelli2006universal}. It guarantees sufficient flexibility to learn the target function in misspecified settings, that is when $u^* \not\in \cH$.
However, in the physics-informed setting, we consider more structured spaces $\cF$, so the density of $\cH$ in $L^2$ is not enough, and \Cref{ass:universality} is thus a stronger requirement than classical universality. For Sobolev spaces $\cF = H^s(\cX)$, a sufficient condition for \Cref{ass:universality} to hold is $C_0^s$-universality, as we discuss in \Cref{sec:Sobolev-setting}, and it is satisfied for instance by Gaussian and Matérn kernels.

We now make two boundedness assumptions on the data-generating process and on the features in $\cH$ which are used to guarantee the validity of concentration inequalities.

\begin{assumption}[Bounded data]\label{ass:bounded-data}
We have $u^* \in L^\infty(\rho_X)$ and $\cD u^* \in L^\infty(\rho_Z)$. Furthermore, both noise random variables $\epsilon$ and $\eta$ are bounded almost surely. 
\end{assumption}
\Cref{ass:bounded-data} restricts the class of admissible targets and noise distributions. Its role is to ensure that the empirical quantities appearing in our analysis are uniformly bounded, so that Hoeffding-type concentration inequalities apply. Such boundedness assumptions are standard in convergence analyses of kernel methods. They can in principle be relaxed --- for example, by imposing tail or moment conditions and using Bernstein-type inequalities, truncation, or other refined concentration tools --- but pursuing these extensions is beyond the scope of the present work.

\begin{assumption}[Bounded features]\label{ass:bounded-features}
There exist $\kappa, \kappa_\cD > 0$ such that for all $u \in \cH$, for all $x \in \cX$, 
\begin{equation}\label{eq:bounded-features}
   |u(x)| \leq  \kappa \|u\|_\cH; \qquad |\cD u(x)|\leq \kappa_\cD \|u\|_\cH .
\end{equation}
\end{assumption}
    Under \Cref{ass:diff-eval}, we can define the kernel $K^\cD(x,y) := \langle K^\cD_x , K^\cD_y \rangle_\cH$, and \eqref{eq:bounded-features} can be reformulated as having, for all $x \in \cX$,
    \begin{equation}\label{eq:bounded-features-var}
        K(x,x) \leq \kappa^2; \qquad K^\cD(x,x) \leq \kappa_\cD^2. 
    \end{equation}
    The formulation \eqref{eq:bounded-features-var} might be more familiar to the reader. \Cref{ass:bounded-features} can typically be obtained as a consequence of the smoothness of the kernels and the compactness of $\cX$. It is in particular the case in the setting of \Cref{lem:smooth-kernel-implies-bounded-diff-eval} (for more details, see \Cref{lem:C^2s-on-compact-implies-bounded-features} in \Cref{sec:Sobolev-setting-apdx}).

\subsection{Main result}\label{sec:main-result}

Consider the \emph{physics-informed} setting described in \Cref{sec:prob}, as well as in the previous section.  Consider the PIKS estimator $\widehat u_\lambda \in \cH$ defined in \eqref{eq:PIKS-estimator}. The following result characterizes the asymptotic behavior of the estimator as the dataset sizes increase.

\begin{thm}\label{main-theorem}
    Under \Cref{ass:diff-eval,ass:embeddings,ass:universality,ass:bounded-data,ass:bounded-features}, the PIKS estimator is a universal learner: for any regularizing sequence $(\lambda_{n,m})$ such that
    \begin{equation}\label{eq:condition-lambda}
    \lambda_{n,m}\to 0,
    \qquad 
    \frac{\log N}{\lambda_{n,m}^3 N}\to 0
    \qquad \text{as } n,m\to\infty ,
\end{equation}
    where $N = \min(n,m)$, almost surely, the estimator $\widehat{u}_{\lambda_{n,m}}$ satisfies
\begin{equation}\label{eq:main-thm-cases}
\begin{cases}
\| \widehat u_{\lambda_{n,m}} - u^* \|_{L^2(\rho_X)} \ \underset{n,m \rightarrow \infty}{\longrightarrow} \ 0  \\
\| \cD \widehat u_{\lambda_{n,m}}  - \cD u^*\|_{L^2(\rho_Z)} \ \underset{n,m \rightarrow \infty}{\longrightarrow}  \ 0.
\end{cases}
\end{equation}
\end{thm}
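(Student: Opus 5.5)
The argument follows the classical operator-theoretic proof of consistency for kernel ridge regression in the misspecified setting (in the style of De Vito et al.), applied to the two data sources. First we set up operators. Let $S_X:\cH\to L^2(\rho_X)$ be the inclusion $u\mapsto u$ and $S_Z:\cH\to L^2(\rho_Z)$ the map $u\mapsto \cD u$. Both are bounded by \Cref{ass:bounded-features}. Set $T = S_X^*S_X + S_Z^*S_Z$ and define the population regularized solution
\[
u_\lambda = \argmin_{u\in\cH}\ \|u-u^*\|_{L^2(\rho_X)}^2 + \|\cD u - \cD u^*\|_{L^2(\rho_Z)}^2 + \lambda\|u\|_\cH^2 ,
\]
so that $u_\lambda = (T+\lambda)^{-1}(S_X^* u^* + S_Z^*\cD u^*)$. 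By \Cref{ass:embeddings}, $u^*$ and $\cD u^*$ are genuine elements of $L^2(\rho_X)$ and $L^2(\rho_Z)$. The empirical estimator has the analogous form $\uhl = (\widehat T+\lambda)^{-1}\widehat g$, with $\widehat T = \frac1n\sum_i K_{x_i}\otimes K_{x_i} + \frac1m\sum_j \KD_{z_j}\otimes\KD_{z_j}$ and $\widehat g = \frac1n\sum_i y_iK_{x_i} + \frac1m\sum_j w_j\KD_{z_j}$. We then split the error as
\[
\mathcal{E}(u) := \|u-u^*\|_{L^2(\rho_X)}^2+\|\cD u-\cD u^*\|_{L^2(\rho_Z)}^2,\qquad \mathcal{E}(\uhl)^{1/2}\le \mathcal{E}(u_\lambda)^{1/2} + \|\sqrt{T}(\uhl-u_\lambda)\|_\cH .
\]
This uses the identity $\mathcal{E}(u)^{1/2}=\|Su - (u^*,\cD u^*)\|$ for the stacked operator $S=(S_X,S_Z):\cH\to L^2(\rho_X)\times L^2(\rho_Z)$.

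\emph{Approximation error.} The key observation is that $\mathcal{E}(u)\le C\|u-u^*\|_\cF^2$ for $u\in\cH$. Indeed $\|u-u^*\|_{L^2(\rho_X)}\lesssim\|u-u^*\|_\cF$ by \Cref{ass:embeddings}, and $\|\cD(u-u^*)\|_{L^2(\rho_Z)}\lesssim \|\cD\|\,\|u-u^*\|_\cF$ by boundedness of $\cD$ and \Cref{ass:embeddings}. Given $\varepsilon>0$, \Cref{ass:universality} provides $v\in\cH$ with $\mathcal{E}(v)\le\varepsilon$. Since $u_\lambda$ minimizes the regularized functional, $\mathcal{E}(u_\lambda)\le \mathcal{E}(u_\lambda)+\lambda\|u_\lambda\|_\cH^2\le \varepsilon + \lambda\|v\|_\cH^2$. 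Hence $\limsup_{\lambda\to0}\mathcal{E}(u_\lambda)\le\varepsilon$ for every $\varepsilon$, that is, $\mathcal{E}(u_\lambda)\to0$. This is the step where universality in $\cF$ (rather than in $L^2$) is essential, and it is the conceptual heart of the argument.

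\emph{Estimation error.} We write
\[
\uhl-u_\lambda=(\widehat T+\lambda)^{-1}\big[(\widehat g-\widehat Tu_\lambda)-(g-Tu_\lambda)\big],
\]
with $g=S^*(u^*,\cD u^*)$, using $g - Tu_\lambda = \lambda u_\lambda$. The random element $\widehat g-\widehat Tu_\lambda$ is an average of two independent i.i.d. sums of bounded $\cH$-valued variables, for instance $(y_i - u_\lambda(x_i))K_{x_i}$, whose norm is at most $\kappa(\|y\|_\infty+\kappa\|u_\lambda\|_\cH)$ by \Cref{ass:bounded-data,ass:bounded-features}. We would use the crude bound $\|u_\lambda\|_\cH\le \|g\|_\cH/\lambda$, or better $\lambda\|u_\lambda\|^2_\cH\le \mathcal{E}(0)$, giving $\|u_\lambda\|_\cH\lesssim\lambda^{-1/2}$. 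A Hoeffding inequality in Hilbert space (Pinelis) then gives, with probability $1-\delta$,
\[
\|(\widehat g-\widehat Tu_\lambda)-(g-Tu_\lambda)\|_\cH\lesssim \lambda^{-1/2}\sqrt{\log(1/\delta)/N}.
\]
Bounding $\|\sqrt T(\widehat T+\lambda)^{-1}\|$ requires one more ingredient. We write $\sqrt T(\widehat T+\lambda)^{-1}=\sqrt T(T+\lambda)^{-1}\,[I+(T-\widehat T)(\widehat T+\lambda)^{-1}]$ and control $\|T-\widehat T\|_{HS}\lesssim\sqrt{\log(1/\delta)/N}$, again by Hoeffding, now for Hilbert–Schmidt-valued sums. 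Together with $\|(\widehat T+\lambda)^{-1}\|\le\lambda^{-1}$ and $\|\sqrt T(T+\lambda)^{-1}\|\le\lambda^{-1/2}$, this yields
\[
\|\sqrt T(\uhl-u_\lambda)\|_\cH\lesssim \lambda^{-1}\sqrt{\log(1/\delta)/N}\,(1+\lambda^{-1}\sqrt{\log(1/\delta)/N}).
\]
With the crude bookkeeping the leading term is of order $\lambda^{-3/2}N^{-1/2}$. Its square is exactly the quantity $\frac{\log N}{\lambda^3N}$ in \eqref{eq:condition-lambda}, once we choose $\delta=N^{-2}$.

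\emph{Almost sure convergence and main difficulty.} Taking $\delta_N=N^{-2}$, we get a summable sequence of failure probabilities. By Borel–Cantelli, almost surely the estimation term is eventually bounded by $C\sqrt{\log N/(\lambda^3N)}\to0$, using condition \eqref{eq:condition-lambda}. Combined with the approximation step, this gives $\mathcal{E}(\widehat u_{\lambda_{n,m}})\to0$, which is \eqref{eq:main-thm-cases}. One subtlety needs checking: the index is the pair $(n,m)$ rather than a single $N$. To handle it, we would apply Borel–Cantelli over pairs with probability $\delta_{n,m}=(nm)^{-2}$, so that the double sum is summable; this only changes the log factor to $\log(nm)\lesssim\log N$-type terms, which we would verify, possibly needing $\log\max(n,m)$ and a remark. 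The step I expect to be the main obstacle is keeping the exponent of $\lambda$ at $3$ in the estimation term. This requires getting the right power of $\|u_\lambda\|_\cH$ and of the resolvent while handling the mixed two-sample operator $T$, without the standard misspecified-KRR trick of working in $L^2(\rho_X)$ alone. The natural remedy is to do everything in the stacked space $L^2(\rho_X)\times L^2(\rho_Z)$, where PIKS becomes ordinary regularized least squares with a vector-valued feature map.
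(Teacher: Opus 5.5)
Your plan is largely sound and close to the paper's. You use the same stacked operator and the same population solution $u_\lambda$. Your identity $\widehat u_\lambda-u_\lambda=(\widehat T+\lambda)^{-1}[(\widehat g-g)+(T-\widehat T)u_\lambda]$ is exactly the paper's decomposition, since $\Sigma_\lambda^{-1}\xi=u_\lambda$. Your approximation step compares the regularized population functional at $u_\lambda$ with its value at a universality approximant $v\in\cH$. It is correct, and more elementary than the paper's spectral argument through $(h,q)\in\overline{\ran A_\rho}$.

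\textbf{The gap: the almost-sure step.} You apply Hoeffding to the summands $(y_i-u_\lambda(x_i))K_{x_i}$ and $(w_j-\cD u_\lambda(z_j))\KD_{z_j}$. These depend on $\lambda=\lambda_{n,m}$, hence on both sample sizes, so the exceptional events form a genuinely doubly indexed family.
\begin{itemize}
\item With $\delta=N^{-2}$ the failure probabilities are not summable over pairs, because infinitely many pairs share the same $N=\min(n,m)$.
\item With $\delta_{n,m}=(nm)^{-2}$ the $x$-part of your bound is of order $\lambda^{-1/2}\sqrt{\log(nm)/n}$. This bound does not tend to $0$ under the hypothesis $\log N/(\lambda_{n,m}^3N)\to0$ along unbalanced pairs. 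For example, if $m\geq e^{n^2}$ then $N=n$ but $\log(nm)/n\geq n$.
\end{itemize}
So, as planned, you would only prove a weaker theorem with $\log\max(n,m)$ in place of $\log N$, not the stated one.

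\textbf{The missing idea: decouple the randomness from $\lambda$ before invoking Borel--Cantelli.} Split $\widehat g=\widehat g_X+\widehat g_Z$ and $\widehat T=\widehat T_X+\widehat T_Z$ into their $x$- and $z$-parts, with expectations $g_X,g_Z,T_X,T_Z$. Then
\[
\bigl\|(\widehat g-g)+(T-\widehat T)u_\lambda\bigr\|_\cH\leq\|\widehat g_X-g_X\|_\cH+\|\widehat g_Z-g_Z\|_\cH+\bigl(\|\widehat T_X-T_X\|_{HS}+\|\widehat T_Z-T_Z\|_{HS}\bigr)\|u_\lambda\|_\cH .
\]
Here $\|u_\lambda\|_\cH\leq C\lambda^{-1/2}$ is deterministic. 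The four random quantities do not involve $\lambda$, and each depends on a single sample size. Apply Hoeffding with $\tau_k=2\log k$ and Borel--Cantelli to each single-index sequence; this gives $O(\sqrt{\log k/k})$ almost surely. Since $\log k/k$ is eventually decreasing, all four are $O(\sqrt{\log N/N})$ once $n,m$ are large. The factor $\|T-\widehat T\|$ in your resolvent bound is already $\lambda$-free and is handled the same way. This is precisely the role of the paper's $U_n,V_n,W_m,Z_m$.

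\textbf{The exponent of $\lambda$ is not the obstacle.} Once the decoupling is done, set $\varepsilon_N=\sqrt{\log N/N}$.
\begin{itemize}
\item With $\|u_\lambda\|_\cH\lesssim\lambda^{-1/2}$, your bound becomes $\lambda^{-1}\varepsilon_N+\lambda^{-2}\varepsilon_N^2$.
\item Even the crude $\|u_\lambda\|_\cH\leq\|g\|_\cH/\lambda$ gives $\lambda^{-3/2}\varepsilon_N+\lambda^{-5/2}\varepsilon_N^2$.
\end{itemize}
Both vanish under the stated condition. The exponent you flagged as the main obstacle is therefore not where the difficulty lies; the double index is.
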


This theorem, which is proved as \Cref{main-thm-apdx} in the appendix, shows that the PIKS estimator is able to learn the function's values and satisfy the physical constraint at the same time. Here, the main technical difficulty lies in showing that it holds under weak assumptions on $u^*$. 
As discussed in \Cref{sec:related-settings}, theoretical analyses usually assume $u^*$ belongs to the same RKHS $\cH$ from which $\widehat u_\lambda$ is taken. Indeed, in the well-specified setting, one can typically prove the stronger convergence $\|\widehat u_\lambda - u^*\|_\cH \to 0$, from which we can derive both convergences $\| \widehat u_\lambda - u^* \|_{L^2(\rho_X)} \to 0$ and $\| \cD \widehat u_\lambda  - \cD u^*\|_{L^2(\rho_Z)} \to 0$ as direct consequences.
In the misspecified setting instead, the standard analysis \citep[see e.g.][]{de2005learning} only guarantees the $L^2$ convergence $\|\widehat u_\lambda - u^*\|_{L^2(\rho_X)}\to 0$ for the classical KRR estimator, which does not imply the convergence $\| \cD \widehat u_\lambda  - \cD u^*\|_{L^2(\rho_Z)}\to 0$. 
\Cref{main-theorem} extends the analysis to the physics-informed setting, by showing that universality guarantees both convergences at the same time, without requiring $u^* \in \cH$.

\paragraph{Remark: Convergence with $n\to\infty$ and $m$ finite.}
An observation which could arise from looking at \Cref{eq:main-thm-cases} is that both dataset sizes ($n$ and $m$) must tend to infinity to guarantee convergence, even for the function values alone; in other words, the two convergences are not decoupled. It is possible to decouple them by introducing a scaling parameter to the physics term. If this parameter were set to decay to $0$ as $n \to \infty$ while $m$ remains finite, the PIKS estimator would become asymptotically equivalent to a classical KRR estimator, thereby recovering the standard $L^2$ convergence on the values, but losing the physical consistency. However, this is not the focus of the present paper, and scaling parameters were left out to maintain a simplified analysis.

\subsection{The case of differential operators on Sobolev spaces}\label{sec:Sobolev-setting}

The assumptions in \Cref{sec:functional-assumptions} and the result in \Cref{sec:main-result} are stated for a general linear operator $\cD$ and function spaces $\cF, \cG$. Here, we specialize them to the case where $\cD$ is a differential operator defined on a Sobolev space. In particular, we discuss the implications of the universality assumption, and analyse more concretely the theorem's applications in this setting.

Consider $\cX=\overline{\Omega}$ with $\Omega$ a bounded Lipschitz domain, $u^*\in \cF = H^s(\Omega)$, for some integer $s>0$, and $\cG = L^2(\Omega)$. Assume that $\cD$ is a linear differential operator of order $s$ as defined in \cref{eq:diff-op-order-s}. We first check that the operator $\cD$ indeed defines a bounded operator $H^s(\Omega) \to L^2(\Omega)$ (see \Cref{lem:diff-op-is-bounded-from-H^s-to-L^2} in \Cref{sec:sobolev-setting-generalities}). For \Cref{main-theorem} to apply, we further need to check \Cref{ass:diff-eval,ass:embeddings,ass:universality,ass:bounded-data,ass:bounded-features}. \Cref{ass:bounded-data} is a standard boundedness assumption that we take independently from the setting. All other assumptions can be verified as consequences of the setting and the kernel choice, as we see below.

Assume that $K : \cX \times \cX \to \RR$ is a $C^{2s}$ kernel. We already established in \Cref{sec:PIKS} that in such a case, \Cref{ass:diff-eval} holds. It is also straightforward to check that \Cref{ass:bounded-features} holds, as a consequence of the compactness of $\cX$ (see \Cref{lem:C^2s-on-compact-implies-bounded-features} in \Cref{sec:sobolev-setting-generalities}). 
\Cref{ass:universality} instead requires more care, as it requires the RKHS to be rich enough to approximate not only functions, but also their derivatives up to order $s$ --- a requirement that is stronger than the standard $L^2$ universality.
A sufficient condition, which holds for common kernels such as the Gaussian one, is $C_0^s$ universality.

\begin{df}[$C^s_0$ universality]
    Let $U$ be an open subset of $\RR^d$ and let $C_0^s(U)$ be the set of continuous functions $f : U \to \RR$ such that $f$ and all its derivatives up to order $s$ tend to $0$ at infinity (i.e. for any $\epsilon > 0$ and $\alpha \in \NN^d$, $|\alpha|\leq s$, there exists a compact $K_\epsilon \subset U$ such that for all $x \in U \backslash K_\epsilon$, $|\partial^\alpha f(x)| \leq \epsilon$).  We say that a kernel $K$ is $C_0^s$-universal on $U$ if for any $f \in C_0^s(U)$, for any $\epsilon > 0$, there exists $f_\cH$ in the RKHS $\cH$ associated to $K$ such that for any $\alpha \in \NN^d$ satisfying $|\alpha| \leq s$,
    \[\sup_{x \in U} |\partial^\alpha f (x) - \partial^\alpha f_\cH(x)| \leq \epsilon.\]
\end{df}

The following lemma shows that if the kernel considered is $C_0^s$-universal on $\RR^d$, then \Cref{ass:universality} holds.
\begin{lem}\label{lem:C_0^s-universality-implies-ass:universality}
    Assume that $\cX = \overline{\Omega}$, with $\Omega$ a Lipschitz domain, and that $\cF = H^s(\cX) = H^s(\Omega)$. If $K$ is the restriction to $\cX$ of a $C_0^s$-universal kernel on $\RR^d$, then \Cref{ass:universality} holds.
\end{lem}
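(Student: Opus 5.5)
Two things must be shown for \Cref{ass:universality}: the RKHS $\cH$ of the restricted kernel is embedded in $H^s(\Omega)$ with a bounded embedding, and its image is dense. The natural route is a density argument through smooth compactly supported functions on $\RR^d$. Let $\widetilde K$ denote the $C_0^s$-universal kernel on $\RR^d$ and $\widetilde\cH$ its RKHS. The RKHS of the restriction $K=\widetilde K|_{\cX\times\cX}$ is exactly $\cH=\{f|_{\cX} : f\in\widetilde\cH\}$, with the quotient norm $\|u\|_\cH=\inf\{\|f\|_{\widetilde\cH}: f|_\cX=u\}$. This is the standard restriction theorem for RKHS.

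\textbf{Embedding.} $C_0^s$-universality as defined implicitly requires elements of $\widetilde\cH$ to be $C^s$, so I will use that $\widetilde K$ is $C^{2s}$ with bounded derivatives $\partial^\alpha_1\partial^\alpha_2\widetilde K$ on the diagonal for $|\alpha|\le s$. If this is not part of the hypotheses, I will add it as a standing regularity assumption. By the derivative reproducing property of \citet{zhou2008derivative}, invoked as in \Cref{lem:smooth-kernel-implies-bounded-diff-eval}, every $f\in\widetilde\cH$ is $C^s$ and satisfies
\[
\sup_x|\partial^\alpha f(x)|\le C_\alpha\|f\|_{\widetilde\cH}.
\]
Since $\Omega$ is bounded, it follows that $\|f|_\Omega\|_{H^s(\Omega)}\le C|\Omega|^{1/2}\|f\|_{\widetilde\cH}$. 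Taking the infimum over extensions gives the bound $\|u\|_{H^s(\Omega)}\le C'\|u\|_\cH$. The map is injective because elements of $\cH$ are continuous functions on $\cX=\overline\Omega$, and $\Omega$ is dense in $\cX$.

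\textbf{Density.} Fix $v\in H^s(\Omega)$ and $\epsilon>0$.
\begin{enumerate}
\item Since $\Omega$ is Lipschitz, the Stein extension operator and standard mollification show that $\{\phi|_\Omega:\phi\in C_c^\infty(\RR^d)\}$ is dense in $H^s(\Omega)$. Choose $\phi$ with $\|\phi|_\Omega-v\|_{H^s(\Omega)}\le\epsilon$.
\item Since $\phi\in C_c^\infty(\RR^d)\subset C_0^s(\RR^d)$, universality gives $f\in\widetilde\cH$ with $\sup_{\RR^d}|\partial^\alpha\phi-\partial^\alpha f|\le\epsilon$ for all $|\alpha|\le s$.
\item Then
\[
\|f|_\Omega-\phi|_\Omega\|_{H^s(\Omega)}^2\le \#\{\alpha:|\alpha|\le s\}\,|\Omega|\,\epsilon^2 .
\]
\end{enumerate}
Since $f|_\cX\in\cH$, the triangle inequality gives density.

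\textbf{Main obstacle.} The delicate step is the embedding. Its boundedness and well-definedness must be extracted from properties of $\widetilde K$ that the $C_0^s$-universality definition only implicitly presupposes. I would resolve this by stating that the kernel is $C^{2s}$ with bounded derivatives on $\RR^d$, which holds for Gaussian and Matérn kernels with $\nu>s$. A secondary point is the density of $C_c^\infty(\RR^d)$ restrictions in $H^s(\Omega)$, which needs the Lipschitz boundary; I will cite this as a known result, e.g.\ via Stein's extension theorem.
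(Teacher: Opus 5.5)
Your proof is correct and follows essentially the same route as the paper. Both arguments use Stein extension, density of $C_c^\infty(\RR^d)$ in $H^s(\RR^d)$, $C_0^s$-universality on $\RR^d$, and a uniform-to-$H^s(\Omega)$ bound via $|\Omega|<\infty$. Both also get the embedding from the derivative reproducing property under a $C^{2s}$ hypothesis, which the paper's appendix restatement adds explicitly just as you do. The only cosmetic difference is how $\cH$ is related to the ambient RKHS: you identify $\cH$ with restrictions of $\widetilde\cH$ under the quotient norm, while the paper views $\cH$ as the closed subspace $\overline{\Span\{K_x : x\in\cX\}}$ and orthogonally projects the approximant onto it, which amounts to the same thing.
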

We prove \Cref{lem:C_0^s-universality-implies-ass:universality} in \Cref{sec:sobolev-setting-generalities}, where it is restated as \Cref{lem:C_0^s-universal-implies-sobolev-density}. The proof proceeds by extending a given function $u \in H^s(\Omega)$ to $\RR^d$, approximating the extension by sufficiently regular compactly supported functions, and then invoking $C_0^s$ universality. Restricting the resulting approximants to $\cX$ yields the desired approximation of $u$.

 $C_0^s$ universality was discussed by \citet{simon2018kernel}, in a paper that characterizes it in particular for translation-invariant kernels on $\RR^d$ (which are of the form $K(x,y) = \Phi(x-y)$). Such kernels are $C^s_0$-universal if and only if they are $C^{s,s}$ (which is slightly weaker than the $C^{2s}$ requirement of \Cref{lem:smooth-kernel-implies-bounded-diff-eval}, that we consider in the present work --- we refer the reader to \citet{simon2018kernel} for the precise definition) and the Fourier transform of $\Phi$ has full support. For instance, this is verified for the Gaussian kernel and the Matérn kernel of index $\nu > s$.

\paragraph{In-domain sampling.}
The only missing element to apply \Cref{main-theorem} is \Cref{ass:embeddings}. Such assumption depends on $\rho_X,\rho_Z$ and thus on the sampling setting. A natural choice is uniform sampling in $\Omega$, as we study here; we will discuss another interesting choice in the next section. Assume now that $\rho_X = \rho_Z = \Unif(\Omega)$. Then, we have $L^2(\rho_X) = L^2(\rho_Z) \sim L^2(\Omega)$. The embedding of $\cG$ into $L^2(\rho_Z)$ is then trivial, and the embedding of $\cF$ into $L^2(\rho_X)$ is a consequence of the canonical embedding of $H^s(\Omega)$ into $L^2(\Omega)$. We thus see that \Cref{ass:embeddings} is satisfied. Note that, more generally, it would be satisfied for any distributions with bounded density on $\Omega$--- and that \Cref{main-theorem} applies.

\Cref{main-theorem} tells us that as the data grows to infinity, we get the convergences $\widehat u \ \overset{L^2(\rho_X)}{\longrightarrow} \ u^*$ and $\cD \widehat u \ \overset{L^2(\rho_Z)}{\longrightarrow} \ \cD u^*$, which are (up to normalization) convergences in $L^2(\Omega)$ since $\rho_X$ and $\rho_Z$ are both uniform.
The benefit of the PIKS estimator compared to standard kernel regression in this setting lies in the ability to learn with two kinds of data, and to guarantee that we learn both $u^*$ and $\cD u^*$ simultaneously, as the latter does not follow from the former in general.
For instance, if we know that the target satisfies a PDE $\cD u^* = q$, then, with enough data, with PIKS we will have $\cD \widehat u \approx q$ (in the $L^2$ sense),
i.e. the estimator is physically consistent. In contrast, with kernel ridge regression one can only establish the $L^2$ convergence of $\widehat u$ to $u^*$, without a corresponding guarantee on physical consistency. The benefit of these two simultaneous convergences becomes even clearer in some cases where we have stability of the PDE, as we study in the next paragraph. 

This setting is related to the earlier fundamental work of \citet{doumeche2024physics,doumeche2025physics} on physics-informed learning. The authors study a physics-informed constraint of the form $\|\cD u\|_{L^2}$ in addition to the standard Tikhonov regularization, in a Sobolev RKHS. In our setting, this corresponds to the homogeneous case, $\cD u^*=0$, together with an
infinite amount of physical information, namely $m\to\infty$. They show that their estimator is equivalent to classical KRR with a modified,
physics-informed kernel. Unlike the present work, they consider the well-specified setting $u^*\in\cH$, in which the asymptotic convergence $\|\widehat u-u^*\|_{\cH}\to0$ is straightforward to establish; the two convergence results in \Cref{main-theorem} then follow as consequences. Their
focus is different, however: they study how incorporating the physics accelerates the convergence of $\widehat u$ to $u^*$ in the $L^2$ sense, which is beyond the scope of the present work.

\paragraph{Stronger convergence results in the elliptic case.}
For some differential operators, the convergence
$\cD\widehat u\to\cD u^*$ provided by \Cref{main-theorem} can be used to
deduce convergence of $\widehat u$ to $u^*$ in a stronger topology than that
of $L^2$. This relies on PDE regularity estimates and depends strongly on the
operator $\cD$. We illustrate this in the elliptic case.

We retain the Sobolev setting considered above, with uniform sampling in
$\Omega$. We further assume that $s=2$ and that $\cD$ is a uniformly elliptic
operator satisfying the regularity assumptions of
\Cref{sec:in-domain-stronger-cvg-elliptic-apdx}. Ellipticity guarantees us \citep[Section 6.3]{evans2010partial} that for any $q \in L^2(\Omega)$ and any weak solution $u$ of the PDE $\cD u = q$ on $\Omega$, we have the estimate
    \begin{equation}\label{elliptic-regularity-domain-eq-main}
        \| u \|_{H^{2}(V)} \leq C_V ( \| u \|_{L^2(\Omega)} + \| q \|_{L^2(\Omega)}),\end{equation}
for any open $V$ such that $\overline{V} \subset \Omega$. The regularity estimate \eqref{elliptic-regularity-domain-eq-main} allows us to translate the two $L^2$ convergences of \Cref{main-theorem} into a convergence of $\widehat u$ to the target $u^*$ in a stronger, Sobolev sense, as we see in the following result.

\begin{coro}\label{Coro:cvg-H^2(V)-main}
   Under the preceding assumptions, for any sequence $(\lambda_{n,m})$ satisfying \eqref{eq:condition-lambda}, we have that almost surely, for any open $V$ such that $\overline{V} \subset \Omega$,
       \[\|\uh_{\lambda_{n,m}} - u^* \|_{H^2(V)} \ \underset{n,m \rightarrow \infty}{\longrightarrow} \ 0. \]
\end{coro}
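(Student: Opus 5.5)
The argument applies the interior elliptic estimate \eqref{elliptic-regularity-domain-eq-main} to the error $e_{n,m} := \uh_{\lambda_{n,m}} - u^*$, which is linear in the unknowns. We then feed in the two $L^2$ convergences supplied by \Cref{main-theorem}.

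First we verify that \Cref{main-theorem} applies in this setting. With $s=2$, $\cF = H^2(\Omega)$, $\cG = L^2(\Omega)$ and $\rho_X=\rho_Z=\Unif(\Omega)$:
\begin{itemize}
\item \Cref{ass:embeddings} holds as discussed in the in-domain sampling paragraph.
\item \Cref{ass:diff-eval,ass:bounded-features} follow from the $C^{4}$ kernel via \Cref{lem:smooth-kernel-implies-bounded-diff-eval} and its companion lemma.
\item \Cref{ass:universality} follows from \Cref{lem:C_0^s-universality-implies-ass:universality}.
\item \Cref{ass:bounded-data} is assumed.
\end{itemize}
Hence there is an event $\Omega_0$ of probability one on which both $\|e_{n,m}\|_{L^2(\rho_X)}\to 0$ and $\|\cD e_{n,m}\|_{L^2(\rho_Z)}\to0$. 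Since the densities are the constant $1/|\Omega|$, these are equivalent to $\|e_{n,m}\|_{L^2(\Omega)}\to0$ and $\|\cD e_{n,m}\|_{L^2(\Omega)}\to0$.

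Next, on $\Omega_0$, we fix a realization and indices $n,m$. We note that $e_{n,m}\in H^2(\Omega)$, because $\uh_{\lambda_{n,m}}\in\cH\subset H^2(\Omega)$ by the dense embedding in \Cref{ass:universality}, and $u^*\in H^2(\Omega)$. We set $q_{n,m} := \cD e_{n,m}\in L^2(\Omega)$. Then $e_{n,m}$ is a strong, hence weak, solution of $\cD e_{n,m} = q_{n,m}$. To pass from strong to weak we integrate by parts against $C_c^\infty(\Omega)$ test functions, using the regularity of the coefficients assumed in \Cref{sec:in-domain-stronger-cvg-elliptic-apdx} to write $\cD$ in divergence form. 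Applying \eqref{elliptic-regularity-domain-eq-main} gives, for every open $V$ with $\overline V\subset\Omega$,
\[
\|e_{n,m}\|_{H^2(V)} \le C_V\big(\|e_{n,m}\|_{L^2(\Omega)} + \|\cD e_{n,m}\|_{L^2(\Omega)}\big)\longrightarrow 0 .
\]

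The order of quantifiers comes out right: $\Omega_0$ does not depend on $V$, and $C_V$ does not depend on the data. So the almost-sure statement holds simultaneously for all such $V$, with no countable-union argument needed.

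The main obstacle is not the estimate itself but checking that the elliptic estimate legitimately applies to $e_{n,m}$. Two points need care. First, the form in which $\cD$ is written (non-divergence form $\sum c_\alpha\partial^\alpha$) must be matched with the weak formulation in Evans; this requires $C^1$ leading coefficients, which I would take from the assumptions in the appendix. Second, the estimate must hold without any boundary condition on $e_{n,m}$, which is exactly why it is only interior (on $V\Subset\Omega$). Once these are checked, the conclusion is immediate.
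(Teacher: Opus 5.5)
Your proposal is correct and follows essentially the same route as the paper. You invoke \Cref{main-theorem} (with its hypotheses verified as in \Cref{prop:main-theorem-in-domain-sobolev}), use the uniform densities to pass to $L^2(\Omega)$ norms, and apply the interior estimate of \Cref{evans-elliptic-regularity-apdx} to $\uh_{\lambda_{n,m}} - u^* \in H^2(\Omega)$, exactly as in \Cref{elliptic-regularity-domain-apdx} and \Cref{Coro:cvg-H^2(V)-apdx}. Your remark that the probability-one event comes from \Cref{main-theorem} and does not depend on $V$ correctly justifies the ``almost surely, for any $V$'' ordering of quantifiers in the main-text statement.
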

\Cref{Coro:cvg-H^2(V)-main} is restated as \Cref{Coro:cvg-H^2(V)-apdx} and proved in \Cref{sec:in-domain-sampling-apdx}. \Cref{Coro:cvg-H^2(V)-main} shows a benefit of physics-informed learning in the elliptic setting: while for a classical KRR estimator (using only the classical data $(x_i,y_i)$), theory typically only guarantees $L^2$ convergence, here we obtain the asymptotic convergence of the PIKS estimator to the target in the $H^2$ sense on all compactly embedded domains.

\subsection{Data on the boundary: solving PDEs}\label{sec:boundary-setting-PDEs}

In the previous section, we focused on uniform sampling in $\Omega$ for both $X$ and $Z$. Another important sampling setting is the one described in \Cref{sec:PDE-setting}, where $(x_i)_{i=1}^n$ are sampled on the domain's boundary and $(z_j)_{j=1}^m$ in its interior. This happens when instead of having data measurements of $u^*$ in the domain, we have a boundary condition at our disposal. This gets us closer to typical PDE settings, where we are trying to solve a boundary value problem
\begin{equation}\label{eq:boundary-value-pb}
\begin{cases}
    \cD u^*(x) = q(x) & x \in \Omega, \\
    u^*(x) = h(x) & x \in \partial \Omega.
\end{cases}\end{equation}

Let us indeed consider, same as \Cref{sec:Sobolev-setting}, that $\Omega$ is a bounded Lipschitz domain, $u^* \in \cF = H^s(\Omega)$, $\cG = L^2(\Omega)$, and $\cD : \cF \to \cG$ is a differential operator defined as in~\eqref{eq:diff-op-order-s}. Consider this time that $\rho_X = \Unif(\partial \Omega)$ and $\rho_Z = \Unif(\Omega)$. Let $K\in C^{2s}(\cX\times\cX)$ be the restriction of a $C_0^s$-universal kernel on $\RR^d$. The only difference is the sampling setting, and we know from the previous section that \Cref{ass:diff-eval,ass:universality,ass:bounded-features} hold, and again we can assume that \Cref{ass:bounded-data} holds independently. In order to apply \Cref{main-theorem}, we only need to check \Cref{ass:embeddings}.

For that, we observe that while the first equality in \eqref{eq:boundary-value-pb} must be understood as holding almost everywhere, with respect to the Lebesgue measure, the second equality does not make sense a priori since $\partial \Omega$ has Lebesgue measure $0$ and $u^*$ is only defined Lebesgue almost everywhere. Nevertheless it can be made meaningful with the trace operator \citep[Sec. 5.5]{evans2010partial} which is a bounded operator
\[\Tr:H^1(\Omega) \to H^{1/2}(\partial \Omega)\]
that coincides with the restriction to the boundary $u \mapsto u|_{\partial \Omega}$ when $u $ is a continuous function over $\overline{\Omega}$. The trace operator is the standard way of defining boundary conditions in PDE theory, as soon as we consider weak solutions. Composing $\Tr$ with the canonical Sobolev embeddings $H^s(\Omega) \hookrightarrow H^1(\Omega)$, $H^{1/2}(\partial \Omega) \hookrightarrow L^2(\partial \Omega)$, and with the bounded map $L^2(\partial \Omega) \to L^2(\rho_X)$, we see that \Cref{ass:embeddings} holds. Hence, \Cref{main-theorem} applies.

The conclusions of \Cref{main-theorem} have a different meaning than for in-domain sampling in \Cref{sec:Sobolev-setting}. Since $\rho_X$ is supported on $\partial \Omega$, the convergence $\widehat u \overset{L^2(\rho_X)}{\longrightarrow} u^* $ is merely a convergence on the boundary; by itself, it tells us nothing about what happens in $\Omega$. The two convergences of \Cref{main-theorem} mean that the PIKS estimator approximately satisfies the PDE \eqref{eq:boundary-value-pb} when the data grow to infinity. For a strong convergence to $u^*$ on $\Omega$, one further needs an appropriate stability estimate for the boundary-value problem, which is PDE-dependent. We give an example of stronger convergence results at the end of the current section.

This setting has received considerably more attention in the literature, as it corresponds to a classical PDE formulation. In particular, the use of kernels to numerically solve linear PDEs by prescribing function values at given boundary points and the values of the differential operator at given domain points is not new and has been studied in the framework of interpolation with RBF functions~\citep{wendland2004scattered}. In such literature, the points at which the differential operator is evaluated are deterministic, referred to as collocation points. The convergence analysis of these methods is then carried out using the notion of fill-distance, that is, the biggest distance that exists between a point and its closest neighbor. The analysis typically takes place in a noiseless setting, with the true solution of the PDE belonging to the RKHS defined by the chosen RBF.

Our analysis addresses a different and complementary regime, building on KRR theory in misspecified settings. We assume that the sampling points $x_i$ and $z_j$ are drawn randomly from prescribed distributions, allow the observations to be noisy, and do not require the target $u^*$ to belong to the RKHS $\cH$. This makes it possible to derive statistical guarantees that account jointly for sampling variability, observation noise, and model misspecification. In turn, the resulting bounds are distribution-dependent and typically control an average error, rather than providing deterministic guarantees tied to the geometric coverage of a particular set of collocation points.

 The more recent work \citep{chen2021solving} studies nonlinear operators. They formulate the problem as a nested optimization and propose an iterative method (with a Gauss-Newton algorithm). On the theoretical side, they consider the well-specified setting as well, and prove convergence of the second member of the PDE. Similar to existing results on PINNs, they must assume the convergence of the optimization process, as it is difficult to obtain guarantees in the nonlinear case.

 \paragraph{Stronger convergence results.} Assume now that $s=2$ and $\cD$ is uniformly elliptic with smooth coefficients over $\overline{\Omega}$ (see the beginning of \Cref{sec:boundary-setting-strong-cvg-apdx} for the precise assumptions). Further assume that the boundary $\partial \Omega$ is a smooth $(d-1)$-dimensional manifold, $\Omega$ being locally on one side of $\partial \Omega$. We assume that $0$ is not a Dirichlet eigenvalue for the operator $\mathcal{D}$  in $\Omega$, so that using Fredholm alternative \citep{evans2010partial},
  we can guarantee that for any $(q,h) \in L^2(\Omega) \times H^{3/2}(\partial \Omega)$, there exists a unique solution $u \in H^2(\Omega)$ to \eqref{eq:boundary-value-pb}. This is the case for instance for the Laplacian $\cD = - \Delta$.

Then, ellipticity and the regularity of the boundary guarantee us \citep[see][]{lions2012non} that for any $(q,h) \in L^2(\Omega) \times H^{3/2}(\partial \Omega)$, for any $u \in H^2(\Omega)$ solution of \eqref{eq:boundary-value-pb}, we have the estimate
\begin{equation}\label{lions-magenes-boundary-regularity-eq-main}
    \|u\|_{H^{1/2}(\Omega)} \leq c \left(  \| h \|_{L^2(\partial \Omega)} + \|q \|_{\Xi^{-3/2}(\Omega)} \right), \end{equation}
where $c > 0$ and $\| \cdot \|_{\Xi^{-3/2}(\Omega)}$ is a norm weaker than $\| \cdot \|_{L^2(\Omega)}$ \citep[see][for more details]{lions2012non}. The estimate \eqref{lions-magenes-boundary-regularity-eq-main} allows us to translate the two $L^2$ convergences of \Cref{main-theorem} into an $H^{1/2}$ convergence of $\uh_{\lambda_{n,m}}$ to the target $u^*$, as we see in the following result.

\begin{coro}\label{Coro:cvg-H^1/2(Omega)-main}
    Under the preceding assumptions, for any sequence $(\lambda_{n,m})$ satisfying \eqref{eq:condition-lambda}, almost surely,
      \begin{equation}\label{eq:cvg-H^1/2(Omega)-main}
          \|\uh_{\lambda_{n,m}} - u^* \|_{H^{1/2}(\Omega)} \ \underset{n,m \rightarrow \infty}{\longrightarrow} \ 0, 
      \end{equation}
and, for every open $V$ such that $\overline{V} \subset \Omega$, almost surely,
       \begin{equation}\label{eq:cvg-H^2(V)bis-main}
           \|\uh_{\lambda_{n,m}} - u^* \|_{H^2(V)} \ \underset{n,m \rightarrow \infty}{\longrightarrow} \ 0. 
       \end{equation}
\end{coro}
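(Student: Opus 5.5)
The plan is to apply the Lions--Magenes estimate \eqref{lions-magenes-boundary-regularity-eq-main} to the error $e_{n,m} := \uh_{\lambda_{n,m}} - u^*$, and then to use \Cref{main-theorem} to show that the right-hand side tends to zero.

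First I will check that $e_{n,m}$ is admissible. Since $K\in C^{4}(\cX\times\cX)$ is the restriction of a $C_0^2$-universal kernel, every element of $\cH$ lies in $C^2(\overline\Omega)\subset H^2(\Omega)$; this follows from \citet{zhou2008derivative}, as in \Cref{lem:smooth-kernel-implies-bounded-diff-eval}. Hence $e_{n,m}\in H^2(\Omega)$, and by linearity it solves \eqref{eq:boundary-value-pb} with data
\[
q_{n,m}=\cD e_{n,m}\in L^2(\Omega),\qquad h_{n,m}=\Tr e_{n,m}.
\]
By the trace theorem for $H^2$, $h_{n,m}\in H^{3/2}(\partial\Omega)$, so \eqref{lions-magenes-boundary-regularity-eq-main} gives
\[
\|e_{n,m}\|_{H^{1/2}(\Omega)} \le c\bigl(\|h_{n,m}\|_{L^2(\partial\Omega)} + \|q_{n,m}\|_{\Xi^{-3/2}(\Omega)}\bigr) \le c'\bigl(\|h_{n,m}\|_{L^2(\partial\Omega)} + \|q_{n,m}\|_{L^2(\Omega)}\bigr).
\]
The second inequality uses that $\Xi^{-3/2}$ is weaker than $L^2$, meaning its norm is dominated by a constant times the $L^2$ norm; I will cite \citet{lions2012non} for this.

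Next, because $\rho_X=\Unif(\partial\Omega)$ and $\rho_Z=\Unif(\Omega)$, these norms equal, up to constants, $\|e_{n,m}\|_{L^2(\rho_X)}$ and $\|\cD e_{n,m}\|_{L^2(\rho_Z)}$. Here the $L^2(\rho_X)$ norm is understood through the trace, as in the discussion of \Cref{ass:embeddings}. On $\cH$ the trace coincides with pointwise restriction, so it agrees with the sampled values. As argued above, \Cref{ass:diff-eval,ass:embeddings,ass:universality,ass:bounded-features} hold, and \Cref{ass:bounded-data} is assumed, so \Cref{main-theorem} applies. It yields, almost surely, that both quantities tend to $0$, which proves \eqref{eq:cvg-H^1/2(Omega)-main}.

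For \eqref{eq:cvg-H^2(V)bis-main}, I will use the interior estimate \eqref{elliptic-regularity-domain-eq-main} applied to $e_{n,m}$:
\[
\|e_{n,m}\|_{H^2(V)}\le C_V\bigl(\|e_{n,m}\|_{L^2(\Omega)}+\|\cD e_{n,m}\|_{L^2(\Omega)}\bigr).
\]
The first term is bounded by $\|e_{n,m}\|_{H^{1/2}(\Omega)}$, which tends to $0$ by the first part. The second term tends to $0$ by \Cref{main-theorem}. Since a single almost-sure event serves every $V$, the conclusion holds almost surely for all $V$ simultaneously.

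The main obstacle I expect is verifying the hypotheses of the regularity estimates for $e_{n,m}$. Specifically, I must check that $\Tr e_{n,m}\in H^{3/2}(\partial\Omega)$ so that \eqref{lions-magenes-boundary-regularity-eq-main} applies, and that $\|\cdot\|_{\Xi^{-3/2}}\lesssim\|\cdot\|_{L^2}$, which requires care with the Lions--Magenes definition of $\Xi^{-3/2}$. Since $u^*\in H^2(\Omega)$, the first point reduces to the trace theorem on a smooth boundary.
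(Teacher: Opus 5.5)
Your proposal is correct and follows essentially the same route as the paper. The paper likewise applies the Lions--Magenes estimate to $\uh_{\lambda_{n,m}}-u^*\in H^2(\Omega)$, with data $\Tr(\uh_{\lambda_{n,m}}-u^*)\in H^{3/2}(\partial\Omega)$ and $\cD(\uh_{\lambda_{n,m}}-u^*)\in L^2(\Omega)$, converts the resulting norms into $L^2(\rho_X)$ and $L^2(\rho_Z)$ norms using the lower density bounds, invokes \Cref{main-theorem}, and then combines the resulting $L^2(\Omega)$ convergence with the convergence of $\cD(\uh_{\lambda_{n,m}}-u^*)$ in the interior elliptic estimate to get the $H^2(V)$ convergence.
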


\Cref{Coro:cvg-H^1/2(Omega)-main} is proved in \Cref{sec:boundary-setting-strong-cvg-apdx} under the form of two results, \Cref{Coro:cvg-H^1/2(Omega)-apdx} and \Cref{coro:cvg-H^2(V)bis-apdx}. Let us compare this result with \Cref{main-theorem}: there, since $\rho_X$ is supported on $\partial \Omega$, the $L^2$ convergence of $\widehat u_{\lambda_{n,m}}$ to $u^*$ only happens on the boundary. Instead, \eqref{eq:cvg-H^1/2(Omega)-main} shows that, in the elliptic case with smooth boundary and uniqueness of the solution, we have in fact $H^{1/2}$ convergence on the whole domain $\Omega$. Then, analogously to \Cref{Coro:cvg-H^2(V)-main}, \eqref{eq:cvg-H^2(V)bis-main} shows that we have $H^2$ convergence on all compactly embedded domains.

\subsection{Convergence rates}\label{sec:rates-src-condition}

\Cref{main-theorem} establishes asymptotic convergence, without any indication on the convergence speed. It is actually possible to get convergence rates if one further assumes a source condition, following other works in the KRR literature \citep[see e.g.][]{de2005model,steinwart2009optimal,blanchard2018optimal,lin2020optimal,fischer2020sobolev,zhang2023optimality}. To state the source condition, let us introduce some notation. \Cref{ass:embeddings} allows us to consider the bounded operator
\begin{equation}\label{def:cA_rho}
    \function{\cA_\rho}{\cF}{L^2(\rho_X) \times L^2(\rho_Z)}{u}{(u, \cD u),}
\end{equation}
where we endow $L^2(\rho_X) \times L^2(\rho_Z)$ with the product norm defined by $\|(f,g) \|_\rho^2 := \|f\|_{L^2(\rho_X)}^2 + \|g\|_{L^2(\rho_Z)}^2 $.
Let us denote $(h,q) := \cA_\rho u^*$. In the boundary-value setting of \Cref{sec:PDE-setting} this is consistent with the notation, since $h = u^*|_{\partial \Omega}$ and $q = \cD u^*$; in the in-domain setting $h$ is simply $u^*$ viewed in $L^2(\rho_X)$. We can compose $\cA_\rho$ with the embedding $i : \cH \to \cF$ to define 
\begin{equation}\label{def:A_rho}
A_\rho = \cA_\rho \circ i,
\end{equation}
which allows us to write for any $u \in \cH$,
\[ \|A_\rho u - (h,q) \|_\rho^2 \ = \ \| u - h \|_{L^2(\rho_X)}^2 + \|\cD u - q \|_{L^2(\rho_Z)}^2.\]
 We can now define
\begin{equation}
    L : = A_\rho A_\rho^*: L^2(\rho_X) \times L^2(\rho_Z) \to L^2(\rho_X) \times L^2(\rho_Z).
\end{equation}
The operator $L$ is a composite integral operator: for $(f,g) \in L^2(\rho_X) \times L^2(\rho_Z)$ and $(x,z) \in \cX^2$, it can be expressed as
\begin{equation*}
    \begin{split} L (f,g)(x,z) = \left( \int_{\Input} \langle K_y , K_x \rangle_\cH \, f(y) d \rho_X(y) + \int_{\Input} \langle K^\cD_t , K_x \rangle_\cH \, g(t) d\rho_Z (t)\ , \right. \\
    \left. \int_{\Input} \langle K_y , K^\cD_z \rangle_\cH \, f(y) d \rho_X(y) + \int_{\Input} \langle K^\cD_t , K^\cD_z \rangle_\cH \, g(t) d\rho_Z(t)\right).
\end{split}
\end{equation*}
We now have all the elements to introduce a source condition adapted to this physics informed setting. 
\begin{assumption}[Physics-informed source condition]\label{ass:src}
    There exists $r \in (0,1]$ such that $(h, q) \in \ran L^r$, i.e. there exists $(\widetilde f , \widetilde g) \in L^2(\rho_X) \times L^2(\rho_Z)$ such that $(h, q) = L^r(\widetilde f , \widetilde g) $.
\end{assumption}

Given the above assumption we can derive explicit convergence rates.

\begin{thm}\label{thm:finite-sample-rates-main}
  Let $\delta \in (0,\frac12)$. Under \Cref{ass:embeddings,ass:diff-eval,ass:universality,ass:bounded-features,ass:bounded-data,ass:src}, if $\lambda \geq C \max \left( \frac{1}{n}\log \frac{n}{\delta}, \frac{1}{m}\log \frac{m}{\delta} \right) $, then with probability at least $1 - \delta$, the PIKS estimator satisfies
\begin{equation}\label{eq:composite-rates}
   \| \uhl - h \|_{L^2(\rho_X)} + \|\cD \uhl - q \|_{L^2(\rho_Z)}
\quad \lesssim \quad \frac{\sqrt{\ln(1/\delta)}}{\lambda^{1-\tilde{r}} \sqrt{n}}
+ \frac{\sqrt{\ln(1/\delta)}}{\lambda^{1-\tilde{r}} \sqrt{m}} + \lambda^r,
\end{equation}
where $\tilde{r} = \min(r,1/2)$.
\end{thm}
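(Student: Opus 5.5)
We follow the classical operator-theoretic analysis of kernel ridge regression (in the spirit of \citet{caponnetto2007optimal,smale2007learning}), applied to the composite sampling operator $A_\rho$. Let $\widehat A:\cH\to\RR^n\times\RR^m$ be the empirical sampling operator $u\mapsto\big((u(x_i))_i,(\cD u(z_j))_j\big)$. We endow $\RR^n\times\RR^m$ with the weighted norm $\frac1n\|a\|^2+\frac1m\|b\|^2$, and write $\widehat{\mathbf Y}=(y,w)$.

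Set $T=A_\rho^*A_\rho$ and $\widehat T=\widehat A^*\widehat A$. By \Cref{ass:bounded-features}, $T=\EE[K_X\otimes K_X]+\EE[\KD_Z\otimes\KD_Z]$ is trace class, and $\widehat T$ is the sum of the two corresponding empirical means. The estimator is $\uhl=(\widehat T+\lambda)^{-1}\widehat A^*\widehat{\mathbf Y}$, and its population counterpart is $u_\lambda=(T+\lambda)^{-1}A_\rho^*(h,q)$. We bound the quantity $\|A_\rho\uhl-(h,q)\|_\rho$, which dominates the left-hand side of \eqref{eq:composite-rates} up to a factor $\sqrt2$, by splitting it as
\[\|A_\rho(\uhl-u_\lambda)\|_\rho+\|A_\rho u_\lambda-(h,q)\|_\rho .\]

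\textbf{Bias.} Since $A_\rho u_\lambda=L(L+\lambda)^{-1}(h,q)$, the bias equals $\lambda(L+\lambda)^{-1}L^r(\widetilde f,\widetilde g)$. Spectral calculus with $r\le1$ bounds it by $\lambda^r\|(\widetilde f,\widetilde g)\|_\rho$. This yields the $\lambda^r$ term.

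\textbf{Variance.} We write
\[\uhl-u_\lambda=(\widehat T+\lambda)^{-1}\big[(\widehat A^*\widehat{\mathbf Y}-\widehat T u_\lambda)-(A_\rho^*(h,q)-Tu_\lambda)\big],\]
using the identity $(T+\lambda)u_\lambda=A_\rho^*(h,q)$. The bracket $\xi$ is a centered sum of two independent empirical means in $\cH$: $\frac1n\sum_i(y_i-u_\lambda(x_i))K_{x_i}$ minus its mean, and $\frac1m\sum_j(w_j-\cD u_\lambda(z_j))\KD_{z_j}$ minus its mean. Each summand is bounded by $\kappa(\|h\|_\infty+\|\epsilon\|_\infty+\kappa\|u_\lambda\|_\cH)$, respectively by the analogue with $\kappa_\cD$, using \Cref{ass:bounded-data}. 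The same spectral calculus gives
\[\|u_\lambda\|_\cH=\|A_\rho^*(L+\lambda)^{-1}L^r(\widetilde f,\widetilde g)\|\lesssim\lambda^{r-1/2}\]
when $r<1/2$, and $\|u_\lambda\|_\cH\lesssim1$ when $r\ge1/2$. Hence $\|u_\lambda\|_\cH\lesssim\lambda^{\tilde r-1/2}$, and this is where $\tilde r=\min(r,1/2)$ enters. Pinelis' Hilbert-space Hoeffding inequality then gives
\[\|\xi\|_\cH\lesssim\sqrt{\ln(1/\delta)}\,\big(1+\lambda^{\tilde r-1/2}\big)\big(n^{-1/2}+m^{-1/2}\big).\]
Next we bound
\[\|A_\rho(\widehat T+\lambda)^{-1}\|\le\|T^{1/2}(T+\lambda)^{-1/2}\|\,\|(T+\lambda)^{1/2}(\widehat T+\lambda)^{-1}\|\le\lambda^{-1/2}\,\|(T+\lambda)^{1/2}(\widehat T+\lambda)^{-1/2}\|^2 .\]
Combining, the variance is of order $\lambda^{-1/2}\lambda^{\tilde r-1/2}\sqrt{\ln(1/\delta)}\,(n^{-1/2}+m^{-1/2})=\lambda^{\tilde r-1}(\cdots)$. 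This matches \eqref{eq:composite-rates}; the $\lambda^{-1/2}$ from the prefactor $(1+\lambda^{\tilde r-1/2})$ also dominates the constant.

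\textbf{Main obstacle.} The delicate step is controlling $\|(T+\lambda)^{1/2}(\widehat T+\lambda)^{-1/2}\|\le\sqrt2$ with high probability. We do this through
\[\|(T+\lambda)^{-1/2}(T-\widehat T)(T+\lambda)^{-1/2}\|\le\tfrac12 ,\]
applying a Bernstein inequality for self-adjoint operators separately to the $X$-part and the $Z$-part. Each part involves the corresponding effective dimension, which is at most $\kappa^2/\lambda$ (respectively $\kappa_\cD^2/\lambda$). Because $n$ and $m$ differ, we apply the bound to each block with its own sample size and then add them. This produces a condition $\lambda\gtrsim\frac1n\log\frac n\delta$ together with its $m$ analogue. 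These are exactly the hypotheses of the theorem, where the crude trace bound makes the effective-dimension logarithm become $\log(n/\delta)$. Finally, a union bound over the three events (two operator concentrations and the vector concentration), with $\delta$ split among them, gives the stated probability $1-\delta$.
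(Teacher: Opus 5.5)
Your proposal is correct and follows essentially the same route as the paper. It uses the same bias/variance split with the spectral $\lambda^r$ bias bound, and the same prefactor $\|\Sigma_\lambda^{1/2}\widehat\Sigma_\lambda^{-1/2}\|_{\op}\le\sqrt2$, obtained from a Tropp/Bernstein inequality applied separately to the $X$- and $Z$-blocks; that step is what produces the condition on $\lambda$. It also uses the same Hilbert-space Hoeffding bounds, and your one difference is cosmetic. You concentrate the single centered vector $\frac1n\sum_i(y_i-u_\lambda(x_i))K_{x_i}$ (plus its $Z$-analogue) using $\|u_\lambda\|_\cH\lesssim\lambda^{\tilde r-1/2}$, whereas the paper writes the same quantity as $(\widehat\xi-\xi)+(\Sigma-\widehat\Sigma)u_\lambda$ and bounds $\|\Sigma-\widehat\Sigma\|$ in Hilbert--Schmidt norm. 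Both give identical rates, and like the paper you implicitly need $\lambda\le\lambda_0$ to absorb the $\lambda^{-1/2}$ term into $\lambda^{\tilde r-1}$.
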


\Cref{thm:finite-sample-rates-main} is restated and proved as \Cref{coro:rates-apdx} in \Cref{sec:estimation-error-apdx}. If we consider the minimum between $n$ and $m$ and select $\lambda$ accordingly, the following corollary gives a simplified rate.
\begin{coro}\label{coro:simplified-rate-main}
    Let us set $\lambda = N^{-1/2}$ if $r \leq 1/2$, and $\lambda = N^{-\frac{1}{2r+1}}$ if $r > 1/2$, where $N := \min(n,m)$. Under \Cref{ass:src}, if $\delta \in (0,\frac12)$ is such that $\lambda \geq C \max \left( \frac{1}{n}\log \frac{n}{\delta}, \frac{1}{m}\log \frac{m}{\delta} \right) $, then with probability at least $1 - \delta$, we have
        \begin{align}\label{eq:simplified-rates}
        \| \uh_\lambda - h \|_{L^2(\rho_X)} + \|\cD \uh_\lambda - q \|_{L^2(\rho_Z)} \quad & \lesssim \quad \sqrt{\ln(1/\delta)} N^{-r/2} & \text{if } \ r \leq 1/2, \\
        \| \uh_\lambda - h \|_{L^2(\rho_X)} + \|\cD \uh_\lambda - q \|_{L^2(\rho_Z)} \quad & \lesssim \quad \sqrt{\ln(1/\delta)} N^{-\frac{r}{2r + 1}} & \text{if } \ r > 1/2.
    \end{align}
\end{coro}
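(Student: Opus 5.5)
The plan is to derive Corollary~\ref{coro:simplified-rate-main} directly from Theorem~\ref{thm:finite-sample-rates-main}: plug in the prescribed $\lambda$ and balance the terms. The hypotheses of the theorem hold by assumption (the standing assumptions plus \Cref{ass:src}), and so does the lower bound $\lambda \geq C \max\left(\frac{1}{n}\log\frac{n}{\delta}, \frac{1}{m}\log\frac{m}{\delta}\right)$, which the corollary explicitly requires. Hence \eqref{eq:composite-rates} holds with probability at least $1-\delta$.

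First, since $N=\min(n,m)$, both $1/\sqrt n$ and $1/\sqrt m$ are at most $1/\sqrt N$. Since $\delta<1/2$, we also have $\ln(1/\delta)\geq \ln 2>0$, so the bias term can be written as $\lambda^r \leq \sqrt{\ln(1/\delta)/\ln 2}\,\lambda^r$. This puts both terms under a common factor:
\[
\| \uh_\lambda - h \|_{L^2(\rho_X)} + \|\cD \uh_\lambda - q \|_{L^2(\rho_Z)} \lesssim \sqrt{\ln(1/\delta)}\left(\lambda^{\tilde r - 1}N^{-1/2} + \lambda^r\right).
\]

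Next we split into the two cases of the statement.

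\emph{Case $r\le 1/2$.} Here $\tilde r = r$. With $\lambda=N^{-1/2}$ the first term is $N^{(1-r)/2}N^{-1/2}=N^{-r/2}$, and the second is $\lambda^r=N^{-r/2}$. Both terms equal $N^{-r/2}$.

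\emph{Case $r>1/2$.} Here $\tilde r=1/2$. With $\lambda=N^{-1/(2r+1)}$ the first term is $\lambda^{-1/2}N^{-1/2}=N^{\frac{1}{2(2r+1)}-\frac12}=N^{-\frac{r}{2r+1}}$, and the second is $\lambda^r=N^{-\frac{r}{2r+1}}$. Again the two terms coincide.

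There is no real obstacle. The only points requiring care are absorbing the constant bias term into the $\sqrt{\ln(1/\delta)}$ factor, which uses $\delta<1/2$, and noting that the implicit constants in $\lesssim$ depend only on the constants from Theorem~\ref{thm:finite-sample-rates-main} (the norms of $(\widetilde f,\widetilde g)$, $\kappa$, $\kappa_\cD$, and the data and noise bounds), not on $n$, $m$, or $\delta$.
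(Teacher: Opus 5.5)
Your proposal is correct and follows the paper's proof: you substitute the prescribed $\lambda$ into the bound of \Cref{thm:finite-sample-rates-main}, use $n,m\ge N$, and check that the variance and bias terms both reduce to $N^{-r/2}$ (resp.\ $N^{-r/(2r+1)}$). Your explicit absorption of the bias term via $\ln(1/\delta)\ge \ln 2$ spells out a step the paper leaves implicit, since its proof ends with $\sqrt{\ln(1/\delta)}\,N^{-r/2}+N^{-r/2}$; also, because your $\lambda\le 1$, the auxiliary condition $\lambda<\lambda_0$ in the appendix version of the theorem is harmless.
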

\noindent \Cref{coro:simplified-rate-main} is restated and proved as \Cref{coro:simplified-rates-apdx} in \Cref{sec:estimation-error-apdx}. 

Using integral operator techniques and source conditions like \Cref{ass:src} is classical in KRR analysis; the distinctive feature here is that the rates depend on two indices $n$ and $m$. For the convergence to be guaranteed, both need to tend to infinity. The practicality of the rates provided in \eqref{eq:composite-rates} and \eqref{eq:simplified-rates} depends on the ability to interpret the source condition, which itself depends on the operator $L$. Such an operator is a composite integral operator which is more complex than the corresponding object in classical KRR. We provide below an example where the source condition is interpretable, while leaving a more general analysis for future work.

\begin{ex}[Laplacian on periodic functions]
Consider $\cX = [0,1]^d$, and denote, for any $\tau > 0$,
\[\Hper^\tau(\cX) \subset H^\tau(\cX)\]
the Sobolev space of order $\tau$ of periodic functions on $\cX$. Such spaces can be described with Fourier series, as we detail in \Cref{sec:laplacian-periodic-fct-rates}. Let us consider $\cF = \Hper^2(\cX)$, $\cG = L^2(\cX)$ and $\cD = \Delta = \sum_{i=1}^d \frac{\partial^2}{(\partial x_i)^2}$ the Laplacian, which defines a continuous operator from $\cF$ to $\cG$. Let $\cH = \Hper^\tau(\cX)$, with $\tau > d/2 + 2$. Let us consider $\rho_X = \rho_Z = \Unif([0,1]^d)$ (which corresponds to the in-domain sampling setting of \Cref{sec:Sobolev-setting}). In particular $L^2(\rho_X) = L^2(\rho_Z) = L^2([0,1]^d)$. One can show that \Cref{ass:diff-eval,ass:embeddings,ass:universality,ass:bounded-features} hold (see \Cref{lem:laplacian-periodic-assumptions-check} in \Cref{sec:laplacian-periodic-fct-rates}). Consider furthermore a target $u^* \in \cF$, and a noise model satisfying \Cref{ass:bounded-data}. Then \Cref{main-theorem} guarantees the asymptotic convergence. Furthermore, in such a setting, the source condition is interpretable in terms of smoothness of the target function $u^*$, as shown by the following proposition.
\begin{prop}\label{Ex:Laplacian-on-the-torus}
    Let $u^* \in \cF$, and $(h,q) = \cA_\rho u^*$. Then, \Cref{ass:src} is satisfied, with $r \in (0,1]$ if and only if $u^* \in \Hper^{\sigma_r}(\cX)$, with $\sigma_r = 2 + 2r(\tau-2)$. In particular, for $r = \frac{1}{2}$, \Cref{ass:src} is equivalent to $u^* \in \cH$, which corresponds to the well-specified setting.
\end{prop}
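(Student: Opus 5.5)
The plan is to diagonalize $L$ explicitly in the Fourier basis and read off $\ran L^r$ as a weighted $\ell^2$ condition on the Fourier coefficients of $u^*$. Let $e_k(x) = e^{2\pi i k\cdot x}$, $k\in\ZZ^d$; to stay in real spaces one can group $\pm k$ or use cosines and sines, which changes nothing below. I take the norm of $\Hper^\tau$ (the one described in \Cref{sec:laplacian-periodic-fct-rates}, or any equivalent one) to be $\|u\|_\cH^2 = \sum_k w_k |\hat u_k|^2$ with $w_k \asymp (1+|k|^2)^\tau$. Then $\{e_k/\sqrt{w_k}\}$ is an orthonormal basis of $\cH$. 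Since $\Delta e_k = -\mu_k e_k$ with $\mu_k = 4\pi^2|k|^2$, the operator $A_\rho$ maps $e_k/\sqrt{w_k}$ to $w_k^{-1/2}(e_k,-\mu_k e_k)$. Moreover $\{(e_k,0),(0,e_k)\}_k$ is an orthonormal basis of $L^2(\rho_X)\times L^2(\rho_Z)$ because both measures are uniform on $[0,1]^d$.

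First key step: show that $L = A_\rho A_\rho^*$ is block diagonal. It leaves each two-dimensional space $E_k = \Span\{(e_k,0),(0,e_k)\}$ invariant and acts there as the rank-one matrix $w_k^{-1} v_k v_k^\top$, where $v_k = (1,-\mu_k)^\top$. This follows from $L = \sum_k A_\rho(e_k/\sqrt{w_k}) \otimes A_\rho(e_k/\sqrt{w_k})$. Hence on $E_k$ the operator $L$ has eigenvalue $\ell_k = (1+\mu_k^2)/w_k$ on the direction $v_k/|v_k|$, and $0$ on its orthogonal complement in $E_k$. By functional calculus, $L^r$ acts on $E_k$ as $\ell_k^{r}\, P_k$, where $P_k$ is the orthogonal projector onto $\RR v_k$.

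Second step: characterize the range. Write $u^* = \sum_k c_k e_k$. Then $(h,q) = \cA_\rho u^* = \sum_k c_k (e_k,-\mu_k e_k)$, so its $E_k$-component is $c_k v_k$, which automatically lies in $\RR v_k$. Suppose $(h,q) = L^r(\widetilde f,\widetilde g)$. Projecting on $E_k$ gives $c_k v_k = \ell_k^r P_k(\widetilde f_k,\widetilde g_k)$. The cheapest preimage is $(\widetilde f_k,\widetilde g_k) = \ell_k^{-r} c_k v_k$, and any other preimage has larger norm. So \Cref{ass:src} holds if and only if
\[
\sum_k |c_k|^2 |v_k|^2 \ell_k^{-2r} = \sum_k |c_k|^2 (1+\mu_k^2)^{1-2r} w_k^{2r} < \infty .
\]
Using $1+\mu_k^2 \asymp (1+|k|^2)^2$ and $w_k \asymp (1+|k|^2)^{\tau}$, the weight is of order $(1+|k|^2)^{2-4r+2r\tau} = (1+|k|^2)^{\sigma_r}$ with $\sigma_r = 2+2r(\tau-2)$. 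This is exactly the condition $u^*\in\Hper^{\sigma_r}$. For $r=1/2$ we get $\sigma_{1/2}=\tau$, i.e. $u^*\in\cH$.

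The main care point is not the computation but the bookkeeping. One must ensure that $\ell_k>0$ for all $k$, including $k=0$, where $\mu_0=0$ and $\ell_0 = 1/w_0 >0$. One must also check that the constants in $\asymp$ are uniform in $k$, so that the two weighted sums are comparable. Finally, the kernel of $L$ (the directions orthogonal to $v_k$ in each $E_k$) must not interfere: $(h,q)$ has no component there, which is precisely the structural fact that it lies in the closure of $\ran A_\rho$. I would also verify, via the real Fourier basis, that the real-valuedness of $u^*$ and of $(\widetilde f,\widetilde g)$ is preserved, since the minimal preimage is built mode-by-mode from $c_k$ and $c_{-k}=\overline{c_k}$.
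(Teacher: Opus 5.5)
Your proof is correct. It reaches the same weight $(1+|k|^2)^{2+2r(\tau-2)}$ as the paper, but by a more direct route.

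The paper never diagonalizes $L$ on $L^2(\rho_X)\times L^2(\rho_Z)$. Instead, it uses the polar decomposition $\cA_\rho = U C^{1/2}$, where $C=\cA_\rho^*\cA_\rho$ is an isomorphism of $\cF$ because $\|u\|_{L^2}^2+\|\Delta u\|_{L^2}^2\asymp\|u\|_{\Hper^2}^2$. This gives $L = UTU^*$ with $T = C^{1/2}ii^*C^{1/2}$ acting on $\cF$, so that $(h,q)\in\ran L^r$ becomes $C^{1/2}u^*\in\ran T^r$. In that reduction, replacing the preimage by its projection onto $\ran\cA_\rho$ plays the role of your minimal-norm preimage and your discarding of $\Ker L$. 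Only afterwards does the paper compute $T$ as the Fourier multiplier $c_k\lambda_k^{2-\tau}$ with $c_k\asymp 1$.

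Your splitting into the planes $E_k$, with rank-one action $w_k^{-1}v_kv_k^\top$, makes the spectrum of $L$ and its kernel fully explicit. It also avoids the polar decomposition and the functional-calculus identity $(UTU^*)^r=UT^rU^*$. The paper's route instead isolates an abstract step: the physics-informed source condition reduces to a classical-looking source condition on $\cF$ whenever $\cA_\rho$ is an isomorphism onto its range. The Fourier computation is then confined to a single multiplier on $\cF$.

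Both arguments rest on the same two facts. First, $\Delta$ and the $\Hper^\tau$ inner product are simultaneously diagonal in $e^{2\pi i k\cdot x}$. Second, $1+16\pi^4|k|^4\asymp(1+|k|^2)^2$ uniformly in $k$.
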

\Cref{Ex:Laplacian-on-the-torus} is proved in \Cref{sec:laplacian-periodic-fct-rates}. It is classical in KRR to characterize source conditions in Sobolev RKHS as Sobolev smoothness of the target $u^*$ of a given order \citep[see e.g.][]{fischer2020sobolev}. We see with \Cref{Ex:Laplacian-on-the-torus} that the same is possible in the case of the Laplacian on periodic functions. This gives a clear interpretation to \Cref{ass:src}.
\end{ex}

\section{Empirical verification}\label{sec:experiments}
In this section we demonstrate the performance of PIKS in two scenarios: 
the first one is a supervised learning problem in which we show that gradient information helps improve the underlying model's accuracy, decreasing the error on both the function values and derivatives. Such a setting is sometimes referred to as \emph{Sobolev training} or \emph{Hermite learning}.
In the second scenario, we use PIKS as a \emph{PDE solver} and compare it to FEM solvers, PINNs and alternative kernel-based PDE solvers on three linear PDEs. As discussed earlier, PIKS in this setting is close to classical meshless methods with kernels~\citep{wendland2004scattered}, and our results emphasize that there are many cases in which kernel regression with a RBF kernel is competitive with more complex methods involving neural networks or kernel functions.
We can furthermore show (see \cref{fig:smooth-funcs}) that indeed it is possible to use kernel methods for learning in the misspecified setting.

\subsection{Learning with gradient data} \label{sec:expgrad}
In traditional machine learning settings, where the goal is to learn a function $f: \cX\to\mathcal{Y}$ given a dataset of pairs $(x, y) \in \cX\times\mathcal{Y}$, PIKS can be used whenever additional information about linear transformations of $f$ is known.
We demonstrate the effect of incorporating such information with a simple example of a smooth 2D function $f(x) = \sin(\pi x_1) \cdot \sin(\pi x_2) + 4 \sin(4\pi x_1) \cdot \sin(4\pi x_2)$ with additive Gaussian noise. 
We compare the generalization error of KRR with that of PIKS which has additional access to derivative information. The number of training samples from $f$ is fixed to $n$, and the additional samples from $\partial_{x_1} f$ and $\partial_{x_2} f$ are also fixed to $n$.
The RMSE obtained as a function of $n$ and of the available data is shown in \cref{fig:imp-learn}. Gradient data is especially useful for this function: having access to $n$ function plus $n$ gradient points is better than having access to $2n$ function points. 

\begin{figure}
    \centering
    \begin{minipage}[t]{0.44\textwidth}
        \centering
        \includegraphics[width=0.89\textwidth]{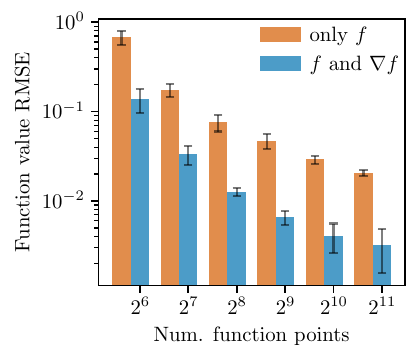} 
        \caption{Improved learning accuracy with derivative data. A dataset of function values (in red) is augmented by gradient (in blue) information to improve accuracy.}
        \label{fig:imp-learn}
    \end{minipage}\hfill
    \begin{minipage}[t]{0.54\textwidth}
        \centering
        \includegraphics[width=\textwidth]{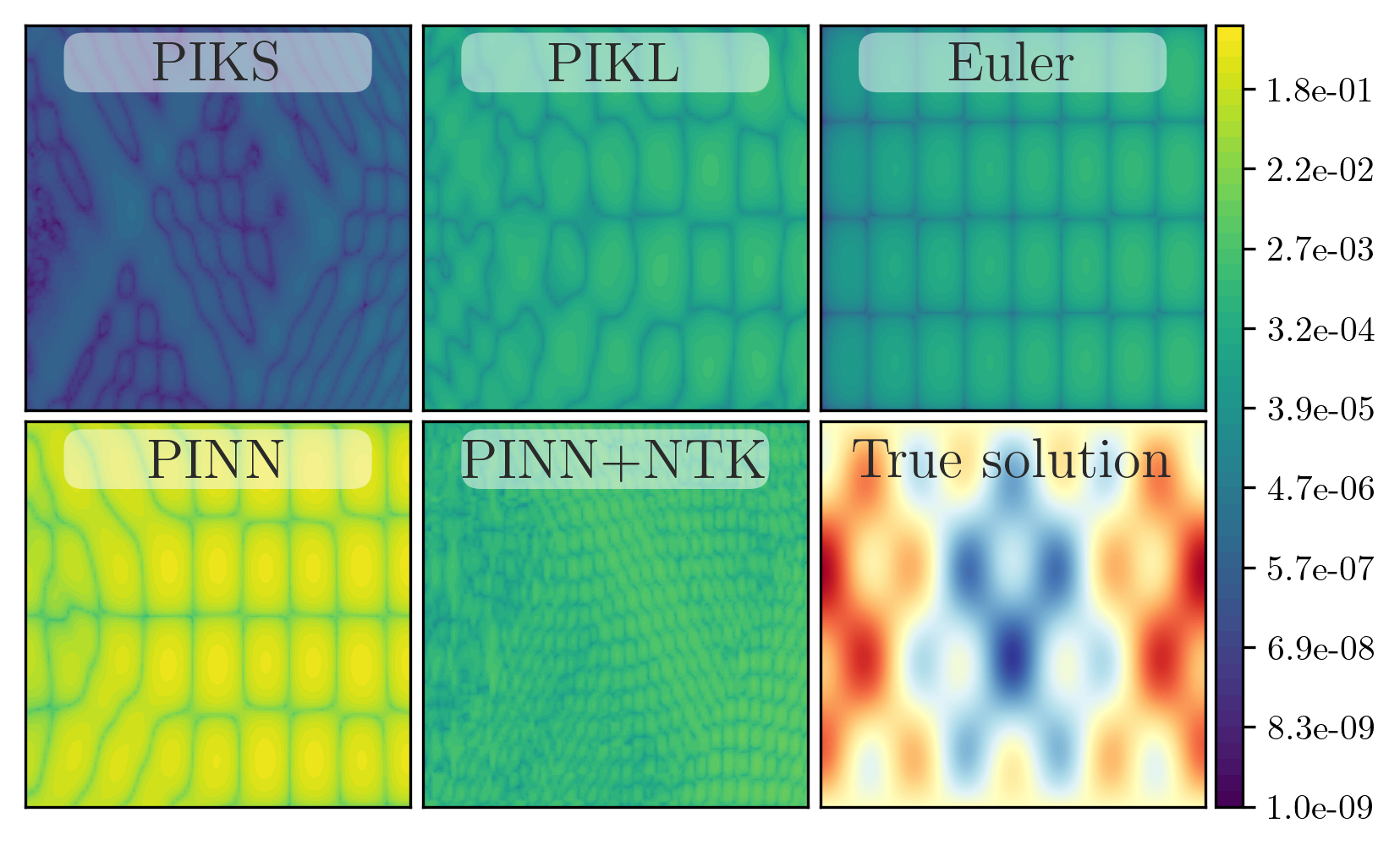} 
        \caption{Errors on the 1D wave equation. The color-scale is logarithmic and at this scale of errors only the standard PINN produces a solution which looks qualitatively incorrect.}
        \label{fig:1dwave-clean}
    \end{minipage}
\end{figure}

\subsection{PDE settings}
We showcase PIKS for solving PDEs with three examples. The first two focus on a well-known failure mode of PINNs: they struggle to learn functions with a large range of frequency components~\citep{rahaman2019spectral}, comparing also against other kernel estimators. In the last we focus on misspecification and compare against a FEM solver. 

\paragraph{Convection equation}
Following the setup of~\citet{krishnapriyan2021characterizing}, we take the 1D convection equation with smoothness controlled by parameter $\beta$. This PDE, whose solution is $f(t, x) = \sin(x - \beta t)$, is defined as
\begin{equation}
    \begin{cases}
\forall x \in [0, 2\pi], t\in[0, 1], &\partial_t f(t, x) + \beta \partial_x f(t, x) = 0  \\
\forall x \in [0, 2\pi], &f(0, x) = \sin(x) \\
\forall t \in [0, 1], &f(t, 0) = f(t, 2\pi).
    \end{cases}
\end{equation}
\citet{krishnapriyan2021characterizing} show that PINN performance degrades with increasing $\beta$ and propose \emph{curriculum training} to improve on this, while \citet{doumeche2024physics} show that kernel-based \emph{PIKL} is accurate to numerical precision. 
In \cref{tab:convection}, we show that, even with a standard RBF kernel and the same number of boundary points (100), PIKS achieves a relative RMSE nearly four orders of magnitude below that of curriculum-trained PINNs and approaches the accuracy of PIKL, which retains an edge in this experiment.
Note that the periodic boundary condition needed for this problem can be implemented as an extra linear constraint on the estimator i.e., $u(\cdot) = \ldots + \sum_{i=1}^n \gamma_i (K(x_i, \cdot) - K(\overline{x_i}, \cdot))$ where $\overline{x_i}$ is simply $x_i$ on the other side of the boundary.

\begin{table}
    \centering
    \begin{minipage}[c]{0.42\textwidth}
        \caption{Estimator comparison on the convection equation. Data for PINNs taken from \citet{krishnapriyan2021characterizing}, for PIKL from \citet{doumeche2024physics}.}
        \label{tab:convection}
        \resizebox{\textwidth}{!}{
        \begin{tabular}{@{}lc@{}}
            \toprule
             Method & Rel. RMSE ($\beta = 30$) \\
             \midrule
             Vanilla PINN  & $8.97 \times 10^{-1}$ \\
             Curriculum PINN & $2.02 \times 10^{-2}$ \\
             PIKL estimator & $0.91 \times 10^{-7}$ \\
             PIKS estimator & $2.18 \times 10^{-6}$ \\
             \bottomrule
        \end{tabular}}
    \end{minipage}\hfill
    \begin{minipage}[c]{0.55\textwidth}
        \caption{Solver comparison on the 1D wave equation. Noisy initial conditions use $\sigma=0.1$. Standard deviation is computed on 10 repetitions.}
        \label{tab:1dwave}
        \resizebox{\textwidth}{!}{
        \begin{tabular}[b]{@{}lcc@{}}
            \toprule
            Method & Rel. RMSE & Rel. RMSE (noisy IC) \\
            \midrule
            Euler & \num{6e-4} & \num{1.3(1)e-1} \\
            CN &    \num{6.4e-2} & \num{6.9(5)e-2}  \\
            PINN &  \num{4.1(3)e-1} & \num{4.1(2)e-1} \\
            PINN+NTK & \num{4.9(36)e-3} & \num{1.4(5)e-2} \\
            PIKL &  \num{8.7(1)e-4} & \num{2.8(12)e-2} \\
            PIKS &  \num{1.0(4)e-6} & \num{1.4(2)e-2} \\
            \bottomrule
        \end{tabular}}
    \end{minipage}
\end{table}

\paragraph{1D wave equation.}
Another case where PINNs struggle to have good accuracy is the high frequency 1D wave equation described in~\citet{wang22pinns}:
\begin{equation}
    \begin{cases}
      \forall x\in[0,1], t\in[0, 1], &\partial_{tt} f - c^2 \partial_{xx} f = 0 \\
      \forall x \in [0, 1], &f(0, x) = \sin(\pi x) + \sin(4\pi x) / 2 \\
      \forall x \in [0, 1], &\partial_t f(0, x) = 0 \\
      \forall t \in [0, 1], &f(t, 0) = f(t, 1) = 0.
    \end{cases}
\end{equation}
Under both noiseless and noisy conditions we compare PIKS with standard PDE solvers (Euler and Crank–Nicolson), standard PINNs, PINNs augmented with the NTK kernel~\citep{wang22pinns} and PIKL~\citep{doumeche2024physics}. 
The results presented in \cref{tab:1dwave} show that PIKS performs better than all other methods on both clean and noisy data. Note that this is with a RBF kernel which only required small-scale tuning of regularization and length-scale. For this experiment PIKS used \num[{scientific-notation = false}]{22000} data points in total, PIKL and the classic solvers used $\sim\num[{scientific-notation = false}]{100000}$ while PINNs used $\sim\num[{scientific-notation = false}]{24000000}$ points. \Cref{fig:1dwave-clean} compares the error of the different methods on a logarithmic scale.

\paragraph{Poisson equation.}
The final comparison is against a FEM solver~\citep{dolfinx} with piecewise linear elements, with a focus on how accuracy scales with the training-set size and with the amount of noise.
By taking multiple Poisson PDEs with decreasing levels of smoothness (more details about the definitions available in \cref{sec:exp-appendix}) we can additionally evaluate the performance of PIKS in a misspecified setting. 
We take functions from the Matérn family which are parametrized by $\nu$. A $\nu$-Matérn function is $\lceil\nu\rceil - 1$ times differentiable and since PIKS is used with a Gaussian kernel, only the $\nu = \infty$ target matches
the smoothness of the Gaussian kernel, with smaller $\nu$ signifying stronger misspecification.
Meanwhile the FEM solver uses linear elements and as can be seen in \cref{fig:fem-incdata} its performance does not depend on the function smoothness.
PIKS performance instead gradually degrades based on the level of misspecification, from the well-specified setting where PIKS reaches perfect accuracy with as few as 1000 points (an RMSE difference with FEM of 10 orders of magnitude) down to $\nu=0.5$ where kernel learning performs essentially on par with FEM. We note that FEM scales much better with dataset size, but even with ten times as much data its performance is far from kernel regression.
Similar reasoning follows with noisy data (see \cref{fig:fem-incnoise}): when noise is small, PIKS greatly outperforms FEM (for a moderately misspecified problem, $\nu=1.5$); as noise increases the two algorithms attain equally accurate estimates.

\begin{figure}
    \centering
    \begin{minipage}[t]{0.485\textwidth}
        \includegraphics[width=\linewidth]{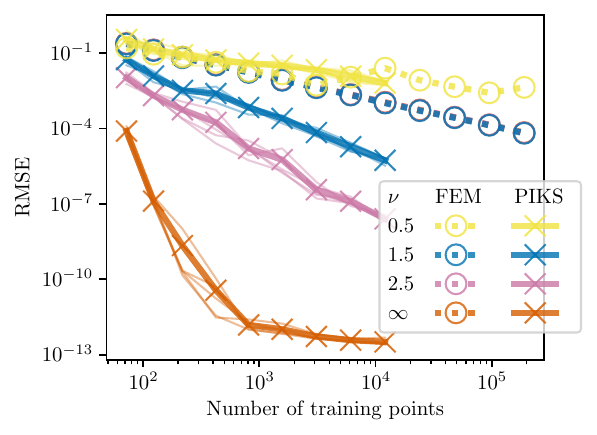}
        \caption{FEM vs. PIKS on noiseless data of different smoothness. All FEM results apart from $\nu=0.5$ overlap at the top of the plot. PIKS with $\nu = \infty$ plateaus at numerical precision.}
        \label{fig:fem-incdata}
    \end{minipage}\hfill
    \begin{minipage}[t]{0.485\textwidth}
        \includegraphics[width=\linewidth]{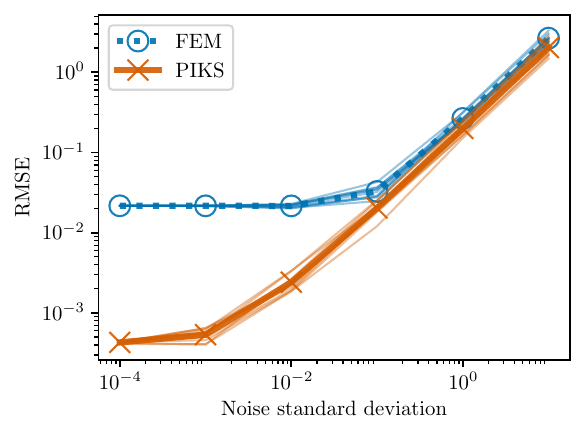}
        \caption{FEM vs. PIKS with increasing noise on the boundary conditions. 500 data points were used; the true function comes from a Matérn 3/2 kernel.}
        \label{fig:fem-incnoise}
    \end{minipage}
\end{figure}

\section{Conclusions}

We formulated and analyzed PIKS, a kernel-based framework for physics-informed learning that incorporates linear differential constraints into the standard regression objective. The resulting estimator can be computed in closed form by solving a linear system. Our principal contribution is to establish learning guarantees in the misspecified setting: PIKS is universally consistent, jointly recovering $u^*$ and $\cD u^*$ asymptotically even when the target function $u^*$ lies outside the native RKHS. We verify the required assumptions in Sobolev-space settings under two sampling schemes. Under suitable source conditions, we also derive convergence rates for estimating both function values and derivatives. Small-scale experiments illustrate these results, showing that PIKS can learn even under severe misspecification and can be competitive with existing methods.

Several questions remain open. The experiment of \Cref{sec:expgrad} suggests that derivative observations improve the estimation of the function values themselves ($n$ value plus $n$ gradient points outperform $2n$ value points) but our rate does not capture this effect. It would be interesting to establish theoretically when the two data sources influence each other, in either direction. On the algorithmic side, introducing separate weights for the value and differential loss terms could accommodate differences in noise levels, sample sizes, and physical scales. From a computational perspective, PIKS shares the principal limitation of standard kernel methods: solving the resulting linear system scales poorly with the dataset size. Developing efficient approximations is therefore an important direction for enabling large-scale applications. Extending our method to nonlinear PDEs would be an important and challenging research direction, as well as considering inverse problems for PDEs with kernel methods.

\section*{Acknowledgements}
The authors thank Rayan Autones for his contributions to the early experimental stages of the project, his exploration of the relevant literature, and many helpful discussions.
This material is based upon work supported by the Air Force Office of Scientific Research under award number FA8655-23-1-7083. The research was supported in part by the MIUR Excellence Department Project awarded to Dipartimento di Matematica, Università di Genova, CUP D33C23001110001.

\appendix

\section{Notation and RKHS basics}
\subsection{Notation}
We denote by $\NN$ the set of nonnegative integers including $0$. For functions $\RR^d \rightarrow \RR$, where $d, m \in \NN$, and for $\alpha = (\alpha_1, \dots , \alpha_d) \in \NN_0^d$ we denote by $\partial^\alpha$ the multi-index derivative
\[\partial^\alpha = \frac{\partial^{\alpha_1}}{\partial x_1^{\alpha_1}}  \cdots \frac{\partial^{\alpha_d}}{\partial x_d^{\alpha_d}}.\]
We also define the modulus $|\alpha| = \alpha_1 + \cdots + \alpha_d$. For a function of two variables $G(x,y)$, defined over $\RR^{d} \times \RR^d$, we denote by $\partial^\alpha_1 G(x,y)$ and $\partial^\alpha_2 G(x,y)$  the derivative $\partial^\alpha$ applied to $G(x,y)$ as a function of $x$ and $y$, respectively. For $\alpha, \beta \in \NN_0^d$, we denote by 
\begin{equation}\label{df:bilateral-derivative}
\partial^{\alpha, \beta} = \partial^\alpha_1 \partial^\beta_2
\end{equation}
the bilateral multi-index derivative. In particular, if we see a function of two $d$-dimensional variables as a function of one $2d$-dimensional variable, we can write
\[\partial^{\alpha, \beta} = \frac{\partial^{\alpha_1}}{\partial x_1^{\alpha_1}} \cdots \frac{\partial^{\alpha_d}}{\partial x_d^{\alpha_d}} \cdot \frac{\partial^{\beta_1}}{\partial x_{d+1}^{\beta_1}}  \cdots \frac{\partial^{\beta_d}}{\partial x_{2d}^{\beta_d}}.\]
For $ a \leq b \in \NN$, we denote by $\lb a , b \rb$ the set $\{ n \in \NN, a \leq n \leq b \}$.

\subsection{RKHS basics}\label{sec:rkhs-basics}
We consider a compact set $\Input \subset \RR^d$, for some integer $d \geq 1$, such that $\Input$ is the closure of its nonempty interior. We consider a continuous kernel $K : \cX \times \cX \to \RR$. We denote by $\cH$ the RKHS associated to $K$, which is in particular a set of functions $\Input \rightarrow \RR$.

For $x \in \Input$, we define 
\[ \function{K_x}{\Input}{\RR}{y}{K(x,y).}\]
It is well-known that the elements $K_x$ belong to the RKHS $\cH$ (in fact, the span of all the $K_x$ for $x \in \Input$ is dense in $\cH$). They satisfy the \emph{reproducing property}:
\begin{equation}\label{eq:eq:reproducing-property2}\forall f \in \cH, \forall x \in \Input, \quad  \langle f , K_x \rangle_\cH = f(x) .\end{equation}

\begin{df}
 Assume that $\cX \subset \RR^d$ is compact and is equal to the closure of its interior. For $s \in \NN$, we define $C^s(\cX)$ as the space of functions $u : \Input \to \RR$ such that $u \in C^s(\Int \Input)$ and for all multi-indices $\alpha \in \NN_0^d$ with $|\alpha| \leq s$, the derivative $\partial^\alpha u : \Int \Input \to \RR$ can be extended to a continuous function of $\Input$.
\end{df}

\begin{prop}[Reproducing property for the derivatives] \label{prop:derivatives-reproducing-prop}
    Assume that $\cX \subset \RR^d$ is compact and is equal to the closure of its interior, and assume that $K \in C^{2s}(\Input \times \Input)$. Then, any function $u$ in $\cH$ is $s$ times continuously differentiable on $\Input$, i.e. we have a natural embedding $\cH \hookrightarrow C^s(\Input)$.
     
     Furthermore, for any index $\alpha \in \NN_0^d$ such that $|\alpha| \leq s$, for any $x \in \Input$, the function 
     \begin{equation}\label{eq:definition-of-J^alpha-apdx}
         K_x^\alpha : y \mapsto \partial^\alpha_1 K(x,y)
     \end{equation} belongs to $\cH$ and we have 
    \begin{equation}\label{eq:derivatives-representer}\langle u, K_x^\alpha \rangle_\cH = \partial^\alpha u(x).
    \end{equation}
\end{prop}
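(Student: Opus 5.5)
The plan is to follow the classical argument for derivative reproducing properties in RKHS, in the spirit of \citet[Theorem 1]{zhou2008derivative}, working first on the interior $\Int{\Input}$ and then extending to $\Input$ by continuity. The proof has three steps.

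\textbf{Step 1: membership of the representers.} Fix $x \in \Int{\Input}$ and a multi-index $\alpha$ with $|\alpha| \le s$. We show that $K_x^\alpha \in \cH$ by induction on $|\alpha|$. Write $\alpha = \beta + e_i$ and assume $K_x^\beta \in \cH$. Consider the difference quotients
\[
D_t = \frac{K^\beta_{x+te_i} - K^\beta_x}{t}.
\]
Using the reproducing property, $\|D_t - D_{t'}\|_\cH^2$ is a combination of values of $\partial_1^\beta\partial_2^\beta K$ at points near $(x,x)$. Since $K \in C^{2s}$ and $|\beta|+1 \le s$, the mixed derivative $\partial^{\beta+e_i,\beta+e_i} K$ exists and is continuous. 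Hence this second difference quotient converges, so $(D_t)$ is Cauchy in $\cH$ as $t \to 0$.

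Let $g$ be its limit in $\cH$. Norm convergence in $\cH$ implies pointwise convergence. Pointwise, $D_t(y) \to \partial_1^{\alpha} K(x,y)$, so $g = K_x^\alpha$. This computation of the Cauchy property is the main technical step. The obstacle is bookkeeping: one must check that the required mixed partial derivatives of order up to $2s$ exist and commute. Both follow from $K\in C^{2s}$ (via Schwarz's theorem) together with the symmetry of $K$.

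\textbf{Step 2: the reproducing identity on the interior.} For $u \in \cH$, the same induction gives
\[
\langle u, K^\alpha_x\rangle_\cH = \lim_{t\to 0} \langle u, D_t\rangle_\cH = \lim_{t\to 0} \frac{\partial^\beta u(x+te_i) - \partial^\beta u(x)}{t}.
\]
Therefore $\partial^\alpha u(x)$ exists and equals $\langle u, K_x^\alpha\rangle_\cH$ for every $x$ in the interior.

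\textbf{Step 3: continuity and extension to $\Input$.} First, the map $x \mapsto K_x^\alpha$ is continuous from $\Int{\Input}$ into $\cH$, because
\[
\|K_x^\alpha - K_{x'}^\alpha\|_\cH^2 = \partial^{\alpha,\alpha}K(x,x) - 2\,\partial^{\alpha,\alpha}K(x,x') + \partial^{\alpha,\alpha}K(x',x'),
\]
and $\partial^{\alpha,\alpha}K$ extends continuously to $\Input\times\Input$. Combined with Cauchy–Schwarz, this shows $\partial^\alpha u$ is continuous on the interior, so $u \in C^s(\Int{\Input})$.

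Next, for a boundary point $x$, pick interior points $x_k \to x$. The sequence $(K^\alpha_{x_k})$ is Cauchy in $\cH$ by the same norm identity, using that $\partial^{\alpha,\alpha}K$ is uniformly continuous on the compact set $\Input\times\Input$. Its limit agrees pointwise with $y\mapsto \partial_1^\alpha K(x,y)$, understood as the continuous extension, so $K_x^\alpha\in\cH$. As a consequence, $\partial^\alpha u$ extends continuously to $\Input$, with value $\langle u, K^\alpha_x\rangle_\cH$ at $x$. This gives both \eqref{eq:derivatives-representer} on all of $\Input$ and the embedding $\cH\hookrightarrow C^s(\Input)$.

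Finally, the embedding is bounded. By compactness, $\sup_{x \in \Input} \partial^{\alpha,\alpha}K(x,x) < \infty$. Cauchy–Schwarz then gives
\[
|\partial^\alpha u(x)| \le \|u\|_\cH\, \bigl(\partial^{\alpha,\alpha}K(x,x)\bigr)^{1/2},
\]
which is bounded uniformly in $x$.
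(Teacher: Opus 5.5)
Your proof is correct and is essentially the argument of \citet[Theorem 1]{zhou2008derivative}, which the paper invokes by citation rather than reproducing. Like Zhou, you proceed by induction on $|\alpha|$ at interior points via difference quotients of the representers, using a norm-Cauchy estimate on the mixed second difference of $\partial^{\beta,\beta}K$. You then extend to boundary points using uniform continuity of $\partial^{\alpha,\alpha}K$ on the compact set $\cX\times\cX$.

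One detail is left implicit: for boundary $y$, the identity $K_x^\alpha(y)=\partial_1^\alpha K(x,y)$ must be read via the continuous extension. This follows from continuity of elements of $\cH$ and of the extended derivative, matching the convention used to define $C^s(\cX)$.
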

\noindent \Cref{prop:derivatives-reproducing-prop} is proved in \cite[Theorem 1]{zhou2008derivative}.

We now restate and prove \Cref{lem:smooth-kernel-implies-bounded-diff-eval} from \Cref{sec:PIKS}.
\begin{lem}\label{lem:smooth-kernel-implies-bounded-diff-eval-apdx}
Assume that $\cX$ is compact and satisfies $\overline{\Int{\cX}} = \cX$, and consider a linear differential operator of the form 
\begin{equation}\label{eq:diff-op-order-s-apdx}
    \cD = \sum_{|\alpha| \leq s} c_\alpha \partial^\alpha,\end{equation}
 for some integer $s\geq1$ and where the $c_\alpha \colon \cX \rightarrow \RR$ are continuous coefficient functions.
    If $K \in C^{2s}(\cX \times \cX)$, then \Cref{ass:diff-eval} holds. Furthermore the representer $\KD_x$ is available in closed form:
\begin{equation}\label{eq:PDE-representer-expression-apdx}
    \KD_x: y \mapsto \mathcal{D}_1 K(x,y),
\end{equation}
where $\cD_1$ means that we applied the operator $\cD$ with respect to the first variable of $(x,y) \mapsto K(x,y)$.
\end{lem}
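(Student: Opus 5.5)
The plan is to reduce everything to \Cref{prop:derivatives-reproducing-prop}. Fix $x \in \cX$. Since $K \in C^{2s}(\cX\times\cX)$, that proposition gives, for every multi-index $\alpha$ with $|\alpha|\le s$, a representer $K_x^\alpha : y\mapsto \partial_1^\alpha K(x,y)$ in $\cH$ with $\langle u, K_x^\alpha\rangle_\cH = \partial^\alpha u(x)$ for all $u\in\cH$. It also gives $\cH \hookrightarrow C^s(\cX)$, so $\cD u(x)$ is defined pointwise for each $u\in\cH$, including boundary points, through the continuous extensions of the derivatives. The coefficients $c_\alpha$ are continuous on $\cX$, so $c_\alpha(x)$ is a finite real number.

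By linearity,
\[
\cD u(x) = \sum_{|\alpha|\le s} c_\alpha(x)\,\partial^\alpha u(x) = \Big\langle u, \sum_{|\alpha|\le s} c_\alpha(x) K_x^\alpha\Big\rangle_\cH .
\]
The element $\KD_x := \sum_{|\alpha|\le s} c_\alpha(x) K_x^\alpha$ is a finite linear combination of elements of $\cH$, so it lies in $\cH$. By Cauchy--Schwarz, $|\cD u(x)|\le \|\KD_x\|_\cH\|u\|_\cH$, so $u\mapsto \cD u(x)$ is a bounded linear functional. This proves \Cref{ass:diff-eval}. By uniqueness of the Riesz representer, $\KD_x$ is the representer in \eqref{eq:diff-reproducing-property}.

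It remains to identify $\KD_x$ with \eqref{eq:PDE-representer-expression-apdx}. Evaluating at $y$ gives $\KD_x(y)=\sum_{|\alpha|\le s} c_\alpha(x)\,\partial_1^\alpha K(x,y)$, which is by definition $\cD_1 K(x,y)$, the operator $\cD$ applied in the first variable with coefficients evaluated at that variable.

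The main obstacle is the case of boundary points $x\in\partial\cX$, where derivatives are only defined through continuous extensions. This is already handled inside \Cref{prop:derivatives-reproducing-prop} (via \citet{zhou2008derivative}) under the hypothesis $\overline{\Int\cX}=\cX$, so I will rely on it directly and not re-prove it.
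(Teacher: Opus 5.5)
Your proposal is correct and follows the paper's proof essentially verbatim. Both arguments write $\cD u(x)$ as a finite linear combination of the bounded functionals $u\mapsto\partial^\alpha u(x)$ from \Cref{prop:derivatives-reproducing-prop}, and take $\KD_x=\sum_{|\alpha|\le s}c_\alpha(x)K_x^\alpha$ as the representer; your explicit Cauchy--Schwarz bound and appeal to Riesz uniqueness only spell out steps the paper leaves implicit.
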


\begin{proof}
    For any $x \in \Input$, $\cD u(x) = \sum_{|\alpha| \leq s} c_\alpha(x) \partial^\alpha u(x)$ is a linear combination of the functionals $u \mapsto \partial^\alpha u(x)$, which are bounded according to \Cref{prop:derivatives-reproducing-prop}, and it is thus a bounded functional. We obtain the representer by linear combination of the representers $K_x^\alpha$:
    \[K^\cD_x =  \sum_{|\alpha| \leq s} c_\alpha(x) K_x^\alpha.\] 
\end{proof}

\section{PIML with kernels}\label{sec:PIML-kernels-apdx}

In this section, we work under \Cref{ass:diff-eval}. We define the \emph{sampling operator}  
\begin{equation}\label{definition-sampling-op-apdx}\function{\Ah}{\cH}{\RR^n \times \RR^m}{u}{\left(u(x_i)_{i=1}^n , \cD u(z_j)_{j=1}^m \right),}\end{equation}
where $\RR^n \times \RR^m$ is equipped with the normalized Euclidean norm 
\[\|(a,b)\|_{n,m}^2 = \|a\|_n^2 + \|b\|_m^2 = \frac{1}{n}\sum_{i=1}^n a_i^2 + \frac{1}{m}\sum_{j=1}^m b_j^2.\]
The sampling operator $\widehat A$ is a random bounded operator. Indeed, it depends on the random samples $x_i, 1 \leq i \leq n$ and $z_j , 1 \leq j \leq m$, and it is bounded thanks to \eqref{eq:reproducing-property} and \eqref{eq:diff-reproducing-property}.

Let us compute the adjoint $\Ah^*$. Let $u \in \cH$ and $(a,b) \in \RR^n \times \RR^m$. 
\begin{align*}
    \langle \Ah^* (a,b), u \rangle_\cH = \langle (a,b) , \Ah u \rangle_{n,m} & =  \frac{1}{n} \sum_{i=1}^n a_i u(x_i) + \frac{1}{m} \sum_{j=1}^m b_j \cD u(z_j)  \\
    & =  \frac{1}{n} \sum_{i=1}^n \langle a_i K_{x_i}, u \rangle_\cH + \frac{1}{m} \sum_{j=1}^m \langle b_j K^\cD_{z_j} , u \rangle_\cH \\
    & = \left\langle \frac{1}{n} \sum_{i=1}^n  a_i K_{x_i} + \frac{1}{m} \sum_{j=1}^m  b_j K^\cD_{z_j} \ , \ u \right\rangle_\cH .
\end{align*}
This proves that
\begin{equation}\label{eq:expression-of-Ahat*-apdx}
    \Ah^* (a,b) = \frac{1}{n} \sum_{i=1}^n  a_i K_{x_i} + \frac{1}{m} \sum_{j=1}^m  b_j K^\cD_{z_j}.
\end{equation}

Let us now denote
\[Y = \left((y_i)_{i=1}^n, (w_j)_{j=1}^m\right) \in \RR^n\times \RR^m.\]
We can reformulate the physics-informed empirical risk \eqref{eq:PIKS-estimator} defined in \Cref{sec:PIKS} as
\begin{equation}\label{reformulated-loss-apdx}
    \widehat R_\lambda(u) = \| \Ah u - Y \|_{n,m}^2 + \lambda \| u \|_\cH^2.
\end{equation}
For any $\lambda > 0$, the empirical risk \eqref{reformulated-loss-apdx} admits a unique minimizer. Furthermore, this minimizer can be expressed in closed form as
\begin{equation}\label{expression-of-u-lambdahat:eq}
    \uhl : = \left(\Ah^* \Ah + \lambda I\right)^{-1} \Ah^*Y. 
\end{equation}
    Indeed, the function $\cL_\lambda$ is strongly convex on the RKHS $\cH$. By solving $\nabla \cL_\lambda(u) = 0$, we find $\uhl$ as the unique solution, which shows that it is the unique minimizer. This proves that the PIKS estimator \eqref{eq:PIKS-estimator} is well-defined.

\begin{prop}\label{prop:closed-from-expression-apdx}
The estimator $\uhl$ can be decomposed as
    \begin{equation}\label{closed-form-expression-eq:apdx}
        \uhl = \sum_{i=1}^n \alpha_i K_{x_i} + \sum_{j=1}^m \beta_j K^\cD_{z_j},
    \end{equation}
    where $(\alpha, \beta) = \left( \Kbf + \lambda \Jbf  \right)^{-1} Y \ \in \RR^{n+m}$, with $\Jbf$ the diagonal $(n+m) \times (n+m)$ matrix satisfying 
\[\Jbf _{i,i} = \begin{cases} n & \text{if } 1 \leq  i \leq n \\ m & \text{if } n+1 \leq i \leq n+m\end{cases}\]
and $\Kbf$ the kernel matrix, which is a $(n+m) \times (n+m)$ block matrix:

\[\Kbf = \begin{pmatrix}[2] \Abf & \Cbf \\ 
    \Cbf^\top & \Bbf \end{pmatrix} , \]
where the blocks are the following:
\begin{align*}
& \Abf \in \RR^{n \times n}, \quad \Abf_{i,i'}  = \langle K_{x_i} , K_{x_{i'}}\rangle_\cH, \quad \forall i,i' \in \lb 1 , n \rb;\\
& \Bbf \in \RR^{m \times m}, \quad  \Bbf_{j,j'} = \langle K^\cD_{z_j}, K^\cD_{z_{j'}} \rangle_\cH, \quad \forall j,j' \in \lb 1 , m \rb; \\
& \Cbf  \in \RR^{n \times m}, \quad  \Cbf_{i,j'} = \langle K_{x_i} , K^\cD_{z_{j'}} \rangle_\cH , \quad \forall i \in \lb 1 , n \rb, \forall j' \in \lb 1, m \rb.
\end{align*}
\end{prop}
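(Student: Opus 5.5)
The plan is to start from the closed form \eqref{expression-of-u-lambdahat:eq}, $\uhl = (\Ah^*\Ah + \lambda I)^{-1}\Ah^* Y$, and use the push-through identity
\[
(\Ah^*\Ah + \lambda I)^{-1}\Ah^* = \Ah^*(\Ah\Ah^* + \lambda I)^{-1},
\]
where the right-hand side involves operators on the finite-dimensional space $\RR^n\times\RR^m$ with the normalized inner product $\langle\cdot,\cdot\rangle_{n,m}$. This identity follows from $\Ah^*(\Ah\Ah^*+\lambda I) = (\Ah^*\Ah+\lambda I)\Ah^*$ together with the invertibility of both operators for $\lambda>0$, since both are self-adjoint positive plus $\lambda I$. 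Writing $c := (\Ah\Ah^*+\lambda I)^{-1}Y \in \RR^n\times\RR^m$, the expression \eqref{eq:expression-of-Ahat*-apdx} for the adjoint then gives
\[
\uhl = \Ah^* c = \sum_{i} \tfrac{c_i}{n} K_{x_i} + \sum_j \tfrac{c_{n+j}}{m} K^\cD_{z_j}.
\]
This already has the form \eqref{closed-form-expression-eq:apdx}, with $\alpha_i = c_i/n$ and $\beta_j = c_{n+j}/m$, that is, $(\alpha,\beta) = \Jbf^{-1}c$.

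Next I compute $\Ah\Ah^*$ as a matrix acting on coordinates. Evaluating $\Ah^*(a,b)$ at $x_{i'}$ through the reproducing property \eqref{eq:reproducing-property}, and applying $\cD$ at $z_{j'}$ through \eqref{eq:diff-reproducing-property}, yields the entries $\langle K_{x_i},K_{x_{i'}}\rangle_\cH$, $\langle K^\cD_{z_j},K_{x_{i'}}\rangle_\cH$, and so on. Concretely, $\Ah\Ah^*(a,b) = \Kbf\,\Jbf^{-1}(a,b)$, where the division by $n$ or $m$ comes from the normalization in \eqref{eq:expression-of-Ahat*-apdx}. The symmetry of the inner product takes care of the transposition for the block $\Cbf^\top$.

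With this, $c$ solves $(\Kbf\Jbf^{-1} + \lambda I)c = Y$. Substituting $c = \Jbf(\alpha,\beta)$ gives $(\Kbf + \lambda\Jbf)(\alpha,\beta) = Y$. The matrix $\Kbf+\lambda\Jbf$ is invertible because $\Kbf$ is a Gram matrix, hence positive semidefinite, and $\lambda\Jbf$ is positive definite. This yields $(\alpha,\beta) = (\Kbf+\lambda\Jbf)^{-1}Y$.

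The main point of care is keeping track of the weighted inner product. The adjoint, and therefore the identity $\Ah\Ah^* = \Kbf\Jbf^{-1}$, is taken with respect to $\langle\cdot,\cdot\rangle_{n,m}$ and not the Euclidean inner product. Operator inverses on $\RR^{n+m}$ are just matrix inverses, so this causes no real difficulty once the factor $\Jbf^{-1}$ is placed correctly.
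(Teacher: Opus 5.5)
Your proposal is correct and follows the paper's proof essentially step for step. Both apply the push-through identity to $\uhl = (\Ah^*\Ah+\lambda I)^{-1}\Ah^*Y$, identify $\Ah\Ah^* = \Kbf\Jbf^{-1}$ from the adjoint formula, and rescale $(\alpha,\beta) = \Jbf^{-1}(\Kbf\Jbf^{-1}+\lambda I)^{-1}Y = (\Kbf+\lambda\Jbf)^{-1}Y$; your remarks on the weighted inner product and on the invertibility of $\Kbf+\lambda\Jbf$ add details the paper leaves implicit.
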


\begin{proof} First observe with \eqref{definition-sampling-op-apdx} and \eqref{eq:expression-of-Ahat*-apdx} that $\Ah \Ah^* = \Kbf \Jbf^{-1}$. Then, using \eqref{expression-of-u-lambdahat:eq} and the push-through identity, we have
    \begin{align*}
        \uhl & = \left(\Ah^* \Ah + \lambda I\right)^{-1} \Ah^*Y \\
        & = \Ah^* \left( \Ah \Ah^* + \lambda I \right)^{-1} Y \\
        & = \sum_{i=1}^n \alpha_i K_{x_i} + \sum_{j=1}^m \beta_j K^\cD_{z_j},
    \end{align*}
where we see with \eqref{eq:expression-of-Ahat*-apdx} that 
\[(\alpha, \beta) = \Jbf^{-1} \left( \Ah \Ah^* + \lambda I \right)^{-1} Y = \Jbf^{-1} \left( \Kbf \Jbf^{-1}+ \lambda I \right)^{-1} Y =  \left(  \Kbf+ \lambda \Jbf \right)^{-1} Y.\] 
\end{proof}

\section{Theoretical analysis}\label{sec:theoretical-analysis-apdx}
\subsection{Assumptions}\label{sec:theory-assumptions-apdx}

In this section, we make a slight change in the sampling setting with respect to \Cref{sec:prob}, by temporarily forgetting about the target $u^*$ and replacing it with two decoupled constraints. This leads to a more general setting, which we will later specialize back to the setting of this paper to obtain the main results. Consider indeed the same sampling setting as in \Cref{sec:prob}, with a compact set $\cX \subset \RR^d$ and two variables $X$ and $Z$ over $\cX$ with respective probability distributions $\rho_X$ and $\rho_Z$. We then consider two functions $h \in L^2(\rho_X) \cap L^\infty(\rho_X)$, $q \in L^2(\rho_Z) \cap L^\infty(\rho_Z)$, and instead of \eqref{eq:data-generation}, we now consider
\begin{equation}
        Y = h(X) + \epsilon, \qquad  W = q (Z) + \eta.
\end{equation}
The datasets $(x_i,y_i)_{i=1}^n$ and $(z_j,w_j)_{j=1}^m$ are then defined as i.i.d.~copies of $(X,Y)$ and $(Z,W)$ respectively, and the PIKS estimator $\widehat u_\lambda$ is defined --- as in \Cref{sec:prob} --- as:
\begin{equation}\label{eq:PIKS-estimator-apdx}
\widehat u_\lambda : = \argmin_{u \in \cH} \widehat R_\lambda(u), \qquad \widehat R_\lambda(u) =\frac{1}{n}\sum_{i=1}^n ( u (x_i) - y_i)^2 + \frac{1}{m}\sum_{j=1}^m (\cD u(z_j) - w_j)^2 + \lambda \|u \|^2_\cH.
\end{equation}
As we will see later, under our working assumptions, for any $u \in \cH$, we have $u \in L^2(\rho_X)$ and $\cD u \in L^2(\rho_Z)$ which allows us to study the error
\[\mathcal{E}(u) : =  \sqrt{\| u - h \|_{L^2(\rho_X)}^2 + \| \cD u - q \|_{L^2(\rho_Z)}^2} .\]

Throughout \Cref{sec:theoretical-analysis-apdx}, we work under \Cref{ass:diff-eval,ass:bounded-features,ass:bounded-data}. In particular, \Cref{ass:bounded-data} guarantees that there exist $M_y, M_w > 0$ such that almost surely we have $|y_i| \leq M_y$ and $|w_j| \leq M_w$.

\subsection{Error decomposition}

As is standard in theory of kernel methods, we introduce in this section an intermediate function $u_\lambda \in \cH$, which allows us to decompose the error between an estimation term and an approximation term.

\begin{prop}
The two linear operators
\[\function{S_{\rho_X}}{\cH}{L^2(\rho_X)}{u}{[u]_{\rho_X},}\]
and
\[\function{D_{\rho_Z}}{\cH}{L^2(\rho_Z)}{u}{[\cD u]_{\rho_Z}.}\]
are well-defined and bounded.
\end{prop}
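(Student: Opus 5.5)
The plan is to derive well-definedness and boundedness directly from \Cref{ass:bounded-features}, without passing through $\cF$ or \Cref{ass:embeddings}, since we are in the decoupled setting of \Cref{sec:theory-assumptions-apdx}. Linearity of both maps is immediate from the linearity of $u\mapsto u$ and of $\cD$. The main point is to check two things: that the equivalence classes $[u]_{\rho_X}$ and $[\cD u]_{\rho_Z}$ make sense, that is, that $u$ and $\cD u$ are genuine measurable functions on $\cX$; and that their $L^2$ norms are controlled by $\|u\|_\cH$.

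\emph{Measurability.} For $u\in\cH$, $u$ is a pointwise defined function on $\cX$, because elements of an RKHS are functions. Under \Cref{ass:diff-eval}, $\cD u(x)=\langle u,\KD_x\rangle_\cH$ is defined for every $x$, so $\cD u$ is also a pointwise defined function. To get measurability of $\cD u$ I would use the fact that finite combinations $\sum_k a_k K_{y_k}$ are dense in $\cH$. Each $\cD K_{y}(x)=\langle K_y,\KD_x\rangle_\cH=\KD_x(y)$, and the joint map $(x,y)\mapsto \KD_x(y)$ is measurable in the settings considered: for instance it is continuous under \Cref{lem:smooth-kernel-implies-bounded-diff-eval}, or one can simply assume measurability of the kernels $K$ and $K^\cD$. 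Then, by \eqref{eq:bounded-features}, $\cD u_k\to\cD u$ \emph{uniformly} on $\cX$ whenever $u_k\to u$ in $\cH$, since $\sup_x|\cD u_k(x)-\cD u(x)|\le \kappa_\cD\|u_k-u\|_\cH$. So $\cD u$ is a uniform limit of measurable functions and is therefore measurable. The same argument, with $\kappa$ in place of $\kappa_\cD$, handles $u$ itself. I expect this measurability step to be the only delicate point, and I would state the mild measurability of $K,K^\cD$ as implicit in the standing assumptions.

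\emph{Boundedness.} Since $\rho_X$ is a probability measure,
\[
\|S_{\rho_X}u\|_{L^2(\rho_X)}^2=\int_\cX |u(x)|^2\,d\rho_X(x)\le \sup_{x\in\cX}|u(x)|^2\le \kappa^2\|u\|_\cH^2 .
\]
The analogous estimate gives $\|D_{\rho_Z}u\|_{L^2(\rho_Z)}\le\kappa_\cD\|u\|_\cH$. Hence both operators are bounded, with $\|S_{\rho_X}\|\le\kappa$ and $\|D_{\rho_Z}\|\le\kappa_\cD$. These constants will be used later in the concentration arguments.
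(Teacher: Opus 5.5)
Your proof is correct and matches the paper's: both use \Cref{ass:bounded-features} to get the pointwise bounds $|u(x)|\le\kappa\|u\|_\cH$ and $|\cD u(x)|\le\kappa_\cD\|u\|_\cH$, and integrate them against the probability measures to obtain $\|S_{\rho_X}\|_{\op}\le\kappa$ and $\|D_{\rho_Z}\|_{\op}\le\kappa_\cD$. Your measurability step (writing $\cD u$ as a uniform limit of $\cD$ applied to finite combinations of the $K_y$) is a careful addition that the paper leaves implicit. Strictly, that argument needs $x\mapsto\langle K_y,\KD_x\rangle_\cH$ to be measurable for each fixed $y$, which is not the same as measurability of $K^\cD$; it does hold in the setting of \Cref{lem:smooth-kernel-implies-bounded-diff-eval}.
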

\begin{proof}
For any $u \in \cH$, for any $x \in \supp(\rho_X)$, we find using \Cref{ass:bounded-features} that
\[|u(x)| = | \langle u , K_x \rangle_\cH | \leq \| u \|_\cH \| K_x \|_\cH \leq \kappa \| u \|_\cH ,\]
and thus we have
\[\|u\|_{L^2(\rho_X)}^2= \int_{\Input} |u(x)|^2 d\rho_X(x) \leq \kappa^2 \|u\|_\cH^2.\]
This shows that $[u]_{\rho_X}$ belongs to $L^2(\rho_X)$, i.e. $S_{\rho_X}$ is well defined, and $S_{\rho_X}$ is bounded with $ \|S_{\rho_X} \|_{\op} \leq \kappa$. In a similar way, for any $u \in \cH$, for any $z \in \supp(\rho_Z)$, we find using again \Cref{ass:bounded-features} that
\[|\cD u (z)| = | \langle u , K^\cD_z \rangle_\cH | \leq \| u \|_\cH \| K^\cD_z \|_\cH \leq \kappa_\cD \| u \|_\cH .\]
We thus have
\[\|D_{\rho_Z}u\|_{L^2(\rho_Z)}^2 = \int_{\Input} |\cD u(z)|^2 d\rho_Z(z) \leq \kappa_\cD^2 \|u\|_\cH^2,\]
which shows that $[\cD u]_{\rho_Z}$ belongs to $L^2(\rho_Z)$, i.e. $D_{\rho_Z}$ is well defined, and $D_{\rho_Z}$ is bounded with $ \|D_{\rho_Z} \|_{\op} \leq \kappa_\cD$.
\end{proof}

We can then define the operator
\begin{equation}\label{eq:def-operator-A_rho-apdx}
    \function{A_\rho}{\cH}{L^2(\rho_X) \times L^2(\rho_Z)}{u}{(S_{\rho_X}u, D_{\rho_Z}u),}
\end{equation}
where $L^2(\rho_X) \times L^2(\rho_Z)$ is equipped with the standard (Hilbertian) product norm
\begin{equation}\label{rho-norm-def}\|(f_1, f_2)\|_\rho^2 = \|f_1\|_{L^2(\rho_X)}^2 + \|f_2\|_{L^2(\rho_Z)}^2 = \int_{\Input} |f_1(x)|^2 d\rho_X(x) + \int_{\Input} |f_2(z)|^2 d\rho_Z(z),\end{equation}
where we also denote by $\langle \cdot , \cdot \rangle_\rho$ the corresponding scalar product. 
Since $S_{\rho_X}$ and $D_{\rho_Z}$ are bounded, $A_\rho$ is bounded with norm 
\begin{equation}\label{operator:norm:of:A}
\|A_\rho \|_{\op} \leq \sqrt{\|S_{\rho_X} \|_{\op} ^2 +  \|D_{\rho_Z} \|_{\op}^2}\leq \sqrt{\kappa^2 + \kappa_\cD^2}.\end{equation}

\begin{prop}
For $\lambda > 0$, we define the \emph{population estimator} as
\[u_\lambda = \argmin_{u \in \cH} \| A_\rho u - (h,q) \|_\rho^2 + \lambda \|u\|_\cH^2.\]
Its expression is
\begin{equation}\label{expression-of-u-lambda}
    u_\lambda = \left( A_\rho^*A_\rho + \lambda I\right)^{-1} A_\rho^* (h,q).
\end{equation}
\end{prop}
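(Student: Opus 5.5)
The statement has two parts: that $u_\lambda$ exists and is unique, and that it is given by the closed form \eqref{expression-of-u-lambda}. We treat it as a Tikhonov-regularized least-squares problem in the Hilbert space $\cH$ and proceed exactly as for the empirical estimator \eqref{expression-of-u-lambdahat:eq}, with the sampling operator $\Ah$ replaced by the bounded operator $A_\rho$ from \eqref{eq:def-operator-A_rho-apdx}.

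The first step is to check that the objective is well defined and finite on $\cH$. The operator $A_\rho$ is bounded by \eqref{operator:norm:of:A}. Since $h \in L^2(\rho_X)$ and $q \in L^2(\rho_Z)$, we have $(h,q) \in L^2(\rho_X)\times L^2(\rho_Z)$. Define
\[J_\lambda(u) = \|A_\rho u - (h,q)\|_\rho^2 + \lambda\|u\|_\cH^2 .\]
Expanding the square gives
\[J_\lambda(u) = \langle (A_\rho^*A_\rho + \lambda I)u, u\rangle_\cH - 2\langle A_\rho^*(h,q), u\rangle_\cH + \|(h,q)\|_\rho^2 .\]
This is a continuous quadratic functional on $\cH$. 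Because $A_\rho^*A_\rho$ is positive and $\lambda > 0$, we get
\[\langle (A_\rho^*A_\rho + \lambda I)u, u\rangle_\cH \ge \lambda\|u\|_\cH^2,\]
so $J_\lambda$ is $\lambda$-strongly convex and coercive.

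A continuous, strongly convex and coercive functional on a Hilbert space has exactly one minimizer. Since $J_\lambda$ is Fréchet differentiable, that minimizer is characterized by $\nabla J_\lambda(u) = 0$. Computing the gradient gives
\[\nabla J_\lambda(u) = 2\bigl((A_\rho^*A_\rho + \lambda I)u - A_\rho^*(h,q)\bigr).\]

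It remains to invert $A_\rho^*A_\rho + \lambda I$. This operator is bounded, self-adjoint, and bounded below by $\lambda$. By Lax--Milgram, or equivalently by the spectral theorem, it is therefore invertible with $\|(A_\rho^*A_\rho + \lambda I)^{-1}\|_{\op} \le 1/\lambda$. Solving the first-order condition then gives $u_\lambda = (A_\rho^*A_\rho+\lambda I)^{-1}A_\rho^*(h,q)$, which is \eqref{expression-of-u-lambda}.

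The argument has no real obstacle. The only points that need care are that $(h,q)$ lies in the codomain of $A_\rho$, which is guaranteed by the standing assumptions on $h$ and $q$, and that the inverse is bounded, which follows from the lower bound $\lambda$ on the operator.
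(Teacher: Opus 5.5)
Your proposal is correct and follows the same route as the paper: strong convexity of the regularized population risk gives a unique minimizer, which is characterized by the vanishing of the gradient $2\bigl((A_\rho^*A_\rho+\lambda I)u - A_\rho^*(h,q)\bigr)$, and you solve that equation by inverting $A_\rho^*A_\rho+\lambda I$. The difference is that you spell out details the paper leaves implicit, namely the quadratic expansion of the objective and the bounded invertibility of $A_\rho^*A_\rho+\lambda I$.
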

\begin{proof}
    The regularized population risk $R_\lambda (u) = \| A_\rho u - (h,q) \|_\rho^2 + \lambda \|u\|_\cH^2$ is strongly convex on the RKHS $\cH$. By solving $\nabla R_\lambda(u) = 0$, we find $\ul$ as the unique solution, which shows that it is the unique minimizer.
\end{proof}

We can now decompose the error as 
\begin{equation}\label{eq:error-decomposition} \cE(u) \ = \ \|A_\rho \widehat u_\lambda - (h,q) \|_\rho \quad \leq \quad \underbrace{\| A_\rho \widehat u_\lambda - A_\rho u_\lambda \|_\rho}_{\text{estimation error}} \ + \ \underbrace{\| A_\rho u_\lambda - (h,q) \|_\rho}_{\text{approximation error}}.\end{equation}
The \emph{approximation error} is studied in \Cref{sec:approx-error-apdx} while the \emph{estimation error} is studied in \Cref{sec:estimation-error-apdx}.

\subsection{Operator properties}\label{sec:operator-properties-apdx}

In this section we define and study the integral and covariance operators that are key in the theoretical proofs. 

Let us define the \emph{integral operator} as
\begin{equation}\label{eq:integral-operator-definition}
    L = A_\rho A_\rho^* : L^2(\rho_X) \times L^2(\rho_Z) \rightarrow L^2(\rho_X) \times L^2(\rho_Z).
\end{equation}
Integral operators are common objects in kernel ridge regression analysis, where they can be used to obtain convergence results. We use the integral operator \eqref{eq:integral-operator-definition} for the same purposes. The following proposition establishes some of its basic properties. 
\begin{prop}\label{properties-of-integral-operator-apdx}
For $\rho_X$-almost all $x$ and $\rho_Z$-almost all $z$, we have
\begin{equation*}
    \begin{split} L (f,g)(x,z) = \left( \int_{\Input} \langle K_y , K_x \rangle_\cH \, f(y) d \rho_X(y) + \int_{\Input} \langle K^\cD_t , K_x \rangle_\cH \, g(t) d\rho_Z (t)\ , \right. \\
    \left. \int_{\Input} \langle K_y , K^\cD_z \rangle_\cH \, f(y) d \rho_X(y) + \int_{\Input} \langle K^\cD_t , K^\cD_z \rangle_\cH \, g(t) d\rho_Z(t)\right).
\end{split}
\end{equation*}

Furthermore, $L$ is positive and trace class, and it admits a decomposition in an orthonormal system of eigenvectors $(f_i, g_i)_{i \in I}$.
\begin{equation}\label{decomposition:integral:op:ONS}
L(f,g) = \sum_{i \in I} \mu_i \langle (f,g) , (f_i,g_i) \rangle_\rho (f_i,g_i),
\end{equation}
where for all $i \in I$, $\mu_i > 0$.
\end{prop}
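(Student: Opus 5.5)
We first compute the adjoint $A_\rho^*$ explicitly and then read off the kernel representation of $L = A_\rho A_\rho^*$. The spectral statements follow from a Hilbert--Schmidt argument on $A_\rho$.

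For $(f,g) \in L^2(\rho_X)\times L^2(\rho_Z)$ and $u \in \cH$, the reproducing properties \eqref{eq:reproducing-property} and \eqref{eq:diff-reproducing-property} give
\[
\langle A_\rho u,(f,g)\rangle_\rho=\int_\cX \langle u,K_y\rangle_\cH f(y)\,d\rho_X(y)+\int_\cX \langle u,K^\cD_t\rangle_\cH\, g(t)\,d\rho_Z(t).
\]
We want to exchange the inner product with the integrals, which yields
\[
A_\rho^*(f,g)=\int_\cX K_y f(y)\,d\rho_X(y)+\int_\cX K^\cD_t g(t)\,d\rho_Z(t)
\]
as Bochner integrals in $\cH$. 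This exchange is justified by checking Bochner integrability. The integrands are bounded in norm by $\kappa|f(y)|$ and $\kappa_\cD|g(t)|$ by \Cref{ass:bounded-features}, which lie in $L^1$ since $L^2$ of a probability measure embeds in $L^1$. Measurability holds because $y\mapsto K_y$ is weakly measurable: $\langle u,K_y\rangle_\cH=u(y)$ is measurable, and $y \mapsto K^\cD_y$ is handled likewise via $\cD u$. Pettis' theorem then applies, assuming $\cH$ is separable. Separability holds because $K$ is continuous on the compact set $\cX$, or alternatively because the range of $A_\rho^*$ is separable.

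Applying $A_\rho$ and evaluating at $x$ (respectively applying $\cD$ at $z$) through the reproducing properties, and passing these bounded functionals inside the Bochner integrals, gives exactly the two displayed components of $L(f,g)(x,z)$. The identity holds $\rho_X$-a.e.\ in $x$ and $\rho_Z$-a.e.\ in $z$, since elements of $L^2$ are equivalence classes.

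Positivity is immediate: $\langle L\phi,\phi\rangle_\rho=\|A_\rho^*\phi\|_\cH^2\ge 0$, and $L$ is self-adjoint. For the trace-class property, $\mathrm{Tr}(A_\rho A_\rho^*)=\mathrm{Tr}(A_\rho^*A_\rho)$, so it suffices to show that $A_\rho$ is Hilbert--Schmidt. Take an orthonormal basis $(e_k)$ of $\cH$. Then
\[
\sum_k\|A_\rho e_k\|_\rho^2=\int\sum_k e_k(x)^2\,d\rho_X+\int\sum_k (\cD e_k(z))^2\,d\rho_Z .
\]
The exchange of sum and integral is justified by monotone convergence. Parseval gives $\sum_k e_k(x)^2=\|K_x\|_\cH^2=K(x,x)\le\kappa^2$ and $\sum_k(\cD e_k(z))^2=\|K^\cD_z\|^2\le\kappa_\cD^2$. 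Hence the trace is at most $\kappa^2+\kappa_\cD^2<\infty$.

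The spectral theorem for compact self-adjoint positive operators then yields \eqref{decomposition:integral:op:ONS}, indexed by the nonzero eigenvalues, all of which are positive. The index set $I$ is countable, and $(f_i,g_i)$ is an orthonormal basis of $\overline{\ran L}=(\Ker L)^\perp$. On $\Ker L$ the operator vanishes, consistent with the formula.

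The main technical obstacle is the measurability and Bochner-integration step, which justifies both the adjoint formula and evaluation under the integral. Everything else reduces to the bounded-feature estimates of \Cref{ass:bounded-features}.
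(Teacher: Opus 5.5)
Your proposal is correct and follows essentially the same route as the paper. Both compute $A_\rho^*$ as a Bochner integral via the reproducing properties, compose with $A_\rho$ to read off the integral formula, show $A_\rho$ is Hilbert--Schmidt with $\|A_\rho\|_{HS}^2 \le \kappa^2+\kappa_\cD^2$ (orthonormal basis of $\cH$, monotone convergence), and conclude by the spectral theorem for compact positive self-adjoint operators; your Pettis/separability discussion only makes explicit the Bochner-integrability claim that the paper states in one line.
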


Before proving \Cref{properties-of-integral-operator-apdx}, let us first prove the following Lemma which gives the expression of $A_\rho^*$.
\begin{lem}For any $(f,g) \in L^2(\rho_X) \times L^2(\rho_Z) $, we have
    \begin{equation}\label{expression-of-A*-apdx}
         A_\rho^*(f,g) =  \int_{\Input} f(y) K_y d\rho_X(y) + \int_{\Input} g(t) K^\cD_t  d\rho_Z(t).
    \end{equation}
\end{lem}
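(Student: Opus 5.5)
The plan is to compute the adjoint directly from the defining identity $\langle A_\rho^*(f,g), u\rangle_\cH = \langle (f,g), A_\rho u\rangle_\rho$. First, I will check that the right-hand side of \eqref{expression-of-A*-apdx} is a well-defined element of $\cH$, using Bochner integrals. The map $y \mapsto K_y$ is measurable as a map $\cX \to \cH$, since $K$ is continuous. Likewise $t \mapsto K^\cD_t$ is weakly measurable: for every $u\in\cH$, $t\mapsto \langle u, K^\cD_t\rangle_\cH = \cD u(t)$ is measurable. Because $\cH$ is separable (it is the RKHS of a continuous kernel on a compact set), Pettis' theorem then gives strong measurability. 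By \Cref{ass:bounded-features}, $\|K_y\|_\cH \le \kappa$ and $\|K^\cD_t\|_\cH\le\kappa_\cD$. Hence
\[\int_\cX \|f(y)K_y\|_\cH \, d\rho_X(y) \le \kappa\|f\|_{L^1(\rho_X)} \le \kappa \|f\|_{L^2(\rho_X)} < \infty,\]
and similarly for the $g$ term, since $\rho_X,\rho_Z$ are probability measures. So both integrals exist as Bochner integrals in $\cH$. They are also independent of the choice of representatives of $f,g$, because changing $f$ on a $\rho_X$-null set does not change the integral.

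Next, fix $u\in\cH$. Bochner integrals commute with bounded linear functionals, so we may pass $\langle\cdot,u\rangle_\cH$ inside the integrals. Then the reproducing properties \eqref{eq:reproducing-property} and \eqref{eq:diff-reproducing-property} give
\[\Big\langle \int f(y)K_y\,d\rho_X(y) + \int g(t)K^\cD_t\,d\rho_Z(t),\, u\Big\rangle_\cH = \int f(y)u(y)\,d\rho_X(y) + \int g(t)\cD u(t)\,d\rho_Z(t).\]
This equals $\langle f, S_{\rho_X}u\rangle_{L^2(\rho_X)} + \langle g, D_{\rho_Z}u\rangle_{L^2(\rho_Z)} = \langle (f,g), A_\rho u\rangle_\rho$ by \eqref{rho-norm-def}. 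Since this holds for every $u$, uniqueness of the adjoint yields \eqref{expression-of-A*-apdx}.

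The only delicate point is measurability of $t\mapsto K^\cD_t$, since no continuity of $\cD u$ is assumed in general. I would handle it via weak measurability plus separability, as above. If separability is in doubt, an alternative avoids Bochner integrals altogether. Define the element through Riesz representation: the functional $u\mapsto \int f u\,d\rho_X + \int g\,\cD u\,d\rho_Z$ is bounded with norm at most $\kappa\|f\|+\kappa_\cD\|g\|$, so it has a representer $v\in\cH$, and this $v$ is by definition $A_\rho^*(f,g)$. The integral expression is then read as a weak (Pettis) integral, which is all that is needed later.
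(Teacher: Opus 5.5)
Your proposal is correct and follows the same argument as the paper: pair with an arbitrary $u\in\cH$, apply the two reproducing properties, and move the inner product through the Bochner integrals. The paper only asserts that the integrals are Bochner integrable, whereas you also justify it, via Pettis measurability and the bounds $\kappa,\kappa_\cD$.
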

\begin{proof}
Let $(f,g) \in L^2(\rho_X) \times L^2(\rho_Z) $. We have
\begin{equation*}
    \begin{aligned}
        \langle A_\rho^*(f,g) \, , \, \varphi \rangle_\cH & = \langle (f,g) \, , \, A_\rho \varphi \rangle_\rho \\
        & = \int_{\Input} f(y) \varphi(y) d\rho_X(y) + \int_{\Input} g(t) \cD \varphi(t) d\rho_Z(t) \\
        & = \int_{\Input} f(y) \langle K_y , \varphi \rangle_\cH d\rho_X(y) + \int_{\Input} g(t) \langle K^\cD_t , \varphi \rangle_\cH d\rho_Z(t) \\
        & = \left\langle \ \int_{\Input} f(y) K_y d\rho_X(y) + \int_{\Input} g(t) K^\cD_t  d\rho_Z(t) \ \ , \ \ \varphi \  \right\rangle_\cH, \\
    \end{aligned}
\end{equation*}
where we used the reproducing properties \eqref{eq:reproducing-property} and \eqref{eq:diff-reproducing-property}, and the last equality is justified by the fact that $y \mapsto f(y) K_y$ is Bochner integrable with respect to $\rho_X$ and $t \mapsto g(t) K^\cD_t$ is Bochner integrable with respect to $\rho_Z$. This proves formula \eqref{expression-of-A*-apdx}.
\end{proof}

\begin{proof}[Proof of \Cref{properties-of-integral-operator-apdx}]
Let us first establish the expression of $L$. Composing \eqref{expression-of-A*-apdx} with the expression \eqref{eq:def-operator-A_rho-apdx} of $A_\rho$, we obtain that for $\rho_X$-almost all $x$ and $\rho_Z$-almost all $z$, we have
\begin{equation*}
    \begin{split}
    L (f,g)(x,z) & = \left( \int_{\Input} f(y) K_y(x)  d \rho_X(y) + \int_{\Input} g(t) K^\cD_t(x) d\rho_Z (t)\ , \right. \\
   & \left. \cD \left[\int_{\Input} f(y) K_y  d \rho_X(y) \right](z) + \cD \left[\int_{\Input} g(t) K^\cD_t d\rho_Z(t)\right](z)\right) \\
   & = \left( \int_{\Input} \langle K_y , K_x \rangle_\cH \, f(y) d \rho_X(y) + \int_{\Input} \langle K^\cD_t , K_x \rangle_\cH \, g(t) d\rho_Z (t)\ , \right. \\
   & \left. \int_{\Input} \langle K_y , K^\cD_z \rangle_\cH \, f(y) d \rho_X(y) + \int_{\Input} \langle K^\cD_t , K^\cD_z \rangle_\cH \, g(t) d\rho_Z(t)\right).
   \end{split}
\end{equation*}
The second equality is the desired expression.

By definition, $L = A_\rho A_\rho^*$ is self-adjoint. Let us show that $A_\rho$ is Hilbert-Schmidt. Let $(e_i)_{i \in \NN}$ be an orthonormal basis of $\cH$, and let us prove that
\[\|A_\rho \|_{HS}^2 := \sum_{i\in \NN} \| A_\rho e_i \|_{\rho}^2 < + \infty. \]
Let us first observe that 
\begin{align*} \sum_{i\in \NN} \| A_\rho e_i \|_{\rho}^2  & = \sum_{i\in \NN} \left(\int_\cX |e_i(x)|^2 d\rho_X(x) + \int_\cX |\cD e_i(z)|^2 d \rho_Z(z)\right) \\
    & = \int_\cX \sum_{i\in \NN} |e_i(x)|^2 d\rho_X(x) + \int_\cX \sum_{i\in \NN} |\cD e_i(z)|^2 d \rho_Z(z) \\
& = \int_\cX \sum_{i\in \NN} |\langle e_i, K_x \rangle_\cH|^2 d\rho_X(x) + \int_\cX \sum_{i\in \NN} |\langle e_i, K^\cD_z \rangle_\cH|^2 d \rho_Z(z) \\
& \leq \kappa^2 + \kappa_\cD^2,\end{align*}
where the second equality is obtained by monotone convergence, and the final inequality comes from \Cref{ass:bounded-features}. This shows that $A_\rho$ is Hilbert-Schmidt, and as a consequence, $L$ is trace class.

To conclude the proof, trace class operators of separable Hilbert spaces (such as $L^2(\rho_X) \times L^2(\rho_Z)$) are compact, and since $L$ is also self-adjoint and positive, the spectral theorem guarantees that there exists an orthonormal system $(f_i,g_i)_{i \in I}$, there exist $\mu_i > 0$, $i \in I$ such that for any $(f,g) \in L^2(\rho_X) \times L^2(\rho_Z)$,
\begin{equation*}L(f,g) = \sum_{i \in I} \mu_i \langle (f,g) , (f_i,g_i) \rangle_\rho (f_i,g_i). \qedhere
\end{equation*}
\end{proof}

\begin{lem}\label{bound-hilbert-schmidt-norm-apdx}
   For any $x \in \cX$, we have $\| K_x \otimes K_x \|_{HS} \leq \kappa^2$ and $\| K^\cD_x \otimes K^\cD_x \|_{HS} \leq \kappa_\cD^2$.
\end{lem}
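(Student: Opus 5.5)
The argument is a direct computation of the Hilbert--Schmidt norm of a rank-one operator, followed by \Cref{ass:bounded-features}. For any $a,b \in \cH$, the rank-one operator $a \otimes b : \cH \to \cH$, $u \mapsto \langle u, b\rangle_\cH\, a$, satisfies $\|a\otimes b\|_{HS} = \|a\|_\cH \|b\|_\cH$. To check this, fix an orthonormal basis $(e_i)_{i\in\NN}$ of $\cH$ (separable here, since $\cH$ is generated by the continuous kernel on the compact $\cX$). Then
\[\|a\otimes b\|_{HS}^2 = \sum_{i} \|\langle e_i , b\rangle_\cH\, a\|_\cH^2 = \|a\|_\cH^2 \sum_i \langle e_i, b\rangle_\cH^2 = \|a\|_\cH^2\|b\|_\cH^2\]
by Parseval's identity.

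For the first bound, apply this with $a=b=K_x$. This gives $\|K_x\otimes K_x\|_{HS} = \|K_x\|_\cH^2$. By the reproducing property \eqref{eq:reproducing-property}, $\|K_x\|_\cH^2 = \langle K_x, K_x\rangle_\cH = K(x,x)$. The first inequality in \eqref{eq:bounded-features-var}, equivalent to \Cref{ass:bounded-features}, then gives $K(x,x) \le \kappa^2$. To avoid relying on that reformulation, one can instead take $u = K_x$ in \eqref{eq:bounded-features}. This gives $\|K_x\|_\cH^2 = K_x(x) \le \kappa \|K_x\|_\cH$, hence $\|K_x\|_\cH \le \kappa$.

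For the second bound, take $a=b=K^\cD_x$, which lies in $\cH$ by \Cref{ass:diff-eval}. Then $\|K^\cD_x\otimes K^\cD_x\|_{HS} = \|K^\cD_x\|_\cH^2$. Taking $u = K^\cD_x$ in the second inequality of \eqref{eq:bounded-features} and using \eqref{eq:diff-reproducing-property} gives
\[\|K^\cD_x\|_\cH^2 = \langle K^\cD_x, K^\cD_x\rangle_\cH = \cD K^\cD_x(x) \le \kappa_\cD \|K^\cD_x\|_\cH,\]
so $\|K^\cD_x\|_\cH \le \kappa_\cD$. If $K^\cD_x = 0$ the bound is trivial. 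Squaring gives the claim.

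There is no real obstacle. The only care needed is to make sure the bounded-features inequalities hold for every $x \in \cX$, not just almost every $x$; \Cref{ass:bounded-features} is stated for all $x$, so this is fine.
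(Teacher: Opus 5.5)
Your proof is correct and follows the paper's argument essentially verbatim: you expand the Hilbert--Schmidt norm in an orthonormal basis of $\cH$, use Parseval to obtain $\|K_x\|_\cH^4$ (resp.\ $\|K^\cD_x\|_\cH^4$), and conclude with \Cref{ass:bounded-features}. Your derivation of $\|K_x\|_\cH \le \kappa$ and $\|K^\cD_x\|_\cH \le \kappa_\cD$, obtained by plugging $u = K_x$ (resp.\ $u = K^\cD_x$) into \eqref{eq:bounded-features}, simply makes explicit a step the paper takes for granted.
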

\begin{proof}
Recall that we work under \Cref{ass:bounded-features}. Since $\cH$ is a separable Hilbert space, we can consider a Hilbert basis $(e_i)_{i\in \NN}$ of $\cH$. We have by definition
\[K_x \otimes K_x (e_i) = \langle K_x, e_i \rangle_\cH K_x,\]
so
\[\sum_{i \in \NN} \|K_x \otimes K_x (e_i) \|_\cH^2 = \| K_x \|_\cH^2 \sum_{i \in \NN} \langle K_x, e_i \rangle_\cH^2 = \| K_x \|_\cH^4 \leq \kappa^4. \]
This proves that $K_x \otimes K_x$ is a Hilbert-Schmidt operator, with Hilbert-Schmidt norm $ \|K_x \otimes K_x \|_{HS} \leq \kappa^2$.

The proof of the second point is identical, replacing $K_x$ by $K^\cD_x$ and $\kappa$ by $\kappa_\cD$.
\end{proof}
Let us now define the \emph{covariance operators} 
\begin{equation}\label{eq:def-of-Sigma-and-Sigmah}\Sigmah = \Ah^* \Ah,  \qquad \qquad  \Sigma = A_\rho^*A_\rho.\end{equation}

\begin{lem}\label{covariance-operators-decomposition-apdx}
    We have
\begin{equation}\label{empirical-covariance-estimator-apdx}
\widehat \Sigma = \frac{1}{n} \sum_{i=1}^n K_{x_i} \otimes K_{x_i} + \frac{1}{m} \sum_{j=1}^m K^\cD_{z_j} \otimes K^\cD_{z_j},
\end{equation}
and
\begin{equation}\label{covariance-estimator-apdx}
    \Sigma = \EE_{x \sim \rho_X}[K_x \otimes K_x] + \EE_{z \sim \rho_Z}[K^\cD_z \otimes K^\cD_z ],
\end{equation}
where the expected values are defined as Bochner integrals in the space of Hilbert-Schmidt operators. 
\end{lem}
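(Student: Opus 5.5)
The statement to prove is \Cref{covariance-operators-decomposition-apdx}. The plan is to compute $\Sigmah$ and $\Sigma$ explicitly, using the already-derived expressions for the adjoints: \eqref{eq:expression-of-Ahat*-apdx} for $\Ah^*$ and \eqref{expression-of-A*-apdx} for $A_\rho^*$. Throughout, $a\otimes b$ denotes the rank-one operator $u\mapsto \langle a,u\rangle_\cH\, b$, which is consistent with its use in \Cref{bound-hilbert-schmidt-norm-apdx}.

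\textbf{Empirical operator.} Fix $u\in\cH$. By \eqref{definition-sampling-op-apdx}, $\Ah u=(u(x_i)_i,\cD u(z_j)_j)$. Plugging this into \eqref{eq:expression-of-Ahat*-apdx} gives
\[
\Sigmah u=\frac1n\sum_{i=1}^n u(x_i)K_{x_i}+\frac1m\sum_{j=1}^m \cD u(z_j)K^\cD_{z_j}.
\]
The reproducing properties \eqref{eq:reproducing-property} and \eqref{eq:diff-reproducing-property} rewrite $u(x_i)K_{x_i}$ as $(K_{x_i}\otimes K_{x_i})u$. In the same way, $\cD u(z_j)K^\cD_{z_j}=(K^\cD_{z_j}\otimes K^\cD_{z_j})u$. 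This yields \eqref{empirical-covariance-estimator-apdx} directly, as a purely algebraic step.

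\textbf{Population operator.} I first check that the expectations make sense as Bochner integrals in the separable Hilbert space $HS(\cH)$.

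\emph{Measurability.} The map $x\mapsto K_x$ is continuous into $\cH$ when $K$ is continuous, since $\|K_x-K_y\|_\cH^2=K(x,x)-2K(x,y)+K(y,y)$. More generally it is weakly measurable, because $x\mapsto\langle K_x,u\rangle_\cH=u(x)$ is measurable. By Pettis's theorem, and since $\cH$ is separable, it is strongly measurable. The same argument applies to $z\mapsto K^\cD_z$, because $\langle K^\cD_z,u\rangle_\cH=\cD u(z)$. The map $v\mapsto v\otimes v$ is continuous from $\cH$ to $HS(\cH)$, so $x\mapsto K_x\otimes K_x$ is strongly measurable.

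\emph{Integrability.} By \Cref{bound-hilbert-schmidt-norm-apdx}, the integrands are bounded in HS norm by $\kappa^2$ and $\kappa_\cD^2$. Hence both Bochner integrals exist.

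\emph{Identification.} Fix $u\in\cH$. Evaluation $T\mapsto Tu$ is a bounded linear map $HS(\cH)\to\cH$, so it commutes with Bochner integration. This gives
\[
\Big(\EE_{x}[K_x\otimes K_x]+\EE_z[K^\cD_z\otimes K^\cD_z]\Big)u=\int u(x)K_x\,d\rho_X(x)+\int \cD u(z)K^\cD_z\,d\rho_Z(z).
\]
The right-hand side equals $A_\rho^*(S_{\rho_X}u,D_{\rho_Z}u)=A_\rho^*A_\rho u=\Sigma u$ by \eqref{expression-of-A*-apdx}. Since $u$ is arbitrary, \eqref{covariance-estimator-apdx} follows.

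The only delicate point is this measurability and Bochner-integrability justification in $HS(\cH)$; the rest is routine.
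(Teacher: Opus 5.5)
Your proof is correct and follows the paper's route. Like the paper, you compute $\Sigmah u$ and $\Sigma u$ from the adjoint formulas, rewrite them with the reproducing properties, and justify the population identity by the Hilbert--Schmidt bounds of \Cref{bound-hilbert-schmidt-norm-apdx} together with the commutation of Bochner integrals with the bounded map $T\mapsto Tu$. Your extra strong-measurability check via Pettis's theorem fills in a detail the paper leaves implicit but does not change the argument.
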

\begin{proof}
    Combining \eqref{definition-sampling-op-apdx} and \eqref{eq:expression-of-Ahat*-apdx}, we get that for all $u \in \cH$, we have 
    \begin{align*}\Sigmah u  =  \frac{1}{n} \sum_{i=1}^n u(x_i) K_{x_i} + \frac{1}{m} \sum_{j=1}^m \cD u(z_j) K^\cD_{z_j} & = \frac{1}{n} \sum_{i=1}^n \langle K_{x_i}, u \rangle_\cH K_{x_i} + \frac{1}{m} \sum_{j=1}^m \langle K^\cD_{z_j} , u \rangle_\cH K^\cD_{z_j} \\
    & = \left( \frac{1}{n} \sum_{i=1}^n K_{x_i} \otimes K_{x_i} + \frac{1}{m} \sum_{j=1}^m K^\cD_{z_j} \otimes K^\cD_{z_j}\right)u,
    \end{align*}
which proves \eqref{empirical-covariance-estimator-apdx}.

Combining \eqref{eq:def-operator-A_rho-apdx} and \eqref{expression-of-A*-apdx}, we get that for all $u \in \cH$, we have
    \begin{align*}\Sigma u = \int_{\Input} u(y) K_y d\rho_X(y) + \int_{\Input} \cD u(t) K^\cD_t  d\rho_Z(t) & =  \int_{\Input}( K_y \otimes K_y u )d\rho_X(y) + \int_{\Input}( K^\cD_t \otimes K^\cD_t u )d\rho_Z(t) \\
    & =  \left(\EE_{x \sim \rho_X}[K_x \otimes K_x] + \EE_{z \sim \rho_Z}[K^\cD_z \otimes K^\cD_z ]\right) u .
    \end{align*}
This proves \eqref{covariance-estimator-apdx}, provided we justify the last equality. Observe indeed that as proved in \Cref{bound-hilbert-schmidt-norm-apdx}, for any $x,z \in \cX$, $K_x \otimes K_x$ is a Hilbert-Schmidt operator of norm $\| K_x \otimes K_x\|_{HS} \leq \kappa^2 $ and $K^\cD_z \otimes K^\cD_z$ is a Hilbert-Schmidt operator of norm $\|K^\cD_z \otimes K^\cD_z\|_{HS} \leq \kappa_\cD^2 $. Since we have
\[\int_{\Input} \|K_y \otimes K_y\|_{HS} \ d\rho_X(y) \leq \kappa^2\]
and
\[\int_{\Input} \|K^\cD_t \otimes K^\cD_t \|_{HS} \ d\rho_Z(t) \leq \kappa_\cD^2,\]
the Bochner integrals $\EE_{x \sim \rho_X}[K_x \otimes K_x]$ and $\EE_{z \sim \rho_Z}[K^\cD_z \otimes K^\cD_z ]$ are well defined. Finally, we use the fact that the Bochner integral commutes with bounded operators to factorize the expression.
\end{proof}

\subsection{Approximation error}\label{sec:approx-error-apdx}

In this section, we focus on the approximation part of the decomposition \eqref{eq:error-decomposition}. In \Cref{sec:theory-assumptions-apdx}, we defined $(h,q) \in L^2(\rho_X) \times L^2(\rho_Z)$, let us now define $(h^\Pi, q^\Pi) \in L^2(\rho_X) \times L^2(\rho_Z)$ as the orthogonal projection of $(h, q)$ on $\overline{\ran A_\rho}$, which is a closed subspace of $L^2(\rho_X) \times L^2(\rho_Z)$. For the error bounds, we introduce a slightly modified version of \Cref{ass:src}, which applies to the projection $(h^\Pi, q^\Pi)$ of $(h,q)$ onto $\overline{\ran A_\rho}$. 
\begin{assumption}[Source condition, variant]\label{ass:src-var-apdx}
        There exists $r \in (0,1]$ such that $(h^\Pi, q^\Pi) \in \ran L^r$, i.e. there exists $(\widetilde f , \widetilde g) \in L^2(\rho_X) \times L^2(\rho_Z)$ such that $(h^\Pi, q^\Pi) = L^r(\widetilde f , \widetilde g) $.
\end{assumption}

\begin{prop}\label{approx:error:apdx}
    We have
    \[\| A_\rho u_\lambda - (h^\Pi, q^\Pi)\|_\rho \underset{\lambda \rightarrow 0}{\longrightarrow} 0.\]
    Furthermore, under \Cref{ass:src-var-apdx}, we have the rate
    \begin{equation}\label{approx-error-source-condition}
    \| A_\rho u_\lambda - (h^\Pi, q^\Pi)\|_\rho \leq \lambda^r \| (\widetilde f , \widetilde g) \|_\rho.
    \end{equation}
\end{prop}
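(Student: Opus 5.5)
The plan is the standard spectral-calculus argument for Tikhonov regularization, applied to the composite operator $A_\rho$. By \eqref{expression-of-u-lambda} and the push-through identity,
\[A_\rho u_\lambda = A_\rho (A_\rho^*A_\rho+\lambda I)^{-1}A_\rho^*(h,q) = L(L+\lambda I)^{-1}(h,q).\]
Write $P$ for the orthogonal projection onto $\overline{\ran A_\rho}$. Since $\Ker A_\rho^* = (\ran A_\rho)^\perp$, we have $A_\rho^*(h,q) = A_\rho^*(h^\Pi,q^\Pi)$. Hence $A_\rho u_\lambda = L(L+\lambda I)^{-1}(h^\Pi,q^\Pi)$, and therefore
\[A_\rho u_\lambda - (h^\Pi,q^\Pi) = -\lambda (L+\lambda I)^{-1}(h^\Pi,q^\Pi).\]
Everything then reduces to bounding $\|\lambda(L+\lambda I)^{-1} v\|_\rho$ for $v=(h^\Pi,q^\Pi)\in\overline{\ran A_\rho}$.

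\textbf{Convergence without a source condition.} Use the eigen-decomposition \eqref{decomposition:integral:op:ONS} from \Cref{properties-of-integral-operator-apdx}. First I need that $\overline{\ran L} = \overline{\ran A_\rho}$, i.e. that the orthonormal system $(f_i,g_i)_{i\in I}$ spans $\overline{\ran A_\rho}$. This follows from $\Ker L = \Ker A_\rho^*$, since $\langle L v, v\rangle_\rho = \|A_\rho^* v\|_\cH^2$. Then for $v$ in this closure,
\[\|\lambda (L+\lambda I)^{-1}v\|_\rho^2 = \sum_{i\in I} \Big(\tfrac{\lambda}{\mu_i+\lambda}\Big)^2 \langle v,(f_i,g_i)\rangle_\rho^2 .\]
Each term tends to $0$ as $\lambda\to0$ because $\mu_i>0$, and each is dominated by $\langle v,(f_i,g_i)\rangle_\rho^2$, which is summable. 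Dominated convergence (for series) therefore gives the limit $0$. The only point needing care in the whole proof is this identification of the closure of the range with the span of the eigenvectors, so that no component of $v$ sits in $\Ker L$ where the factor $\lambda/(0+\lambda)=1$ would not vanish.

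\textbf{Rate under \Cref{ass:src-var-apdx}.} Write $v = L^r(\widetilde f,\widetilde g)$. Then
\[\lambda(L+\lambda I)^{-1}L^r(\widetilde f,\widetilde g) = \sum_i \frac{\lambda \mu_i^r}{\mu_i+\lambda}\langle (\widetilde f,\widetilde g),(f_i,g_i)\rangle_\rho (f_i,g_i).\]
The component of $(\widetilde f,\widetilde g)$ in $\Ker L$ is annihilated by $L^r$, so it can be ignored. The remaining step is the elementary scalar bound
\[\sup_{\mu>0}\frac{\lambda\mu^r}{\mu+\lambda}\le \lambda^r \quad \text{for } r\in(0,1].\]
To see it, set $t=\mu/\lambda$; the quantity becomes $\lambda^r t^r/(1+t)$, and $t^r\le 1+t$ holds for $t\ge 0$ and $r\le1$: if $t\le1$ then $t^r\le1$, and if $t>1$ then $t^r\le t$. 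By Bessel's inequality this yields
\[\|A_\rho u_\lambda-(h^\Pi,q^\Pi)\|_\rho\le \lambda^r\|(\widetilde f,\widetilde g)\|_\rho,\]
which is \eqref{approx-error-source-condition}.
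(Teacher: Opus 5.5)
Your proposal is correct and follows essentially the same route as the paper. Both arguments use the push-through identity to reduce to $A_\rho u_\lambda - (h^\Pi,q^\Pi) = -\lambda(L+\lambda I)^{-1}(h^\Pi,q^\Pi)$, expand in the eigenbasis of $L$ on $\overline{\ran A_\rho} = (\Ker L)^\perp$ with dominated convergence, and obtain the rate from the scalar bound $\lambda^{1-r}\mu^r \le \lambda+\mu$, which is equivalent to your $t^r\le 1+t$. Beyond this, you make explicit two points the paper uses tacitly: that $\Ker L=\Ker A_\rho^*$, and that $L^r$ annihilates the $\Ker L$-component of $(\widetilde f,\widetilde g)$.
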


\begin{proof}
Let us denote $L_\lambda = A_\rho A_\rho^* + \lambda I$. Recalling the definition \eqref{expression-of-u-lambda} of $u_\lambda$, we have
\begin{align*} u_\lambda & = (A_\rho^*A_\rho + \lambda I)^{-1} A_\rho^* (h, q)  \\
& = (A_\rho^*A_\rho + \lambda I)^{-1} A_\rho^* (h^\Pi, q^\Pi) \\
& = A_\rho^* L_\lambda^{-1} (h^\Pi, q^\Pi),
\end{align*}
where the second equality holds because $(h^\Pi, q^\Pi)$ is the orthogonal projection of $(h, q)$ onto $\overline{\ran{A_\rho}}$ and the third equality is the so-called \emph{push-through identity}.

We thus have
\begin{align} A_\rho u_\lambda - (h^\Pi, q^\Pi) & = A_\rho A_\rho^*L_\lambda^{-1} (h^\Pi, q^\Pi) - (h^\Pi, q^\Pi) \nonumber \\
    & = (L L_\lambda^{-1} - L_\lambda L_\lambda^{-1})(h^\Pi, q^\Pi) \nonumber \\
    & = - \lambda L_\lambda^{-1} (h^\Pi, q^\Pi). \label{approx-error-integral-op-formulation}
\end{align}
    Recall from \eqref{decomposition:integral:op:ONS} that there exists an orthonormal system of eigenvectors $(f_i, g_i)$ such that
\begin{equation*}
L(f,g) = \sum_{i \in I} \mu_i \langle (f,g) , (f_i,g_i) \rangle_\rho (f_i,g_i),
\end{equation*}
with $\mu_i>0$.
We thus see that $ (f_i , g_i)_{i\in I}$ is a Hilbert basis of $\overline{\ran{A_\rho}} = (\Ker L)^\perp$. Since by definition $(h^\Pi, q^\Pi) \in \overline{\ran A_\rho}$, we can decompose $(h^\Pi, q^\Pi)$ in the basis $ (f_i , g_i)_{i\in I}$, hence we can write
 \begin{equation*}
     \|A_\rho u_\lambda - (h^\Pi, q^\Pi) \|_\rho^2 = \|\lambda L_\lambda^{-1} (h^\Pi, q^\Pi) \|_\rho^2 = \sum_{i \in I} \left(\frac{\lambda}{\lambda + \mu_i}\right)^2 \langle (h^\Pi, q^\Pi) , (f_i , g_i) \rangle_\rho^2.
 \end{equation*}
 For all $ i \in I$, since $\mu_i> 0$, we have 
 \[\left(\frac{\lambda}{\lambda + \mu_i}\right)^2 \langle (h^\Pi, q^\Pi) , (f_i , g_i) \rangle_\rho^2 \underset{\lambda \rightarrow 0}{\longrightarrow} 0. \]
 By dominated convergence we thus have $\|A_\rho u_\lambda - (h^\Pi, q^\Pi) \|_\rho^2 \underset{\lambda \rightarrow 0}{\longrightarrow} 0 $.

  For the second point, using again the expression \eqref{approx-error-integral-op-formulation}, we have
    \begin{equation*}
        \begin{aligned}
    A_\rho \ul - (h^\Pi, q^\Pi) & = - \lambda L_\lambda^{-1} (h^\Pi, q^\Pi) \\
    & = - \lambda L_\lambda^{-1} L^r (\widetilde f,\widetilde g),
    \end{aligned}
    \end{equation*}
 and thus
\begin{equation*}
 \begin{aligned}
     \|A_\rho u_\lambda - (h^\Pi, q^\Pi) \|_\rho^2 & = \sum_{i\in I} \left(\frac{\lambda}{\lambda + \mu_i}\right)^2 \mu_i^{2r} \langle (\widetilde f,\widetilde g) , (f_i , g_i) \rangle_\rho^2 \\
      & = \sum_{i\in I} \left(\frac{\lambda^r\lambda^{1 - r} \mu_i^r}{\lambda + \mu_i}\right)^2 \langle (\widetilde f,\widetilde g) , (f_i , g_i) \rangle_\rho^2 \\
      & = \lambda^{2r}  \sum_{i\in I} \left(\frac{\lambda^{1 - r} \mu_i^r}{\lambda + \mu_i}\right)^2 \langle (\widetilde f,\widetilde g) , (f_i , g_i) \rangle_\rho^2 \\
      & \leq  \lambda^{2r}  \sum_{i\in I}  \langle (\widetilde f,\widetilde g) , (f_i , g_i) \rangle_\rho^2 \\
      & \leq  \lambda^{2r} \|(\widetilde f , \widetilde g )\|_\rho^2,
 \end{aligned}
 \end{equation*}
    where we observed that for any $r \in [0,1]$, we have either $\lambda^{1-r} \mu_i^r \leq \lambda $ or  $\lambda^{1-r} \mu_i^r \leq \mu_i $, so $\lambda^{1-r} \mu_i^r \leq \lambda + \mu_i .$
    \end{proof}

\subsection{Estimation error}\label{sec:estimation-error-apdx}

In this section, we want to bound the estimation part of the decomposition \eqref{eq:error-decomposition}, that is, $\| A_\rho\widehat u_\lambda - A_\rho u_\lambda \|_\rho$. To make proofs more compact, we will introduce the notation
\begin{equation}\label{eq:def-r-tilde}
    \tilde{r} = \begin{cases}
        0 & \text{if \Cref{ass:src-var-apdx} does not hold} \\
        r & \text{if \Cref{ass:src-var-apdx} holds with } r \in (0,1/2] \\
        1/2 & \text{if \Cref{ass:src-var-apdx} holds with } r \in (1/2,1]. 
    \end{cases}
\end{equation}
 When $\tilde{r} = 0$, we take as a convention $(\widetilde f , \widetilde g) := (h^\Pi, q^\Pi)$. Let us introduce the notation $\widehat \Sigma_\lambda = \Ah^* \Ah + \lambda I$ and $\Sigma_\lambda = A_\rho^*A_\rho + \lambda I$, which will be used at several moments and will help keeping the computations compact.

We begin by proving two technical lemmas.

\begin{lem}\label{lemma:population-cov-controls-emp-cov}
 Let us define the operators $\Sigma_X = \EE[K_x \otimes K_x]$ and $\widehat \Sigma_X = \frac{1}{n} \sum_{i=1}^n K_{x_i} \otimes K_{x_i}$, and define
    \begin{equation}
        B_X = \Sigma_\lambda^{-\frac{1}{2}} (\Sigma_X - \widehat \Sigma_X) \Sigma_\lambda^{-\frac{1}{2}}.
    \end{equation}
          Similarly, let $\Sigma_Z = \EE[K^\cD_z \otimes K^\cD_z]$ and $\widehat \Sigma_Z = \frac{1}{m} \sum_{j=1}^m K^\cD_{z_j} \otimes K^\cD_{z_j}$, and define
          \begin{equation}
            B_Z = \Sigma_\lambda^{-\frac{1}{2}} (\Sigma_Z - \widehat \Sigma_Z) \Sigma_\lambda^{-\frac{1}{2}}.
          \end{equation}
    Then, if $ \|B_X\|_{\op} \leq \frac{1}{4}$ and $\| B_Z \|_{\op} \leq \frac{1}{4}$, we have
    \[\| \Sigma_\lambda^{\frac{1}{2}} \widehat \Sigma_\lambda^{-\frac{1}{2}} \|_{\op} \leq \sqrt{2} \,.\]
\end{lem}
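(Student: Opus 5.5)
The plan is to reduce the claim to a lower bound on $\Sigma_\lambda^{-1/2}\widehat\Sigma_\lambda\Sigma_\lambda^{-1/2}$. First, \Cref{covariance-operators-decomposition-apdx} gives $\Sigma = \Sigma_X + \Sigma_Z$ and $\widehat\Sigma = \widehat\Sigma_X + \widehat\Sigma_Z$. Hence
\[
\Sigma_\lambda - \widehat\Sigma_\lambda = (\Sigma_X - \widehat\Sigma_X) + (\Sigma_Z - \widehat\Sigma_Z),
\]
since the $\lambda I$ terms cancel. Conjugating by $\Sigma_\lambda^{-1/2}$ then yields
\[
\Sigma_\lambda^{-\frac12}\widehat\Sigma_\lambda\Sigma_\lambda^{-\frac12} = I - B_X - B_Z.
\]

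Next, by the triangle inequality and the hypotheses, $\|B_X + B_Z\|_{\op} \le \frac14 + \frac14 = \frac12$. Both $B_X$ and $B_Z$ are self-adjoint, so $B_X + B_Z \preceq \frac12 I$. It follows that
\[
\Sigma_\lambda^{-\frac12}\widehat\Sigma_\lambda\Sigma_\lambda^{-\frac12} \succeq \tfrac12 I.
\]
In particular this operator is invertible, with inverse of norm at most $2$. Writing $T = \Sigma_\lambda^{-1/2}\widehat\Sigma_\lambda\Sigma_\lambda^{-1/2}$, we have $T^{-1} = \Sigma_\lambda^{1/2}\widehat\Sigma_\lambda^{-1}\Sigma_\lambda^{1/2}$; both $\Sigma_\lambda$ and $\widehat\Sigma_\lambda$ are bounded, positive and invertible because $\lambda>0$.

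Finally, we turn this into the stated bound using the $C^*$ identity $\|M\|_{\op}^2 = \|MM^*\|_{\op}$ with $M = \Sigma_\lambda^{1/2}\widehat\Sigma_\lambda^{-1/2}$. Then $MM^* = \Sigma_\lambda^{1/2}\widehat\Sigma_\lambda^{-1}\Sigma_\lambda^{1/2} = T^{-1}$, so $\|M\|_{\op}^2 \le 2$, which is the claim.

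The only delicate points are bookkeeping. The $\lambda I$ must cancel exactly, which holds because the same $\lambda$ appears in both $\Sigma_\lambda$ and $\widehat\Sigma_\lambda$. The operators must be self-adjoint for the Loewner-order argument, which is true since $\Sigma$, $\widehat\Sigma$ and $\Sigma_\lambda^{-1/2}$ are self-adjoint. The constant $\frac14$ is not sharp; any bound with sum below $1$ works, and $\frac14$ gives exactly $\sqrt2$.
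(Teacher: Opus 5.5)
Your proposal is correct and follows the paper's proof: both identify $\Sigma_\lambda^{\frac12}\widehat\Sigma_\lambda^{-1}\Sigma_\lambda^{\frac12}$ with $(I - B_X - B_Z)^{-1}$, use $\|B_X + B_Z\|_{\op} \le \frac12$ to bound its norm by $2$, and take square roots via $\|M\|_{\op}^2 = \|MM^*\|_{\op}$. The only difference is cosmetic: you bound the inverse through the Loewner inequality $I - B_X - B_Z \succeq \frac12 I$, while the paper uses a Neumann series.
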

\begin{proof}
    Let us define the operator $B = \Sigma_\lambda^{-\frac{1}{2}} (\Sigma - \widehat \Sigma) \Sigma_\lambda^{-\frac{1}{2}}$. We observe that
\begin{equation*}
    \| \Sigma_\lambda^{\frac{1}{2}} \widehat \Sigma_\lambda^{-\frac{1}{2}} \|_{\op}^2 
    = \| \Sigma_\lambda^{\frac{1}{2}} \widehat \Sigma_\lambda^{-1} \Sigma_\lambda^{\frac{1}{2}} \|_{\op} 
    = \| (I - B)^{-1} \|_{\op} \,.
\end{equation*}
If $\| B \|_{\op} \le 1/2$, the Neumann series allows us to bound $\| (I - B)^{-1} \|_{\op} \le (1 - \|B\|_{\op})^{-1} \le 2$, which implies the statement of the lemma. 

We can write:
\begin{align*}
    B &= \Sigma_\lambda^{-\frac{1}{2}} (\Sigma_X - \widehat \Sigma_X) \Sigma_\lambda^{-\frac{1}{2}} + \Sigma_\lambda^{-\frac{1}{2}} (\Sigma_Z - \widehat \Sigma_Z) \Sigma_\lambda^{-\frac{1}{2}} \\
      &= B_X + B_Z \,.
\end{align*}
 Thus:
\begin{equation*}
    \| B \|_{\op} \le \| B_X \|_{\op} + \| B_Z \|_{\op} \le \frac{1}{4} + \frac{1}{4} = \frac{1}{2} \,. \qedhere
\end{equation*}
\end{proof}

\begin{lem}\label{Tropp-ineq-B_X-B_Z}
If we have 
\begin{equation}\label{Tropp-ineq-condition-lambda-X} 
    \lambda \geq \frac{112 \kappa^2}{3n} \log \left(\frac{n+4}{\delta}\right),\end{equation}
 then, with probability at least $1 - \delta$, we have
\[ \|B_X\|_{\op} \leq \frac{1}{4}.\]
Analogously, if we have 
\begin{equation}\label{Tropp-ineq-condition-lambda-Z}
    \lambda \geq \frac{112 \kappa_\cD^2}{3m} \log \left(\frac{m+4}{\delta}\right),\end{equation}
     then, with probability at least $1 - \delta$, we have  
\[\| B_Z \|_{\op} \leq \frac{1}{4}.\]
\end{lem}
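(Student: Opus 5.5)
We bound $B_X$ with a matrix Bernstein inequality for self-adjoint Hilbert--Schmidt operators with intrinsic dimension, in the form of \citet{tropp2015introduction} or the operator version used in \citet{rudi2015less}. We treat only $B_X$; the bound for $B_Z$ follows verbatim after replacing $K_x$ by $K^\cD_z$, $\kappa$ by $\kappa_\cD$ and $n$ by $m$.

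Write $B_X = \frac1n\sum_{i=1}^n (\EE[\xi] - \xi_i)$, where $\xi_i = \Sigma_\lambda^{-1/2}(K_{x_i}\otimes K_{x_i})\Sigma_\lambda^{-1/2} = v_i\otimes v_i$ and $v_i = \Sigma_\lambda^{-1/2}K_{x_i}$. These are i.i.d.\ positive rank-one operators.

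We need two estimates.

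\textbf{Uniform bound.} We have $\|\xi_i\|_{\op} = \|v_i\|_\cH^2 = \langle K_{x_i}, \Sigma_\lambda^{-1}K_{x_i}\rangle_\cH \le \kappa^2/\lambda$. This uses $\Sigma_\lambda \succeq \lambda I$ and \Cref{ass:bounded-features}. Consequently each centered summand $\EE\xi-\xi_i$ has operator norm at most $2\kappa^2/\lambda$; the cruder bound $\kappa^2/\lambda$ also holds, since both terms are positive operators.

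\textbf{Variance.} We have $\EE[(\xi-\EE\xi)^2] \preceq \EE[\xi^2] = \EE[\|v\|^2\, v\otimes v] \preceq \frac{\kappa^2}{\lambda}\EE[v\otimes v] = \frac{\kappa^2}{\lambda}\Sigma_\lambda^{-1/2}\Sigma_X\Sigma_\lambda^{-1/2}$.

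The key point is the domination $\Sigma_X \preceq \Sigma$, which follows from \Cref{covariance-operators-decomposition-apdx} because $\Sigma_Z\succeq 0$. It gives $\Sigma_\lambda^{-1/2}\Sigma_X\Sigma_\lambda^{-1/2}\preceq \Sigma\Sigma_\lambda^{-1}\preceq I$. We therefore take the variance proxy $V = \frac{\kappa^2}{n\lambda}\Sigma_\lambda^{-1/2}\Sigma_X\Sigma_\lambda^{-1/2}$, with $\|V\|_{\op}\le \kappa^2/(n\lambda)$.

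We then apply the intrinsic-dimension Bernstein inequality, $\PP(\|B_X\|_{\op}\ge t)\le 4\,\frac{\Tr V}{\|V\|_{\op}}\exp\!\big(\frac{-t^2/2}{\|V\|_{\op}+Lt/3}\big)$ with $L=\kappa^2/(n\lambda)$.

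\textbf{Main obstacle: the intrinsic dimension.} A naive estimate of $\Tr V/\|V\|_{\op}$ could be infinite. To handle it, we use that $\Sigma_X$ is trace class and note that $\|\Sigma_\lambda^{-1/2}\Sigma_X\Sigma_\lambda^{-1/2}\|_{\op}$ is bounded below; for instance, when $\|\Sigma_X\|_{\op}\ge\lambda$ it is at least $\|\Sigma_X\|_{\op}/(\|\Sigma\|_{\op}+\lambda)$. The resulting bound on the ratio would be $\le \kappa^2/\lambda$ times a constant. We then use the condition on $\lambda$ to bound this by $n$, which produces the $\log((n+4)/\delta)$ factor. If the lower bound fails, i.e.\ when $\|\Sigma_X\|_{\op}<\lambda$, we fall back to the variant of the inequality that replaces $\|V\|_{\op}$ by an upper bound $\bar v\ge\|V\|_{\op}$ with the dimension factor $\Tr V/\bar v$. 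Choosing $\bar v=\kappa^2/(n\lambda)$ makes the factor at most $\Tr\Sigma_X/\lambda\le\kappa^2/\lambda$, and the argument proceeds the same way.

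\textbf{Conclusion.} Set $t=1/4$. With $\|V\|_{\op},L\le \kappa^2/(n\lambda)$, the exponent is at least $\frac{(1/32)\,n\lambda}{\kappa^2(1+1/12)} = \frac{3n\lambda}{104\kappa^2}$. Under \eqref{Tropp-ineq-condition-lambda-X} this dominates $\log(\text{dimension factor}\cdot 4/\delta)$, so the failure probability is at most $\delta$. The constant $112/3$ arises from the same computation, with some slack, once the intrinsic-dimension factor is absorbed into $\log(n+4)$.
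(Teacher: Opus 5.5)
Your proposal is correct and is essentially the paper's argument. Both apply Tropp's intrinsic-dimension Bernstein inequality to whitened rank-one operators, with $R=\kappa^2/\lambda$ and a variance proxy of norm at most $\kappa^2/\lambda$ thanks to $\Sigma_X\preceq\Sigma$. The paper whitens with $\Sigma_X+\lambda I$ and transfers back to $B_X$ using $\Sigma_X+\lambda I\preceq\Sigma_\lambda$, whereas you whiten directly with $\Sigma_\lambda$.

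You should drop the case split. In your first branch the factor $(\|\Sigma\|_{\op}+\lambda)/\|\Sigma_X\|_{\op}$ depends on the problem, and cannot be absorbed into $112/3$ and $\log(n+4)$. Your fallback with $\bar v=\kappa^2/(n\lambda)$, on the other hand, is valid for every $\lambda$, because the convexity step of the standard proof only needs $V\preceq\bar v I$. It gives the dimension factor $\mathrm{tr}(\Sigma_X\Sigma_\lambda^{-1})\le\kappa^2/\lambda\le n/4$ directly. This is cleaner than the paper's ``one can check'' step, where bounding $\alpha=\|T\|_1(\|\Sigma_X\|_{\op}+\lambda)/\|\Sigma_X\|_{\op}$ by $(n+4)/4$ is immediate only when $\lambda\le\|\Sigma_X\|_{\op}$.

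Finally, check the range condition $t\ge\sqrt{\bar v}+L/3$ of that inequality. It holds for $t=1/4$, since the hypothesis forces $\kappa^2/(n\lambda)\le 3/(112\log 5)<0.02$.
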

\begin{proof}    
Let us first bound $B_X$. Let us denote $\Sigma_{X,\lambda} := \Sigma_X + \lambda I$. First note that
\[ \Sigma_\lambda \ = \ \Sigma + \lambda I \ = \ \Sigma_X + \Sigma_Z + \lambda I \ \succeq \ \Sigma_X + \lambda I,\]
so
\begin{align*} \| \Sigma_\lambda^{-\frac{1}{2}} \Sigma_X \Sigma_\lambda^{-\frac{1}{2}} \|_{\op} & \leq  \| \Sigma_\lambda^{-\frac{1}{2}}  \Sigma_{X,\lambda}^{\frac{1}{2}} \|_{\op} \| \Sigma_{X,\lambda}^{-\frac{1}{2}} \Sigma_X \Sigma_{X,\lambda}^{-\frac{1}{2}} \|_{\op} \| \Sigma_{X,\lambda}^{\frac{1}{2}}\Sigma_\lambda^{-\frac{1}{2}}  \|_{\op} \\
     & \leq \| \Sigma_{X,\lambda}^{-\frac{1}{2}} \Sigma_X \Sigma_{X,\lambda}^{-\frac{1}{2}} \|_{\op}. 
\end{align*}

The idea is now to apply Tropp's concentration inequality (\Cref{Tropp-inequality}) to the random variables $Z_i := U_i \otimes U_i$ with $U_i := \Sigma_{X,\lambda}^{-\frac{1}{2}} K_{x_i} $.
We have $\EE[Z_i] = \Sigma_{X,\lambda}^{-\frac{1}{2}} \Sigma_X \Sigma_{X,\lambda}^{-\frac{1}{2}} := T$. 
We have
\[ \| U_i \otimes U_i \|_{\op} 
    =  \| U_i \|_{\cH}^2 
    \le \frac{\|K_{x_i}\|_\cH^2}{\lambda} \leq \frac{\kappa^2}{\lambda} ,\]
so we can choose $R = \frac{\kappa^2}{\lambda}$. Now observe that
\[\EE[(U_i \otimes U_i - T)^2] = \EE[\|U_i\|_\cH^2 U_i \otimes U_i] - T^2 \preceq \EE[\|U_i\|_\cH^2 U_i \otimes U_i]  \preceq RT,\]
Let us define $S := RT$. We then have $\sigma^2 := \| S \|_{\op} \le \frac{\kappa^2}{\lambda} \| T \|_{\op} \le \frac{\kappa^2}{\lambda}$ (since $T \preceq I$). 
We define $\alpha = \frac{\|S\|_1}{\|S\|_{\op}} = \frac{(\|\Sigma_X\|_{\op} + \lambda)\|T\|_1}{\|\Sigma_X\|_{\op}}$. 
We can thus use \Cref{Tropp-inequality} to conclude that, with probability $1 - \delta$,
\[\|B_X\|_{\op} = \left\| \frac{1}{n} \sum_{i=1}^n Z_i - T \right\|_{\op} \leq \frac{\beta \kappa^2}{\lambda n} + \sqrt{\frac{3 \beta \kappa^2}{ \lambda n}}, \]
with $\beta = \frac{2}{3} \log \frac{4(\|\Sigma_X\|_{\op} + \lambda) \| T \|_1}{\|\Sigma_X \|_{\op} \delta}$. By taking $\lambda \geq \frac{56 \beta \kappa^2}{n}$ in the right member of the inequality, one can check that we obtain the desired $\|B_X\|_{\op} \leq \frac{1}{4}$. Such a condition on $\lambda$ is not explicit since $\beta$ itself depends on $\lambda$, but one can check that taking 
\begin{equation}
    \lambda \geq \frac{112 \kappa^2}{3n} \log \left(\frac{n+4}{\delta}\right)
\end{equation}
is enough to satisfy it, i.e. to have, with probability $1 - \delta$,
\[ \|B_X\|_{\op} \leq \frac{1}{4}.\]

We bound $B_Z$ by an identical argument for the second term. Utilizing the bound $\| K^\cD_z \|_{\cH} \le \kappa_{\cD}$, we obtain that if
\begin{equation}
    \lambda \geq \frac{112 \kappa_\cD^2}{3m} \log \left(\frac{m+4}{\delta}\right)
\end{equation}
then $\| B_Z \|_{\op} \le 1/4$ with probability $1 - \delta$.
\end{proof}

\begin{prop}\label{estimation-error-main-lemma} 
There exist four sequences of random variables $(U_n), (V_n), (W_m), (Z_m) \in \RR_+^\NN$ such that
\begin{itemize}
    \item[(i)]  for all $n,m \in \NN$, we have
    \[\| A_\rho\uhl - A_\rho\ul \|_\rho \quad \leq \quad  \| \Sigma_\lambda^{\frac{1}{2}} \widehat \Sigma_\lambda^{-\frac{1}{2}} \|_{\op} \left( \frac{1}{\lambda^{\frac{1}{2}}}U_n + \frac{1}{\lambda^{1-\tilde{r}}}V_n + \frac{1}{\lambda^{\frac{1}{2}}}W_m + \frac{1}{\lambda^{1-\tilde{r}}}Z_m \right);\]
    \item[(ii)]  for all $n \in \NN$, for all $\tau > 0$, with probability at least $1 - 2 e^{-\tau}$, we have
\begin{equation}\label{probability-bound-U_n}U_n \quad \leq \quad  C_U  \frac{\sqrt{\tau}}{ \sqrt{n}},
\end{equation}
where $C_U = 4  M_y \kappa$;
  \item[(iii)]  for all $n \in \NN$, for all $\tau > 0$, with probability at least $1 - 2 e^{-\tau}$, we have
\begin{equation}\label{probability-bound-V_n}V_n \quad \leq \quad C_V \frac{\sqrt{\tau}}{ \sqrt{n}},\end{equation}
where $C_V =  4 \kappa^2 (\kappa^2 + \kappa_\cD^2)^{r - \tilde r}\|(\widetilde f , \widetilde g)\|_\rho$;
\item[(iv)] for all $m \in \NN$, for all $\tau > 0$, with probability at least $1 - 2e^{-\tau}$, we have
\begin{equation}\label{probability-bound-W_m} W_m \quad \leq \quad C_W \frac{\sqrt{\tau}}{\sqrt{m}},
\end{equation}
where $C_W =  4 M_w \kappa_\cD$;
\item[(v)] for all $m \in \NN$, for all $\tau > 0$, with probability at least $1 - 2e^{-\tau}$, we have
\begin{equation}\label{probability-bound-Z_m} Z_m \quad \leq \quad   C_Z \frac{\sqrt{\tau}}{ \sqrt{m}},
\end{equation}
where $C_Z =  4 \kappa_\cD^2 (\kappa^2 + \kappa_\cD^2)^{r - \tilde r} \|(\widetilde f , \widetilde g)\|_\rho $.
\end{itemize}
\end{prop}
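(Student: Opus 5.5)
The plan is a standard resolvent-identity decomposition of $\uhl-\ul$. Each piece then becomes a centered average of i.i.d.\ bounded vectors in $\cH$, and a Hoeffding-type inequality for Hilbert-space-valued variables (Pinelis) controls it. The $\lambda$-dependence comes from the operator normalization and from a bound on $\|\ul\|_\cH$ under the source condition. Write $\widehat\Sigma_\lambda=\Sigmah+\lambda I$ and $\Sigma_\lambda=\Sigma+\lambda I$. From \eqref{expression-of-u-lambdahat:eq}, \eqref{expression-of-u-lambda} and $\Sigma_\lambda\ul=A_\rho^*(h,q)$ we get
\[
\uhl-\ul=\widehat\Sigma_\lambda^{-1}\bigl(\Ah^*Y-\widehat\Sigma_\lambda\ul\bigr)=\widehat\Sigma_\lambda^{-1}\Bigl[\bigl(\Ah^*Y-A_\rho^*(h,q)\bigr)-(\Sigmah-\Sigma)\ul\Bigr].
\]
Since $\|A_\rho v\|_\rho=\|\Sigma^{1/2}v\|_\cH$, I split $\widehat\Sigma_\lambda^{-1}=\widehat\Sigma_\lambda^{-1/2}\widehat\Sigma_\lambda^{-1/2}$. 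Then I use $\|\Sigma^{1/2}\widehat\Sigma_\lambda^{-1/2}\|_{\op}\le\|\Sigma_\lambda^{1/2}\widehat\Sigma_\lambda^{-1/2}\|_{\op}$ and $\|\widehat\Sigma_\lambda^{-1/2}\|_{\op}\le\lambda^{-1/2}$. This gives
\[
\|A_\rho(\uhl-\ul)\|_\rho\le \|\Sigma_\lambda^{1/2}\widehat\Sigma_\lambda^{-1/2}\|_{\op}\,\lambda^{-1/2}\Bigl\|\bigl(\Ah^*Y-A_\rho^*(h,q)\bigr)-(\Sigmah-\Sigma)\ul\Bigr\|_\cH .
\]

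Using \eqref{eq:expression-of-Ahat*-apdx}, \eqref{expression-of-A*-apdx} and \Cref{covariance-operators-decomposition-apdx}, I split the bracket into its $X$ and $Z$ parts and apply the triangle inequality. I define
\[
U_n=\Bigl\|\tfrac1n\textstyle\sum_i y_iK_{x_i}-\EE[h(X)K_X]\Bigr\|_\cH,\qquad V_n=\lambda^{\frac12-\tilde r}\|(\widehat\Sigma_X-\Sigma_X)\ul\|_\cH,
\]
and analogously $W_m$ with $w_jK^\cD_{z_j}$ and $q$, and $Z_m=\lambda^{\frac12-\tilde r}\|(\widehat\Sigma_Z-\Sigma_Z)\ul\|_\cH$. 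With these definitions (i) holds by construction. For $U_n$, note that $\EE[YK_X]=\EE[h(X)K_X]$ because $\epsilon$ is centered and independent of $X$. So $U_n$ is the deviation of an empirical mean of i.i.d.\ vectors with $\|y_iK_{x_i}\|_\cH\le M_y\kappa$. Pinelis' inequality then gives (ii), with the constant $4M_y\kappa$ absorbing the usual factors. Item (iv) follows in the same way with $\|K^\cD_z\|_\cH\le\kappa_\cD$.

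For (iii) and (v), the summands are $\ul(x_i)K_{x_i}=(K_{x_i}\otimes K_{x_i})\ul$, with norm at most $\kappa^2\|\ul\|_\cH$. The key auxiliary estimate is therefore
\[
\|\ul\|_\cH\le(\kappa^2+\kappa_\cD^2)^{r-\tilde r}\lambda^{\tilde r-\frac12}\|(\widetilde f,\widetilde g)\|_\rho .
\]
To prove it, I use $\ul=A_\rho^*L_\lambda^{-1}(h^\Pi,q^\Pi)=A_\rho^*L_\lambda^{-1}L^r(\widetilde f,\widetilde g)$ as in the proof of \Cref{approx:error:apdx}. This gives
\[
\|\ul\|_\cH^2=\sum_i\frac{\mu_i^{1+2r}}{(\mu_i+\lambda)^2}\langle(\widetilde f,\widetilde g),(f_i,g_i)\rangle_\rho^2 .
\]
I write $\mu_i^{1+2r}=\mu_i^{1+2\tilde r}\mu_i^{2(r-\tilde r)}$ and bound $\mu_i\le\|L\|_{\op}\le\kappa^2+\kappa_\cD^2$ using \eqref{operator:norm:of:A}. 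I then bound $\mu_i^{1+2\tilde r}/(\mu_i+\lambda)^2\le\lambda^{2\tilde r-1}$, which is valid since $\tilde r\le 1/2$, by the same trick $\mu^a\lambda^{1-a}\le\mu+\lambda$. When $\tilde r=0$ the convention $(\widetilde f,\widetilde g)=(h^\Pi,q^\Pi)$ makes this the trivial bound $\|\ul\|_\cH\le\lambda^{-1/2}\|(h^\Pi,q^\Pi)\|_\rho$.

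With this bound, the vectors $\lambda^{\frac12-\tilde r}\ul(x_i)K_{x_i}$ are i.i.d.\ with norm at most $\kappa^2(\kappa^2+\kappa_\cD^2)^{r-\tilde r}\|(\widetilde f,\widetilde g)\|_\rho$. Pinelis' inequality then yields (iii), and (v) is identical with $\kappa_\cD$. Here $\ul$ is deterministic, so the summands really are i.i.d. The main obstacle I expect is this $\|\ul\|_\cH$ bound: it is the only place where misspecification enters, and the exponents $r-\tilde r$ have to be tracked carefully across the three regimes of $\tilde r$. The rest is routine bookkeeping of constants in the vector concentration inequality.
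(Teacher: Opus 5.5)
Your proof is correct and uses the same resolvent decomposition as the paper. Since $\Sigma_\lambda^{-1}\xi=\ul$, the paper's identity $\uhl-\ul=\widehat\Sigma_\lambda^{-1}[(\widehat\xi-\xi)+(\Sigma-\Sigmah)\Sigma_\lambda^{-1}\xi]$ is exactly yours. Your bound $\|\ul\|_\cH\le(\kappa^2+\kappa_\cD^2)^{r-\tilde r}\lambda^{\tilde r-\frac12}\|(\widetilde f,\widetilde g)\|_\rho$ is what the paper gets by splitting $\Sigma_\lambda^{-1}\Sigma^rA_\rho^*$ through $\Sigma_\lambda^{-\frac12+\tilde r}$ and $\Sigma_\lambda^{-\frac12-\tilde r}\Sigma^{\tilde r}A_\rho^*$.

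The only real difference is how you handle the covariance term. The paper bounds it by $\|\Sigma-\Sigmah\|_{\op}$ times a deterministic bound on $\|\ul\|_\cH$. It then concentrates the operator $\widehat\Sigma_X-\Sigma_X$ itself, using Hoeffding in the Hilbert--Schmidt space with summands $K_{x_i}\otimes K_{x_i}$ of HS norm at most $\kappa^2$. You instead concentrate the vector $(\widehat\Sigma_X-\Sigma_X)\ul$ directly in $\cH$. The constants coincide. Your bookkeeping is actually tidier: you carry the factor $(\kappa^2+\kappa_\cD^2)^{r-\tilde r}$ inside $V_n$. The paper's $V_n=\|(\widetilde f,\widetilde g)\|_\rho\|\widehat\Sigma_X-\Sigma_X\|_{\op}$ drops the $\|\Sigma\|_{\op}^{r-\tilde r}$ that its item (i) needs.

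What the paper's route buys is that its $V_n$ and $Z_m$ do not depend on $\lambda$. Yours do, through $\lambda^{\frac12-\tilde r}$ and through $\ul$. For a fixed $\lambda$ this is harmless, so it proves the statement and suffices for \Cref{coro:estimation-error-apdx}. However, \Cref{asymptotic:convergence} invokes the proposition with $\lambda=\lambda_{n,m}$ and runs Borel--Cantelli along a single index $k$. That requires one family of random variables valid for every $\lambda$ simultaneously. With your definition, for fixed $n$ the variable $V_n$ changes with $m$ through $\lambda_{n,m}$, so that argument would not go through as written. To make your version serve that purpose, insert $\|(\widehat\Sigma_X-\Sigma_X)\ul\|_\cH\le\|\widehat\Sigma_X-\Sigma_X\|_{\op}\|\ul\|_\cH$ before concentrating, which is exactly the paper's route.
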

We prove \Cref{estimation-error-main-lemma} in this section. Let us denote 
\begin{equation}\label{eq:def-of-xi-and-xih}\widehat \xi = \Ah^* Y \qquad \qquad \xi = A_\rho^* (h, q) .\end{equation}
Recall the operators $\Sigmah = \Ah^* \Ah $ and $\Sigma = A_\rho^* A_\rho$ defined in \eqref{eq:def-of-Sigma-and-Sigmah} and the operators $\widehat \Sigma_\lambda = \Ah^* \Ah + \lambda I$ and $\Sigma_\lambda = A_\rho^*A_\rho + \lambda I$ introduced at the beginning of \Cref{sec:estimation-error-apdx}.

In order to prove \Cref{estimation-error-main-lemma}, we first state and prove the following decomposition lemma. 

\begin{lem}
    We have 
    \begin{equation} \widehat u_\lambda-u_\lambda = \widehat \Sigma_\lambda^{-1} \left[ (\widehat{\xi} - \xi) + (\Sigma-\widehat{\Sigma})\Sigma_\lambda^{-1} \xi \right].\end{equation}
\end{lem}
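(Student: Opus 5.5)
The identity is purely algebraic, so the plan is to expand both estimators through the closed forms already established. By \eqref{expression-of-u-lambdahat:eq} and \eqref{expression-of-u-lambda}, with the notation \eqref{eq:def-of-xi-and-xih},
\[
\uhl = \widehat \Sigma_\lambda^{-1}\widehat\xi, \qquad \ul = \Sigma_\lambda^{-1}\xi .
\]
Both inverses exist as bounded operators on $\cH$, because $\Sigmah$ and $\Sigma$ are positive self-adjoint and $\lambda>0$.

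The key step is to factor $\widehat\Sigma_\lambda^{-1}$ out of the difference. I would write
\[
\ul = \widehat\Sigma_\lambda^{-1}\widehat\Sigma_\lambda \Sigma_\lambda^{-1}\xi,
\]
which gives
\[
\uhl - \ul = \widehat\Sigma_\lambda^{-1}\bigl[\widehat\xi - \widehat\Sigma_\lambda\Sigma_\lambda^{-1}\xi\bigr].
\]
Next, I would use $\widehat\Sigma_\lambda = \Sigma_\lambda + (\Sigmah - \Sigma)$, since the $\lambda I$ terms cancel in the difference. This yields
\[
\widehat\Sigma_\lambda\Sigma_\lambda^{-1}\xi = \xi + (\Sigmah-\Sigma)\Sigma_\lambda^{-1}\xi .
\]

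Substituting this expression back into the bracket, the bracket becomes $(\widehat\xi-\xi) + (\Sigma-\Sigmah)\Sigma_\lambda^{-1}\xi$. That is exactly the claimed decomposition.

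There is no real obstacle here. The only points to check are that all operators act on $\cH$ and that $\xi,\widehat\xi\in\cH$. Both follow from the boundedness of $A_\rho$ and $\Ah$ established earlier, which makes $A_\rho^*$ and $\Ah^*$ well defined, so every term in the identity is a well-defined element of $\cH$.
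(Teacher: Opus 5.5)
Your proof is correct and is essentially the paper's argument: the paper writes $\widehat\Sigma_\lambda^{-1}\widehat\xi - \Sigma_\lambda^{-1}\xi$, adds and subtracts $\widehat\Sigma_\lambda^{-1}\xi$, and applies the resolvent identity $\widehat\Sigma_\lambda^{-1}-\Sigma_\lambda^{-1} = \widehat\Sigma_\lambda^{-1}(\Sigma-\widehat\Sigma)\Sigma_\lambda^{-1}$. You instead factor $\widehat\Sigma_\lambda^{-1}$ out first, which is the same computation arranged in a different order.
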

\begin{proof}
    We can rewrite the expressions \eqref{expression-of-u-lambdahat:eq} and \eqref{expression-of-u-lambda} as $\uhl = \widehat \Sigma_\lambda^{-1} \widehat \xi$ and $\ul = \Sigma_\lambda^{-1} \xi$. We thus have

\begin{align*}
        \widehat u_\lambda-u_\lambda &= \widehat \Sigma_\lambda^{-1}\widehat{\xi} - \Sigma_\lambda^{-1} \xi\\
        &=\widehat \Sigma_\lambda^{-1}\widehat{\xi} - \widehat \Sigma_\lambda^{-1} \xi+ \widehat \Sigma_\lambda^{-1} \xi - \Sigma_\lambda^{-1} \xi\\
        &=\widehat \Sigma_\lambda^{-1} (\widehat{\xi} - \xi) + (\widehat \Sigma_\lambda^{-1}-\Sigma_\lambda^{-1})\xi\\
        &=\widehat \Sigma_\lambda^{-1} (\widehat{\xi} - \xi)  + \widehat \Sigma_\lambda^{-1}(\Sigma_\lambda-\widehat{\Sigma}_\lambda)\Sigma_\lambda^{-1} \xi\\
        &=\widehat \Sigma_\lambda^{-1} (\widehat{\xi} - \xi)  + \widehat \Sigma_\lambda^{-1}(\Sigma-\widehat{\Sigma})\Sigma_\lambda^{-1} \xi\\
        &=\widehat \Sigma_\lambda^{-1} \left[ (\widehat{\xi} - \xi) + (\Sigma-\widehat{\Sigma})\Sigma_\lambda^{-1} \xi \right]. \qedhere
\end{align*}
\end{proof}

\begin{lem}\label{estimation-error-decomposition-lemma}
We have
    \begin{equation}\label{estimation-error-decomposition-src-eq} \|A_\rho (\uhl - \ul) \|_\rho \ \leq  \  \| \Sigma_\lambda^{\frac{1}{2}} \widehat \Sigma_\lambda^{-\frac{1}{2}} \|_{\op} \left( \frac{1}{\sqrt{\lambda}} \| \widehat{\xi} - \xi \|_\cH + \frac{\|\Sigma\|_{\op}^{r - \tilde r}}{\lambda^{1-\tilde{r}}} \|\Sigma-\widehat{\Sigma} \|_{\op} \ \|(\widetilde f , \widetilde g)\|_\rho \right). \end{equation}
\end{lem}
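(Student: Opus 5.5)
We start from the decomposition of the previous lemma, $\uhl-\ul=\widehat\Sigma_\lambda^{-1}[(\widehat\xi-\xi)+(\Sigma-\widehat\Sigma)\Sigma_\lambda^{-1}\xi]$, and apply $A_\rho$. The basic identity is $\|A_\rho v\|_\rho=\|\Sigma^{1/2}v\|_\cH\le\|\Sigma_\lambda^{1/2}v\|_\cH$ for every $v\in\cH$, because $\Sigma=A_\rho^*A_\rho\preceq\Sigma_\lambda$. We then factor
\[\Sigma_\lambda^{1/2}\widehat\Sigma_\lambda^{-1}=\bigl(\Sigma_\lambda^{1/2}\widehat\Sigma_\lambda^{-1/2}\bigr)\widehat\Sigma_\lambda^{-1/2}\]
and use $\|\widehat\Sigma_\lambda^{-1/2}\|_{\op}\le\lambda^{-1/2}$, since $\widehat\Sigma\succeq0$. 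After the triangle inequality this gives
\[\|A_\rho(\uhl-\ul)\|_\rho\le\|\Sigma_\lambda^{1/2}\widehat\Sigma_\lambda^{-1/2}\|_{\op}\Bigl(\lambda^{-1/2}\|\widehat\xi-\xi\|_\cH+\lambda^{-1/2}\|(\Sigma-\widehat\Sigma)\Sigma_\lambda^{-1}\xi\|_\cH\Bigr).\]
The first term is already the one in \eqref{estimation-error-decomposition-src-eq}.

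For the second term we bound $\|(\Sigma-\widehat\Sigma)\Sigma_\lambda^{-1}\xi\|_\cH\le\|\Sigma-\widehat\Sigma\|_{\op}\|\Sigma_\lambda^{-1}\xi\|_\cH$. It then remains to show
\[\|\Sigma_\lambda^{-1}\xi\|_\cH\le\lambda^{\tilde r-1/2}\|\Sigma\|_{\op}^{r-\tilde r}\|(\widetilde f,\widetilde g)\|_\rho.\]
Combined with the prefactor $\lambda^{-1/2}$, this gives $\lambda^{\tilde r-1}$, as required.

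To prove this bound we first note that $\xi=A_\rho^*(h,q)=A_\rho^*(h^\Pi,q^\Pi)$, since $(h,q)-(h^\Pi,q^\Pi)\in(\overline{\ran A_\rho})^\perp=\Ker A_\rho^*$. By the push-through identity, $\Sigma_\lambda^{-1}A_\rho^*=A_\rho^*L_\lambda^{-1}$ with $L_\lambda=L+\lambda I$. Hence
\[\|\Sigma_\lambda^{-1}\xi\|_\cH^2=\langle L_\lambda^{-1}(h^\Pi,q^\Pi),\,L L_\lambda^{-1}(h^\Pi,q^\Pi)\rangle_\rho=\|L^{1/2}L_\lambda^{-1}(h^\Pi,q^\Pi)\|_\rho^2.\]
Under \Cref{ass:src-var-apdx} we substitute $(h^\Pi,q^\Pi)=L^r(\widetilde f,\widetilde g)$ and expand in the eigenbasis $(f_i,g_i)$ of \Cref{properties-of-integral-operator-apdx}. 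The squared norm becomes $\sum_i\mu_i^{1+2r}(\lambda+\mu_i)^{-2}\langle(\widetilde f,\widetilde g),(f_i,g_i)\rangle_\rho^2$, so we need the scalar bound
\[\frac{\mu^{1/2+r}}{\lambda+\mu}\le\lambda^{\tilde r-1/2}\|L\|_{\op}^{r-\tilde r}.\]

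\textbf{Case $r\le1/2$.} Here $\tilde r=r$, and the bound follows from $\lambda^{1/2-r}\mu^{1/2+r}\le\lambda+\mu$, the same interpolation argument as in \Cref{approx:error:apdx}.

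\textbf{Case $r>1/2$.} Here $\tilde r=1/2$. We write $\mu^{1/2+r}=\mu\cdot\mu^{r-1/2}$, use $\mu/(\lambda+\mu)\le1$, and bound $\mu^{r-1/2}\le\|L\|_{\op}^{r-1/2}$.

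Finally, $\|L\|_{\op}=\|A_\rho\|_{\op}^2=\|\Sigma\|_{\op}$.

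When \Cref{ass:src-var-apdx} fails, the convention $\tilde r=0$ and $(\widetilde f,\widetilde g)=(h^\Pi,q^\Pi)$ corresponds to $r=0$. The needed bound $\mu^{1/2}/(\lambda+\mu)\le\lambda^{-1/2}$ holds trivially, and the factor $\|\Sigma\|_{\op}^{r-\tilde r}$ is interpreted as $1$.

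The main obstacle is this spectral step for $\|\Sigma_\lambda^{-1}\xi\|_\cH$. It requires transferring the source condition, which is stated on $L$ in $L^2(\rho_X)\times L^2(\rho_Z)$, to $\cH$ through the push-through identity. It also requires the case split at $r=1/2$, which is exactly what produces the saturation $\tilde r=\min(r,1/2)$. Everything else is a norm-factoring argument.
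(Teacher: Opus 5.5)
Your proof is correct and follows essentially the same route as the paper. Both use the factorization through $\Sigma_\lambda^{1/2}\widehat\Sigma_\lambda^{-1/2}$ with $\|\widehat\Sigma_\lambda^{-1/2}\|_{\op}\le\lambda^{-1/2}$, followed by a spectral bound on $\Sigma_\lambda^{-1}\xi$ under the source condition. The difference is cosmetic: you push $\Sigma_\lambda^{-1}$ through $A_\rho^*$ and work with the eigenvalues of $L$, while the paper writes $\xi=\Sigma^{r}A_\rho^*(\widetilde f,\widetilde g)$ and splits $\Sigma_\lambda^{-1}\Sigma^{r}A_\rho^*$ into operator factors on the $\cH$ side; both reduce to the same scalar inequality, including the saturation at $r=1/2$.
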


\begin{proof}
    First observe that 
    \[\|A_\rho (\uhl - \ul) \|_\rho = \|\Sigma^{\frac{1}{2}} (\widehat u_\lambda-u_\lambda)  \|_\cH .\]
Now, observe that 
\[ \xi = A_\rho^*(h, q) = A_\rho^* (h^\Pi, q^\Pi) = A_\rho^* L^r (\widetilde f, \widetilde g) = \Sigma^r A_\rho^* (\widetilde f, \widetilde g).\]
Thus,
\begin{align*}
    \Sigma^{\frac{1}{2}} (\widehat u_\lambda-u_\lambda) & =  \Sigma^{\frac{1}{2}} \widehat \Sigma_\lambda^{-1} \left[ (\widehat{\xi} - \xi) + (\Sigma-\widehat{\Sigma})\Sigma_\lambda^{-1} \xi \right]\\
    & = \Sigma^{\frac{1}{2}} \widehat \Sigma_\lambda^{-\frac{1}{2}} \widehat \Sigma_\lambda^{-\frac{1}{2}}  \left[ (\widehat{\xi} - \xi) + (\Sigma-\widehat{\Sigma})\Sigma_\lambda^{-1} \Sigma^r A_\rho^* (\widetilde f, \widetilde g) \right],
\end{align*}
thus
\begin{align*}
\| \Sigma^{\frac{1}{2}} (\widehat u_\lambda-u_\lambda) \|_\cH \ & \leq \ \| \Sigma^{\frac{1}{2}} \widehat \Sigma_\lambda^{-\frac{1}{2}} \|_{\op} \| \widehat \Sigma_\lambda^{-\frac{1}{2}}  \|_{\op} \\
& \hspace{7em} \cdot \left[ \| \widehat{\xi} - \xi \|_\cH + \|\Sigma-\widehat{\Sigma} \|_{\op} \|\Sigma_\lambda^{-\frac{1}{2}+\tilde{r}} \|_{\op} \| \Sigma_\lambda^{-\frac{1}{2}-\tilde{r}}  \Sigma^{\tilde{r}} A_\rho^* \|_{\rho \to \cH} \|(\widetilde f, \widetilde g) \|_\rho \right] \\
& \leq \ \frac{1}{\sqrt{\lambda}} \| \Sigma^{\frac{1}{2}} \widehat \Sigma_\lambda^{-\frac{1}{2}} \|_{\op}\left[ \| \widehat{\xi} - \xi \|_\cH + \frac{\|\Sigma\|_{\op}^{r - \tilde{r}}}{\lambda^{\frac{1}{2}-\tilde{r}}} \|\Sigma-\widehat{\Sigma} \|_{\op} \ \| \Sigma_\lambda^{-\frac{1}{2}-\tilde{r}}  \Sigma^{\tilde{r}} A_\rho^* \|_{\rho \to \cH} \|(\widetilde f, \widetilde g) \|_\rho \right],
\end{align*}
where $\| \cdot \|_{\op}$ denotes the standard operator norm on $(\cH, \|\cdot \|_\cH)$, and $\| \cdot\|_{\rho \to \cH}$ denotes the operator norm from $(L^2(\rho_X) \times L^2(\rho_Z), \| \cdot \|_\rho)$ to $(\cH, \| \cdot \|_\cH)$. We used the fact that $-\frac 12 + \tilde{r} \leq 0$ to bound $\|\Sigma_\lambda^{-\frac{1}{2}+\tilde{r}}\|_{\op}\leq \lambda^{-\frac{1}{2}+\tilde{r}}$.

In order to conclude, we need to bound $\| \Sigma^{\frac{1}{2}} \widehat \Sigma_\lambda^{-\frac{1}{2}} \|_{\op}$ and $ \| \Sigma_\lambda^{-\frac{1}{2}-\tilde{r}}  \Sigma^{\tilde{r}} A_\rho^* \|_{\rho \to \cH}$. First observe that 
\begin{align*} \| \Sigma_\lambda^{-\frac{1}{2}-r}  \Sigma^{\tilde{r}} A_\rho^* \|_{\rho \to \cH} & = \sqrt{ \left\|( \Sigma_\lambda^{-\frac{1}{2}-\tilde{r}}  \Sigma^{\tilde{r}} A_\rho^*) ( \Sigma_\lambda^{-\frac{1}{2}-\tilde{r}}  \Sigma^{\tilde{r}} A_\rho^*)^*\right\|_{\op}}\\
    & = \sqrt{ \left\|\Sigma_\lambda^{-\frac{1}{2}-\tilde{r}} \Sigma^{1 + 2\tilde{r}} \Sigma_\lambda^{-\frac{1}{2}-\tilde{r}}  \right\|_{\op}}.
\end{align*}
The eigenvalues of the operator $\Sigma_\lambda^{-\frac{1}{2}-\tilde{r}} \Sigma^{1 + 2\tilde{r}} \Sigma_\lambda^{-\frac{1}{2}-\tilde{r}}   = \Sigma_\lambda^{-1-2\tilde{r}}  \Sigma^{1 + 2\tilde{r}} $ are $\frac{\mu_i^{1 + 2\tilde{r}}}{(\mu_i + \lambda)^{1 + 2\tilde{r}}} \leq 1$, where $\mu_i$ are the same eigenvalues from \eqref{decomposition:integral:op:ONS} (we use that $\Sigma = A_\rho^* A_\rho$ and $L = A_\rho A_\rho^*$ have the same non-zero eigenvalues.)). As a consequence, its operator norm is bounded by $1$, and thus we have
\[  \| \Sigma_\lambda^{-\frac{1}{2}-\tilde{r}}  \Sigma^{\tilde{r}} A_\rho^* \|_{\rho \to \cH} \leq 1 .\]
Let us now bound $\| \Sigma^{\frac{1}{2}} \widehat \Sigma_\lambda^{-\frac{1}{2}} \|_{\op}$. We have
\begin{align*} \| \Sigma^{\frac{1}{2}} \widehat \Sigma_\lambda^{-\frac{1}{2}} \|_{\op} & \leq \| \Sigma^{\frac{1}{2}} \Sigma_\lambda^{-\frac{1}{2}} \|_{\op} \| \Sigma_\lambda^{\frac{1}{2}} \widehat \Sigma_\lambda^{-\frac{1}{2}} \|_{\op} \\
& \leq \| \Sigma_\lambda^{\frac{1}{2}} \widehat \Sigma_\lambda^{-\frac{1}{2}} \|_{\op} ,
\end{align*}
where we used $\| \Sigma^{\frac{1}{2}} \Sigma_\lambda^{-\frac{1}{2}} \|_{\op} \leq 1$ since the eigenvalues of $ \Sigma^{\frac{1}{2}} \Sigma_\lambda^{-\frac{1}{2}}$ are $\frac{\sqrt{\mu_i}}{\sqrt{\mu_i + \lambda}}$. 
\end{proof}

The following lemma allows us to express $\widehat \xi$ as an empirical mean and $\xi$ as an expectation.

\begin{lem}\label{expression-of-xi-and-xihat-apdx}
We have 
\begin{equation}\label{expression-of-xihat-apdx}\widehat \xi = \frac{1}{n} \sum_{i=1}^n y_i  K_{x_i}  + \frac{1}{m} \sum_{j=1}^m w_j  K^\cD_{z_j},\end{equation}
    and
\begin{equation}\label{expression-of-xi-apdx}\xi = \EE_{x \sim \rho_X} [h(x) K_x] + \EE_{z \sim \rho_Z}[q(z)  K^\cD_z].
\end{equation}
Furthermore, we have the bound
\begin{equation}\label{bound-on-xi-apdx} \| \xi \|_\cH \leq  \| h \|_{L^\infty(\rho_X)}\kappa + \| q\|_{L^\infty(\rho_Z)} \kappa_\cD.  \end{equation}
\end{lem}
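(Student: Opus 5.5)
The first formula follows directly from the definitions already set up. By \eqref{eq:def-of-xi-and-xih}, $\widehat\xi = \Ah^* Y$, and \eqref{eq:expression-of-Ahat*-apdx} applied with $(a,b) = ((y_i)_{i=1}^n,(w_j)_{j=1}^m)$ gives \eqref{expression-of-xihat-apdx} at once, with no further argument required.

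For $\xi = A_\rho^*(h,q)$ we use \eqref{expression-of-A*-apdx} with $(f,g)=(h,q)$. This requires $h\in L^2(\rho_X)$ and $q\in L^2(\rho_Z)$, which hold by the standing assumptions of \Cref{sec:theory-assumptions-apdx}. It yields
\[
\xi = \int_\cX h(y)K_y\,d\rho_X(y) + \int_\cX q(t)K^\cD_t\,d\rho_Z(t),
\]
which is \eqref{expression-of-xi-apdx} written as expectations.

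The only point needing care is that these are genuine Bochner integrals in $\cH$. The first step is measurability of $y\mapsto K_y$ and $t\mapsto K^\cD_t$. Weak measurability follows because $\langle K_y,u\rangle_\cH = u(y)$ and $\langle K^\cD_t,u\rangle_\cH = \cD u(t)$ are measurable in $y$ and $t$, these being the same measurability facts already used to define $A_\rho$. Since $\cH$ is separable, Pettis' theorem then gives strong measurability. The second step is integrability of the norms. By \Cref{ass:bounded-features}, $\|h(y)K_y\|_\cH \le \kappa |h(y)|$, and $h\in L^\infty(\rho_X)\subset L^1(\rho_X)$ because $\rho_X$ is a probability measure; the same argument with $\kappa_\cD$ handles the $q$ term. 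This justification was in fact already invoked in the proof of \eqref{expression-of-A*-apdx}, so we simply refer back to it.

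The bound \eqref{bound-on-xi-apdx} then comes from the triangle inequality for Bochner integrals, $\|\int F\,d\mu\|_\cH \le \int\|F\|_\cH\,d\mu$, applied to each term:
\[
\|\xi\|_\cH \le \int_\cX |h(y)|\,\|K_y\|_\cH\,d\rho_X(y) + \int_\cX |q(t)|\,\|K^\cD_t\|_\cH\,d\rho_Z(t).
\]
Bounding $\|K_y\|_\cH\le\kappa$ and $\|K^\cD_t\|_\cH\le\kappa_\cD$ by \eqref{eq:bounded-features-var}, and $|h|\le\|h\|_{L^\infty(\rho_X)}$ and $|q|\le\|q\|_{L^\infty(\rho_Z)}$ almost everywhere, then integrating against probability measures, gives the claim. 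There is no real obstacle here; the only subtle step is the Bochner well-definedness, which is handled by the argument above.
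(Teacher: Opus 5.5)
Your proposal is correct and takes essentially the paper's approach. The paper re-derives both formulas by pairing $\widehat\xi$ and $\xi$ with an arbitrary $u\in\cH$ and applying the reproducing properties; this is exactly the computation behind the adjoint formulas \eqref{eq:expression-of-Ahat*-apdx} and \eqref{expression-of-A*-apdx} that you cite. It then bounds $\|\xi\|_\cH$ by the triangle inequality for Bochner integrals together with $\|K_x\|_\cH\le\kappa$, $\|K^\cD_z\|_\cH\le\kappa_\cD$, as you do, and it leaves implicit the strong-measurability point you justify via Pettis' theorem.
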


\begin{proof}
    Let $u \in \cH$. We have
    \begin{equation*}
        \begin{aligned}
            \langle \widehat \xi, u \rangle_\cH & = \langle \Ah^* Y, u \rangle_\cH \\
            & = \langle Y, \Ah u \rangle_{n,m} \\
            & = \frac{1}{n} \sum_{i=1}^n y_i u(x_i) + \frac{1}{m} \sum_{j=1}^m w_j \cD u(z_j) \\
            & = \frac{1}{n} \sum_{i=1}^n y_i \langle K_{x_i}, u \rangle_\cH + \frac{1}{m} \sum_{j=1}^m w_j \langle K^\cD_{z_j} , u \rangle_\cH \\
            & = \left\langle \frac{1}{n} \sum_{i=1}^n y_i  K_{x_i}  + \frac{1}{m} \sum_{j=1}^m w_j  K^\cD_{z_j} \ , \ u \right\rangle_\cH.
        \end{aligned}
    \end{equation*}
    Since this is true for any $u \in \cH$, it proves \eqref{expression-of-xihat-apdx}.

We prove the second formula the same way. Let $u \in \cH$, we have
    \begin{equation*}
        \begin{aligned}
            \langle \xi, u \rangle_\cH & = \langle A_\rho^* (h, q), u \rangle_\cH \\
            & = \langle (h, q),  A_\rho u \rangle_\rho \\
            &  = \EE_{x \sim \rho_X} [h(x) u(x) ] + \EE_{z \sim \rho_Z} [ q(z) \cD u(z) ] \\
            &  = \EE_{x \sim \rho_X} [h(x) \langle K_x , u \rangle_\cH ] + \EE_{z \sim \rho_Z} [ q(z) \langle K^\cD_z , u \rangle_\cH ]  \\
            & = \left\langle \EE_{x \sim \rho_X} [h(x) K_x] + \EE_{z \sim \rho_Z}[q(z)  K^\cD_z] \ , \ u \right\rangle_\cH.
        \end{aligned}
    \end{equation*}
To prove \eqref{expression-of-xi-apdx}, observe first that $h(x)$ is $\rho_X$-almost surely bounded (as assumed at the beginning of \Cref{sec:theoretical-analysis-apdx}), and by \Cref{ass:bounded-features}, for all $x \in \Input$ we have $\|K_x\|_\cH \leq \kappa$, so $\| h(x)K_x\|_\cH$ is bounded $\rho_X$-almost surely. As a consequence, the expected value $\EE_{x \sim \rho_X} [h(x) K_x]$ is well defined as a Bochner integral with values in $\cH$. Analogously, $\|q(z)  K^\cD_z\|_\cH$ is bounded $\rho_Z$-almost surely which allows us to define the expected value $\EE_{z \sim \rho_Z}[q(z)  K^\cD_z]$. Finally, the Bochner integral commutes with bounded operators, so we can exchange the expectations and the scalar product $\langle \cdot , u \rangle_\cH$. Since this equality is true for any $u \in \cH$, it proves \eqref{expression-of-xi-apdx}. 

Let us now prove the bound. We have 
\[ \| \, \EE_{x \sim \rho_X} [h(x) K_x] \, \|_\cH \ \leq \ \EE_{x \sim \rho_X} [\|h(x) K_x\|_\cH] \ \leq \ \|h\|_{L^\infty(\rho_X)} \kappa. \]
Analogously, we have
\[ \| \, \EE_{z \sim \rho_Z}[q(z)  K^\cD_z] \, \|_\cH \ \leq \ \EE_{z \sim \rho_Z}[\| q(z)  K^\cD_z\|_\cH] \ \leq \ \|q\|_{L^\infty(\rho_Z)} \kappa_\cD. \]
Using the triangle inequality on \eqref{expression-of-xi-apdx} and the previous two inequalities yields \eqref{bound-on-xi-apdx}.
\end{proof}

\begin{proof}[Proof of Proposition~\ref{estimation-error-main-lemma}]
We know from \Cref{estimation-error-decomposition-lemma} that 
\begin{equation}\label{estimation-error-decomposition-src-eq-apdx}\|A_\rho (\uhl - \ul) \|_\rho \ \leq  \  \| \Sigma_\lambda^{\frac{1}{2}} \widehat \Sigma_\lambda^{-\frac{1}{2}} \|_{\op} \left( \frac{1}{\sqrt{\lambda}} \| \widehat{\xi} - \xi \|_\cH + \frac{\|\Sigma\|_{\op}^{r - \tilde r}}{\lambda^{1-\tilde{r}}} \|\Sigma-\widehat{\Sigma} \|_{\op} \ \|(\widetilde f , \widetilde g)\|_\rho \right).
\end{equation}
Let us prove \emph{(i)} first by decomposing both $\| \widehat \xi - \xi \|_\cH$ and $\| \Sigma - \widehat \Sigma \|_{\op}$. 

We begin with $\| \widehat \xi - \xi \|_\cH$. Using \Cref{expression-of-xi-and-xihat-apdx}, we have 
\[\widehat \xi =\frac{1}{n} \sum_{i=1}^n y_i  K_{x_i}  + \frac{1}{m} \sum_{j=1}^m w_j  K^\cD_{z_j},\]
and
\[\xi = \EE_{x \sim \rho_X} [h(x) K_x] + \EE_{z \sim \rho_Z}[q(z)  K^\cD_z].\]
Let us define 
\[U_n = \left\| \frac{1}{n} \sum_{i=1}^n y_i  K_{x_i} - \EE_{x \sim \rho_X} [h(x) K_x] \right\|_\cH\]
and
\[W_m =  \left\|\frac{1}{m} \sum_{j=1}^m w_j  K^\cD_{z_j} -  \EE_{z \sim \rho_Z}[q(z)  K^\cD_z] \right\|_\cH.\]
By the triangle inequality, we have
\begin{equation}\label{decomposition:hhat-h}
    \| \widehat \xi - \xi \|_\cH \ \leq \ U_n + W_m.
\end{equation}

Let us now proceed with $\| \Sigma - \widehat \Sigma \|_{\op} $.
Recall from \Cref{covariance-operators-decomposition-apdx} that we have
\[\widehat \Sigma = \frac{1}{n} \sum_{i=1}^n K_{x_i} \otimes K_{x_i} + \frac{1}{m} \sum_{j=1}^m K^\cD_{z_j} \otimes K^\cD_{z_j} = \widehat \Sigma_X + \widehat \Sigma_Z,\]
and
\[\Sigma = \EE_{x \sim \rho_X}[K_x \otimes K_x] + \EE_{z \sim \rho_Z}[K^\cD_z \otimes K^\cD_z ] = \Sigma_X + \Sigma_Z.\]
We define
\[V_n = \|(\widetilde f, \widetilde g)\|_\rho \left\| \widehat \Sigma_X -  \Sigma_X \right\|_{\op} ,\]
and 
\[Z_m = \|(\widetilde f, \widetilde g)\|_\rho \left\| \widehat \Sigma_Z - \Sigma_Z \right\|_{\op} .\]
By the triangle inequality, we have
\begin{equation}\label{decomposition:Sigmahat-Sigma}
  \|(\widetilde f, \widetilde g)\|_\rho  \left\|\Sigmah - \Sigma \right\|_{\op}  \ \leq \ V_n + Z_m.
\end{equation}
The inequalities \eqref{decomposition:hhat-h} and \eqref{decomposition:Sigmahat-Sigma} alongside with the decomposition \eqref{estimation-error-decomposition-src-eq-apdx} prove \emph{(i)}.

The four remaining points are all proved the same way, by using Hoeffding's inequality in Hilbert spaces. Let us prove \emph{(ii)} and \emph{(iv)} jointly, i.e. let us bound $U_n$ and $W_m$. We define the i.i.d. random variables
\[ e_i = y_i K_{x_i},\]
and the i.i.d. random variables
\[t_j = w_j K^\cD_{z_j}.\]
For $i \in \lb 1 , n \rb$, we have
\[\EE[e_i] = \EE\left[\EE[y_i K_{x_i}|x_i]\right] = \EE\left[ h(x_i) K_{x_i} \right],\]
and for $j \in \lb 1 , m \rb$, we have
\[\EE[t_j] = \EE\left[\EE[ w_jK^\cD_{z_j}| z_j]\right] = \EE\left[q(z_j)K^\cD_{z_j}\right],\]
so we can rewrite
\[U_n = \left\| \frac{1}{n} \left( \sum_{i=1}^n e_i   - \EE[e_i] \right) \right\|_\cH\]
and
\[ W_m =  \left\| \frac{1}{m} \left( \sum_{j=1}^m t_j - \EE [ t_j] \right) \right\|_\cH.\]

We want to apply Hoeffding's inequality in separable Hilbert spaces to bound $U_n$ and $W_m$. For completeness, we reproduced the inequality in this appendix as \Cref{Hoeffding-ineq-sep-hilbert-spaces}. For that, we need first to establish that the variables $e_i$ and $t_j$ are bounded. It is indeed the case since we have $\| e_i \|_\cH \leq |y_i| \| K_{x_i} \|_\cH \leq M_y \kappa$, where both $M_y$ and $\kappa$ are positive constants defined in \Cref{sec:theory-assumptions-apdx}, and where the second inequality holds almost surely. As a consequence, the zero-mean variable $e_i - \EE[e_i]$ satisfies almost surely 
\[\|e_i - \EE[e_i]\| \leq 2M_y \kappa.\]
Similarly, we have $\| t_j \|_\cH \leq |w_j| \| K^\cD_{z_j} \|_\cH \leq M_w \kappa_\cD$, where again $M_w$ and $\kappa_\cD$ are defined in \Cref{sec:theory-assumptions-apdx} and the second inequality holds almost surely. As a consequence, the zero-mean variable $t_j - \EE[t_j]$ satisfies almost surely 
\[\|t_j - \EE[t_j]\|_\cH \leq 2 M_w \kappa_\cD.\]
We can thus apply the Hoeffding inequality in separable Hilbert spaces (see \Cref{Hoeffding-ineq-sep-hilbert-spaces}) and get that, with probability at least $1 - 2e^{-\tau}$, we have
\begin{equation}\label{hoeffding-w}U_n \leq  4 M_y \kappa  \sqrt{\frac{\tau}{n}},\end{equation}
which proves \emph{(ii)}.
Similarly, we find that with probability at least $1 - 2e^{-\tau}$, we have
\begin{equation}\label{hoeffding-t} W_m \leq  4 M_w \kappa_\cD \sqrt{\frac{\tau}{m}},\end{equation}
which proves \emph{(iv)}.

Let us now prove \emph{(iii)} and \emph{(v)}. Bounding the operator norm by the Hilbert-Schmidt norm, we have
\begin{equation}\label{op:norm:bounded:by:HS:norm:V_n}V_n \leq \|(\widetilde f, \widetilde g)\|_\rho  \bigg\| \frac{1}{n} \sum_{i=1}^n K_{x_i} \otimes K_{x_i} -  \EE_x[K_x \otimes K_x] \bigg\|_{HS}  \end{equation}
and
\begin{equation}\label{op:norm:bounded:by:HS:norm:Z_m}Z_m \leq  \|(\widetilde f, \widetilde g)\|_\rho  \bigg\| \frac{1}{m} \sum_{j=1}^m K^\cD_{z_j} \otimes K^\cD_{z_j}-  \EE_z[K^\cD_z \otimes K^\cD_z ] \bigg\|_{HS}.\end{equation}
To apply again Hoeffding inequality, we need the variables $ K_{x_i} \otimes K_{x_i}$ and $ K^\cD_{z_j} \otimes K^\cD_{z_j}$ to be bounded, which is the case as proved in \Cref{bound-hilbert-schmidt-norm-apdx}. We can thus apply Hoeffding inequality in the separable Hilbert space of Hilbert-Schmidt operators (see \Cref{Hoeffding-ineq-sep-hilbert-spaces}) to find that with probability at least $1 - 2e^{-\tau}$, we have
\begin{equation}\label{hoeffding-kxk} \left\| \frac{1}{n} \sum_{i=1}^n K_{x_i} \otimes K_{x_i} -  \EE_x[K_x \otimes K_x] \right\|_{HS} \leq  4 \kappa^2 \sqrt{\frac{\tau}{n}}.
\end{equation}
Together, \eqref{op:norm:bounded:by:HS:norm:V_n}, \eqref{hoeffding-kxk} and \eqref{bound-on-xi-apdx} prove \emph{(iii)}.
Similarly,  with probability at least $1 - 2e^{-\tau}$, we have
\begin{equation}\label{hoeffding-jxj} \left\| \frac{1}{m} \sum_{j=1}^m K^\cD_{z_j} \otimes K^\cD_{z_j}-  \EE_z[K^\cD_z \otimes K^\cD_z ] \right\|_{HS} \leq 4 \kappa_\cD^2  \sqrt{\frac{\tau}{m}}.
\end{equation}
Together, \eqref{op:norm:bounded:by:HS:norm:Z_m}, \eqref{hoeffding-jxj} and \eqref{bound-on-xi-apdx} prove \emph{(v)}.
\end{proof}

\Cref{estimation-error-main-lemma} allows us to prove the following two results.

\begin{coro}\label{coro:estimation-error-apdx}
  Let $\delta \in (0,\frac12)$. There exists a constant $C > 0$ such that if $\lambda \geq C \max \left( \frac{1}{n}\log \frac{n}{\delta}, \frac{1}{m}\log \frac{m}{\delta} \right) $, and if $\lambda < \lambda_0$ for an arbitrary upper bound $\lambda_0 > 0$, then with probability at least $1 - \delta$, we have
\begin{equation}\label{eq:estimation-error-apdx}
    \| A_\rho \uhl - A_\rho\ul \|_\rho \quad \lesssim \quad \frac{\sqrt{\ln(1/\delta)}}{\lambda^{1-\tilde{r}} \sqrt{n}} + \frac{\sqrt{\ln(1/\delta)}}{\lambda^{1-\tilde{r}} \sqrt{m}},
\end{equation}
where the hidden constants in \eqref{eq:estimation-error-apdx} depend on $M_y, M_w, \kappa, \kappa_\cD$ and $\lambda_0$.
\end{coro}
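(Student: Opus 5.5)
We combine \Cref{estimation-error-main-lemma} with the two concentration lemmas \Cref{lemma:population-cov-controls-emp-cov} and \Cref{Tropp-ineq-B_X-B_Z} through a union bound. We work on the intersection of six good events, each of probability at least $1-\delta/6$.

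\textbf{Choice of events.} The first two events are $\|B_X\|_{\op}\le 1/4$ and $\|B_Z\|_{\op}\le 1/4$. The other four are the Hoeffding bounds (ii)--(v) of \Cref{estimation-error-main-lemma}, each taken with $\tau=\log(12/\delta)$, so that $2e^{-\tau}=\delta/6$.

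For the first two events we use \Cref{Tropp-ineq-B_X-B_Z} with confidence $\delta/6$. This requires
\[
\lambda\ge \tfrac{112\kappa^2}{3n}\log\tfrac{6(n+4)}{\delta}
\quad\text{and}\quad
\lambda\ge \tfrac{112\kappa_\cD^2}{3m}\log\tfrac{6(m+4)}{\delta}.
\]
Since $\delta<1/2$ and $n\ge1$, we have $\log\frac{6(n+4)}{\delta}\le c\log\frac{n}{\delta}$ for an absolute constant $c$. One checks this by writing $\log(6(n+4)/\delta)\le \log 30+\log(n/\delta)$ and noting $\log(n/\delta)\ge\log 2$. The same holds with $m$. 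Hence both conditions follow from $\lambda\ge C\max(\frac1n\log\frac n\delta,\frac1m\log\frac m\delta)$ with $C$ depending only on $\kappa,\kappa_\cD$. By \Cref{lemma:population-cov-controls-emp-cov}, on these two events $\|\Sigma_\lambda^{1/2}\widehat\Sigma_\lambda^{-1/2}\|_{\op}\le\sqrt2$.

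\textbf{Combining the bounds.} Plug this into item (i) of \Cref{estimation-error-main-lemma} and use (ii)--(v) with $\sqrt{\tau}=\sqrt{\log(12/\delta)}\lesssim\sqrt{\ln(1/\delta)}$, valid because $\delta<1/2$. This gives
\[
\|A_\rho\uhl-A_\rho\ul\|_\rho\le \sqrt2\sqrt{\log(12/\delta)}\Big(\tfrac{C_U}{\lambda^{1/2}\sqrt n}+\tfrac{C_V}{\lambda^{1-\tilde r}\sqrt n}+\tfrac{C_W}{\lambda^{1/2}\sqrt m}+\tfrac{C_Z}{\lambda^{1-\tilde r}\sqrt m}\Big).
\]
The terms in $\lambda^{-1/2}$ have to be absorbed into the $\lambda^{-(1-\tilde r)}$ terms, and this is where the upper bound $\lambda<\lambda_0$ enters. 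Since $\tilde r\le1/2$, we have $1-\tilde r\ge 1/2$, so
\[
\lambda^{-1/2}=\lambda^{-(1-\tilde r)}\lambda^{1/2-\tilde r}\le \lambda_0^{1/2-\tilde r}\lambda^{-(1-\tilde r)}.
\]
The constants $C_V$ and $C_Z$ involve $\|(\widetilde f,\widetilde g)\|_\rho$ and $(\kappa^2+\kappa_\cD^2)^{r-\tilde r}$, which are fixed, so they are absorbed into $\lesssim$. The union bound over the six events gives total failure probability at most $\delta$.

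\textbf{Main obstacle.} The only delicate point is making the Tropp-type condition on $\lambda$, with its $\log((n+4)/\delta)$ and the $\delta/6$ split, fit the stated form $C\max(\frac1n\log\frac n\delta,\frac1m\log\frac m\delta)$ uniformly in $n,m\ge1$. This is handled by the elementary logarithm comparison above, which uses $\delta<1/2$ to keep $\log(n/\delta)$ bounded away from zero.
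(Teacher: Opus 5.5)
Your proposal is correct and follows the paper's own proof. Both arguments combine \Cref{Tropp-ineq-B_X-B_Z} and \Cref{lemma:population-cov-controls-emp-cov} to get the $\sqrt2$ factor, apply the Hoeffding bounds of \Cref{estimation-error-main-lemma} with $\tau\asymp\log(1/\delta)$ and a union bound, absorb the $\lambda^{-1/2}$ terms using $\lambda<\lambda_0$, and use $\delta<1/2$ to simplify the condition on $\lambda$. The only difference is cosmetic: you split $\delta$ into six parts of $\delta/6$, whereas the paper sets $\delta=10\widetilde\delta$.
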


\begin{proof}
We know from \Cref{estimation-error-main-lemma} that we can write 
    \[\| A_\rho\uhl - A_\rho\ul \|_\rho \quad \leq \quad  \| \Sigma_\lambda^{\frac{1}{2}} \widehat \Sigma_\lambda^{-\frac{1}{2}} \|_{\op} \left( \frac{1}{\lambda^{\frac{1}{2}}}U_n + \frac{1}{\lambda^{1-\tilde{r}}}V_n + \frac{1}{\lambda^{\frac{1}{2}}}W_m + \frac{1}{\lambda^{1-\tilde{r}}}Z_m \right).\]

    Let us first bound $\| \Sigma_\lambda^{\frac{1}{2}} \widehat \Sigma_\lambda^{-\frac{1}{2}} \|_{\op}$. We know that if we have
    \[\lambda \geq \frac{112 }{3} \max \left( \frac{\kappa^2}{n}\log \frac{n+4}{\widetilde \delta}, \frac{\kappa_\cD^2}{m}\log \frac{m+4}{\widetilde \delta} \right)\]
    then with probability $1-2\widetilde \delta$, we have simultaneously 
    \[\|B_X\|_{\op} \leq \frac{1}{4}, \qquad \|B_Z\|_{\op} \leq \frac{1}{4},\]
    so \Cref{lemma:population-cov-controls-emp-cov} implies
    \[ \| \Sigma_\lambda^{\frac{1}{2}} \widehat \Sigma_\lambda^{-\frac{1}{2}} \|_{\op} \le \sqrt{2} \,.\]
    Now using a union bound on this probability bound alongside with the bounds from \Cref{estimation-error-main-lemma}, where we pick $\tau = \ln(1/\widetilde \delta)$ we get that if $\lambda \geq \frac{112 }{3} \max \left( \frac{\kappa^2}{n}\log \frac{n+4}{\widetilde \delta}, \frac{\kappa_\cD^2}{m}\log \frac{m+4}{\widetilde \delta} \right)$, then with probability at least $1 - 10\widetilde \delta$, we have
     \begin{align*} \| A_\rho\uhl - A_\rho\ul \|_\rho \quad & \leq \quad \sqrt{2} \left( C_U\frac{\sqrt{\ln(1/\widetilde \delta)}}{\lambda^{\frac{1}{2}} \sqrt{n}} + C_V\frac{\sqrt{\ln(1/\widetilde \delta)}}{\lambda^{1-\tilde{r}} \sqrt{n}} + C_W \frac{\sqrt{\ln(1/\widetilde \delta)}}{\lambda^{\frac{1}{2}} \sqrt{m}} + C_Z \frac{\sqrt{\ln(1/\widetilde \delta)}}{\lambda^{1-\tilde{r}} \sqrt{m}} \right)\\
        & \leq \quad C' \left(\frac{\sqrt{\ln(1/\delta)}}{\lambda^{1-\tilde{r}} \sqrt{n}}  +  \frac{\sqrt{\ln(1/\delta)}}{\lambda^{1-\tilde{r}} \sqrt{m}} \right),
     \end{align*}
where we defined $ \delta := 10 \widetilde \delta$, and the second inequality is true provided $\lambda < \lambda_0$ for an arbitrary choice of $\lambda_0 > 0$, and where the constant $C'$ depends on $M_y, M_w, \kappa, \kappa_\cD, \lambda_0$ and $\|(\widetilde f , \widetilde g) \|_\rho$.
Using $\delta$, the condition on $\lambda$ can be written as
\begin{equation}
    \lambda \geq \frac{112 }{3} \max \left( \frac{\kappa^2}{n}\log \frac{10n+40}{\delta}, \frac{\kappa_\cD^2}{m}\log \frac{10m+40}{\delta} \right),
\end{equation}
which we can simplify (using that $\delta < 1/2$) as
\begin{equation*}
    \lambda \geq C \max \left( \frac{1}{n}\log \frac{n}{\delta}, \frac{1}{m}\log \frac{m}{\delta} \right).
\end{equation*}
\end{proof}

Recall that in the current section, we work under \Cref{ass:diff-eval,ass:bounded-features,ass:bounded-data}. Consider furthermore the following assumption.
\begin{assumption}\label{ass:data-in-the-closure}
    The pair $(h, q)$ is in the closure of the range of $A_\rho$ taken in $L^2(\rho_X) \times L^2(\rho_Z)$:
    \[(h, q) \in \overline{\ran A_\rho}.\]
\end{assumption}
\begin{rem}\label{rem:equivalence-assumptions-source}
    Since for any $r>0$, $\overline{\ran L^r} = \overline{\ran A \rho}$, we see that \Cref{ass:src} is equivalent to \Cref{ass:src-var-apdx,ass:data-in-the-closure} together. 
\end{rem}

\begin{coro}\label{coro:rates-apdx}
  Let $\delta \in (0,\frac12)$. Under \Cref{ass:src-var-apdx}, if $\lambda \geq C \max \left( \frac{1}{n}\log \frac{n}{\delta}, \frac{1}{m}\log \frac{m}{\delta} \right) $, and if $\lambda < \lambda_0$ for an arbitrary upper bound $\lambda_0 > 0$, then with probability at least $1 - \delta$, we have
\begin{equation}\label{eq:error-bound-projected-data-apdx}
    \| A_\rho \uhl - (h^\Pi, q^\Pi) \|_\rho \quad \lesssim \quad \frac{\sqrt{\ln(1/\delta)}}{\lambda^{1-\tilde{r}} \sqrt{n}} + \frac{\sqrt{\ln(1/\delta)}}{\lambda^{1-\tilde{r}} \sqrt{m}} + \lambda^r,
\end{equation}
where the hidden constants in \eqref{eq:error-bound-projected-data-apdx} depend on $M_y, M_w, \kappa, \kappa_\cD$ and $\lambda_0$.
Furthermore, under \Cref{ass:data-in-the-closure}, this simplifies to
\begin{equation}\label{eq:finite-sample-rates-data-closure}
    \| A_\rho \uhl - (h, q) \|_\rho \quad \lesssim \quad \frac{\sqrt{\ln(1/\delta)}}{\lambda^{1-\tilde{r}} \sqrt{n}} + \frac{\sqrt{\ln(1/\delta)}}{\lambda^{1-\tilde{r}} \sqrt{m}} + \lambda^r.
\end{equation}
In particular, under \Cref{ass:src}, both \Cref{ass:src-var-apdx,ass:data-in-the-closure} hold, so \eqref{eq:finite-sample-rates-data-closure} holds.
\end{coro}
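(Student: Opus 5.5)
The plan is to combine the error decomposition \eqref{eq:error-decomposition} with \Cref{coro:estimation-error-apdx} for the estimation term and with \Cref{approx:error:apdx} for the approximation term. We measure the error against the projection $(h^\Pi,q^\Pi)$ instead of $(h,q)$. By the triangle inequality,
\[
\|A_\rho\uhl-(h^\Pi,q^\Pi)\|_\rho \leq \|A_\rho\uhl-A_\rho\ul\|_\rho+\|A_\rho\ul-(h^\Pi,q^\Pi)\|_\rho .
\]
For the first term we invoke \Cref{coro:estimation-error-apdx} directly. Under the stated lower bound on $\lambda$ (with the same constant $C$) and $\lambda<\lambda_0$, it gives, with probability at least $1-\delta$, the bound $\frac{\sqrt{\ln(1/\delta)}}{\lambda^{1-\tilde r}\sqrt n}+\frac{\sqrt{\ln(1/\delta)}}{\lambda^{1-\tilde r}\sqrt m}$. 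Here $\tilde r$ is defined through \Cref{ass:src-var-apdx} as in \eqref{eq:def-r-tilde}. For the second term, \Cref{ass:src-var-apdx} and \eqref{approx-error-source-condition} give the deterministic bound $\lambda^r\|(\widetilde f,\widetilde g)\|_\rho$. Adding the two bounds yields \eqref{eq:error-bound-projected-data-apdx}. The factor $\|(\widetilde f,\widetilde g)\|_\rho$ is absorbed into the hidden constant, as was already done in the proof of \Cref{coro:estimation-error-apdx}.

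For the second claim, note that under \Cref{ass:data-in-the-closure} the orthogonal projection onto $\overline{\ran A_\rho}$ fixes $(h,q)$. So $(h^\Pi,q^\Pi)=(h,q)$, and \eqref{eq:finite-sample-rates-data-closure} is the same inequality. For the final sentence we use \Cref{rem:equivalence-assumptions-source}. Since $\ran L^r\subset\ran L^{r/2}\cdots$, more directly $\ran L^r\subset\overline{\ran L}=\overline{\ran A_\rho A_\rho^*}=\overline{\ran A_\rho}$, and \Cref{ass:src} therefore gives \Cref{ass:data-in-the-closure}. Then $(h^\Pi,q^\Pi)=(h,q)\in\ran L^r$, which is \Cref{ass:src-var-apdx} with the same $r$.

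There is no serious obstacle, since each piece is already proved. The one point to verify is the inclusion $\overline{\ran L}=\overline{\ran A_\rho}$, which follows from $\Ker L=\Ker A_\rho^*$ together with $(\Ker A_\rho^*)^\perp=\overline{\ran A_\rho}$. The other point is that the constants combine consistently: the probability event comes only from the estimation term, and the approximation bound is deterministic, so no extra union bound is needed.
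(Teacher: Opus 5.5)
Your proposal is correct and follows the paper's own proof: the triangle inequality around $u_\lambda$, \Cref{coro:estimation-error-apdx} for the estimation term, \eqref{approx-error-source-condition} for the deterministic approximation term, the identity $(h^\Pi,q^\Pi)=(h,q)$ under \Cref{ass:data-in-the-closure}, and \Cref{rem:equivalence-assumptions-source} for the final claim. Your justification of that remark via $\overline{\ran L}=(\Ker A_\rho^*)^\perp=\overline{\ran A_\rho}$ usefully fills in a step the paper leaves implicit; you can simply delete the stray fragment about $\ran L^{r/2}$.
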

\begin{proof}
    Using together the approximation error from \Cref{approx-error-source-condition} and \Cref{coro:estimation-error-apdx} yields the first result. Then, \Cref{ass:data-in-the-closure} is equivalent to $(h^\Pi, q^\Pi) = (h, q)$, which gives the second result. The final remark comes from the fact that \Cref{ass:src} is equivalent to \Cref{ass:src-var-apdx,ass:data-in-the-closure} together, as mentioned in \Cref{rem:equivalence-assumptions-source}.
\end{proof}

\begin{coro}\label{coro:simplified-rates-apdx}
    Consider \Cref{ass:src}, and let us pick $\lambda = N^{-1/2}$ if $r \leq 1/2$, and $\lambda = N^{-\frac{1}{2r+1}}$ if $r > 1/2$, where $N := \min(n,m)$. If $\delta \in (0,\frac12)$ is such that $\lambda \geq C \max \left( \frac{1}{n}\log \frac{n}{\delta}, \frac{1}{m}\log \frac{m}{\delta} \right) $, and if $\lambda < \lambda_0$ for an arbitrary upper bound $\lambda_0 > 0$, then with probability at least $1 - \delta$, we have
    \begin{align}
        \| A_\rho \uhl - (h, q) \|_\rho \quad & \lesssim \quad \sqrt{\ln(1/\delta)} N^{-r/2} & \text{if } \ r \leq 1/2  \label{eq:simplified-rates-r<1/2-apdx} \\
        \| A_\rho \uhl - (h, q) \|_\rho \quad & \lesssim \quad \sqrt{\ln(1/\delta)} N^{-\frac{r}{2r + 1}} & \text{if } \ r > 1/2,   \label{eq:simplified-rates-r>1/2-apdx}
    \end{align}
    where the hidden constants in \eqref{eq:simplified-rates-r<1/2-apdx} and \eqref{eq:simplified-rates-r>1/2-apdx} depend on $M_y, M_w, \kappa, \kappa_\cD$ and $\lambda_0$.
\end{coro}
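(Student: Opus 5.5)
This follows directly from \Cref{coro:rates-apdx}, once the prescribed $\lambda$ is plugged into \eqref{eq:finite-sample-rates-data-closure} and the terms are balanced. \Cref{ass:src} implies both \Cref{ass:src-var-apdx} and \Cref{ass:data-in-the-closure} (see \Cref{rem:equivalence-assumptions-source}). Together with the hypotheses $\lambda \geq C \max(\frac1n\log\frac n\delta,\frac1m\log\frac m\delta)$ and $\lambda<\lambda_0$, this means that with probability at least $1-\delta$,
\[
\| A_\rho \uhl - (h, q) \|_\rho \lesssim \frac{\sqrt{\ln(1/\delta)}}{\lambda^{1-\tilde r}}\left(\frac{1}{\sqrt n}+\frac{1}{\sqrt m}\right) + \lambda^r .
\]
Since $N=\min(n,m)$, we have $\frac1{\sqrt n}+\frac1{\sqrt m}\le \frac{2}{\sqrt N}$. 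The bound therefore reduces to $\sqrt{\ln(1/\delta)}\,\lambda^{\tilde r-1}N^{-1/2}+\lambda^r$, and what remains is to substitute $\lambda$ in each regime.

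\emph{Case $r\le 1/2$.} Here $\tilde r=r$. With $\lambda=N^{-1/2}$ the first term becomes $N^{(1-r)/2}N^{-1/2}=N^{-r/2}$, and the second becomes $\lambda^r=N^{-r/2}$. Since $\delta<1/2$ gives $\sqrt{\ln(1/\delta)}\ge\sqrt{\ln 2}$, the bias term can be absorbed into $\sqrt{\ln(1/\delta)}N^{-r/2}$ up to a constant. This yields \eqref{eq:simplified-rates-r<1/2-apdx}.

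\emph{Case $r>1/2$.} Here $\tilde r=1/2$, so the first term is $\lambda^{-1/2}N^{-1/2}$. With $\lambda=N^{-1/(2r+1)}$ it equals $N^{\frac{1}{2(2r+1)}-\frac12}=N^{-\frac{r}{2r+1}}$, which matches $\lambda^r=N^{-\frac{r}{2r+1}}$. The same absorption of the constant gives \eqref{eq:simplified-rates-r>1/2-apdx}.

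There is no real obstacle. The only points needing care are the bookkeeping: $\tilde r$ changes definition between the two regimes, the bias term must be absorbed using $\delta<1/2$, and the hidden constants inherit their dependence on $M_y,M_w,\kappa,\kappa_\cD,\lambda_0$ (and $\|(\widetilde f,\widetilde g)\|_\rho$) from \Cref{coro:rates-apdx}. The admissibility of $\lambda$ is part of the hypotheses, so it need not be checked.
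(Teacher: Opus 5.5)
Your proposal is correct and follows the paper's own argument: invoke \Cref{coro:rates-apdx} via \Cref{rem:equivalence-assumptions-source}, bound $n^{-1/2}$ and $m^{-1/2}$ by $N^{-1/2}$, and substitute the prescribed $\lambda$ in each regime. Two of your steps make explicit what the paper leaves implicit: absorbing the bias term into $\sqrt{\ln(1/\delta)}$ using $\delta<1/2$, and noting that the constants also depend on $\|(\widetilde f,\widetilde g)\|_\rho$.
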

\begin{rem*}
    Since $\lambda \geq N^{-1/2}$ and the condition on $\lambda$ is of the form $\lambda \gtrsim \frac{\log(N/\delta)}{N}$, we see that for any $\delta \in (0,1)$, the condition on $\lambda$ is always satisfied for $N$ large enough.
\end{rem*}
\begin{proof}
    As mentioned in \Cref{rem:equivalence-assumptions-source}, \Cref{ass:src} is equivalent to \Cref{ass:src-var-apdx,ass:data-in-the-closure} together. We can thus use the rates in \eqref{eq:finite-sample-rates-data-closure}. Suppose first that $r \leq 1/2$, i.e. $\tilde{r} = r$, and let us set $\lambda = N^{-1/2}$. With probability $1 - \delta$,
    \begin{align*}
    \| A_\rho \uhl - (h, q) \|_\rho \quad & \lesssim \quad \frac{\sqrt{\ln(1/\delta)}}{\lambda^{1-r} \sqrt{n}} + \frac{\sqrt{\ln(1/\delta)}}{\lambda^{1-r} \sqrt{m}} + \lambda^r\\
    & \lesssim \quad \sqrt{\ln(1/\delta)}N^{\frac{1-r}{2}} n^{-1/2} + \sqrt{\ln(1/\delta)}N^{\frac{1-r}{2}} m^{-1/2}  + N^{-r/2}\\
    & \lesssim \quad \sqrt{\ln(1/\delta)} N^{-r/2}  + N^{-r/2}. 
\end{align*}

Suppose now that $r > 1/2$, i.e. $\tilde{r} = 1/2$, and let us set $\lambda = N^{-\frac{1}{2r+1}}$. 
With probability $1 - \delta$,
    \begin{align*}
    \| A_\rho \uhl - (h, q) \|_\rho \quad & \lesssim \quad \frac{\sqrt{\ln(1/\delta)}}{\sqrt{\lambda} \sqrt{n}} + \frac{\sqrt{\ln(1/\delta)}}{\sqrt{\lambda} \sqrt{m}} + \lambda^r\\
    & \lesssim \quad \sqrt{\ln(1/\delta)}N^{\frac{1}{4r +2}} n^{-1/2} + \sqrt{\ln(1/\delta)}N^{\frac{1}{4r+2}} m^{-1/2}  + N^{-\frac{r}{2r+1}}\\
    & \lesssim \quad \sqrt{\ln(1/\delta)} N^{-\frac{r}{2r+1}}  + N^{-\frac{r}{2r+1}}.
\end{align*}

\end{proof}

\subsection{Asymptotic convergence}

We take again the notation $(h^\Pi, q^\Pi)$ from \Cref{sec:approx-error-apdx} to denote the orthogonal projection (in $L^2(\rho_X) \times L^2(\rho_Z)$) of $(h, q)$ on the closed subspace $\overline{\ran A_\rho}$.

Recall that for any fixed $\lambda > 0$, $\uhl$ already depends on $(n,m)$ (we dropped the index for lighter notation), which means that for a choice $\lambda_{n,m}$ that depends on $(n,m)$, the sequence $\widehat u_{\lambda_{n,m}}$ depends on $(n,m)$ in two different ways.

\begin{prop}\label{asymptotic:convergence}
Let $N=\min(n,m)$. Let $(\lambda_{n,m})$ be any regularization sequence such that
\[
    \lambda_{n,m}\to 0,
    \qquad 
    \frac{\log N}{\lambda_{n,m}^3 N}\to 0
    \qquad \text{as } n,m\to\infty .
\]
Then, almost surely,
\[
    \|A_\rho \widehat u_{\lambda_{n,m}} - (h^\Pi, q^\Pi)\|_\rho
    \quad \underset{n,m \rightarrow + \infty}{\longrightarrow} \quad 0.
\]
Equivalently, almost surely, for every $\epsilon>0$, there exists $N_\epsilon\in\NN$ such that
\[
    \forall (n,m)\in\NN^2, \qquad 
    \left[n,m\geq N_\epsilon
    \Longrightarrow
    \|A_\rho \widehat u_{\lambda_{n,m}} - (h^\Pi,q^\Pi)\|_\rho<\epsilon
    \right].
\]
\end{prop}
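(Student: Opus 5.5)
We split the error with the decomposition \eqref{eq:error-decomposition}, but measured against the projection:
\[
\|A_\rho \widehat u_{\lambda} - (h^\Pi,q^\Pi)\|_\rho \le \|A_\rho(\widehat u_\lambda - u_\lambda)\|_\rho + \|A_\rho u_\lambda - (h^\Pi,q^\Pi)\|_\rho .
\]
The second (approximation) term depends only on $\lambda$ and is deterministic. By the first part of \Cref{approx:error:apdx}, it tends to $0$ as $\lambda_{n,m}\to 0$. Hence the whole task is to show that the estimation term tends to $0$ almost surely along $\lambda=\lambda_{n,m}$.

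For the estimation term we use \Cref{estimation-error-main-lemma}(i) with $\tilde r=0$. No source condition is assumed, so by convention $(\widetilde f,\widetilde g)=(h^\Pi,q^\Pi)$, whose norm is at most $\|(h,q)\|_\rho$. Then
\[
\|A_\rho(\widehat u_\lambda-u_\lambda)\|_\rho \le \|\Sigma_\lambda^{1/2}\widehat\Sigma_\lambda^{-1/2}\|_{\op}\Big(\lambda^{-1/2}(U_n+W_m)+\lambda^{-1}(V_n+Z_m)\Big).
\]
We must control $\|\Sigma_\lambda^{1/2}\widehat\Sigma_\lambda^{-1/2}\|_{\op}$. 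Lemma \ref{Tropp-ineq-B_X-B_Z} would give the bound $\sqrt2$, but its $\log(n/\delta)$ condition is awkward for summability. Instead we use the crude bound $\|\Sigma_\lambda^{1/2}\widehat\Sigma_\lambda^{-1/2}\|_{\op}\le \sqrt{(\|\Sigma\|_{\op}+\lambda)/\lambda}\lesssim \lambda^{-1/2}$ for $\lambda\le\lambda_0$. This explains the exponent $3$ in the hypothesis: the worst term becomes $\lambda^{-3/2}(V_n+Z_m)$.

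Next we obtain the almost-sure statement via Borel--Cantelli. In (ii)--(v) of \Cref{estimation-error-main-lemma} we take $\tau=\tau_N = 2\log N$ (or $3\log N$) with $N=\min(n,m)$. The failure probability for a given pair $(n,m)$ is then $O(N^{-2})$ or $O(N^{-3})$. On the complementary event, since $n,m\ge N$,
\[
\|A_\rho(\widehat u_\lambda-u_\lambda)\|_\rho \lesssim \lambda^{-3/2}\sqrt{\tfrac{\log N}{N}} + \lambda^{-1}\sqrt{\tfrac{\log N}{N}} \lesssim \sqrt{\tfrac{\log N}{\lambda_{n,m}^3 N}}\to 0 .
\]

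The main obstacle is that the index is a pair $(n,m)$ rather than a single integer, so the failure probabilities must be summed over $\NN^2$. The number of pairs with $\min(n,m)=N$ is infinite, so a plain union bound fails. The fix is to make the failure probability decay in both indices. We apply the bounds for $U_n,V_n$ with $\tau=3\log n$, and those for $W_m,Z_m$ with $\tau=3\log m$. These events depend only on $n$ (respectively only on $m$), under the natural coupling where samples are the first $n$ (resp.\ $m$) of infinite i.i.d.\ sequences. Then $\sum_n 4n^{-3}<\infty$ and $\sum_m 4m^{-3}<\infty$. By Borel--Cantelli, almost surely there is an $N_0$ such that for all $n,m\ge N_0$ the bounds hold with $\sqrt{\log n/n}\le$ const$\cdot\sqrt{\log N/N}$, and likewise for $m$.

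We use that $\log x/x$ is decreasing for $x\ge 3$. Combining the estimation and approximation bounds then gives the uniform-in-$(n,m\ge N_\epsilon)$ convergence as stated. The final reformulation is just the definition of the double limit.
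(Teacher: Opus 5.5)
Your proposal is correct and follows essentially the same route as the paper: the triangle inequality with \Cref{approx:error:apdx} for the deterministic approximation term, the case $\tilde r=0$ of \Cref{estimation-error-main-lemma} combined with the crude bound $\|\Sigma_\lambda^{1/2}\widehat\Sigma_\lambda^{-1/2}\|_{\op}\le\|\Sigma_\lambda^{1/2}\|_{\op}/\sqrt{\lambda}$ (which is where the $\lambda^3$ comes from), and Borel--Cantelli applied separately to the single-index sequences $(U_n,V_n)$ and $(W_m,Z_m)$ with $\tau\propto\log k$, followed by monotonicity of $\log k/k$ to pass to $N=\min(n,m)$. Your remark that a union bound indexed by $N$ fails over $\NN^2$, together with coupling the samples as prefixes of infinite i.i.d.\ sequences, spells out what the paper's proof uses implicitly.
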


\begin{proof}
Using the triangle inequality and the decomposition of
\Cref{estimation-error-main-lemma} with $\lambda=\lambda_{n,m}$, we have
\begin{align}
    \| A_\rho \widehat u_{\lambda_{n,m}} - (h^\Pi, q^\Pi) \|_\rho
    &\leq
    \| A_\rho \widehat u_{\lambda_{n,m}} - A_\rho u_{\lambda_{n,m}} \|_\rho
    +
    \| A_\rho u_{\lambda_{n,m}} - (h^\Pi, q^\Pi) \|_\rho .
    \nonumber
\end{align}
\Cref{approx:error:apdx} shows that since
$\lambda_{n,m}\to 0$, the approximation error goes to $0$:
\[
    \| A_\rho u_{\lambda_{n,m}} - (h^\Pi, q^\Pi) \|_\rho
    \underset{n,m \rightarrow + \infty}{\longrightarrow} 0.
\]
We thus only need to prove that, almost surely,
\[
    \| A_\rho \widehat u_{\lambda_{n,m}} - A_\rho u_{\lambda_{n,m}} \|_\rho
    \underset{n,m \rightarrow + \infty}{\longrightarrow} 0 .
\]

Since we do not assume any source condition here, we are in the case
$\tilde r=0$ of \Cref{estimation-error-main-lemma}, which gives, for any
$\lambda>0$,
\begin{align*}
\| A_\rho \widehat u_\lambda - A_\rho u_\lambda \|_\rho
&\leq
\| \Sigma_\lambda^{1/2} \widehat \Sigma_\lambda^{-1/2} \|_{\op}
\left(
    \frac{1}{\lambda^{1/2}}U_n
    + \frac{1}{\lambda}V_n
    + \frac{1}{\lambda^{1/2}}W_m
    + \frac{1}{\lambda}Z_m
\right) \\
&\leq
\| \Sigma_\lambda^{1/2} \|_{\op}
\left(
    \frac{1}{\lambda}U_n
    + \frac{1}{\lambda^{3/2}}V_n
    + \frac{1}{\lambda}W_m
    + \frac{1}{\lambda^{3/2}}Z_m
\right),
\end{align*}
where we used
\[
    \| \Sigma_\lambda^{1/2} \widehat \Sigma_\lambda^{-1/2} \|_{\op}
    \leq
    \frac{\| \Sigma_\lambda^{1/2} \|_{\op}}{\sqrt{\lambda}} .
\]

Let $\tau_k=2\log k$. By point \emph{(ii)} of
\Cref{estimation-error-main-lemma}, with probability at least
$1-2e^{-\tau_k}$,
\[
    U_k \leq C_U \frac{\sqrt{\tau_k}}{\sqrt{k}} .
\]
Since $\sum_{k\geq 2} e^{-\tau_k}<\infty$, the Borel--Cantelli lemma gives
\[
    U_k = O\left(\sqrt{\frac{\log k}{k}}\right)
    \qquad \text{almost surely.}
\]
The same argument applies to $V_k,W_k,Z_k$. Hence, almost surely, for all
large enough $n,m$,
\[
    U_n,V_n = O\left(\sqrt{\frac{\log n}{n}}\right),
    \qquad
    W_m,Z_m = O\left(\sqrt{\frac{\log m}{m}}\right).
\]
Let $N=\min(n,m)$. Since $k\mapsto \log k/k$ is decreasing for large $k$,
we obtain, almost surely,
\[
    \frac{U_n}{\lambda_{n,m}}
    +
    \frac{W_m}{\lambda_{n,m}}
    =
    O\left(
        \sqrt{\frac{\log N}{\lambda_{n,m}^2 N}}
    \right)
    \to 0,
\]
because eventually $\lambda_{n,m}\leq 1$ and
\[
    \frac{\log N}{\lambda_{n,m}^2 N}
    \leq
    \frac{\log N}{\lambda_{n,m}^3 N}
    \to 0 .
\]
Similarly,
\[
    \frac{V_n}{\lambda_{n,m}^{3/2}}
    +
    \frac{Z_m}{\lambda_{n,m}^{3/2}}
    =
    O\left(
        \sqrt{\frac{\log N}{\lambda_{n,m}^3 N}}
    \right)
    \to 0 .
\]
Finally, since $\lambda_{n,m}\to 0$, we have
\[
    \| \Sigma_{\lambda_{n,m}}^{1/2} \|_{\op}
    \to
    \| \Sigma^{1/2} \|_{\op},
\]
and therefore, almost surely,
\[
    \| A_\rho \widehat u_{\lambda_{n,m}} - A_\rho u_{\lambda_{n,m}} \|_\rho
    \underset{n,m \rightarrow + \infty}{\longrightarrow} 0.
\]
Both the approximation error and the estimation error tend to $0$, hence
\[
    \|A_\rho \widehat u_{\lambda_{n,m}} - (h^\Pi, q^\Pi)\|_\rho
    \underset{n,m \rightarrow + \infty}{\longrightarrow} 0 .
\]
\end{proof}

\begin{coro}\label{coro:asymptotic-convergence-apdx}
    If \Cref{ass:diff-eval,ass:bounded-data,ass:bounded-features,ass:data-in-the-closure} hold, for any choice of $\lambda_{n,m} > 0$, for all $n,m \in \NN$, such that
    \[
    \lambda_{n,m}\to 0,
    \qquad 
    \frac{\log N}{\lambda_{n,m}^3 N}\to 0
    \qquad \text{as } n,m\to\infty ,
\]
where $N = \min(n,m)$, then, almost surely, the following convergence holds:
\[ \|A_\rho \widehat u_{\lambda_{n,m}} - (h, q)\|_\rho  \quad \underset{n,m \rightarrow + \infty}{\longrightarrow} \quad 0.\]
That is, almost surely, for any $\epsilon > 0$, there exists $N_\epsilon \in \NN$ such that
\[\forall (n,m) \in \NN^2 , \qquad \left[ \ n,m\geq N_\epsilon \ \Longrightarrow \ \|A_\rho \widehat u_{\lambda_{n,m}} - (h, q)\|_\rho < \epsilon \ \right].\]
\end{coro}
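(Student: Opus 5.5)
This corollary follows directly from \Cref{asymptotic:convergence}, once we identify the projection $(h^\Pi,q^\Pi)$. By definition, $(h^\Pi,q^\Pi)$ is the orthogonal projection of $(h,q)$ onto the closed subspace $\overline{\ran A_\rho}$ of $L^2(\rho_X)\times L^2(\rho_Z)$. \Cref{ass:data-in-the-closure} states exactly that $(h,q)$ already lies in this subspace. The projection of an element of a closed subspace onto that subspace is the element itself, so $(h^\Pi,q^\Pi)=(h,q)$.

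We then check that the hypotheses of \Cref{asymptotic:convergence} are met. That proposition is proved under the standing assumptions of \Cref{sec:theoretical-analysis-apdx}, namely \Cref{ass:diff-eval,ass:bounded-features,ass:bounded-data}, all of which are assumed in the corollary. Its only further requirement is the regularization schedule: $\lambda_{n,m}\to0$ and $\log N/(\lambda_{n,m}^3N)\to0$ with $N=\min(n,m)$. This is exactly the condition imposed here.

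No source condition is needed, since the proposition uses the case $\tilde r=0$, with the convention $(\widetilde f,\widetilde g)=(h^\Pi,q^\Pi)$ in the constants $C_V,C_Z$. The proposition therefore gives, almost surely,
\[
\|A_\rho\widehat u_{\lambda_{n,m}}-(h^\Pi,q^\Pi)\|_\rho\to0 .
\]
Substituting $(h^\Pi,q^\Pi)=(h,q)$ yields the claimed convergence. The $\epsilon$--$N_\epsilon$ reformulation is simply the definition of convergence along the double index $n,m\to\infty$ on the same almost-sure event.

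There is no real obstacle. The only point that needs care is confirming that \Cref{asymptotic:convergence} does not implicitly require \Cref{ass:src-var-apdx}. It does not: its approximation-error step uses the first, non-quantitative part of \Cref{approx:error:apdx}, which holds for any $(h^\Pi,q^\Pi)\in\overline{\ran A_\rho}$.
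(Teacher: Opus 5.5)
Your proposal is correct and follows the paper's own proof. \Cref{ass:data-in-the-closure} forces $(h^\Pi,q^\Pi)=(h,q)$, and the conclusion is then exactly \Cref{asymptotic:convergence}, which needs only the standing \Cref{ass:diff-eval,ass:bounded-features,ass:bounded-data} and the stated schedule for $\lambda_{n,m}$.
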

\begin{proof}
    \Cref{ass:diff-eval,ass:bounded-data,ass:bounded-features} are taken in the whole \Cref{sec:theoretical-analysis-apdx} and are usually omitted to ease the reading --- we make them explicit again for the present result. In particular such assumptions are needed in order to apply \Cref{asymptotic:convergence}. Under the additional \Cref{ass:data-in-the-closure} we have $(h, q) \in \overline{\ran A_\rho}$, thus $(h, q)$ is equal to its orthogonal projection $(h^\Pi, q^\Pi)$ on the subspace  $\overline{\ran A_\rho}$. The result then follows from \Cref{asymptotic:convergence}.
\end{proof}

\subsection{Proof of Theorem~\ref{main-theorem}}

Let us consider the setting of \Cref{sec:prob}. Under \Cref{ass:embeddings}, we can define the bounded operator
\begin{equation}
    \function{\cA_\rho}{\cF}{L^2(\rho_X) \times L^2(\rho_Z)}{u}{(u, \cD u).}
\end{equation}
We can then define $(h,q) := \cA_\rho u^*$. Our goal is to show that the hypotheses of \Cref{coro:asymptotic-convergence-apdx} hold.

\begin{prop}\label{Prop:solution+universality=data-in-the-closure-apdx}
    If \Cref{ass:embeddings,ass:universality} hold, then \Cref{ass:data-in-the-closure} holds, i.e.
    \[(h, q) \in \overline{\ran A_\rho} \subset L^2(\rho_X) \times L^2(\rho_Z),\]
    or in other words, for any $\epsilon > 0$, there exists $u \in \cH$ such that $\| A_\rho u - (h, q) \|_\rho \leq \epsilon$.
\end{prop}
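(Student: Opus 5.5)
The plan is to prove the $\epsilon$-approximation statement directly, by combining the density of $\cH$ in $\cF$ (\Cref{ass:universality}) with the continuity of $\cA_\rho$ (\Cref{ass:embeddings}). Recall that $(h,q) = \cA_\rho u^*$ with $u^* \in \cF$, and $A_\rho = \cA_\rho \circ i$, where $i : \cH \to \cF$ is the embedding. First I would record that $\cA_\rho : \cF \to L^2(\rho_X) \times L^2(\rho_Z)$ is bounded. Indeed, its two components are the composition of the embedding $\cF \hookrightarrow L^2(\rho_X)$ and the composition of the bounded operator $\cD : \cF \to \cG$ with the embedding $\cG \hookrightarrow L^2(\rho_Z)$. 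Both are bounded by \Cref{ass:embeddings} and the standing boundedness of $\cD$. Hence there is a constant $C_\cA > 0$ with
\[
\|\cA_\rho v\|_\rho^2 = \|v\|_{L^2(\rho_X)}^2 + \|\cD v\|_{L^2(\rho_Z)}^2 \leq C_\cA^2 \|v\|_\cF^2 \quad \text{for all } v \in \cF.
\]

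Next, fix $\epsilon > 0$. By \Cref{ass:universality}, $i(\cH)$ is dense in $\cF$, so there exists $u \in \cH$ with $\|i(u) - u^*\|_\cF \leq \epsilon / C_\cA$. Then, by linearity of $\cA_\rho$,
\[
\|A_\rho u - (h,q)\|_\rho = \|\cA_\rho(i(u) - u^*)\|_\rho \leq C_\cA \|i(u) - u^*\|_\cF \leq \epsilon.
\]
Since $\epsilon$ is arbitrary, this shows $(h,q) \in \overline{\ran A_\rho}$, which is \Cref{ass:data-in-the-closure}.

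The only point requiring care, and the main (mild) obstacle, is consistency between the two definitions of the operators on $\cH$. We need that $A_\rho$ as defined in \eqref{eq:def-operator-A_rho-apdx}, via pointwise evaluations $S_{\rho_X}$ and $D_{\rho_Z}$ using the reproducing properties, agrees with $\cA_\rho \circ i$, where $\cA_\rho$ is defined on equivalence classes in $\cF$. In other words, for $u \in \cH$ the class of the pointwise function $u$ in $L^2(\rho_X)$ must coincide with the image of $i(u)$ under $\cF \hookrightarrow L^2(\rho_X)$, and similarly for $\cD u$ in $L^2(\rho_Z)$. I would argue that this is implicit in the phrase ``$\cH$ is embedded in $\cF$'' together with \Cref{ass:embeddings}: the embeddings are the natural identifications of functions, and $\cD$ on $\cH$ is the restriction of $\cD$ on $\cF$. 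This is exactly how the paper defines $A_\rho = \cA_\rho \circ i$ in \eqref{def:A_rho}. I would state this compatibility explicitly as part of the setting, and then the argument above is complete.
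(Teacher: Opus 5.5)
Your proof is correct and follows essentially the same route as the paper, which observes that $u^* \in \overline{\ran i}$ and uses the continuity of $\cA_\rho$ to conclude $\cA_\rho u^* \in \overline{\ran(\cA_\rho \circ i)} = \overline{\ran A_\rho}$; your explicit $\epsilon$-argument is the same reasoning written out quantitatively. Your remark on the compatibility $A_\rho = \cA_\rho \circ i$ (pointwise evaluation on $\cH$ versus the abstract embeddings of $\cF$ and $\cG$) is correct, and it is a point the paper takes for granted when it recalls the definition of $A_\rho$.
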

\begin{proof}
We have $u^* \in \cF$. \Cref{ass:universality} implies that the inclusion
\[\function{i}{\cH}{\cF}{f}{f}\]
is well-defined and bounded, and has a dense range. In particular, $u^* \in \overline{\ran i}$. We thus have
    \[\mathcal A_\rho u^* \in \mathcal A_\rho \big( \overline{\ran i}\big)\]
    which implies since $\mathcal A_\rho$ is continuous
    \[\mathcal A_\rho u^* \in \overline{ \ran (\cA_\rho \circ i)}.\]
    Now recalling the definition \eqref{eq:def-operator-A_rho-apdx} of $A_\rho$, we see that $A_\rho = \cA_\rho \circ i$, thus we have
    \[(h, q) \in \overline{\ran A_\rho}.\]
\end{proof}

\begin{coro}\label{main-thm-apdx}
    Under \Cref{ass:diff-eval,ass:embeddings,ass:universality,ass:bounded-data,ass:bounded-features}, \Cref{coro:asymptotic-convergence-apdx} holds, i.e. for any choice of $\lambda_{n,m} > 0$, for all $n,m \in \NN$, such that
    \[
    \lambda_{n,m}\to 0,
    \qquad 
    \frac{\log N}{\lambda_{n,m}^3 N}\to 0
    \qquad \text{as } n,m\to\infty ,
\]
where $N = \min(n,m)$, then, almost surely, we have the asymptotic convergence
    \[ \|A_\rho \widehat u_{\lambda_{n,m}} - (h, q)\|_\rho  \quad \underset{n,m \rightarrow + \infty}{\longrightarrow} \quad 0.\]
   By definition of $A_\rho, (h,q)$ and the norm $\| \cdot \|_\rho$, this is equivalent to 
    \begin{equation}\label{eq:main-thm-cases-apdx}
\begin{cases}
\| \widehat u_{\lambda_{n,m}} - u^* \|_{L^2(\rho_X)} \ \underset{n,m \rightarrow \infty}{\longrightarrow} \ 0  \\
\| \cD \widehat u_{\lambda_{n,m}}  - \cD u^*\|_{L^2(\rho_Z)} \ \underset{n,m \rightarrow \infty}{\longrightarrow}  \ 0,
\end{cases}
\end{equation}
i.e. \Cref{main-theorem} is satisfied.

\end{coro}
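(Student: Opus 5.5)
The plan is to reduce \Cref{main-thm-apdx} directly to \Cref{coro:asymptotic-convergence-apdx}. That corollary already gives almost sure convergence $\|A_\rho \widehat u_{\lambda_{n,m}} - (h,q)\|_\rho \to 0$ under \Cref{ass:diff-eval,ass:bounded-data,ass:bounded-features,ass:data-in-the-closure}, for any sequence satisfying \eqref{eq:condition-lambda}. So the only genuine work is to check its hypotheses in the setting of \Cref{sec:prob}, where $(h,q) := \cA_\rho u^*$.

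\Cref{ass:diff-eval,ass:bounded-features} are assumed outright. The boundedness requirements of \Cref{sec:theory-assumptions-apdx} follow from \Cref{ass:bounded-data}:
\begin{itemize}
\item $h = u^*$ lies in $L^\infty(\rho_X)$, and by \Cref{ass:embeddings} it also lies in $L^2(\rho_X)$;
\item $q = \cD u^*$ lies in $L^\infty(\rho_Z) \cap L^2(\rho_Z)$ for the same reasons;
\item the noises are bounded, so $|y_i| \le M_y$ and $|w_j| \le M_w$ almost surely.
\end{itemize}
Moreover, the data-generating model \eqref{eq:data-generation} coincides with the decoupled model $Y = h(X) + \epsilon$, $W = q(Z) + \eta$. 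The noises are centered and independent of the inputs, which gives $\EE[Y \mid X] = h(X)$; that is exactly what the Hoeffding steps in \Cref{estimation-error-main-lemma} use.

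The key step is \Cref{ass:data-in-the-closure}, which is supplied by \Cref{Prop:solution+universality=data-in-the-closure-apdx}. By \Cref{ass:universality}, the inclusion $i:\cH \to \cF$ is bounded with dense range. By \Cref{ass:embeddings}, $\cA_\rho$ is continuous. Since $u^* \in \overline{\ran i}$, we get $\cA_\rho u^* \in \overline{\ran(\cA_\rho \circ i)} = \overline{\ran A_\rho}$. One point needs care here: $A_\rho$ as defined in \eqref{eq:def-operator-A_rho-apdx} uses pointwise values of $u \in \cH$, while $\cA_\rho \circ i$ uses the $\cF$-embedding. I will argue these agree as elements of $L^2(\rho_X) \times L^2(\rho_Z)$, i.e.\ that the bounded embeddings are compatible with the continuous representatives of RKHS functions. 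I expect this identification to be the main subtle point and will treat it as part of the meaning of \Cref{ass:embeddings}.

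With all hypotheses verified, \Cref{coro:asymptotic-convergence-apdx} gives almost surely $\|A_\rho \widehat u_{\lambda_{n,m}} - (h,q)\|_\rho \to 0$. Expanding the product norm \eqref{rho-norm-def}, this is $\|\widehat u - u^*\|_{L^2(\rho_X)}^2 + \|\cD \widehat u - \cD u^*\|_{L^2(\rho_Z)}^2 \to 0$. Both summands are nonnegative, so each tends to $0$ separately, which is \eqref{eq:main-thm-cases-apdx} and hence \Cref{main-theorem}.
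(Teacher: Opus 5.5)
Your proposal is correct and follows the paper's proof: you reduce to \Cref{coro:asymptotic-convergence-apdx} and obtain \Cref{ass:data-in-the-closure} from \Cref{Prop:solution+universality=data-in-the-closure-apdx}, using the dense range of $i$ and the continuity of $\cA_\rho$. The paper simply asserts the identification $A_\rho = \cA_\rho \circ i$ that you flag, so reading it into \Cref{ass:embeddings} matches the paper's level of rigor.
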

\begin{proof}
    \Cref{coro:asymptotic-convergence-apdx} holds under \Cref{ass:diff-eval,ass:bounded-data,ass:bounded-features,ass:data-in-the-closure}, so the only missing part is \Cref{ass:data-in-the-closure}. \Cref{Prop:solution+universality=data-in-the-closure-apdx} shows that \Cref{ass:embeddings,ass:universality} imply \Cref{ass:data-in-the-closure}.
   
\end{proof}

\section{Sobolev setting}\label{sec:Sobolev-setting-apdx}

In this appendix, we focus on the setting where $\cD$ is a linear differential operator and $\cF$ is a Sobolev space. Such a mathematical setting is typical for PDEs, which are ubiquitous in scientific applications. We show that our assumptions --- formulated with generality in mind --- are satisfied in such a setting, which demonstrates their practicality. 

\subsection{Generalities}\label{sec:sobolev-setting-generalities}

We now consider a bounded Lipschitz domain $\Omega \subset \RR^d$ and set $\cX = \overline{\Omega}$. Fix $s \in \mathbb{N}$, $s \geq 1$. Let $\cF = H^s(\Omega)$ and $\cG = L^2(\Omega)$, and consider the differential operator $\cD$ of \eqref{eq:diff-op-order-s}, where $c_\alpha \in C(\overline{\Omega})$.
 The first thing to show is that $\cD$ defines a bounded operator from $\cF$ to $\cG$.

\begin{lem}\label{lem:diff-op-is-bounded-from-H^s-to-L^2}
The operator $\cD$ is well-defined and bounded from $H^s(\Omega)$ to $L^2(\Omega)$.
\end{lem}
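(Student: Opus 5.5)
The plan is to bound each term of $\cD u = \sum_{|\alpha|\le s} c_\alpha \partial^\alpha u$ separately in $L^2(\Omega)$ and then sum the bounds. Here $\partial^\alpha u$ denotes the weak derivative of $u \in H^s(\Omega)$. By the definition of $H^s(\Omega)$, each $\partial^\alpha u$ with $|\alpha|\le s$ belongs to $L^2(\Omega)$. The map $u \mapsto \partial^\alpha u$ is linear and satisfies $\|\partial^\alpha u\|_{L^2(\Omega)} \le \|u\|_{H^s(\Omega)}$, since the Sobolev norm is $\|u\|_{H^s}^2 = \sum_{|\beta|\le s}\|\partial^\beta u\|_{L^2}^2$.

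Next I handle the coefficients. Since $\overline{\Omega}$ is compact and $c_\alpha \in C(\overline{\Omega})$, each coefficient is bounded, with $\|c_\alpha\|_\infty := \sup_{\overline\Omega}|c_\alpha| < \infty$. The product $c_\alpha \partial^\alpha u$ is then measurable, and it is defined almost everywhere independently of the representative of $\partial^\alpha u$. It also satisfies $\|c_\alpha \partial^\alpha u\|_{L^2(\Omega)} \le \|c_\alpha\|_\infty \|\partial^\alpha u\|_{L^2(\Omega)}$. This shows that $\cD u \in L^2(\Omega)$ is well defined as an equivalence class, and that $\cD$ is linear.

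Summing with the triangle inequality gives
\[
\|\cD u\|_{L^2(\Omega)} \le \sum_{|\alpha|\le s}\|c_\alpha\|_\infty \|\partial^\alpha u\|_{L^2(\Omega)} \le \Big(\sum_{|\alpha|\le s}\|c_\alpha\|_\infty\Big)\|u\|_{H^s(\Omega)}.
\]
The sum is finite because there are finitely many multi-indices. If a sharper constant is wanted, Cauchy--Schwarz gives $\big(\sum_\alpha \|c_\alpha\|_\infty^2\big)^{1/2}$ instead.

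There is no real obstacle in this argument. The only point needing care is that $\cD$ has to be understood through weak derivatives, so it acts on equivalence classes. Boundedness of the coefficients, which comes from continuity on the compact set $\overline\Omega$, is exactly what makes each multiplication operator bounded on $L^2$. The Lipschitz regularity of $\Omega$ is not needed for this lemma.
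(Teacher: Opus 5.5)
Your proposal is correct and follows essentially the same route as the paper's proof: weak derivatives lie in $L^2(\Omega)$, the coefficients are bounded because they are continuous on the compact set $\overline{\Omega}$, and the triangle inequality followed by Cauchy--Schwarz gives the constant $\big(\sum_{|\alpha|\le s}\|c_\alpha\|_{L^\infty(\Omega)}^2\big)^{1/2}$. Your extra remarks on equivalence classes and on the Lipschitz regularity of $\Omega$ being unnecessary are accurate.
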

\begin{proof}
Let \(u \in H^s(\Omega)\). By definition of \(H^s(\Omega)\), for every multi-index
\(\alpha\) with \(|\alpha|\leq s\), the weak derivative \(\partial^\alpha u\)
belongs to \(L^2(\Omega)\).

Since \(c_\alpha \in C(\overline{\Omega})\) and \(\overline{\Omega}\) is compact,
we have \(c_\alpha \in L^\infty(\Omega)\). Hence
\[
    c_\alpha \partial^\alpha u \in L^2(\Omega),
    \qquad |\alpha|\leq s,
\]
and
\[
    \|c_\alpha \partial^\alpha u\|_{L^2(\Omega)}
    \leq \|c_\alpha\|_{L^\infty(\Omega)}
        \|\partial^\alpha u\|_{L^2(\Omega)}.
\]
Since the sum defining \(\mathcal D u\) is finite, it follows that
\(\mathcal D u \in L^2(\Omega)\). Thus \(\mathcal D\) is well-defined from
\(H^s(\Omega)\) to \(L^2(\Omega)\).

Moreover, we have
\[
    \|\mathcal D u\|_{L^2(\Omega)}
    \leq
    \sum_{|\alpha|\leq s}
    \|c_\alpha \partial^\alpha u\|_{L^2(\Omega)}
    \leq
    \sum_{|\alpha|\leq s}
    \|c_\alpha\|_{L^\infty(\Omega)}
    \|\partial^\alpha u\|_{L^2(\Omega)}.
\]
Using the Cauchy-Schwarz inequality, we obtain
\[
    \|\mathcal D u\|_{L^2(\Omega)}
    \leq
    \left(
        \sum_{|\alpha|\leq s}
        \|c_\alpha\|_{L^\infty(\Omega)}^2
    \right)^{1/2}
    \left(
        \sum_{|\alpha|\leq s}
        \|\partial^\alpha u\|_{L^2(\Omega)}^2
    \right)^{1/2}.
\]
Therefore
\[
    \|\mathcal D u\|_{L^2(\Omega)}
    \leq C_{\mathcal D} \|u\|_{H^s(\Omega)},
\]
where
\[
    C_{\mathcal D}
    :=
    \left(
        \sum_{|\alpha|\leq s}
        \|c_\alpha\|_{L^\infty(\Omega)}^2
    \right)^{1/2}.
\]
Hence \(\mathcal D : H^s(\Omega) \to L^2(\Omega)\) is bounded.
\end{proof}

\begin{df}[$C^s_0$ universality]\label{def-C_0^s-universality-apdx}
    Let $U$ be an open subset of $\RR^d$, let $K \colon U \times U \to \RR$
    be a kernel, and let $\cH$ denote its associated RKHS. We say that $K$
    is \emph{$C_0^s$-universal on $U$} if for any $f \in C_0^s(U)$ and any
    $\epsilon > 0$, there exists $f_\cH \in \cH$ such that for every
    $\alpha \in \NN^d$ with $|\alpha| \leq s$,
    \[
        \sup_{x \in U} |\partial^\alpha f(x) - \partial^\alpha f_\cH(x)|
        \leq \epsilon,
    \]
    i.e.\ $\cH \cap C_0^s(U)$ is dense in $C_0^s(U)$ for the norm
    $\|f\|_{C_0^s(U)} := \max_{|\alpha|\leq s}\sup_{x\in U}|\partial^\alpha f(x)|$.
\end{df}

\begin{lem}\label{lem:C_0^s-universal-implies-sobolev-density}
    Consider $\Omega$ a bounded Lipschitz domain and $\cF := H^s(\Omega)$. If $K \in C^{2s}(\cX \times \cX)$ is the restriction to $\cX$ of a $C_0^s$-universal kernel over $\RR^d$, then \Cref{ass:universality} holds, i.e. $\cH$ is continuously and densely embedded in $H^s(\Omega)$.
\end{lem}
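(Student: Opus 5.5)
We need two things: the embedding $\cH \hookrightarrow H^s(\Omega)$ is continuous, and its range is dense.

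\emph{Continuity.} Since $K \in C^{2s}(\cX \times \cX)$ and $\cX = \overline{\Omega}$ is compact and equal to the closure of its interior, \Cref{prop:derivatives-reproducing-prop} gives $\cH \hookrightarrow C^s(\cX)$. For every $|\alpha| \le s$ and $x \in \cX$ it also gives $\partial^\alpha u(x) = \langle u, K^\alpha_x\rangle_\cH$. Moreover
\[
\|K^\alpha_x\|_\cH^2 = \partial_1^\alpha\partial_2^\alpha K(x,x) \le C_\alpha,
\]
uniformly in $x$, by continuity of the derivatives on the compact set $\cX\times\cX$. Hence $\sup_{x}|\partial^\alpha u(x)| \le \sqrt{C_\alpha}\,\|u\|_\cH$. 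Classical derivatives on $\Omega$ coincide with weak derivatives, and $|\Omega|<\infty$. Therefore
\[
\|u\|_{H^s(\Omega)}^2 = \sum_{|\alpha|\le s}\|\partial^\alpha u\|_{L^2(\Omega)}^2 \le |\Omega| \sum_{|\alpha| \le s} C_\alpha\, \|u\|_\cH^2,
\]
so the inclusion is well defined, bounded and injective.

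\emph{Density.} Fix $u \in H^s(\Omega)$ and $\epsilon>0$.

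First, since $\Omega$ is bounded Lipschitz, the Stein extension operator $E: H^s(\Omega)\to H^s(\RR^d)$ is bounded with $Eu|_\Omega = u$.

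Second, $C_c^\infty(\RR^d)$ is dense in $H^s(\RR^d)$. So there is $\varphi \in C_c^\infty(\RR^d)$ with $\|Eu - \varphi\|_{H^s(\RR^d)} \le \epsilon$, and hence $\|u - \varphi|_\Omega\|_{H^s(\Omega)}\le\epsilon$.

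Third, $\varphi \in C_0^s(\RR^d)$. By $C_0^s$-universality of the kernel $\widetilde K$ on $\RR^d$ (whose restriction is $K$), there is $f$ in the RKHS $\widetilde\cH$ of $\widetilde K$ with
\[
\sup_{\RR^d}|\partial^\alpha \varphi - \partial^\alpha f| \le \eta \quad \text{for all } |\alpha|\le s.
\]
Then $\|\varphi|_\Omega - f|_\Omega\|_{H^s(\Omega)} \le \eta\,\sqrt{|\Omega|\,\#\{\alpha : |\alpha|\le s\}}$, which is at most $\epsilon$ for $\eta$ small. The triangle inequality then gives $\|u - f|_\Omega\|_{H^s(\Omega)} \le 2\epsilon$.

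The step needing the most care is identifying $f|_\Omega$ (or $f|_\cX$) as an element of $\cH$. I will use the standard restriction theorem for RKHSs (Aronszajn): the RKHS of the restricted kernel $\widetilde K|_{\cX\times\cX}$ is exactly $\{g|_\cX : g \in \widetilde\cH\}$, with the quotient norm. Hence $f|_\cX \in \cH$. Its derivatives on $\Omega$ are those of $f$, so the $H^s(\Omega)$ estimate above applies to it as an element of $\cH$ under the inclusion. Since $\epsilon$ is arbitrary, $\cH$ is dense in $H^s(\Omega)$, and together with continuity this proves \Cref{ass:universality}.
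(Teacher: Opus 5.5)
Your proof is correct and follows essentially the same route as the paper: continuity via the derivative reproducing property with $\|K^\alpha_x\|_\cH^2=\partial_1^\alpha\partial_2^\alpha K(x,x)$, then density via Stein extension, approximation in $C_c^\infty(\RR^d)$, and $C_0^s$-universality on $\RR^d$. The only difference is cosmetic. You invoke Aronszajn's restriction theorem to get $f|_\cX\in\cH$, whereas the paper checks this directly by orthogonally projecting the approximant onto $\overline{\Span\{K_x : x\in\cX\}}\subset\cH_0$, which is precisely the standard proof of that theorem.
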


\begin{proof}
The embedding is well-defined and bounded: since
    $K \in C^{2s}(\cX \times \cX)$, \Cref{prop:derivatives-reproducing-prop}
    gives $\cH \hookrightarrow C^s(\cX)$ with
    $\sup_{x \in \cX}|\partial^\alpha u(x)| \leq
    \sup_{x\in\cX}\big(\partial_1^\alpha \partial_2^\alpha K(x,x)\big)^{1/2}
    \|u\|_\cH$ for $|\alpha| \leq s$, and since $\Omega$ is bounded this
    yields $\|u\|_{H^s(\Omega)} \leq c \, \|u\|_\cH$ for a constant $c$
    depending on $K$, $s$, $d$ and $|\Omega|$. It remains to prove density.
    Let us denote by $K$ both the kernel on $\cX$ and its extension to $\RR^d$. Let $\cH_0$ denote the RKHS associated to $K$ over $\RR^d$. We can see $\cH$ as a closed subspace of $\cH_0$, defined by:
    \[\cH = \overline{\Span\{K_x , x \in \cX \}} \subset \cH_0.\]
    
    Consider a function $f \in H^s(\Omega)$, and let us try to approximate it by an element of $\cH$. Since $\Omega$ is a Lipschitz domain, there exists $\tilde f \in H^s(\RR^d)$ such that $\tilde f_{|\Omega} = f$ \cite[Theorem 5]{stein1970singular}. Let $\epsilon > 0$, by density of $C^\infty_c(\RR^d)$ in $H^s(\RR^d)$, there exists $\phi \in C^\infty_c(\RR^d)$ such that $\| \phi -  \tilde f \|_{H^s} < \epsilon $. Since $K$ is $C^s_0$-universal over $\RR^d$, there now exists $f_{\cH_0} \in \cH_0$ such that $\forall |\alpha|\leq s$, $\forall x \in \RR^d$, $| \partial^\alpha f_{\cH_0}(x) - \partial^\alpha\phi(x) | < \epsilon$. We thus have
    \begin{align*}
        \sum_{|\alpha|\leq s} \int_\Omega | \partial^\alpha f_{\cH_0}(x) - \partial^\alpha\phi(x) |^2 dx \leq \binom{d+s}{d} |\Omega|\epsilon^2.
    \end{align*}
    Let us denote by $f_{\cH}$ the orthogonal projection of $f_{\cH_0}$ onto $\cH$. Since for $x \in \Omega$ we have $K_x \in \cH$, we have $f_{\cH}(x) = \langle f_{\cH} , K_x \rangle_\cH = \langle f_{\cH_0} , K_x \rangle_{\cH_0} = f_{\cH_0}(x)$. If we denote by $ \phi|_{\Omega}$ the restriction of $\phi$ to $\Omega$, we thus have
    \[\|f_{\cH} - \phi|_{\Omega} \|_{H^s} \leq  \sqrt{\binom{d+s}{d} |\Omega|}\epsilon. \]

    We can finally bound
    \begin{align*}\|f_{\cH} - f \|_{H^s} & \leq \|f_{\cH} -  \phi |_{\Omega} \|_{H^s} + \| \phi_{|\Omega} - f \|_{H^s} \\
    & \leq \|f_{\cH} -  \phi |_{\Omega} \|_{H^s} + \| \phi - \tilde f \|_{H^s} \\
    & \leq \left( \sqrt{\binom{d+s}{d} |\Omega|} + 1 \right) \epsilon. \qedhere
    \end{align*}
\end{proof}

\begin{lem}\label{lem:C^2s-on-compact-implies-bounded-features}
    If $K \in C^{2s}(\cX \times \cX)$, and $\cD$ is a linear differential operator of the form \eqref{eq:diff-op-order-s} then \Cref{ass:bounded-features} holds.
\end{lem}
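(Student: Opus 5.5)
The plan is to verify the two inequalities of \Cref{ass:bounded-features} through the reproducing properties of \Cref{prop:derivatives-reproducing-prop} and \Cref{lem:smooth-kernel-implies-bounded-diff-eval-apdx}, combined with the compactness of $\cX$. The setting is the one of \Cref{sec:sobolev-setting-generalities}: $\cX=\overline\Omega$ is compact and equal to the closure of its interior, and the coefficients satisfy $c_\alpha\in C(\overline\Omega)$.

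For the first inequality, fix $u\in\cH$ and $x\in\cX$. The reproducing property and Cauchy--Schwarz give
\[
|u(x)| = |\langle u, K_x\rangle_\cH| \le \|u\|_\cH \sqrt{K(x,x)}.
\]
Since $K$ is continuous on the compact set $\cX\times\cX$, we may set $\kappa := \sup_{x\in\cX}\sqrt{K(x,x)}$ (or $1$ if this supremum vanishes), and this $\kappa$ is finite.

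For the differential part, fix $|\alpha|\le s$. By \Cref{prop:derivatives-reproducing-prop}, the function $K_x^\alpha = \partial_1^\alpha K(x,\cdot)$ lies in $\cH$ and represents $u\mapsto\partial^\alpha u(x)$. Its norm is
\[
\|K_x^\alpha\|_\cH^2 = \langle K_x^\alpha, K_x^\alpha\rangle_\cH = \partial^\alpha K_x^\alpha(x) = \partial_1^\alpha\partial_2^\alpha K(x,x).
\]
Here the middle equality applies the reproducing property for derivatives to the element $K_x^\alpha$ itself. Since $K\in C^{2s}(\cX\times\cX)$, the function $x\mapsto \partial^{\alpha,\alpha}K(x,x)$ extends continuously to the compact set $\cX$, so it is bounded by some $M_\alpha<\infty$. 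This bound is the only step that needs care: the derivatives are a priori defined only on the interior, and we need their continuous extension to the diagonal of $\cX\times\cX$. This is exactly what the definition of $C^{2s}(\cX\times\cX)$ provides, applied to the multi-index $(\alpha,\alpha)$ of order $2|\alpha|\le 2s$.

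Finally, by \Cref{lem:smooth-kernel-implies-bounded-diff-eval-apdx} we have $K_x^\cD=\sum_{|\alpha|\le s}c_\alpha(x)K_x^\alpha$. The triangle inequality then gives
\[
\|K_x^\cD\|_\cH \le \sum_{|\alpha|\le s}\|c_\alpha\|_{L^\infty(\cX)}\sqrt{M_\alpha} =: \kappa_\cD,
\]
which is finite because each $c_\alpha$ is continuous on a compact set and the sum is finite. Consequently $|\cD u(x)| = |\langle u,K_x^\cD\rangle_\cH| \le \kappa_\cD\|u\|_\cH$ for every $x\in\cX$, which is the second inequality.
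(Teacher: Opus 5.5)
Your proof is correct and follows essentially the same route as the paper: you bound $\|K_x\|_\cH$ and $\|K^\cD_x\|_\cH$ by continuous functions of kernel derivatives on the diagonal, which are bounded because $\cX$ is compact. The only difference is cosmetic. The paper takes the exact constant $\kappa_\cD^2 = \max_{z} \cD_1\cD_2 K(z,z)$, whereas you apply the triangle inequality to $K^\cD_x=\sum_{|\alpha|\le s} c_\alpha(x)K_x^\alpha$, which gives a slightly cruder but equally valid $\kappa_\cD$.
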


\begin{proof}    
Since by assumption $\cX$ is compact, we can define
\begin{equation}\label{def-of-kappa}\kappa = \max_{x \in \cX} \|K_x \|_\cH =  \sqrt{\max_{x \in  \cX} K(x,x)}.\end{equation}
Now assume that the differential operator is of the form \eqref{eq:diff-op-order-s}, i.e. it can be written as
\begin{equation}    
 \cD = \sum_{|\alpha| \leq s} c_\alpha \partial^\alpha,\end{equation}
 for some integer $s\geq1$ and where the $c_\alpha \colon \cX \rightarrow \RR$ are continuous coefficient functions. For $x,y \in \Input$, let us write 
\[\cD_1 \cD_2 K(x,y) = \sum_{|\alpha|\leq s} \sum_{|\alpha'| \leq s} c_\alpha(x) c_{\alpha'}(y) \partial_1^\alpha \partial_2^{\alpha'} K(x,y).\] 
Since $K$ is $C^{2s}$, for any $|\alpha|\leq s$, $|\alpha'|\leq s$, we observe that $\partial_1^\alpha \partial_2^{\alpha'} K(x,y)$ is continuous and thus bounded on the compact $\cX \times \cX$. Since $c_\alpha (x)$ and $c_{\alpha'}(y)$ are bounded by assumption, we can also define
\begin{equation}\label{def-of-kappa-D}\kappa_\cD = \max_{z \in \cX} \|K^\cD_z \|_\cH = \sqrt{\max_{z \in \cX} \cD_1 \cD_2 K(z,z)}.
\end{equation}
\end{proof}

\subsection{In-domain sampling}\label{sec:in-domain-sampling-apdx}

We now turn to \Cref{ass:embeddings}. Let us first
clarify the meaning of this assumption. The spaces \(\mathcal F\) and
\(\mathcal G\) are Hilbert spaces of functions on \(\mathcal X\), possibly
defined only up to almost-everywhere equivalence, as in \(L^2\) or Sobolev
spaces. Hence, if \(\rho_X\) is singular with respect to the reference measure
defining \(\mathcal F\), the expression \([u]_{\rho_X} \in L^2(\rho_X)\) need
not be well-defined for an arbitrary \(u \in \mathcal F\). For instance, if
\(\mathcal F = L^2(\mathcal X)\) and \(\rho_X = \delta_x\), then
\([u]_{\rho_X}\) would be determined by the pointwise value \(u(x)\), which is
not defined for a general \(L^2\)-equivalence class.

We therefore define the map \(\mathcal F \to L^2(\rho_X)\) by density, starting
from continuous representatives. Let
\[
D_{\mathcal F}
:=
\mathcal F \cap \cC^0(\mathcal X),
\]
where this is understood as the subspace of elements of \(\mathcal F\) admitting
a continuous representative. For \(u \in D_{\mathcal F}\), the class
$[u]_{\rho_X}$ is well-defined, and we have $\| u \|_{L^2(\rho_X)} < + \infty$ because $\cX$ is compact, so $[u]_{\rho_X} \in L^2(\rho_X)$. If
\[
\|u\|_{L^2(\rho_X)}
\leq
C \|u\|_{\mathcal F}
\qquad
\text{for all } u \in D_{\mathcal F},
\]
and \(D_{\mathcal F}\) is dense in \(\mathcal F\), then the map
\[
u \in D_{\mathcal F}
\longmapsto
[u]_{\rho_X} \in L^2(\rho_X)
\]
extends uniquely to a bounded linear operator from \(\mathcal F\) to
\(L^2(\rho_X)\). The corresponding map \(\mathcal G \to L^2(\rho_Z)\) is
defined analogously.

In \Cref{lem:in-domain-sampling-embedding-ass} below, we prove that \Cref{ass:embeddings} holds for in-domain sampling, where both $X$ and $Z$ take values inside $\Omega$.
\begin{lem}\label{lem:in-domain-sampling-embedding-ass}
    Assume that the distributions $\rho_X$ and $\rho_Z$ are absolutely continuous with respect to the Lebesgue measure on $\Omega$, with bounded densities. Then, \Cref{ass:embeddings} holds.
\end{lem}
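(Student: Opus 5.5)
The plan is to use the density construction described just above the statement. Here $\cF = H^s(\Omega)$, $\cG = L^2(\Omega)$, and $\cX = \overline{\Omega}$. The two embeddings are handled separately, and in each case I will show that the map $u \mapsto [u]_{\rho}$ is well defined and controlled by the Lebesgue $L^2$ norm. Denote the densities by $p_X = d\rho_X/dx$ and $p_Z = d\rho_Z/dx$, with $\|p_X\|_\infty, \|p_Z\|_\infty < \infty$.

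The key observation is that absolute continuity makes the a.e.\ ambiguity harmless. Take $f \in L^2(\Omega)$ and two representatives that agree Lebesgue-a.e. They also agree $\rho_X$-a.e., because every Lebesgue-null set is $\rho_X$-null. Hence the class $[f]_{\rho_X}$ is well defined directly, without any density argument. Its norm satisfies
\[\|f\|_{L^2(\rho_X)}^2 = \int_\Omega |f(x)|^2 p_X(x)\,dx \leq \|p_X\|_\infty \|f\|_{L^2(\Omega)}^2 .\]
The same bound holds with $p_Z$ in place of $p_X$. This already gives a bounded linear map $L^2(\Omega) \to L^2(\rho_X)$ and likewise $L^2(\Omega) \to L^2(\rho_Z)$. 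Linearity is immediate.

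Next I compose with the canonical embeddings. For $\cG = L^2(\Omega)$ the embedding into $L^2(\rho_Z)$ is exactly the map above, with norm at most $\|p_Z\|_\infty^{1/2}$. For $\cF = H^s(\Omega)$, I use the continuous inclusion $H^s(\Omega) \hookrightarrow L^2(\Omega)$, where $\|u\|_{L^2} \leq \|u\|_{H^s}$. Composing gives $\|u\|_{L^2(\rho_X)} \leq \|p_X\|_\infty^{1/2} \|u\|_{H^s(\Omega)}$.

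Finally I check consistency with the density-based definition given before the statement. The continuous elements $D_\cF$ are dense in $H^s(\Omega)$, since $C^\infty(\overline{\Omega})$ is dense for Lipschitz $\Omega$. On these elements the direct map coincides with $u \mapsto [u]_{\rho_X}$, so by uniqueness of the bounded extension the two definitions agree. The same holds for $\cG$. The only point needing care is this consistency check, together with noting that $\rho_X(\partial\Omega) = 0$, which follows from absolute continuity since $\partial\Omega$ has Lebesgue measure zero for a Lipschitz domain. Beyond that, the argument is routine.
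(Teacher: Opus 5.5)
Your proof is correct and follows the paper's argument: bound the $L^2(\rho_X)$ and $L^2(\rho_Z)$ norms by a constant (the square root of the density's sup norm) times the Lebesgue $L^2(\Omega)$ norm, then compose with the canonical embedding $H^s(\Omega)\hookrightarrow L^2(\Omega)$. Your additional points make explicit what the paper's two-line proof leaves implicit: absolute continuity makes the $\rho$-classes well defined, and the direct map agrees with the density-based extension.
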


\begin{proof}
    Since $\rho_X$ and $\rho_Z$ have bounded densities, we can bound the $L^2(\rho_X)$ and $L^2(\rho_Z)$ norms by the standard $L^2(\Omega)$ norm. Combined with the canonical embedding $H^s(\Omega) \hookrightarrow L^2(\Omega)$, we see that \Cref{ass:embeddings} is satisfied. 
\end{proof}

\begin{prop}\label{prop:main-theorem-in-domain-sobolev}
Under the standing assumptions of \Cref{sec:sobolev-setting-generalities}, assume that $\rho_X$ and $\rho_Z$ have bounded densities with respect to
Lebesgue measure of $\Omega$, that $K\in C^{2s}(\cX\times\cX)$ is the restriction of a
$C_0^s$-universal kernel on $\RR^d$, and that
\Cref{ass:bounded-data} holds. Then all the assumptions of
\Cref{main-theorem} are satisfied.
\end{prop}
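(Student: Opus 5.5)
We check the five hypotheses of \Cref{main-theorem}, namely \Cref{ass:diff-eval,ass:embeddings,ass:universality,ass:bounded-data,ass:bounded-features}, one at a time. Each follows from a result already established, so the proof is a chain of citations. Throughout we use the standing setting of \Cref{sec:sobolev-setting-generalities}: $\cX=\overline{\Omega}$ with $\Omega$ a bounded Lipschitz domain, $\cF=H^s(\Omega)$, $\cG=L^2(\Omega)$, and $\cD$ of the form \eqref{eq:diff-op-order-s} with continuous coefficients. By \Cref{lem:diff-op-is-bounded-from-H^s-to-L^2}, $\cD$ is bounded from $\cF$ to $\cG$, as required by the framework of \Cref{sec:functional-assumptions}.

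\emph{Assumption 1.} Since $\cX$ is compact and $\overline{\Int{\cX}}=\cX$, and $K\in C^{2s}(\cX\times\cX)$, \Cref{lem:smooth-kernel-implies-bounded-diff-eval-apdx} gives \Cref{ass:diff-eval}.

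\emph{Bounded features.} \Cref{lem:C^2s-on-compact-implies-bounded-features} gives \Cref{ass:bounded-features}, with explicit constants $\kappa$ and $\kappa_\cD$.

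\emph{Universality.} \Cref{lem:C_0^s-universal-implies-sobolev-density} gives \Cref{ass:universality}. It applies because $K$ is the restriction of a $C_0^s$-universal kernel and lies in $C^{2s}$.

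\emph{Embeddings.} Since $\rho_X$ and $\rho_Z$ have bounded densities, \Cref{lem:in-domain-sampling-embedding-ass} gives \Cref{ass:embeddings}.

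\emph{Bounded data.} \Cref{ass:bounded-data} is assumed directly.

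The only point needing care is that $\overline{\Omega}$ satisfies $\overline{\Int{\cX}}=\cX$, which is needed for the first two lemmas. This holds because a Lipschitz domain lies locally on one side of a Lipschitz graph. As a result, every boundary point is a limit of interior points, and $\Int{\overline{\Omega}}\supseteq\Omega$.

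A subtler point concerns \Cref{ass:embeddings}. The embedding must be consistent with the evaluation of $u\in\cH$ used by the estimator. That is, for continuous representatives, the $L^2(\rho_X)$ class of $i(u)$ should coincide with $[u]_{\rho_X}$, so that $A_\rho=\cA_\rho\circ i$ as used in \Cref{Prop:solution+universality=data-in-the-closure-apdx}. Elements of $\cH$ are continuous on $\cX$ by \Cref{prop:derivatives-reproducing-prop}. The $\rho_X$-class of such a function agrees with its Lebesgue class because $\rho_X\ll$ Lebesgue. The same holds for $\cD u$ with respect to $\rho_Z$, since the classical derivatives of a $C^s$ function coincide with its weak derivatives. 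I expect this compatibility check to be the only non-bookkeeping step, and it is settled by absolute continuity. Once all five assumptions are verified, \Cref{main-theorem} applies.
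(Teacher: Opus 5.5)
Your proposal is correct and matches the paper's proof, which verifies \Cref{ass:diff-eval,ass:embeddings,ass:universality,ass:bounded-features} by citing the same four lemmas and takes \Cref{ass:bounded-data} as a hypothesis. Your extra remarks are sound: the check that $A_\rho=\cA_\rho\circ i$ via absolute continuity fills in a detail the paper leaves implicit, and $\overline{\Int{\cX}}=\cX$ in fact needs no Lipschitz argument, since $\Omega\subseteq\Int{\overline{\Omega}}$ for any open $\Omega$.
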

\begin{proof}
    \Cref{lem:smooth-kernel-implies-bounded-diff-eval-apdx} shows that \Cref{ass:diff-eval} holds. \Cref{lem:in-domain-sampling-embedding-ass} shows that \Cref{ass:embeddings} holds. \Cref{lem:C_0^s-universal-implies-sobolev-density} shows that \Cref{ass:universality} holds. \Cref{lem:C^2s-on-compact-implies-bounded-features} shows that \Cref{ass:bounded-features} holds. Finally, \Cref{ass:bounded-data} is a standard assumption that we take independently from the rest. All the assumptions of \Cref{main-theorem} are thus satisfied, and the theorem applies.
\end{proof}

\subsubsection{Stronger convergence in the elliptic case}\label{sec:in-domain-stronger-cvg-elliptic-apdx}

In this section, we consider an operator $\cD$ of order $s=2$ in divergence form:
\begin{equation}\label{D-divergence-form-apdx} \cD u(x) = - \sum_{i,j=1}^d \frac{\partial}{\partial x_j} \left(a^{ij}(x) \frac{\partial}{\partial x_i} u(x)\right) + \sum_{i=1}^d b^i(x) \frac{\partial}{\partial x_i} u(x) + c(x) u(x),\end{equation}
where $a^{ij} \in C^1(\overline{\Omega})$, $b^i \in C(\overline{\Omega})$ and $c \in C(\overline{\Omega})$ are coefficient functions and $a^{ij} = a^{ji}$ for all $i,j$. Note in particular that $\cD$ is of the form \eqref{eq:diff-op-order-s}.
\begin{df}[\citealp{evans2010partial}, Section 6.1]\label{def:uniform-ellipticity-apdx}
    We say that $\cD$ is uniformly elliptic if there exists a constant $\theta>0$ such that
    \[\sum_{i,j=1}^d a^{ij}(x) \xi_i\xi_j \geq \theta |\xi|^2 \] 
    for almost every $x \in \Omega$ and all $\xi \in \RR^d$.
\end{df}

In the remainder of this subsection, we assume that
$\mathcal D$ is uniformly elliptic. The following classical result on elliptic PDEs allows us to control higher-order norms of $u$ (here, $H^2(V)$ for an open set $V$ compactly embedded in $\Omega$) thanks to the norm of $\cD u$ (here, $L^2$).

\begin{prop}[\citealp{evans2010partial}, Section 6.3, Theorem 1]\label{evans-elliptic-regularity-apdx}
 Let $f \in L^2(\Omega)$ and assume that $u \in H^1(\Omega)$ is a weak solution of the PDE $\cD u = f$ on $\Omega$. Then $u \in H^2_{loc}(\Omega)$ and for each open set $V$ satisfying $\overline{V} \subset \Omega$, there exists $C_V > 0$ such that we have
    \[\| u \|_{H^{2}(V)} \leq C_V ( \| u \|_{L^2(\Omega)} + \| f \|_{L^2(\Omega)}).\]
\end{prop}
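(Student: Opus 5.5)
This is the classical interior regularity theorem, and I would follow the difference-quotient argument of \citet[Section 6.3]{evans2010partial}. Fix an open $V$ with $\overline V\subset\Omega$ and an intermediate open $W$ with $\overline V\subset W$ and $\overline W\subset\Omega$. Let $B[u,v]$ be the bilinear form associated with the divergence-form operator \eqref{D-divergence-form-apdx}, so that the weak formulation of $\cD u=f$ reads $B[u,v]=(f,v)_{L^2(\Omega)}$ for all $v\in H^1_0(\Omega)$.

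\textbf{Step 1: Caccioppoli estimate.} Take a cutoff $\zeta\in C^\infty_c(W)$ with $0\le\zeta\le1$ and $\zeta\equiv1$ on a neighbourhood of $\overline V$, and test with $v=\zeta^2u$. Uniform ellipticity (\Cref{def:uniform-ellipticity-apdx}) bounds the leading term from below by $\theta\int\zeta^2|Du|^2$. The cross terms involving $D\zeta$, the first-order term and the zeroth-order term are controlled by Young's inequality, using boundedness of $a^{ij},b^i,c$ on $\overline\Omega$. This gives
\[\int_W\zeta^2|Du|^2\le C\big(\|f\|_{L^2(\Omega)}^2+\|u\|_{L^2(\Omega)}^2\big).\]
A slightly larger cutoff gives the same bound for $\|Du\|_{L^2(W')}$ on a set $W'$ slightly larger than $\operatorname{supp}\zeta$; this is what Step 2 needs.

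\textbf{Step 2: difference quotients (main step).} For $|h|$ small and $k\in\{1,\dots,d\}$, let $D_k^h$ be the difference quotient in direction $e_k$, and test with $v=-D_k^{-h}(\zeta^2D_k^hu)\in H^1_0(\Omega)$. Use discrete integration by parts and the product rule $D_k^h(a^{ij}w)=a^{ij}(\cdot+he_k)D_k^hw+(D_k^ha^{ij})w$. Since $a^{ij}\in C^1(\overline\Omega)$, the quotients $D_k^ha^{ij}$ are bounded uniformly in $h$. Ellipticity then bounds the leading term below by
\[\tfrac\theta2\int\zeta^2|D_k^hDu|^2-C\int_W\big(|D_k^hu|^2+|Du|^2\big).\]
For the right-hand side and the lower-order terms, use $\|D_k^{-h}w\|_{L^2}\le\|Dw\|_{L^2}$ for $w$ supported well inside $\Omega$, together with Young's inequality with a small $\epsilon$ to absorb $\int\zeta^2|D_k^hDu|^2$. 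The remaining terms are bounded by $C\big(\|f\|_{L^2}^2+\|u\|_{L^2}^2+\|Du\|_{L^2(W')}^2\big)$. By Step 1, this yields
\[\|D_k^hDu\|_{L^2(V)}\le C\big(\|f\|_{L^2(\Omega)}+\|u\|_{L^2(\Omega)}\big)\]
uniformly in $h$.

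\textbf{Step 3: conclusion.} Uniform $L^2$ bounds on difference quotients imply weak differentiability. Hence $Du\in H^1(V)$, with $\|D^2u\|_{L^2(V)}$ bounded by the same right-hand side. Combining this with Step 1 and $\|u\|_{L^2(V)}\le\|u\|_{L^2(\Omega)}$ gives $u\in H^2(V)$ and the stated estimate with a constant $C_V$ depending on $V$, $\theta$ and the coefficients. Since $V$ is arbitrary, $u\in H^2_{loc}(\Omega)$.

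The main obstacle is the bookkeeping in Step 2. The commutator terms where the difference quotient falls on $a^{ij}$ or on $\zeta$ must be absorbed or bounded without losing uniformity in $h$. This is exactly where $a^{ij}\in C^1$ and the $\epsilon$-absorption are needed.
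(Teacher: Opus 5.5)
Your proposal is correct and takes essentially the same approach as the paper: the paper does not reprove this statement but cites \citet[Section 6.3, Theorem 1]{evans2010partial}, whose argument consists of the Caccioppoli estimate with test function $\zeta^2 u$, the test function $-D_k^{-h}(\zeta^2 D_k^h u)$ with $\epsilon$-absorption of the commutator terms (using $a^{ij}\in C^1$), and the difference-quotient characterization of $H^1$, exactly as you describe. The only difference is cosmetic: Evans first derives the bound with $\|u\|_{H^1(W)}$ on the right-hand side and then removes the gradient term, whereas you do the Caccioppoli step first.
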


\begin{coro}\label{elliptic-regularity-domain-apdx}
Assume that $\rho_X$ and $\rho_Z$ are both absolutely continuous with respect to the Lebesgue measure on $\Omega$, with densities bounded away from $0$, so that the norms $\| \cdot \|_{L^2(\rho_X)}$ and $\| \cdot \|_{L^2(\rho_Z)}$ are stronger than $\| \cdot \|_{L^2( \Omega)}$. Then for any open $V$ such that $\overline{V} \subset \Omega$, there exists $C_V > 0$ such that for all $u \in H^2(\Omega)$, we have
\begin{equation}\label{elliptic-regularity-domain-eq-apdx}\| u \|_{H^{2}(V)} \leq C_V ( \| u \|_{L^2(\rho_X)} + \| \cD u \|_{L^2(\rho_Z)}).\end{equation}
\end{coro}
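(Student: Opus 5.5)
The plan is to combine \Cref{evans-elliptic-regularity-apdx} with a comparison of measures. Fix an open $V$ with $\overline{V}\subset\Omega$ and let $u\in H^2(\Omega)$. Set $f:=\cD u$. By \Cref{lem:diff-op-is-bounded-from-H^s-to-L^2} (with $s=2$) we have $f\in L^2(\Omega)$. A strong $H^2$ solution is in particular a weak solution in $H^1(\Omega)$. To see this, multiply $\cD u=f$ by $v\in C_c^\infty(\Omega)$ and integrate the divergence term by parts; this is valid because $a^{ij}\in C^1(\overline\Omega)$ and $\partial_i u\in H^1(\Omega)$, so $a^{ij}\partial_i u\in H^1(\Omega)$. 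We obtain the weak formulation against all $v\in H^1_0(\Omega)$ by density. \Cref{evans-elliptic-regularity-apdx} then gives
\[
\|u\|_{H^2(V)}\leq C_V\big(\|u\|_{L^2(\Omega)}+\|\cD u\|_{L^2(\Omega)}\big).
\]

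Next we pass from Lebesgue $L^2$ norms to the sampling norms. Write $\rho_X = p_X\,dx$ and $\rho_Z = p_Z\,dx$ with $p_X\geq c_X>0$ and $p_Z\geq c_Z>0$ almost everywhere on $\Omega$. Then for any $g\in L^2(\Omega)$,
\[
\|g\|_{L^2(\rho_X)}^2=\int_\Omega g^2 p_X\,dx\geq c_X\|g\|_{L^2(\Omega)}^2,
\]
and similarly for $\rho_Z$. Hence $\|u\|_{L^2(\Omega)}\leq c_X^{-1/2}\|u\|_{L^2(\rho_X)}$ and $\|\cD u\|_{L^2(\Omega)}\leq c_Z^{-1/2}\|\cD u\|_{L^2(\rho_Z)}$. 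These quantities are meaningful because, by \Cref{ass:embeddings} as verified in \Cref{lem:in-domain-sampling-embedding-ass}, the classes $[u]_{\rho_X}$ and $[\cD u]_{\rho_Z}$ are just the Lebesgue classes of $u$ and $\cD u$ reweighted by the densities. Absolute continuity ensures that Lebesgue-a.e.\ equal functions are also $\rho$-a.e.\ equal, so no ambiguity arises.

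Substituting these bounds gives \eqref{elliptic-regularity-domain-eq-apdx} with constant $C_V\max(c_X^{-1/2},c_Z^{-1/2})$, which we rename $C_V$. The only step requiring care is checking that an $H^2$ function satisfying $\cD u=f$ almost everywhere is a weak solution in the sense of Evans, which is the integration by parts argument above. The constant $C_V$ depends only on $V$, $\Omega$ and the coefficients, not on $u$, so the estimate is uniform over $u\in H^2(\Omega)$ as claimed.
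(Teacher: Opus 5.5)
Your proof is correct and follows the paper's route: apply \Cref{evans-elliptic-regularity-apdx} to $u$ with $f=\cD u\in L^2(\Omega)$, then bound $\|u\|_{L^2(\Omega)}$ and $\|\cD u\|_{L^2(\Omega)}$ by the $\rho_X$- and $\rho_Z$-norms using the lower bounds on the densities. Your integration-by-parts check that an $H^2$ strong solution is an $H^1$ weak solution fills in a step the paper leaves implicit. One small point: \Cref{lem:in-domain-sampling-embedding-ass} assumes bounded densities, which is not a hypothesis of this corollary. Your absolute-continuity remark already makes the weighted norms well defined, possibly infinite, in which case the estimate is trivial.
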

\begin{proof}
For any $u \in H^2(\Omega)$, if we denote $f := \cD u$, we have $f \in L^2(\Omega)$ which allows us to apply \Cref{evans-elliptic-regularity-apdx} with $u$ and $f$ and then bound $\| u \|_{L^2(\Omega)} $ and $\| \cD u\|_{L^2(\Omega)}$ by $\| u \|_{L^2(\rho_X)}$ and $\| \cD u \|_{L^2(\rho_Z)}$ respectively to obtain \eqref{elliptic-regularity-domain-eq-apdx}.
\end{proof}

The preceding results allow us to state the following Sobolev convergence result in the elliptic case.
\begin{coro}\label{Coro:cvg-H^2(V)-apdx}
    Assume the hypotheses of \Cref{prop:main-theorem-in-domain-sobolev} with $s=2$.
Assume moreover that $\mathcal D$ is uniformly elliptic and the densities of $\rho_X$ and $\rho_Z$ with respect to the Lebesgue measure of $\Omega$, which are bounded from above by the hypotheses of \Cref{prop:main-theorem-in-domain-sobolev}, are also bounded away from $0$, so that the norms $\| \cdot \|_{L^2(\rho_X)}$ and $\| \cdot \|_{L^2(\rho_Z)}$ are equivalent to $\| \cdot \|_{L^2( \Omega)}$. Then, for any sequence $(\lambda_{n,m})$ satisfying \eqref{eq:condition-lambda}, for every open $V$ such that $\overline{V} \subset \Omega$, we almost surely have
    \[\|\uh_{\lambda_{n,m}} - u^* \|_{H^2(V)} \ \underset{n,m \rightarrow \infty}{\longrightarrow} \ 0. \]
\end{coro}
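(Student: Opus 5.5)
The plan is to apply \Cref{elliptic-regularity-domain-apdx} to the difference $u := \uh_{\lambda_{n,m}} - u^*$ and then invoke \Cref{main-theorem}, via \Cref{prop:main-theorem-in-domain-sobolev}, to make the right-hand side vanish.

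\textbf{Step 1: the difference lies in $H^2(\Omega)$.} By \Cref{prop:derivatives-reproducing-prop}, since $K \in C^{4}(\cX\times\cX)$, every element of $\cH$ belongs to $C^2(\cX)$. Because $\Omega$ is bounded, this gives $\cH \subset H^2(\Omega)$, which is in fact the bounded embedding already established in the proof of \Cref{lem:C_0^s-universal-implies-sobolev-density}. Since $u^* \in \cF = H^2(\Omega)$, we get $u \in H^2(\Omega)$ for every realization of the data and every $n,m$.

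\textbf{Step 2: reduce to the two $L^2$ errors.} Fix an open $V$ with $\overline{V}\subset\Omega$. By linearity of $\cD$ on $H^2(\Omega)$, \Cref{elliptic-regularity-domain-apdx} gives the deterministic bound
\[
\|\uh_{\lambda_{n,m}} - u^*\|_{H^2(V)} \le C_V\big(\|\uh_{\lambda_{n,m}} - u^*\|_{L^2(\rho_X)} + \|\cD\uh_{\lambda_{n,m}} - \cD u^*\|_{L^2(\rho_Z)}\big).
\]
This needs the lower bound on the densities, which makes the $\rho$-norms dominate the $L^2(\Omega)$ norms.

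\textbf{Step 3: the $L^2$ errors vanish.} \Cref{prop:main-theorem-in-domain-sobolev} verifies the hypotheses of \Cref{main-theorem}. Hence, almost surely, both terms on the right-hand side tend to $0$ for any $(\lambda_{n,m})$ satisfying \eqref{eq:condition-lambda}.

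\textbf{Main subtlety: the order of quantifiers.} The almost-sure event of \Cref{main-theorem} does not depend on $V$. On that single event, the inequality in Step 2 holds for every $V$ simultaneously, with $V$ entering only through the constant $C_V$. So the convergence holds almost surely, for all $V$ at once. This is slightly stronger than the statement, which only asks for each $V$ separately.

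One point to check is that $\cD u$ as used in \Cref{elliptic-regularity-domain-apdx} agrees with the weak divergence-form operator. This holds because $a^{ij}\in C^1(\overline{\Omega})$ and $u\in H^2(\Omega)$, so the product rule applies to the weak derivatives. With that in place, the proof should be short.
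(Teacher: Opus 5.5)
Your proposal is correct and follows essentially the same route as the paper: apply \Cref{elliptic-regularity-domain-apdx} to $v_{n,m} = \uh_{\lambda_{n,m}} - u^* \in H^2(\Omega)$, then conclude with the two $L^2$ convergences from \Cref{main-theorem}, which apply by \Cref{prop:main-theorem-in-domain-sobolev}. Your extra checks are valid and supply details the paper leaves implicit: that $\cH \subset H^2(\Omega)$, that $\cD$ agrees with the weak divergence form, and that the almost-sure event does not depend on $V$ (as in the main-text version \Cref{Coro:cvg-H^2(V)-main}).
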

\begin{proof}
    By \Cref{prop:main-theorem-in-domain-sobolev}, \Cref{main-theorem} applies: for any sequence $(\lambda_{n,m})$ satisfying \eqref{eq:condition-lambda}, we almost surely have
    \[ \begin{cases}
\| \widehat u_{\lambda_{n,m}} - u^* \|_{L^2(\rho_X)} \ \underset{n,m \rightarrow \infty}{\longrightarrow} \ 0  \\
\| \cD \widehat u_{\lambda_{n,m}}  - \cD u^*\|_{L^2(\rho_Z)} \ \underset{n,m \rightarrow \infty}{\longrightarrow}  \ 0.
    \end{cases}
\]
For any open set $V$ such that $\overline{V} \subset \Omega$, using \Cref{elliptic-regularity-domain-apdx} with $v_{n,m} = \widehat u_{\lambda_{n,m}} - u^* \in H^2(\Omega)$ yields the result.
\end{proof}

\subsection{Boundary sampling}\label{sec:boundary-setting-apdx}

In \Cref{lem:boundary-sampling-embedding-ass} below, we prove that \Cref{ass:embeddings} holds for the boundary sampling scenario, where $X$ takes values only on the boundary $\partial \Omega$ (while $Z$ still takes values inside $\Omega$). Such a scenario is slightly less straightforward as we cannot rely on absolute continuity (with respect to the Lebesgue measure on $\Omega$), and we instead rely on results from trace theory.
\begin{lem}\label{lem:boundary-sampling-embedding-ass}
    Assume that $\rho_X$ is absolutely continuous with respect to the Hausdorff measure of $\partial \Omega$. Assume that $\rho_Z$ is absolutely continuous with respect to the Lebesgue measure on $\Omega$. Assume that both densities are bounded. Then, \Cref{ass:embeddings} holds.
\end{lem}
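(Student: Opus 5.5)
The plan is to treat the two parts of \Cref{ass:embeddings} separately, following the density-based construction just described. Throughout, $\cF = H^s(\Omega)$ with $s \geq 1$ and $\cG = L^2(\Omega)$.

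\emph{The $\cG$ part.} This is the same argument as in \Cref{lem:in-domain-sampling-embedding-ass}. Let $p_Z$ denote the density of $\rho_Z$, with $p_Z \leq C_Z$. For $g \in L^2(\Omega)$ we have
\[
\|g\|_{L^2(\rho_Z)}^2 = \int_\Omega |g|^2 p_Z \, dx \leq C_Z \|g\|_{L^2(\Omega)}^2 .
\]
The class $[g]_{\rho_Z}$ is well defined, because Lebesgue-null sets are $\rho_Z$-null by absolute continuity. Hence $\cG \hookrightarrow L^2(\rho_Z)$ is bounded.

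\emph{The $\cF$ part.} Here we work on the dense subspace $D_\cF = H^s(\Omega) \cap \cC^0(\overline{\Omega})$. It contains $C^\infty(\overline{\Omega})$, which is dense in $H^s(\Omega)$ for Lipschitz $\Omega$; one can see this via the Stein extension used in \Cref{lem:C_0^s-universal-implies-sobolev-density}, followed by mollification and restriction.

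For $u \in D_\cF$, write $p_X \leq C_X$ for the density of $\rho_X$ with respect to the surface (Hausdorff) measure $\sigma$. Then
\[
\|u\|_{L^2(\rho_X)}^2 = \int_{\partial\Omega} |u|^2 p_X \, d\sigma \leq C_X \|u|_{\partial\Omega}\|_{L^2(\partial\Omega)}^2 .
\]
Next invoke the trace theorem on Lipschitz domains (\citealp[Sec.~5.5]{evans2010partial}). It provides a bounded $\Tr : H^1(\Omega) \to L^2(\partial\Omega)$, which factors through $H^{1/2}(\partial\Omega)$, and which agrees with restriction on $H^1(\Omega) \cap \cC^0(\overline{\Omega})$. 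Combining this with $\|u\|_{H^1} \leq \|u\|_{H^s}$ gives
\[
\|u\|_{L^2(\rho_X)} \leq \sqrt{C_X}\, \|\Tr\|\, \|u\|_{H^s(\Omega)} \qquad \text{for all } u \in D_\cF .
\]
By the density extension principle stated before \Cref{lem:in-domain-sampling-embedding-ass}, the map $u \mapsto [u]_{\rho_X}$ then extends uniquely to a bounded operator $\cF \to L^2(\rho_X)$. This extension equals $M_{p_X} \circ \Tr$, where $M_{p_X}$ is the bounded map $L^2(\partial\Omega) \to L^2(\rho_X)$; that map is well defined because $\sigma$-null sets are $\rho_X$-null.

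\emph{Main obstacle.} The delicate step is ensuring that the trace coincides with the pointwise restriction for functions that are merely continuous on $\overline{\Omega}$ and in $H^1$, not just for $C^1(\overline{\Omega})$ functions as in Evans' statement. If needed, I would sidestep this by taking $D_\cF = C^\infty(\overline{\Omega})$, or $C^1(\overline{\Omega})$, as the dense core instead. The extension principle only requires a dense subspace of continuous representatives, and on that core the identification of trace with restriction is exactly the classical one. This also keeps the definition consistent with the values $u(x_i)$ for $u \in \cH \subset C^s(\cX)$.
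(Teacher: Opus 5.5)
Your proposal is correct and follows essentially the paper's own argument: the bounded density of $\rho_Z$ gives $\cG\hookrightarrow L^2(\rho_Z)$. For the $\cF$ part, the trace operator composed with $H^s(\Omega)\hookrightarrow H^1(\Omega)$, with $H^{1/2}(\partial\Omega)\hookrightarrow L^2(\partial\Omega)$ and with the density-weighted map $L^2(\partial\Omega)\to L^2(\rho_X)$ does the job, and your density-extension step just makes explicit the construction the paper sets up before the lemma.

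Two small corrections to your ``main obstacle''. First, Evans already states $\Tr u = u|_{\partial\Omega}$ for all $u \in W^{1,p}\cap C(\overline{\Omega})$; the real caveat is that his theorem assumes a $C^1$ boundary, so for Lipschitz $\Omega$ you should invoke the Lipschitz-domain trace theorem (a slack the paper shares). Second, $C^1(\overline{\Omega})\not\subset H^s(\Omega)$ for $s\geq 2$, so $C^\infty(\overline{\Omega})$ is the right fallback core.
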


\begin{proof}
    Since $\rho_Z$ has a bounded density, we can bound the $L^2(\rho_Z)$ norm by the $L^2(\Omega)$ norm, so the embedding $\cG \to L^2(\rho_Z)$ is indeed well-defined and bounded. 

    Since $\rho_X$ has a bounded density, we can bound the $L^2(\rho_X)$ norm by the standard $L^2(\partial \Omega)$ norm. Since $\Omega$ is a Lipschitz domain, we can define the trace operator 
    \[\Tr : H^1(\Omega) \to H^{1/2}(\partial \Omega),\]
    which for any $u \in H^1(\Omega) \cap \cC^0(\overline{\Omega})$, coincides with the restriction of $u$ to the boundary. By combining the trace operator with the embeddings $H^s(\Omega) \hookrightarrow H^1(\Omega)$ (before the trace operator) and $H^{1/2}(\partial \Omega) \hookrightarrow L^2(\partial \Omega)$ (after the trace operator) and finally the bounded map $L^2(\partial \Omega) \to L^2(\rho_X)$, we get that \Cref{ass:embeddings} holds.
\end{proof}

\begin{prop}\label{prop:main-theorem-boundary-sobolev}
Under the standing assumptions of
\Cref{sec:sobolev-setting-generalities}, assume that $\rho_X$ is absolutely continuous with respect to the Hausdorff measure of $\partial \Omega$, with bounded density, and assume that $\rho_Z$ is absolutely continuous with respect to the Lebesgue measure on $\Omega$, with bounded density. Assume moreover that
$K\in C^{2s}(\cX\times\cX)$ is the restriction of a
$C_0^s$-universal kernel on $\RR^d$, and that
\Cref{ass:bounded-data} holds. Then all the assumptions of
\Cref{main-theorem} are satisfied.
\end{prop}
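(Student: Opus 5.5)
The plan mirrors the proof of \Cref{prop:main-theorem-in-domain-sobolev}: check the five assumptions of \Cref{main-theorem} one by one. Only \Cref{ass:embeddings} differs from the in-domain case, because $\rho_X$ now lives on the boundary, and \Cref{lem:boundary-sampling-embedding-ass} already handles that.

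\Cref{ass:diff-eval} follows from \Cref{lem:smooth-kernel-implies-bounded-diff-eval-apdx}. It applies because $\cX=\overline{\Omega}$ is compact, $\overline{\Int{\cX}}=\cX$ (true for a bounded Lipschitz domain), $K\in C^{2s}(\cX\times\cX)$, and $\cD$ has the form \eqref{eq:diff-op-order-s} with continuous coefficients. \Cref{ass:bounded-features} follows from \Cref{lem:C^2s-on-compact-implies-bounded-features}, again using compactness of $\cX$ and $K\in C^{2s}$. The continuous and dense embedding $\cH\hookrightarrow H^s(\Omega)$, i.e.\ \Cref{ass:universality}, is \Cref{lem:C_0^s-universal-implies-sobolev-density}, since $K$ is the restriction of a $C_0^s$-universal kernel on $\RR^d$. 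None of these three lemmas depends on the sampling distributions, so they transfer verbatim from the in-domain case. \Cref{ass:bounded-data} is assumed directly.

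For \Cref{ass:embeddings}, invoke \Cref{lem:boundary-sampling-embedding-ass}. Its hypotheses are exactly the ones stated here: $\rho_X$ has a bounded density with respect to the Hausdorff measure on $\partial\Omega$, and $\rho_Z$ has a bounded density with respect to Lebesgue measure on $\Omega$. The only delicate point, and the one I would double-check, is compatibility. On $\cH\subset C^s(\cX)$, the embedding $\cF\to L^2(\rho_X)$ built from the trace must agree with pointwise evaluation. Otherwise the samples $y_i=u(x_i)+\epsilon_i$ and the operator $A_\rho=\cA_\rho\circ i$ used in \Cref{Prop:solution+universality=data-in-the-closure-apdx} would not match the pointwise operator $S_{\rho_X}$. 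This compatibility holds because the trace coincides with restriction to $\partial\Omega$ on $H^1(\Omega)\cap C^0(\overline{\Omega})$, and every element of $\cH$ is continuous on $\overline{\Omega}$ by \Cref{prop:derivatives-reproducing-prop}. The analogous statement for $\cD u$ in $L^2(\rho_Z)$ is immediate, since $\cD u$ is continuous for $u\in\cH$ and $\rho_Z$ is absolutely continuous.

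With all five assumptions verified, \Cref{main-theorem} applies, which is the claim.
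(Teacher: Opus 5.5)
Your proposal is correct and matches the paper's proof, which cites the same four lemmas (\Cref{lem:smooth-kernel-implies-bounded-diff-eval-apdx}, \Cref{lem:boundary-sampling-embedding-ass}, \Cref{lem:C_0^s-universal-implies-sobolev-density}, \Cref{lem:C^2s-on-compact-implies-bounded-features}) and takes \Cref{ass:bounded-data} as given. You also check that the trace-based map $\cF\to L^2(\rho_X)$ agrees with pointwise evaluation on $\cH\subset C^s(\cX)\cap H^1(\Omega)$; the paper leaves this implicit, and it is what guarantees that $A_\rho=\cA_\rho\circ i$ coincides with the sampling operator $(S_{\rho_X},D_{\rho_Z})$ used in the analysis.
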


\begin{proof}
   \Cref{lem:smooth-kernel-implies-bounded-diff-eval-apdx} shows that \Cref{ass:diff-eval} holds. \Cref{lem:boundary-sampling-embedding-ass} shows that \Cref{ass:embeddings} holds. \Cref{lem:C_0^s-universal-implies-sobolev-density} shows that \Cref{ass:universality} holds. \Cref{lem:C^2s-on-compact-implies-bounded-features} shows that \Cref{ass:bounded-features} holds. Finally, \Cref{ass:bounded-data} is a standard assumption that we take independently from the rest. All the assumptions of \Cref{main-theorem} are thus satisfied, and the theorem applies. 
\end{proof}

\subsubsection{Strong convergence in the elliptic case}\label{sec:boundary-setting-strong-cvg-apdx}

Throughout this subsubsection, we assume that $s=2$ and that
$\mathcal D$ is the operator defined in
\eqref{D-divergence-form-apdx}. We further assume that $a^{ij}, b^i, c \in C^\infty(\overline{\Omega})$, that $\cD$ is uniformly elliptic over $\Omega$ (cf. \Cref{def:uniform-ellipticity-apdx}), and that the boundary $\partial \Omega$ is a $(d-1)$-dimensional smooth manifold, $\Omega$ being locally on one side of $\partial \Omega$ (such regularity is needed to apply the regularity estimates from \citet{lions2012non}). We consider the problem
\begin{equation}\label{boundary-value-problem-apdx}
    \begin{cases}
    \cD u(x) = q(x) & x \in \Omega \\
    u(x) = h(x) & x \in \partial \Omega,
\end{cases} \end{equation}
where $(q,h) \in L^2(\Omega) \times H^{3/2}(\partial \Omega)$.
Let us assume that $0$ is not a Dirichlet eigenvalue for the operator $\mathcal{D}$ in $\Omega$, which, by the Fredholm alternative \citep[see][]{evans2010partial}, guarantees that for any $(q,h) \in L^2(\Omega) \times H^{3/2}(\partial \Omega)$, there exists a unique solution $u \in H^2(\Omega)$ to \eqref{boundary-value-problem-apdx}. This is the case for instance for the Laplacian $\cD = - \Delta$.

\begin{prop}[\citealp{lions2012non}, Theorem 7.4]\label{lions-magenes-elliptic-regularity-boundary-apdx}
    We have, for any $(q,h) \in L^2(\Omega) \times H^{3/2}(\partial \Omega)$, for any $u \in H^2(\Omega)$ solution of \eqref{boundary-value-problem-apdx},
\begin{equation}\label{lions-magenes-boundary-regularity-eq-apdx}
    \|u\|_{H^{1/2}(\Omega)} \leq c \left(  \| h\|_{L^2(\partial \Omega)} + \|q \|_{\Xi^{-3/2}(\Omega)} \right), \end{equation}
where $c > 0$ and $\| \cdot \|_{\Xi^{-3/2}(\Omega)}$ is a norm weaker than $\| \cdot \|_{L^2(\Omega)}$.
\end{prop}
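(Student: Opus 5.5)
The estimate is a transposition (duality) argument in the spirit of \citet{lions2012non}. Since $u \in H^2(\Omega)$, all integrations by parts below are classical. The idea is to represent $\|u\|_{H^{1/2}(\Omega)}$ by duality and to test $u$ against solutions of the adjoint Dirichlet problem.

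First, because $C_c^\infty(\Omega)$ is dense in $H^{1/2}(\Omega)$, we have $\|u\|_{H^{1/2}(\Omega)} = \sup \{ \int_\Omega u \psi : \|\psi\|_{(H^{1/2}(\Omega))'} \le 1\}$. Here the dual is $H^{-1/2}(\Omega)$, and it suffices to take $\psi$ smooth.

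For each such $\psi$, we solve the adjoint problem $\cD^* v = \psi$ in $\Omega$, $v = 0$ on $\partial\Omega$. Here
\[
\cD^* v = -\sum_{i,j} \partial_i (a^{ij} \partial_j v) - \sum_i \partial_i (b^i v) + c v
\]
is again uniformly elliptic with smooth coefficients. Since $0$ is not a Dirichlet eigenvalue of $\cD$, the Fredholm alternative shows it is not one of $\cD^*$ either. Hence $v$ exists, is unique, and is smooth up to the boundary when $\psi$ is smooth.

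Second, Green's formula for the divergence-form operator, together with $v|_{\partial\Omega} = 0$ and $u|_{\partial\Omega} = h$, gives
\[
\int_\Omega u \, \psi \, dx = \int_\Omega q \, v \, dx + \int_{\partial\Omega} h \, \partial_{\nu_A} v \, d\sigma ,
\]
up to sign conventions. Here $\partial_{\nu_A} v = \sum_{i,j} a^{ij} \nu_i \partial_j v$ is the conormal derivative, and the first-order terms produce no boundary contribution because $v = 0$ on $\partial\Omega$. Hence
\[
\Bigl|\int_\Omega u \psi\Bigr| \le \|q\|_{\Xi^{-3/2}(\Omega)} \, \|v\|_{\Xi^{3/2}(\Omega)} + \|h\|_{L^2(\partial\Omega)} \, \|\partial_{\nu_A} v\|_{L^2(\partial\Omega)} ,
\]
where $\Xi^{-3/2}$ is, by definition, the dual of the space $\Xi^{3/2}$ of Lions–Magenes, made of $H^{3/2}$ functions weighted near the boundary.

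Third, it remains to prove the a priori bound
\[
\|v\|_{\Xi^{3/2}(\Omega)} + \|\partial_{\nu_A} v\|_{L^2(\partial\Omega)} \le c \, \|\psi\|_{H^{-1/2}(\Omega)} .
\]
Taking the supremum over $\psi$ then yields \eqref{lions-magenes-boundary-regularity-eq-apdx}.

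This last bound is the main obstacle, because it sits exactly at the borderline of trace theory. Shift regularity gives $v \in H^{3/2}(\Omega)$ for $\psi \in H^{-1/2}(\Omega)$, but the normal derivative of an $H^{3/2}$ function has no trace in $L^2(\partial\Omega)$ in general. Moreover, $H^{-1/2}$ data interact badly with the boundary.

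My first attempt would be to interpolate between the standard estimates $\|v\|_{H^2} \lesssim \|\psi\|_{L^2}$ and $\|v\|_{H^1} \lesssim \|\psi\|_{H^{-1}}$. This requires the right (Lions–Magenes $\Xi$-type) interpolation spaces, chosen so that the conormal trace map is continuous at the midpoint. The smoothness of $\partial\Omega$ enters here, through flattening the boundary and localising. The same mechanism is what forces the weighted norm on $q$ in place of the $L^2$ norm.

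If this borderline step proved too delicate, a fallback is to run the same duality argument with $H^{1/2-\varepsilon}$ in place of $H^{1/2}$. The traces are then unproblematic, and the rest of the argument goes through verbatim. That weaker estimate would still be enough to deduce the convergence corollaries from \Cref{main-theorem}.
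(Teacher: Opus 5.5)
Your Steps 1--2 are sound. They follow the transposition idea behind the Lions--Magenes theorem, which the paper cites without giving a proof of its own. For $\psi\in C_c^\infty(\Omega)$ the identity $\int_\Omega u\psi=\int_\Omega qv-\int_{\partial\Omega}h\,\partial_{\nu_A}v$ is correct, and you handle the $q$-term correctly via $H^{3/2}(\Omega)\hookrightarrow\Xi^{3/2}(\Omega)$.

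The gap is Step~3, which you state but do not prove, and it is not a side lemma. With $q=0$, the same identity shows by duality that $\|\partial_{\nu_A}v\|_{L^2(\partial\Omega)}\lesssim\|\psi\|_{(H^{1/2}(\Omega))'}$ is \emph{equivalent} to the boundedness of $h\mapsto u_h$ from $L^2(\partial\Omega)$ into $H^{1/2}(\Omega)$. That is the core of the proposition itself, so as written you have reformulated the claim rather than proved it.

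The route you sketch also points at the wrong place. The $\Xi$ spaces play no role in the conormal trace; they enter only through the pairing with $q$. Interpolating from $H^{-1}(\Omega)=(H^1_0(\Omega))'$ puts the data in $[(H^1_0)',L^2]_{1/2}=(H^{1/2}_{00}(\Omega))'$, and on that space the $L^2(\partial\Omega)$ bound is false. Dually, it would place $u\equiv1$ (the solution for $\cD=-\Delta$, $q=0$, $h\equiv1$) in $H^{1/2}_{00}(\Omega)$. But elements of that space satisfy $\int_\Omega|u|^2\,\mathrm{dist}(x,\partial\Omega)^{-1}dx<\infty$, and this integral diverges for $u\equiv 1$.

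You can close the gap by interpolating the conormal-derivative map itself, starting from the endpoint $(H^1(\Omega))'$ rather than $(H^1_0(\Omega))'$.
\begin{itemize}
\item Set $B(w,v)=\int_\Omega a^{ij}\partial_iw\,\partial_jv+b^i\partial_iw\,v+c\,wv$. For $\psi\in(H^1(\Omega))'$, let $v\in H^1_0(\Omega)$ solve $B(w,v)=\langle\psi,w\rangle$ for all $w\in H^1_0(\Omega)$.
\item Define $\partial_{\nu_A}v\in H^{-1/2}(\partial\Omega)$ by $\langle\partial_{\nu_A}v,g\rangle=B(G,v)-\langle\psi,G\rangle$ for any $G\in H^1(\Omega)$ with $\Tr G=g$. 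This is independent of $G$, since the right side vanishes on $H^1_0$.
\item This definition gives $\|\partial_{\nu_A}v\|_{H^{-1/2}(\partial\Omega)}\lesssim\|\psi\|_{(H^1(\Omega))'}$, and it agrees with the classical conormal derivative when $\psi\in L^2(\Omega)$. Choosing $G=u$ yields your Green identity for every such $\psi$.
\item $H^2$-regularity gives the other endpoint, $\|\partial_{\nu_A}v\|_{H^{1/2}(\partial\Omega)}\lesssim\|\psi\|_{L^2(\Omega)}$.
\item Complex interpolation at $\theta=1/2$ then yields exactly the bound you need. This uses $[(H^1(\Omega))',L^2(\Omega)]_{1/2}=(H^{1/2}(\Omega))'$ (duality theorem) and $[H^{-1/2}(\partial\Omega),H^{1/2}(\partial\Omega)]_{1/2}=L^2(\partial\Omega)$.
\item Interpolating $\psi\mapsto v$ between the same endpoints gives $\|v\|_{H^{3/2}(\Omega)}\lesssim\|\psi\|_{(H^{1/2}(\Omega))'}$.
\end{itemize}

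Finally, your fallback does not recover the statement. An $H^{1/2-\varepsilon}$ estimate still yields $L^2(\Omega)$ convergence and \Cref{coro:cvg-H^2(V)bis-apdx}. It does not yield the $H^{1/2}(\Omega)$ convergence asserted in \Cref{Coro:cvg-H^1/2(Omega)-apdx}.
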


\begin{rem}
    Given the regularity of $u$, $q$ and $h$, higher order estimates also hold, such as $\|u\|_{H^{2}(\Omega)} \leq c \left(  \| h \|_{H^{3/2}(\partial \Omega)} + \|q \|_{L^2(\Omega)} \right)$.
     However, although $h$ is more regular than $L^2$, we only have the $L^2$ convergence of $\Tr u$ to $h$, so the estimate \eqref{lions-magenes-boundary-regularity-eq-apdx} is the strongest we can use in our context.
\end{rem}

\begin{coro}\label{boundary-regularity-data-distributions-apdx}
        Assume that $\rho_X$ (resp. $\rho_Z$) is absolutely continuous with respect to the $(d-1)$-dimensional Hausdorff measure on $\partial \Omega$ (resp. the Lebesgue measure on $\Omega$), and assume that both densities are bounded away from $0$. Then there exists $C > 0$ such that for any $u \in H^2(\Omega)$, we have
    \begin{equation}\label{elliptic-regularity-data-distribution-apdx}
    \| u \|_{H^{1/2}(\Omega)} \leq C ( \| \Tr u \|_{L^2(\rho_X)} + \| \cD u \|_{L^2(\rho_Z)}).
    \end{equation}
\end{coro}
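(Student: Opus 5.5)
The estimate is a direct consequence of \Cref{lions-magenes-elliptic-regularity-boundary-apdx}. The only work is to pass from the norms appearing there to the data-dependent norms $L^2(\rho_X)$ and $L^2(\rho_Z)$.

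First I fix $u \in H^2(\Omega)$ and set $q := \cD u$ and $h := \Tr u$. By \Cref{lem:diff-op-is-bounded-from-H^s-to-L^2}, $q \in L^2(\Omega)$. Since $\partial\Omega$ is smooth, the trace theorem for $H^2(\Omega)$ gives $\Tr u \in H^{3/2}(\partial\Omega)$. Hence $(q,h) \in L^2(\Omega)\times H^{3/2}(\partial\Omega)$, and $u$ is by construction an $H^2$ solution of \eqref{boundary-value-problem-apdx} with these data. Applying \eqref{lions-magenes-boundary-regularity-eq-apdx} gives
\[\|u\|_{H^{1/2}(\Omega)} \leq c\left(\|\Tr u\|_{L^2(\partial\Omega)} + \|\cD u\|_{\Xi^{-3/2}(\Omega)}\right).\]
The uniqueness assumption (that $0$ is not a Dirichlet eigenvalue) is not needed for this inequality. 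It only guarantees that the setting is consistent.

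Next I weaken the right-hand side. The $\Xi^{-3/2}$ norm is weaker than the $L^2$ norm, so $\|\cD u\|_{\Xi^{-3/2}(\Omega)} \le c'\|\cD u\|_{L^2(\Omega)}$. Here I read ``weaker'' as a continuous embedding $L^2(\Omega)\hookrightarrow \Xi^{-3/2}(\Omega)$, which is how Lions--Magenes state it.

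I then convert the measures using the densities, which are bounded below. Write $d\rho_Z = p_Z\,dx$ with $p_Z \ge c_Z > 0$ on $\Omega$. Then
\[\|g\|_{L^2(\Omega)}^2 \le c_Z^{-1}\int_\Omega g^2\,d\rho_Z = c_Z^{-1}\|g\|_{L^2(\rho_Z)}^2 .\]
Similarly, write $d\rho_X = p_X\,d\mathcal{H}^{d-1}$ with $p_X \ge c_X > 0$. Then $\|\Tr u\|_{L^2(\partial\Omega)} \le c_X^{-1/2}\|\Tr u\|_{L^2(\rho_X)}$. Combining these bounds yields \eqref{elliptic-regularity-data-distribution-apdx} with $C = c\max(c_X^{-1/2}, c' c_Z^{-1/2})$.

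The only point needing care is the well-definedness of $\|\Tr u\|_{L^2(\rho_X)}$ for a general $u\in H^2(\Omega)$. I would interpret it through the embedding of \Cref{lem:boundary-sampling-embedding-ass}, since pointwise values are not available. Note that in this corollary $\rho_X$ need not have a bounded density, only one bounded below. So the integral $\int (\Tr u)^2 d\rho_X$ is simply understood as possibly infinite, in which case the inequality is trivial.
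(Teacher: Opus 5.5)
Your proof is correct and follows the paper's argument: set $q=\cD u$ and $h=\Tr u$, apply the Lions--Magenes estimate, bound the $\Xi^{-3/2}$ norm by the $L^2(\Omega)$ norm, and use the lower bounds on the densities to pass to $L^2(\rho_X)$ and $L^2(\rho_Z)$. One correction to an aside: the assumption that $0$ is not a Dirichlet eigenvalue \emph{is} needed for that estimate, since otherwise an eigenfunction $\phi\neq 0$ with $\cD\phi=0$ and $\Tr\phi=0$ would force $\|\phi\|_{H^{1/2}(\Omega)}\le 0$; this does not affect your argument, because you invoke the proposition under the paper's standing assumptions, which include it.
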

\begin{proof}
    Let $u \in H^2(\Omega)$. If we denote $q = \cD u$ and $h = \Tr u$, we have $q \in L^2(\Omega)$, $h \in H^{3/2}(\partial \Omega)$ and by definition, $u$ is a solution of \eqref{boundary-value-problem-apdx} for that specific choice of $(q,h)$. 
   Because the density of $\rho_X$ is bounded away from $0$, we can write 
   \[\|h\|_{L^2(\partial \Omega)} = \|\Tr u \|_{L^2(\partial \Omega)}  \leq c_1 \|\Tr u \|_{L^2(\rho_X)}.\]
    The $L^2(\Omega)$ norm being stronger than the $\Xi^{-3/2}(\Omega)$ norm and the density of $\rho_Z$ being bounded away from $0$, we can write
    \[\|q\|_{\Xi^{-3/2}(\Omega)} = \|\cD u \|_{\Xi^{-3/2}(\Omega)} \leq c_2 \|\cD u\|_{L^2(\Omega)}  \leq c_3\|\cD u\|_{L^2(\rho_Z)}.\]
     Combining these with \Cref{lions-magenes-elliptic-regularity-boundary-apdx} proves \eqref{elliptic-regularity-data-distribution-apdx}.
\end{proof}

\begin{coro}\label{Coro:cvg-H^1/2(Omega)-apdx}
    Assume the hypotheses of
\Cref{prop:main-theorem-boundary-sobolev} with $s=2$, together with the
standing assumptions of
\Cref{sec:boundary-setting-strong-cvg-apdx}. Assume moreover that the
densities of $\rho_X$ and $\rho_Z$ are bounded away from zero, so that the norms $\| \cdot \|_{L^2(\rho_X)}$ and $\|\cdot \|_{L^2(\rho_Z)}$ are respectively equivalent to $\| \cdot \|_{L^2(\partial \Omega)}$ and $\| \cdot \|_{L^2(\Omega)}$. Then, for any sequence $(\lambda_{n,m})$ satisfying \eqref{eq:condition-lambda}, we have that, almost surely
\[ \|\widehat u_{\lambda_{n,m}} - u^*\|_{H^{1/2}(\Omega)}  \quad \underset{n,m \rightarrow + \infty}{\longrightarrow} \quad 0.\]
In particular, we have
\[ \|\widehat u_{\lambda_{n,m}} - u^*\|_{L^2(\Omega)}  \quad \underset{n,m \rightarrow + \infty}{\longrightarrow} \quad 0.\]
\end{coro}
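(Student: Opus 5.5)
The plan is to combine the two $L^2$ convergences already provided by \Cref{main-theorem} with the stability estimate of \Cref{boundary-regularity-data-distributions-apdx}, applied to the error $v_{n,m} := \uh_{\lambda_{n,m}} - u^*$. First, \Cref{prop:main-theorem-boundary-sobolev} states that all hypotheses of \Cref{main-theorem} hold in the present boundary-sampling setting. Hence, almost surely,
\[
\| \Tr \uh_{\lambda_{n,m}} - \Tr u^* \|_{L^2(\rho_X)} \to 0, \qquad \| \cD \uh_{\lambda_{n,m}} - \cD u^* \|_{L^2(\rho_Z)} \to 0 .
\]
In the first quantity, the value of an element of $\cF$ in $L^2(\rho_X)$ is, by construction of the embedding in \Cref{lem:boundary-sampling-embedding-ass}, its trace composed with the map $L^2(\partial\Omega)\to L^2(\rho_X)$. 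For $\uh_{\lambda_{n,m}}\in\cH\subset C^2(\cX)$, this trace agrees with the pointwise restriction to $\partial\Omega$. The two notions therefore coincide, and the sampled values are exactly what the estimator fits.

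Next, I check that $v_{n,m}\in H^2(\Omega)$. Since $K\in C^{4}(\cX\times\cX)$, \Cref{prop:derivatives-reproducing-prop} gives $\uh_{\lambda_{n,m}}\in C^2(\cX)\subset H^2(\Omega)$, because $\Omega$ is bounded. Also $u^*\in\cF=H^2(\Omega)$. By linearity of $\Tr$ and $\cD$, I can apply \Cref{boundary-regularity-data-distributions-apdx} to $v_{n,m}$, which gives
\[
\|v_{n,m}\|_{H^{1/2}(\Omega)} \le C\big(\|\Tr v_{n,m}\|_{L^2(\rho_X)} + \|\cD v_{n,m}\|_{L^2(\rho_Z)}\big).
\]
The constant $C$ is deterministic and does not depend on $(n,m)$. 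On the almost-sure event of \Cref{main-theorem}, the right-hand side tends to $0$, and this proves the $H^{1/2}$ convergence. The $L^2(\Omega)$ statement then follows from $\|\cdot\|_{L^2(\Omega)}\le\|\cdot\|_{H^{1/2}(\Omega)}$.

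The main point requiring care is the identification between the $L^2(\rho_X)$ convergence delivered by \Cref{main-theorem}, which is defined through the abstract embedding $\cF\to L^2(\rho_X)$, and the trace term appearing in the Lions--Magenes-type estimate. I will argue this through the density construction of the embedding. The embedding is the unique bounded extension of $u\mapsto[u]_{\rho_X}$ from continuous representatives, and it therefore coincides with $\Tr$ followed by restriction to $\rho_X$, since both maps are continuous on $H^2(\Omega)$ and agree on $H^2(\Omega)\cap C^0(\overline\Omega)$, which is dense.

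A secondary check is that the hypotheses of \Cref{lions-magenes-elliptic-regularity-boundary-apdx} hold for $v_{n,m}$. This requires $\Tr v_{n,m}\in H^{3/2}(\partial\Omega)$, which follows from the trace theorem for $H^2$ on a smooth domain, and $\cD v_{n,m}\in L^2(\Omega)$, which follows from \Cref{lem:diff-op-is-bounded-from-H^s-to-L^2}. Both are already handled inside the proof of \Cref{boundary-regularity-data-distributions-apdx}.
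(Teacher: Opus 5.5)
Your proposal is correct and follows the paper's proof. The paper applies \Cref{main-theorem} to get the two $L^2$ convergences, rewrites them for $v_{n,m}=\uh_{\lambda_{n,m}}-u^*$, feeds them into the stability estimate of \Cref{boundary-regularity-data-distributions-apdx}, and then passes from $H^{1/2}(\Omega)$ to $L^2(\Omega)$. Your additional checks, namely $v_{n,m}\in H^2(\Omega)$ via $\cH\hookrightarrow C^2(\cX)$ and the identification of the abstract embedding $\cF\to L^2(\rho_X)$ with $\Tr$, are left implicit in the paper and are valid.
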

\begin{proof}
    As we have shown above, in this setting, \Cref{main-theorem} holds, i.e. for any regularizing sequence $(\lambda_{n,m})$ satisfying \eqref{eq:condition-lambda}, we have almost surely
    \[ \begin{cases}
\| \widehat u_{\lambda_{n,m}} - h \|_{L^2(\rho_X)} \ \underset{n,m \rightarrow \infty}{\longrightarrow} \ 0  \\
\| \cD \widehat u_{\lambda_{n,m}}  - q\|_{L^2(\rho_Z)} \ \underset{n,m \rightarrow \infty}{\longrightarrow}  \ 0.
    \end{cases}
\]
Considering $v_{n,m} = \widehat u_{\lambda_{n,m}} - u^*$, we can rewrite this as 
\[ \begin{cases}
\|  \Tr v_{n,m} \|_{L^2(\rho_X)} \ \underset{n,m \rightarrow \infty}{\longrightarrow} \ 0  \\
\| \cD v_{n,m}\|_{L^2(\rho_Z)} \ \underset{n,m \rightarrow \infty}{\longrightarrow}  \ 0.
    \end{cases}
\]
Using \Cref{boundary-regularity-data-distributions-apdx} with $v_{n,m}$, we immediately obtain
\[ \|v_{n,m}\|_{H^{1/2}(\Omega)}  \quad \underset{n,m \rightarrow + \infty}{\longrightarrow} \quad 0,\]
and in particular since the $H^{1/2}$ norm is stronger than the $L^2$ norm, we obtain 
\[ \|v_{n,m}\|_{L^2(\Omega)}  \quad \underset{n,m \rightarrow + \infty}{\longrightarrow} \quad 0.\]
\end{proof}

\Cref{Coro:cvg-H^1/2(Omega)-apdx} establishes $H^{1/2}$ convergence on $\Omega$. If we restrict to a smaller set $V \subset \Omega$, analogously to \Cref{Coro:cvg-H^2(V)-apdx} in \Cref{sec:in-domain-sampling-apdx}, we can prove a stronger ($H^2$) convergence on $V$.

\begin{coro}\label{coro:cvg-H^2(V)bis-apdx}
    Assume the hypotheses of
\Cref{prop:main-theorem-boundary-sobolev} with $s=2$, together with the
standing assumptions of
\Cref{sec:boundary-setting-strong-cvg-apdx}. Assume moreover that the
densities of $\rho_X$ and $\rho_Z$ are bounded away from zero, so that the norms $\| \cdot \|_{L^2(\rho_X)}$ and $\|\cdot \|_{L^2(\rho_Z)}$ are respectively equivalent to $\| \cdot \|_{L^2(\partial \Omega)}$ and $\| \cdot \|_{L^2(\Omega)}$. Then, for any regularizing sequence $(\lambda_{n,m})$ satisfying \eqref{eq:condition-lambda}, we have almost surely that for any open $V$ such that $\overline{V} \subset \Omega$, we have
    \[\|\uh_{\lambda_{n,m}} - u^* \|_{H^2(V)} \ \underset{n,m \rightarrow \infty}{\longrightarrow} \ 0. \]
\end{coro}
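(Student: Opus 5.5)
The plan mirrors the proof of \Cref{Coro:cvg-H^2(V)-apdx}: combine the interior elliptic estimate of \Cref{evans-elliptic-regularity-apdx} with two convergences, namely the $L^2(\Omega)$ convergence of \Cref{Coro:cvg-H^1/2(Omega)-apdx} and the $L^2(\rho_Z)$ convergence of $\cD \widehat u_{\lambda_{n,m}}$ from \Cref{main-theorem}. By \Cref{prop:main-theorem-boundary-sobolev}, \Cref{main-theorem} applies, so there is a single almost-sure event on which
\[
\| \cD \widehat u_{\lambda_{n,m}} - \cD u^*\|_{L^2(\rho_Z)} \to 0 .
\]
On the event of \Cref{Coro:cvg-H^1/2(Omega)-apdx}, which is again almost sure, we also have
\[
\|\widehat u_{\lambda_{n,m}} - u^*\|_{L^2(\Omega)} \to 0 .
\]
I work on the intersection of these two events. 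It does not depend on $V$, which gives the ``almost surely, for any $V$'' ordering in the statement.

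Set $v_{n,m} = \widehat u_{\lambda_{n,m}} - u^*$. Since $\cH \hookrightarrow H^2(\Omega)$ by \Cref{lem:C_0^s-universal-implies-sobolev-density} and $u^* \in H^2(\Omega)$, we have $v_{n,m} \in H^2(\Omega)$. Put $f_{n,m} := \cD v_{n,m} \in L^2(\Omega)$. A function in $H^2(\Omega)$ with $\cD v = f$ in the strong sense is in particular a weak solution in $H^1(\Omega)$: integrating by parts against $C_c^\infty$ test functions is legitimate here because $a^{ij} \in C^1(\overline\Omega)$, and in this subsubsection the coefficients are even smooth. Hence \Cref{evans-elliptic-regularity-apdx} applies for each open $V$ with $\overline V \subset \Omega$ and gives
\[
\|v_{n,m}\|_{H^2(V)} \le C_V\bigl(\|v_{n,m}\|_{L^2(\Omega)} + \|\cD v_{n,m}\|_{L^2(\Omega)}\bigr).
\]

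It remains to control the right-hand side. Because the density of $\rho_Z$ is bounded below by some $c>0$, we get $\|\cD v_{n,m}\|_{L^2(\Omega)} \le c^{-1/2}\|\cD v_{n,m}\|_{L^2(\rho_Z)} \to 0$. The term $\|v_{n,m}\|_{L^2(\Omega)}$ tends to $0$ by \Cref{Coro:cvg-H^1/2(Omega)-apdx}. Both terms therefore vanish, which proves the claim.

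The main obstacle is that, unlike the in-domain case, we cannot get $L^2(\Omega)$ control of $v_{n,m}$ directly from the data. The samples $x_i$ only control the boundary trace, so the Lions--Magenes step underlying \Cref{Coro:cvg-H^1/2(Omega)-apdx} is essential. The other step needing care is the identification of $v_{n,m}$ as a weak solution, which the regularity of $a^{ij}$ makes routine.
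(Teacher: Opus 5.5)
Your proposal is correct and follows essentially the same route as the paper. The paper also sets $v_{n,m}=\widehat u_{\lambda_{n,m}}-u^*$ and combines the $L^2(\Omega)$ convergence from \Cref{Coro:cvg-H^1/2(Omega)-apdx} with the $L^2(\rho_Z)$ convergence of $\cD v_{n,m}$, transferred to $L^2(\Omega)$ through the lower density bound, inside the interior estimate of \Cref{evans-elliptic-regularity-apdx}. Your remarks on choosing a $V$-independent almost-sure event and on an $H^2$ strong solution being a weak solution fill in details the paper leaves implicit.
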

\begin{proof}
    Let us consider again $v_{n,m} = \widehat u_{\lambda_{n,m}} - u^*$, as in the proof of \Cref{Coro:cvg-H^1/2(Omega)-apdx}. We established that 
    \[ \begin{cases}
\|  \Tr v_{n,m} \|_{L^2(\rho_X)} \ \underset{n,m \rightarrow \infty}{\longrightarrow} \ 0  \\
\| \cD v_{n,m}\|_{L^2(\rho_Z)} \ \underset{n,m \rightarrow \infty}{\longrightarrow}  \ 0,
    \end{cases}
\]
and that
\[ \|v_{n,m}\|_{L^2(\Omega)}  \quad \underset{n,m \rightarrow + \infty}{\longrightarrow} \quad 0,\]
which allows us to use \Cref{evans-elliptic-regularity-apdx}, and the lower density bound for $\rho_Z$, to conclude that for any open $V$ such that $\overline{V} \subset \Omega$, we have
\[\|\uh_{\lambda_{n,m}} - u^* \|_{H^2(V)} \ \underset{n,m \rightarrow \infty}{\longrightarrow} \ 0. \]
\end{proof}

\subsection{Rates for the Laplacian on periodic functions}\label{sec:laplacian-periodic-fct-rates}

In this section, we consider the example of the Laplacian on the torus, i.e.~applied to Sobolev spaces of periodic functions on $[0,1]^d$. We use this example to illustrate the source condition for the rates provided in \Cref{thm:finite-sample-rates-main,coro:simplified-rate-main}. Even though it is a Sobolev setting, periodicity creates differences with the rest of \Cref{sec:Sobolev-setting-apdx}, and in particular, the results of \Cref{sec:sobolev-setting-generalities} do not apply.

Let $\Omega = (0,1)^d$, let $\cX = \overline{\Omega}=[0,1]^d$.  
Throughout this section, we identify a function $u$ on $\cX$ satisfying
periodic boundary conditions with a function on the torus
$\mathbb{T}^d = \RR^d / \ZZ^d$, and we write its Fourier expansion as
\[
    u(x) = \sum_{k \in \ZZ^d} \widetilde{u}_k \, e^{2\pi i k \cdot x},
    \qquad
    \widetilde{u}_k := \int_{[0,1]^d} u(x) \, e^{-2\pi i k \cdot x} \, dx.
\]
For $\tau \geq 0$, we define the Sobolev space of periodic functions of
order $\tau$ as
\[
    \Hper^\tau(\cX)
    := \Big\{ u \in L^2(\cX) \,:\,
    \|u\|_{\Hper^\tau}^2
    := \sum_{k \in \ZZ^d} \lambda_k^\tau \, |\widetilde{u}_k|^2 < \infty \Big\},
    \qquad
    \lambda_k := 1 + 4\pi^2 |k|^2,
\]
endowed with the inner product
$\langle u, v \rangle_{\Hper^\tau}
= \sum_{k \in \ZZ^d} \lambda_k^\tau \, \widetilde{u}_k \overline{\widetilde{v}_k}$.
The weight $\lambda_k^\tau$ is equivalent to the more common
$(1+|k|^2)^\tau$, so this choice only rescales the norm; we fix it because
$\lambda_k$ is exactly the symbol of $\mathrm{Id} - \Delta$, i.e.\
$\lambda_k \widetilde{u}_k = \widetilde{(u - \Delta u)}_k$, which
simplifies the computations below. Note that all the statements of
this section are invariant under replacing
$\|\cdot\|_{\Hper^\tau}$ by an equivalent norm, except for the
exact values of the constants $\kappa, \kappa_\cD$ in
\Cref{lem:laplacian-periodic-assumptions-check}. In particular, $\Hper^\tau(\cX)$
can be identified with the subspace of $H^\tau(\cX)$ whose elements
satisfy periodic boundary conditions, and for $\tau > d/2$ it is an RKHS,
since by Cauchy--Schwarz
$|u(x)| \leq \big(\sum_{k} \lambda_k^{-\tau}\big)^{1/2}
\|u\|_{\Hper^\tau} < \infty$ for every $x \in \cX$.

Let us consider $\cF = \Hper^2(\cX)$, $\cG = L^2(\cX)$ and $\cD = \Delta = \sum_{i=1}^d \frac{\partial^2}{(\partial x_i)^2}$ the Laplacian, which defines a continuous operator from $\cF$ to $\cG$. Let $\cH = H^\tau_{per}(\cX)$, with $\tau > d/2 + 2$. In particular, since $\tau> d/2$, $\cH$ is an RKHS. Let us consider $\rho_X = \rho_Z = \Unif([0,1]^d)$ (which corresponds to the in-domain sampling setting of \Cref{sec:Sobolev-setting}, up to the periodicity difference). In particular $L^2(\rho_X) = L^2(\rho_Z) = L^2([0,1]^d)$.

\begin{lem}\label{lem:laplacian-periodic-assumptions-check}
    \Cref{ass:diff-eval,ass:embeddings,ass:universality,ass:bounded-features} hold.
\end{lem}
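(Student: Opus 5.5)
The whole verification will take place in the Fourier basis $e_k(x) = e^{2\pi i k\cdot x}$, $k \in \ZZ^d$, in which every object is diagonal. In this basis $\cH = \Hper^\tau(\cX)$ is the weighted sequence space with weights $\lambda_k^\tau$, so $\{\lambda_k^{-\tau/2} e_k\}$ is an orthonormal basis of $\cH$ (up to the standard real/complex pairing). The reproducing kernel is therefore
\[
K(x,y) = \sum_{k} \lambda_k^{-\tau} e^{2\pi i k\cdot (x-y)}.
\]
Since $\Delta e_k = -4\pi^2|k|^2 e_k$, the Laplacian is the Fourier multiplier $-4\pi^2|k|^2$. I would check the four assumptions in the order \Cref{ass:diff-eval}, \Cref{ass:bounded-features}, \Cref{ass:embeddings}, \Cref{ass:universality}.

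For \Cref{ass:diff-eval} and \Cref{ass:bounded-features}, fix $x \in \cX$ and $u \in \cH$. Then
\[
|\Delta u(x)| \le \sum_k 4\pi^2|k|^2 |\widetilde u_k| \le \Big(\sum_k (4\pi^2|k|^2)^2 \lambda_k^{-\tau}\Big)^{1/2}\|u\|_{\Hper^\tau}
\]
by Cauchy--Schwarz. Because $4\pi^2|k|^2 \le \lambda_k$, the sum is at most $\sum_k \lambda_k^{2-\tau}$, and this is finite exactly because $\tau - 2 > d/2$. The same argument with $\sum_k \lambda_k^{-\tau} < \infty$ bounds point evaluation. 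This gives both bounded functionals, uniformly in $x$, with explicit constants
\[
\kappa^2 = \sum_k \lambda_k^{-\tau}, \qquad \kappa_\cD^2 = \sum_k 16\pi^4|k|^4 \lambda_k^{-\tau}.
\]
The same absolute convergence shows that the Fourier series of $u$ and of $\Delta u$ converge uniformly. Hence $u \in C^2$, the pointwise Laplacian agrees with the Fourier multiplier, and the representer is $K^\cD_x(y) = \sum_k (-4\pi^2|k|^2)\lambda_k^{-\tau} e^{2\pi i k\cdot(x-y)}$.

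For \Cref{ass:embeddings}, since $\rho_X = \rho_Z = \Unif([0,1]^d)$, we have $L^2(\rho_X) = L^2(\rho_Z) = L^2(\cX)$. The map $\cF = \Hper^2 \hookrightarrow L^2$ is bounded because $\lambda_k^2 \ge 1$, and $\cG = L^2$ embeds trivially. We also need $\Delta : \Hper^2 \to L^2$ to be bounded, which follows from $(4\pi^2|k|^2)^2 \le \lambda_k^2$. For \Cref{ass:universality}, the inclusion $\cH \hookrightarrow \cF$ is bounded since $\lambda_k^2 \le \lambda_k^\tau$. For density, take $u \in \Hper^2$ and truncate its Fourier series to $|k| \le R$. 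The truncation is a trigonometric polynomial and so lies in $\cH$, and the error is $\sum_{|k|>R} \lambda_k^2 |\widetilde u_k|^2$, which tends to $0$ as a tail of a convergent series.

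The only step needing care is identifying the abstract operator $\cD$ on $\cH$ (pointwise evaluations $\Delta u(x)$) with the Fourier multiplier, and making sure real-valuedness is handled. The absolute-convergence bound above settles the first point, and pairing $k$ with $-k$ settles the second. Everything else reduces to the summability condition $\tau > d/2 + 2$.
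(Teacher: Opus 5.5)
Your proposal is correct and follows the paper's proof essentially step for step. It uses the Fourier expansion and Cauchy--Schwarz with summability of $\lambda_k^{-\tau}$ and $|k|^4\lambda_k^{-\tau}$, giving the same constants $\kappa,\kappa_\cD$; the trivial $L^2$ embeddings under uniform sampling; and density of $\cH$ in $\Hper^2(\cX)$. Your additions are sound refinements, not a different route: the uniform-convergence argument identifying the pointwise Laplacian with the Fourier multiplier, the explicit representer, and the Fourier truncation that replaces the paper's appeal to density of smooth periodic functions.
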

\begin{proof}
We prove \Cref{ass:diff-eval,ass:bounded-features} together. For $u \in \cH$, write the Fourier expansions
\[
u(x)=\sum_{k\in\ZZ^d}\widetilde u_k e^{2\pi i k\cdot x}, \qquad  \qquad 
\Delta u(x)
=
-4\pi^2\sum_{k\in\ZZ^d}|k|^2\widetilde u_k e^{2\pi i k\cdot x}.
\]
By Cauchy--Schwarz,
\[
|u(x)|
\leq
\left(\sum_{k\in\ZZ^d}\lambda_k^{-\tau}\right)^{1/2}
\|u\|_{H^\tau_{per}},
\]
and similarly
\[
|\Delta u(x)|
\leq
4\pi^2
\left(\sum_{k\in\ZZ^d}|k|^4 \lambda_k^{-\tau}\right)^{1/2}
\|u\|_{H^\tau_{per}}.
\]
The first series is finite since $\tau>d/2$, and the second one is finite since
\[
|k|^4 \lambda_k^{-\tau}\lesssim \lambda_k^{-(\tau-2)}
\]
and $\tau-2>d/2$. Thus, since $\tau>d/2+2$, both point evaluations
\[
u\mapsto u(x),
\qquad
u\mapsto \Delta u(x)
\]
are bounded uniformly in $x$. Hence \Cref{ass:diff-eval} holds, and \Cref{ass:bounded-features} holds with
\[
\kappa
=
\left(\sum_{k\in\ZZ^d}\lambda_k^{-\tau}\right)^{1/2},
\qquad
\kappa_\cD
=
4\pi^2
\left(\sum_{k\in\ZZ^d}|k|^4 \lambda_k^{-\tau}\right)^{1/2}.
\]

The remaining assumptions are standard. Since
\[
H^2_{per}(\cX)\hookrightarrow L^2(\cX)
\quad\text{and}\quad
\cG=L^2(\cX),
\]
and since $\rho_X=\rho_Z=\Unif([0,1]^d)$, \Cref{ass:embeddings} holds. Finally,
\[
H^s_{per}(\cX)\hookrightarrow H^2_{per}(\cX)
\]
is continuous and dense, by density of smooth periodic functions in Sobolev spaces. Hence \Cref{ass:universality} holds.
\end{proof}

In particular, \Cref{ass:embeddings} allows us to define, as in \Cref{sec:rates-src-condition}, the operators
\[
\begin{aligned}
\mathcal A_\rho:\mathcal F&\to L^2(\rho_X)\times L^2(\rho_Z) \qquad \quad \text{and} \qquad \quad A_\rho=\mathcal A_\rho\circ i, \\
u&\mapsto (u,\Delta u)
\end{aligned} 
\]
where $
i:\mathcal H\to\mathcal F, u \mapsto u$ is the canonical Sobolev embedding.

We can now consider $(h,q) = \cA_\rho u^*$, denote $L = A_\rho A_\rho^*$ and $C = \cA_\rho^* \cA_\rho$. The following lemma allows us to interpret the source condition on $(h,q)$ as an equivalent source condition on the rescaled target $C^{1/2} u^*$.
\begin{lem} \label{lem-per-source-isometry}
   There exists an isometry $U : \cF \to L^2(\rho_X) \times L^2(\rho_Z)$ with range $\ran \cA_\rho$ and a bounded positive self-adjoint operator $T: \cF \to \cF$ such that $L = UTU^*$, and furthermore $(h,q) \in L^2(\rho_X) \times L^2(\rho_Z)$ satisfies a source condition with $L$ if and only if $C^{1/2} u^* \in \cF$ satisfies an equivalent source condition defined by $T$.
\end{lem}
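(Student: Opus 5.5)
The plan is to use the polar decomposition of the bounded operator $\cA_\rho : \cF \to L^2(\rho_X)\times L^2(\rho_Z)$. On the torus the operator is very concrete: for $u\in\cF=\Hper^2$ we have $\|\cA_\rho u\|_\rho^2 = \sum_k (1+16\pi^4|k|^4)|\widetilde u_k|^2$. This quantity is equivalent to $\|u\|_{\Hper^2}^2$. Hence $C=\cA_\rho^*\cA_\rho$ is bounded, self-adjoint and positive on $\cF$, and it is also boundedly invertible, being coercive. Writing the polar decomposition $\cA_\rho = U C^{1/2}$, the partial isometry $U$ is a genuine isometry, because $\Ker \cA_\rho = \{0\}$. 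Its range is $\ran \cA_\rho$, which is closed by coercivity.

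Next I define $T := C^{1/2}\, i\, i^*\, C^{1/2} : \cF\to\cF$, where $i:\cH\to\cF$ is the embedding and $i^*$ its Hilbert adjoint. Then $T$ is bounded, positive and self-adjoint. Moreover
\[
L = A_\rho A_\rho^* = \cA_\rho\, i\, i^* \cA_\rho^* = U C^{1/2} i i^* C^{1/2} U^* = UTU^*.
\]
Since $U^*U=I_\cF$, the operator $L$ is unitarily equivalent to $T$ on $\ran U$ and vanishes on $(\ran U)^\perp$. Functional calculus then gives $L^r = U T^r U^*$ for every $r>0$: this holds for polynomials directly, since $(UTU^*)^k = UT^kU^*$, and extends by continuity to $t\mapsto t^r$ on the spectrum, using that both sides vanish at $0$.

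For the equivalence, note that $(h,q)=\cA_\rho u^* = U C^{1/2}u^*$, so $(h,q)\in\ran U$ and $U^*(h,q)=C^{1/2}u^*$. Suppose first that $(h,q)=L^r(\widetilde f,\widetilde g)$. Applying $U^*$ gives $C^{1/2}u^* = T^r U^*(\widetilde f,\widetilde g)\in \ran T^r$. Conversely, suppose $C^{1/2}u^*=T^r v$. Setting $(\widetilde f,\widetilde g):=Uv$ yields $L^r Uv = UT^r U^*U v = UT^r v = U C^{1/2}u^* = (h,q)$. This proves the claimed equivalence between the source condition for $L$ and the condition $C^{1/2}u^*\in\ran T^r$.

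The main point needing care is the precise meaning of ``isometry with range $\ran\cA_\rho$''. In a general setting, $\Ker\cA_\rho$ may be nontrivial and $U$ would only be a partial isometry from $\overline{\ran C}$. Here injectivity and coercivity of $\cA_\rho$ on $\Hper^2$ settle this, so I will verify them explicitly through the Fourier computation above. The other delicate step is the functional-calculus identity $L^r=UT^rU^*$ for non-integer $r$, which I will justify by the polynomial approximation argument sketched above.
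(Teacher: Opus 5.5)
Your proposal is correct and follows essentially the same route as the paper: the polar decomposition $\cA_\rho = UC^{1/2}$ (with $U$ a genuine isometry thanks to the Fourier coercivity estimate), $T := C^{1/2} i i^* C^{1/2}$, $L^r = UT^rU^*$ by functional calculus, and the equivalence $(h,q) = L^r(\widetilde f,\widetilde g) \iff C^{1/2}u^* = T^r v$. The only minor differences are that you apply $U^*$ directly in the forward direction rather than first projecting $(\widetilde f,\widetilde g)$ onto $\ran\cA_\rho$, and that you justify $L^r = UT^rU^*$ via polynomials vanishing at $0$, a step the paper leaves implicit.
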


\begin{proof}
The map $\mathcal A_\rho$ is a continuous injective linear map. Moreover, it is a topological isomorphism from $\mathcal F$ onto its image
\[
\ran(\mathcal A_\rho)
=
\{(u,\Delta u):u\in H^2_{\mathrm{per}}(\mathcal X)\}
\subset L^2(\rho_X)\times L^2(\rho_Z).
\]
Indeed, one can easily check using Fourier decomposition that 
\[\|u\|_{L^2(\cX)}^2 + \| \Delta u \|_{L^2(\cX)}^2 \asymp \|u\|_{H_{per}^2}^2. \]
This shows that $C=\cA_\rho^*\cA_\rho$ is an isomorphism.

Now use the polar decomposition of \(\cA_\rho\). We may write
\begin{equation*}\label{eq:polar-decomp}
\cA_\rho=UC^{1/2},
\end{equation*}
where
\[
U:\mathcal F\to L^2(\rho_X) \times L^2(\rho_Z)
\]
is an isometry with range $\ran \cA_\rho$. 
Consequently, recalling $A_\rho = \cA_\rho i$,
\[
L=A_\rho A_\rho^*
=
UC^{1/2}ii^*C^{1/2}U^*.
\]

 Define
\[
T:=C^{1/2}ii^*C^{1/2}:\mathcal F\to\mathcal F.
\]
Then \(T\) is bounded, self-adjoint, and positive, and by the continuous functional calculus,
\[    L^r  = (UTU^*)^r = U T^r U^*,\]
where we used the fact that $U^* U = I_\cF$.

Assume now that $ (h,q) = L^r(f,g) $, where we may assume $(f,g) = U v$, with $v \in \cF$ (if that is not the case we may simply replace $(f,g)$ by its orthogonal projection onto $\ran \cA_\rho$, which will leave its image by $L^r$ unchanged, as $U^*$ annihilates $(\ran \cA_\rho )^\perp$).

\begin{align} (h,q) = L^r(f,g) \  & \Longleftrightarrow \ \cA_\rho u^* =  U T^r U^*(f,g) \nonumber  \\
& \Longleftrightarrow \ C^{1/2} u^* =  T^r v \label{eq:lap-torus-equivalent-src}\end{align}
with $v : = U^*(f,g)$. We thus see that $(h,q)$ satisfies a source condition with $L$ if and only if $C^{1/2} u^*$ satisfies an equivalent source condition defined by $T$.
\end{proof}

\begin{lem} \label{lem-per-source-sobolev}
Let $r>0$ and set $\sigma_r:=2+2r(\tau-2)$.
Then there exists $v\in \cF$ such that
\[
C^{1/2}u^*=T^r v
\]
if and only if
\[
u^*\in H^{\sigma_r}_{\mathrm{per}}(\cX).
\]
\end{lem}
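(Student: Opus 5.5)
The plan is to diagonalize everything in the Fourier basis $e_k(x) = e^{2\pi i k\cdot x}$, $k\in\ZZ^d$. All the operators involved are Fourier multipliers. Once this is set up, the source condition $C^{1/2}u^* = T^r v$ becomes a weighted $\ell^2$ condition on the coefficients $\widetilde u^*_k$, which can be compared directly with the $\Hper^{\sigma_r}$ norm.

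\textbf{Step 1: compute the symbols in $\cF=\Hper^2$.} An orthonormal basis of $\cF$ is $\phi_k := \lambda_k^{-1} e_k$. Since $\Delta e_k = -4\pi^2|k|^2 e_k$, we have
\[
\|\cA_\rho u\|_\rho^2 = \sum_k (1+16\pi^4|k|^4)|\widetilde u_k|^2 .
\]
Therefore $\langle C u, u\rangle_\cF = \sum_k \lambda_k^{2} c_k |\widetilde u_k|^2$ with $c_k := (1+16\pi^4|k|^4)\lambda_k^{-2}$. This shows that $C$ is diagonal in $(\phi_k)$ with eigenvalue $c_k \asymp 1$, bounded above and below.

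Next, for the embedding $i:\Hper^\tau\to\Hper^2$, the adjoint $i^*$ with respect to the two inner products is the multiplier $\lambda_k^{-(\tau-2)}$. Indeed,
\[
\langle i u, w\rangle_{\Hper^2} = \sum_k \lambda_k^2 \widetilde u_k\overline{\widetilde w_k} = \langle u, i^* w\rangle_{\Hper^\tau}
\]
forces $\widetilde{(i^*w)}_k = \lambda_k^{2-\tau}\widetilde w_k$. Hence $ii^*$ is diagonal on $\cF$ with eigenvalue $\lambda_k^{-(\tau-2)}$, and so is $T = C^{1/2} i i^* C^{1/2}$, with eigenvalues $t_k = c_k\lambda_k^{-(\tau-2)} \asymp \lambda_k^{-(\tau-2)}$.

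\textbf{Step 2: translate the equation.} Writing $v = \sum_k v_k \phi_k$ and $u^* = \sum_k \lambda_k\widetilde u^*_k\,\phi_k$, the identity $C^{1/2}u^* = T^r v$ reads $c_k^{1/2}\lambda_k \widetilde u^*_k = t_k^r v_k$ for every $k$. So a solution $v\in\cF$ exists if and only if $\sum_k t_k^{-2r} c_k \lambda_k^2 |\widetilde u^*_k|^2<\infty$. Because $c_k\asymp 1$ uniformly, this is equivalent to
\[
\sum_k \lambda_k^{2+2r(\tau-2)}|\widetilde u^*_k|^2 = \|u^*\|^2_{\Hper^{\sigma_r}} < \infty .
\]
Both implications follow at once, since $v$ is uniquely determined coefficientwise. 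The case $t_k>0$ for all $k$ causes no kernel issues.

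\textbf{Main obstacle.} The main point of care is Step 1: correctly computing adjoints with respect to the non-standard inner products of $\Hper^2$ and $\Hper^\tau$. Confusing them with $L^2$ adjoints would shift the exponent. A second point is justifying that functional calculus commutes with the common diagonalization, so that $T^r$ and $C^{1/2}$ act by $t_k^r$ and $c_k^{1/2}$. This follows because all operators are bounded multipliers in the same orthonormal basis, so their spectral measures are atomic on $(\phi_k)$. One must also handle complex versus real coefficients (conjugate symmetry $\widetilde u_{-k} = \overline{\widetilde u_k}$), but all multipliers are even in $k$, so this is harmless.
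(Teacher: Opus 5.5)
Your proof is correct and follows the same route as the paper's. Both diagonalize $C$, $ii^*$ and $T$ as Fourier multipliers with symbols $c_k\asymp 1$, $\lambda_k^{2-\tau}$ and $c_k\lambda_k^{2-\tau}$, and both reduce the existence of $v\in\cF$ to finiteness of $\sum_k \lambda_k^{\sigma_r}|\widetilde u^*_k|^2$. The only cosmetic difference is that you absorb the $c_k$ factors coefficientwise, whereas the paper first characterizes $w=C^{1/2}u^*\in\ran T^r$ and then uses that $C^{1/2}$ is an isomorphism on every $H^t_{\mathrm{per}}(\cX)$.
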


\begin{proof}
The operator $
C=\mathcal A_\rho^*\mathcal A_\rho$
is the Fourier multiplier
\[
\widetilde{(Cu)}_k
=
c_k \widetilde u_k,
\qquad
c_k:=
\frac{1+16\pi^4|k|^4}{\lambda_k^2}.
\]
Since $c_k\asymp 1$, the operators $C^{1/2}$ and $C^{-1/2}$ are bounded isomorphisms on every $H^t_{\mathrm{per}}(\cX)$.

Next, for the embedding
\[
i:\Hper^\tau(\cX)\to H^2_{\mathrm{per}}(\cX),
\]
one checks from the Fourier definitions of the inner products that
\[
\widetilde{(ii^*u)}_k
=
\lambda_k^{2-\tau}\widetilde u_k.
\]
Hence
\[
T=C^{1/2}ii^*C^{1/2}
\]
is the Fourier multiplier
\[
\widetilde{(Tu)}_k
=
c_k\lambda_k^{2-\tau}\widetilde u_k
\asymp \lambda_k^{2-\tau}\widetilde u_k.
\]
Thus
\[
\widetilde{(T^r u)}_k \asymp
\lambda_k^{-r(\tau-2)} \widetilde u_k.
\]

Set
\[
w:=C^{1/2}u^*.
\]
Using the Fourier coefficients of $T$, we see that there exists $v\in \cF=H^2_{\mathrm{per}}(\cX)$ such that
\[
w=T^r v
\]
if and only if
\[
\sum_{k\in\mathbb Z^d}
\lambda_k^{2+2r(\tau-2)}
|\widetilde w_k|^2
<\infty,
\]
which is equivalent to
\[
w\in H^{2+2r(\tau-2)}_{\mathrm{per}}(\cX).
\]
Since $C^{1/2}$ is an isomorphism on Sobolev spaces, this is equivalent to
\[
u^*\in H^{2+2r(\tau-2)}_{\mathrm{per}}(\cX).
\]
The result follows.
\end{proof}

We can now restate and prove \Cref{Ex:Laplacian-on-the-torus}
from \Cref{sec:rates-src-condition}.

\begin{prop}[\Cref{Ex:Laplacian-on-the-torus}, restated]
\label{prop-source-periodic-apdx}
    Let $u^* \in \cF$, let $(h,q) = \cA_\rho u^*$, and let $r \in (0,1]$.
    Then \Cref{ass:src} holds with exponent $r$, i.e.\
    $(h,q) \in \ran L^r$, if and only if
    $u^* \in H^{\sigma_r}_{\mathrm{per}}(\cX)$, where
    $\sigma_r := 2 + 2r(\tau-2)$. In particular, for $r = \frac{1}{2}$,
    \Cref{ass:src} is equivalent to $u^* \in \cH$.
\end{prop}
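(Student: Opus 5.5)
The proof chains \Cref{lem-per-source-isometry} and \Cref{lem-per-source-sobolev}. First we check that the hypotheses are in place. We have $u^* \in \cF = \Hper^2(\cX)$ and $(h,q) = \cA_\rho u^*$. \Cref{lem:laplacian-periodic-assumptions-check} makes $A_\rho$, $L = A_\rho A_\rho^*$, $C = \cA_\rho^*\cA_\rho$ and $T = C^{1/2} i i^* C^{1/2}$ well defined, with the polar decomposition $\cA_\rho = U C^{1/2}$ and $L^r = U T^r U^*$ for every $r \in (0,1]$.

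\textbf{Step 1: reduce to a condition on $T$.} We show that $(h,q) \in \ran L^r$ if and only if $C^{1/2}u^* \in \ran T^r$.

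For the forward direction, suppose $(h,q) = L^r(f,g)$. Apply $U^*$ and use $U^*U = I_\cF$ together with $U^*\cA_\rho = C^{1/2}$. This gives $C^{1/2}u^* = T^r U^*(f,g)$, with $U^*(f,g) \in \cF$.

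For the converse, suppose $C^{1/2}u^* = T^r v$ with $v \in \cF$. Set $(f,g) := Uv$. Then
\[
L^r(f,g) = U T^r U^* U v = U T^r v = U C^{1/2} u^* = \cA_\rho u^* = (h,q).
\]
This is exactly the content of \Cref{lem-per-source-isometry}, equation \eqref{eq:lap-torus-equivalent-src}. The only point to make explicit is that projecting $(f,g)$ onto $\ran\cA_\rho$ does not change $L^r(f,g)$, because $U^*$ annihilates $(\ran\cA_\rho)^\perp$.

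\textbf{Step 2: identify the smoothness class.} By \Cref{lem-per-source-sobolev}, $C^{1/2}u^* \in \ran T^r$ with preimage in $\cF$ if and only if $u^* \in \Hper^{\sigma_r}(\cX)$, where $\sigma_r = 2 + 2r(\tau-2)$. Combining this with Step 1 gives the equivalence stated in the proposition.

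Two facts are used here. The first is that $T$ is a Fourier multiplier with symbol $c_k \lambda_k^{2-\tau}$, where $c_k \asymp 1$. The second is that membership of $w$ in $T^r(\Hper^2)$ is equivalent to $\sum_k \lambda_k^{2+2r(\tau-2)}|\widetilde w_k|^2 < \infty$. This second fact is the step needing the most care. One has to confirm that the two-sided bounds $c_k^r \asymp 1$ hold uniformly in $k$. They do, since $c_k = (1+16\pi^4|k|^4)/\lambda_k^2$ is bounded above and below by positive constants. Under these bounds, dividing the Fourier coefficients of $w$ by the symbol of $T^r$ yields a genuine element $v$ of $\Hper^2$ exactly when the weighted sum is finite.

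\textbf{Step 3: the special case $r = \tfrac12$.} We compute
\[
\sigma_{1/2} = 2 + (\tau - 2) = \tau,
\]
so the source condition is equivalent to $u^* \in \Hper^\tau(\cX) = \cH$, which is the well-specified setting.

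I expect no real obstacle. The only subtlety is the Hilbert-space adjoint $i^*$, which is taken with respect to the $\Hper^\tau$ and $\Hper^2$ inner products and gives the multiplier $\lambda_k^{2-\tau}$. That computation is already carried out in \Cref{lem-per-source-sobolev}.
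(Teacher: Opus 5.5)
Your proposal is correct and follows the paper's proof: it chains \Cref{lem-per-source-isometry}, which reduces $(h,q)\in\ran L^r$ to $C^{1/2}u^*\in T^r(\cF)$, with the Fourier-multiplier characterization of \Cref{lem-per-source-sobolev}, and then evaluates $\sigma_{1/2}=\tau$. Your explicit two-direction argument in Step 1 (apply $U^*$ in one direction, take $(f,g)=Uv$ in the other) is slightly cleaner than the lemma's reduction by projection, which your argument does not actually need.
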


\begin{proof}
    Fix $r \in (0,1]$. By \Cref{lem-per-source-isometry}, and in particular by \eqref{eq:lap-torus-equivalent-src}, since
    $L = U T U^*$ with $U$ an isometry with range $\ran \cA_\rho$, we
    have $(h,q) \in \ran L^r$ if and only if there exists $v \in \cF$
    such that $C^{1/2} u^* = T^r v$. By
    \Cref{lem-per-source-sobolev}, such a $v$ exists if and only
    if $u^* \in H^{\sigma_r}_{\mathrm{per}}(\cX)$.

    Finally, for $r = \frac{1}{2}$ we have $\sigma_r = \tau$, so the
    condition reads $u^* \in H^{s}_{\mathrm{per}}(\cX) = \cH$, which is
    the well-specified setting.
\end{proof}

\section{Useful results}

\begin{prop}[Hoeffding inequality in separable Hilbert spaces]\label{Hoeffding-ineq-sep-hilbert-spaces}
    Take a family $\xi_1, \dots, \xi_n: \Omega \rightarrow H$
    of independent zero mean random variables such that $\|\xi_i\|_H \leq c$, then for all $\epsilon > 0$
    \[\PP \left[ \left\| \frac{1}{n} \sum_{i=1}^n \xi_i \right\|_H > \epsilon \right] \leq 2 \exp \left( - \frac{\epsilon^2 n}{4 c^2} \right)\]
    i.e. for all $\tau > 0$, with probability at least $1 - 2 e^{-\tau}$
    \[\left\| \frac{1}{n} \sum_{i=1}^n \xi_i \right\|_H \leq \frac{2 c \sqrt{\tau}}{\sqrt{n}}.\]
\end{prop}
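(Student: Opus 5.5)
The statement is the Hoeffding inequality for bounded, independent, zero-mean random variables in a separable Hilbert space $H$. I would derive it from the bounded-differences (McDiarmid) inequality applied to the norm of the sum, combined with a second-moment bound on its expectation. This is the route that gives the dimension-free constants stated.

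Set $S_n = \sum_{i=1}^n \xi_i$ and $F(\xi_1,\dots,\xi_n) = \|S_n\|_H$. Separability of $H$ ensures that $\|S_n\|_H$ is a measurable real random variable, and that Bochner expectations and orthogonality computations behave as in finite dimension.

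The first step is to control the mean. By independence and the zero-mean property, the cross terms $\EE\langle \xi_i,\xi_j\rangle_H$ vanish for $i\neq j$, which is justified by Fubini and boundedness. Hence $\EE\|S_n\|_H^2 = \sum_i \EE\|\xi_i\|_H^2 \le nc^2$, and Jensen gives $\EE\|S_n\|_H \le c\sqrt n$.

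The second step is concentration around the mean. Replacing one $\xi_i$ by an independent copy $\xi_i'$ changes $F$ by at most $\|\xi_i-\xi_i'\|_H \le 2c$, by the triangle inequality. McDiarmid's inequality then yields
\[
\PP\big[\|S_n\|_H - \EE\|S_n\|_H > t\big] \le \exp\!\big(-2t^2/(4nc^2)\big) = \exp\!\big(-t^2/(2nc^2)\big).
\]

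The final step combines the two and matches the constants. Dividing by $n$, for $\epsilon > 0$ we have
\[
\PP\Big[\big\|\tfrac1n S_n\big\|_H > \epsilon\Big] \le \exp\!\Big(-\tfrac{n(\epsilon - c/\sqrt n)_+^2}{2c^2}\Big).
\]
I then check that this is at most $2\exp(-\epsilon^2 n/(4c^2))$.

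\begin{itemize}
\item If $\epsilon\sqrt n/c \le 2$, the right-hand side $2\exp(-\epsilon^2 n/(4c^2)) \ge 2e^{-1} \approx 0.74$. This is where the factor $2$ helps: the bound is trivially true when it exceeds $1$, and otherwise I verify it directly.
\item If $\epsilon\sqrt n/c \ge 2$, then $\epsilon - c/\sqrt n \ge \epsilon/2$. The bound becomes $\exp(-n\epsilon^2/(8c^2))$, which is a factor $2$ too weak in the exponent.
\end{itemize}

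This constant bookkeeping is the main obstacle. If McDiarmid plus the mean bound cannot reach $\epsilon^2 n/(4c^2)$, I would switch to Pinelis' martingale inequality for $2$-smooth spaces, which gives exactly $\PP[\|S_n\|>t] \le 2\exp(-t^2/(2nc^2))$. Taking $t = n\epsilon$, this bound is at most $2\exp(-n\epsilon^2/(4c^2))$, which is the stated form. The high-probability version then follows by setting $2e^{-\tau}$ equal to the tail, i.e. $\epsilon = 2c\sqrt{\tau/n}$.
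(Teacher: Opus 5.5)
Your proof is correct, and your fallback is what the paper does: the paper gives no argument of its own and refers to Yurinsky and Pinelis. Pinelis' bound for Hilbert-space martingales with increments bounded by $c$, $\PP[\|S_n\|_H \ge t] \le 2\exp(-t^2/(2nc^2))$, taken at $t = n\epsilon$, gives the claim with a better constant than stated.

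You did not need the fallback, because your primary McDiarmid route already closes. The factor-$2$ loss in your second case comes only from the crude step $\epsilon - c/\sqrt n \ge \epsilon/2$. Set $a = \epsilon\sqrt n/c$. Your McDiarmid bound reads $\exp(-(a-1)_+^2/2)$ and the target is $2\exp(-a^2/4)$. For $a \le 1$ the target exceeds $1$, so there is nothing to prove. For $a > 1$ you need $(a-1)^2/2 \ge a^2/4 - \ln 2$, and indeed
\[
\frac{(a-1)^2}{2} - \frac{a^2}{4} + \ln 2 = \Big(\frac a2 - 1\Big)^2 + \ln 2 - \frac12 > 0
\]
for every $a$, since $\ln 2 > 1/2$. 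This single inequality also settles the subcase you left as ``verify it directly'' in your first bullet.

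So the second-moment bound $\EE\|S_n\|_H \le c\sqrt n$ plus McDiarmid's bounded-differences inequality already gives a complete, self-contained proof using only elementary tools. Citing Pinelis instead buys a sharper exponent, $n\epsilon^2/(2c^2)$ rather than $n\epsilon^2/(4c^2)$, with no case analysis. It also extends to martingale differences and to general $2$-smooth Banach spaces, at the cost of importing that heavier machinery.
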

For the proof, we refer the reader to \citet{yurinsky2006sums, pinelis1994optimum, pinelis1999correction}.

\begin{prop}[Tropp's concentration inequality, \citealp{rudi2013sample}, Theorem A.1]
    \label{Tropp-inequality}
Let $(Z_{i})_{1\leq i\le n}$ be independent copies of the random variable $Z$ with values in the space of bounded self-adjoint operators $\mathcal{S}(\mathcal{H})$ over a separable Hilbert space $\mathcal{H}$. Define $T:=\mathbb{E}[Z]$, and let there be $S\in\mathcal{S}(\mathcal{H})$ such that $\mathbb{E}[(Z-T)^{2}] \preceq S$, and a finite number $R$ such that $\|Z\|_{\op}\leq R$ almost surely. 
Define the quantities $\alpha:=\|S\|_{1}/\|S\|_{\op}$ and $\sigma^{2}:=\|S\|_{\op}$. Then, for $0 < \delta \leq \alpha$, it holds
\begin{equation}
    \mathbb{P}\left\{ \left\| \frac{1}{n}\sum_{i=1}^{n}Z_{i}-T \right\|_{\op} \leq \frac{\beta R}{n} + \sqrt{\frac{3\beta\sigma^{2}}{n}} \right\} \geq 1-\delta \,,
\end{equation}
where $\beta:=\frac{2}{3}\log\frac{4\alpha}{\delta}$.
\end{prop}
\Cref{Tropp-inequality} originally comes from \citet{tropp2012user}, but in its current form is a simple reproduction of \citet[Theorem A.1]{rudi2013sample}.
\section{Additional Experiment Information}\label{sec:exp-appendix}

\subsection{FEM Comparisons}

We describe here in more detail the Poisson PDE used for the FEM experiments.
Define a sequence of support points $\{s_i\}_{i=1}^S$ belonging to the disk in $\mathbb{R}^2$. Letting $\nu\in\{0.5, 1.5, 2.5, \infty\}$ denote the smoothness parameter, we define $f$ as
\begin{equation*}
    f(x) = \begin{cases}
        (1 - \lVert x \rVert^2) \sum_{i=1}^S K_\nu(x, s_i) & \text{if } \lVert x \rVert \leq 1 \\
        0 & \text{otherwise}
    \end{cases}
\end{equation*}
where $K_\nu$ is the Mat{\'e}rn kernel with parameter $\nu$ (note that $\nu = 0.5$ corresponds to the Laplacian kernel and $\nu=\infty$ to the Gaussian).
The Poisson PDE is then defined on a domain $\Omega = \{x: \mathbb{R}^2 \mid \lVert x \rVert < 1\}$ with boundary $\partial\Omega = \{x: \mathbb{R}^2 \mid \lVert x \rVert = 1\}$:
\begin{equation*}
    \begin{cases}
        \Delta u(x) = q(x) & x\in\Omega \\
        u(x) = h(x) & x \in \partial\Omega
    \end{cases}
\end{equation*}
with $q(x) = \Delta f(x)$ and $h(x) = f(x)$. The four Laplacians corresponding to $\nu = 0.5, 1.5, 2.5, \infty$ are shown in \cref{fig:smooth-funcs}.

\begin{figure}
    \centering
    \includegraphics[width=0.5\linewidth]{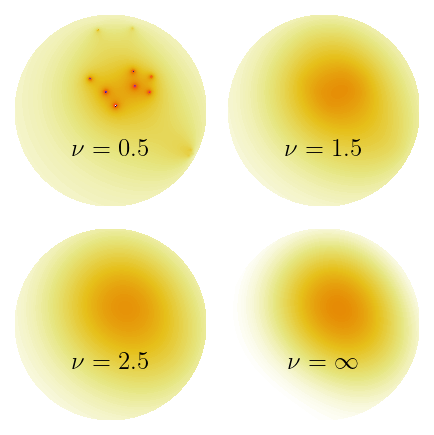}
    \caption{Four Mat{\'e}rn Laplacians with increasing smoothness.}
    \label{fig:smooth-funcs}
\end{figure}

\FloatBarrier

\bibliographystyle{plainnat}
\bibliography{biblio}

@article{chen2021solving,
  title={Solving and learning nonlinear PDEs with Gaussian processes},
  author={Chen, Yifan and Hosseini, Bamdad and Owhadi, Houman and Stuart, Andrew M},
  journal={Journal of Computational Physics},
  volume={447},
  pages={110668},
  year={2021},
  publisher={Elsevier}
}

@inproceedings{doumeche2024physics,
  title={Physics-informed machine learning as a kernel method},
  author={Doum{\`e}che, Nathan and Bach, Francis and Boyer, Claire and Biau, G{\'e}rard},
  booktitle={Proceedings of the Thirty Seventh Annual Conference on Learning Theory},
  year={2024}
}

@article{doumeche25convergence,
    author = {Nathan Doum{\`e}che and G{\'e}rard Biau and Claire Boyer},
    title = {On the convergence of {PINN}s},
    volume = {31},
    journal = {Bernoulli},
    number = {3},
    pages = {2127 -- 2151},
    year = {2025},
    doi = {10.3150/24-BEJ1799},
}

@article{batlle2025error,
  title={Error analysis of kernel/GP methods for nonlinear and parametric PDEs},
  author={Batlle, Pau and Chen, Yifan and Hosseini, Bamdad and Owhadi, Houman and Stuart, Andrew M},
  journal={Journal of Computational Physics},
  volume={520},
  pages={113488},
  year={2025},
  publisher={Elsevier}
}

@article{simon2018kernel,
  title={Kernel distribution embeddings: Universal kernels, characteristic kernels and kernel metrics on distributions},
  author={Simon-Gabriel, Carl-Johann and Sch{\"o}lkopf, Bernhard},
  journal={Journal of Machine Learning Research},
  volume={19},
  number={44},
  pages={1--29},
  year={2018}
}

@book{steinwart2008support,
author = {Steinwart, Ingo and Christmann, Andreas},
title = {Support Vector Machines},
year = {2008},
isbn = {0387772413},
publisher = {Springer Publishing Company, Incorporated},
edition = {1st}
}

@inproceedings{
li2021fourier,
title={Fourier Neural Operator for Parametric Partial Differential Equations},
author={Zongyi Li and Nikola Borislavov Kovachki and Kamyar Azizzadenesheli and Burigede liu and Kaushik Bhattacharya and Andrew Stuart and Anima Anandkumar},
booktitle={International Conference on Learning Representations},
year={2021},
url={https://openreview.net/forum?id=c8P9NQVtmnO}
}

@article{lu2021learning,
  title={Learning nonlinear operators via DeepONet based on the universal approximation theorem of operators},
  author={Lu, Lu and Jin, Pengzhan and Pang, Guofei and Zhang, Zhongqiang and Karniadakis, George Em},
  journal={Nature machine intelligence},
  volume={3},
  number={3},
  pages={218--229},
  year={2021},
  publisher={Nature Publishing Group UK London}
}

@book{lions2012non,
  title={Non-homogeneous boundary value problems and applications: Vol. 1},
  author={Lions, Jacques Louis and Magenes, Enrico},
  volume={181},
  year={2012},
  publisher={Springer Science \& Business Media}
}

@book{yurinsky2006sums,
  title={Sums and Gaussian vectors},
  author={Yurinsky, Vadim},
  year={2006},
  publisher={Springer}
}

@article{pinelis1994optimum,
  title={Optimum bounds for the distributions of martingales in Banach spaces},
  author={Pinelis, Iosif},
  journal={The Annals of Probability},
  pages={1679--1706},
  year={1994},
  publisher={JSTOR}
}

@article{pinelis1999correction,
  title={Correction:``Optimum bounds for the distributions of martingales in Banach spaces''[Ann. Probab. 22 (1994), no. 4, 1679--1706; MR 96b: 60010]},
  author={Pinelis, Iosif},
  journal={The Annals of Probability},
  volume={27},
  number={4},
  pages={2119--2119},
  year={1999},
  publisher={Institute of Mathematical Statistics}
}

@book{wendland2004scattered,
  title={Scattered data approximation},
  author={Wendland, Holger},
  volume={17},
  year={2004},
  publisher={Cambridge university press}
}

@article{narcowich2006escape,
    title={Sobolev Error Estimates and a {B}ernstein Inequality for Scattered Data Interpolation via Radial Basis Functions},
    author={Narcowich, Francis J. and Ward, Joseph D. and Wendland, Holger},
    journal={Constructive Approximation},
    volume={24},
    pages={175--186},
    issue=2,
    year=2006,
}

@article{czarnecki2017sobolev,
  title={Sobolev training for neural networks},
  author={Czarnecki, Wojciech M and Osindero, Simon and Jaderberg, Max and Swirszcz, Grzegorz and Pascanu, Razvan},
  journal={Advances in neural information processing systems},
  volume={30},
  year={2017}
}

@article{caponnetto2007optimal,
  title={Optimal rates for the regularized least-squares algorithm},
  author={Caponnetto, Andrea and De Vito, Ernesto},
  journal={Foundations of Computational Mathematics},
  volume={7},
  pages={331--368},
  year={2007},
  publisher={Springer}
}

@article{raissi2019physics,
  title={Physics-informed neural networks: A deep learning framework for solving forward and inverse problems involving nonlinear partial differential equations},
  author={Raissi, Maziar and Perdikaris, Paris and Karniadakis, George E},
  journal={Journal of Computational Physics},
  volume={378},
  pages={686--707},
  year={2019},
  publisher={Elsevier}
}

@misc{raissi2017physics,
  title={Physics informed deep learning (part i): Data-driven solutions of nonlinear partial differential equations},
  author={Raissi, Maziar and Perdikaris, Paris and Karniadakis, George Em},
  archivePrefix = {arXiv},
  eprint={1711.10561},
  year={2017}
}

@article{raissi2017gps,
  title={Machine learning of linear differential equations using Gaussian processes},
  author={Raissi, Maziar and Perdikaris, Paris and Karniadakis, George Em},
  journal={Journal of Computational Physics},
  volume={348},
  pages={683--693},
  year={2017},
  publisher={Elsevier}
}

@article{raissi2018numerical,
  title={Numerical Gaussian processes for time-dependent and nonlinear partial differential equations},
  author={Raissi, Maziar and Perdikaris, Paris and Karniadakis, George Em},
  journal={SIAM Journal on Scientific Computing},
  volume={40},
  number={1},
  pages={A172--A198},
  year={2018},
  publisher={SIAM}
}

@article{shin2020convergence,
  author       = {Shin, Yeonjong and Darbon, Jérôme and Karniadakis, George Em},
  title        = {On the Convergence of Physics Informed Neural Networks for Linear Second-Order Elliptic and Parabolic Type PDEs},
  doi          = {10.4208/cicp.oa-2020-0193},
  journal      = {Communications in Computational Physics},
  number       = {5},
  volume       = {28},
  year         = {2020},
}

@article{mishra2023estimates,
  title={Estimates on the generalization error of physics-informed neural networks for approximating PDEs},
  author={Mishra, Siddhartha and Molinaro, Roberto},
  journal={IMA Journal of Numerical Analysis},
  volume={43},
  number={1},
  pages={1--43},
  year={2023},
  publisher={Oxford University Press}
}

@article{de2024error,
  title={Error estimates for physics-informed neural networks approximating the Navier--Stokes equations},
  author={De Ryck, Tim and Jagtap, Ameya D and Mishra, Siddhartha},
  journal={IMA Journal of Numerical Analysis},
  volume={44},
  number={1},
  pages={83--119},
  year={2024},
  publisher={Oxford University Press}
}

@book{wahba1990spline,
  title={Spline models for observational data},
  author={Wahba, Grace},
  year={1990},
  publisher={SIAM}
}

@article{kimeldorf1971some,
  title={Some results on Tchebycheffian spline functions},
  author={Kimeldorf, George and Wahba, Grace},
  journal={Journal of mathematical analysis and applications},
  volume={33},
  number={1},
  pages={82--95},
  year={1971},
  publisher={Elsevier}
}

@inproceedings{fasshauer1996solving,
  title={Solving partial differential equations by collocation with radial basis functions},
  author={Fasshauer, Gregory E},
  booktitle={Proceedings of Chamonix},
  volume={1997},
  pages={1--8},
  year={1996},
}

@article{franke1998convergence,
  title={Convergence order estimates of meshless collocation methods using radial basis functions},
  author={Franke, Carsten and Schaback, Robert},
  journal={Advances in computational mathematics},
  volume={8},
  pages={381--399},
  year={1998},
  publisher={Springer}
}

@article{franke1998solving,
  title={Solving partial differential equations by collocation using radial basis functions},
  author={Franke, Carsten and Schaback, Robert},
  journal={Applied Mathematics and Computation},
  volume={93},
  number={1},
  pages={73--82},
  year={1998},
  publisher={Elsevier}
}

@article{micchelli2006universal,
  title={Universal Kernels.},
  author={Micchelli, Charles A and Xu, Yuesheng and Zhang, Haizhang},
  journal={Journal of Machine Learning Research},
  volume={7},
  number={12},
  year={2006}
}

@article{sriperumbudur2011universality,
  title={Universality, Characteristic Kernels and RKHS Embedding of Measures.},
  author={Sriperumbudur, Bharath K and Fukumizu, Kenji and Lanckriet, Gert RG},
  journal={Journal of Machine Learning Research},
  volume={12},
  number={7},
  year={2011}
}

@article{belkin2006manifold,
  title={Manifold regularization: A geometric framework for learning from labeled and unlabeled examples.},
  author={Belkin, Mikhail and Niyogi, Partha and Sindhwani, Vikas},
  journal={Journal of machine learning research},
  volume={7},
  number={11},
  year={2006}
}

@article{krishnapriyan2021characterizing,
  title={Characterizing possible failure modes in physics-informed neural networks},
  author={Krishnapriyan, Aditi S. and Gholami, Amir and Zhe, Shandian and Kirby, Robert and Mahoney, Michael W},
  journal={Advances in Neural Information Processing Systems},
  volume={34},
  year={2021}
}

@article{wang22pinns,
    title = {When and why PINNs fail to train: A neural tangent kernel perspective},
    author = {Sifan Wang and Xinling Yu and Paris Perdikaris},
    journal = {Journal of Computational Physics},
    volume = {449},
    year = {2022},
    issn = {0021-9991},
    doi = {https://doi.org/10.1016/j.jcp.2021.110768},
}

@article{dolfinx,
    author = {Baratta, Igor A. and Dean, Joseph P. and Dokken, Jørgen S. and Habera, Michal and Hale, Jack S. and Richardson, Chris N. and Rognes, Marie E. and Scroggs, Matthew W. and Sime, Nathan and Wells, Garth N.},
    doi = {10.5281/zenodo.10447666},
    journal = {preprint},
    title = {{DOLFINx: the next generation FEniCS problem solving environment}},
    year = {2023}
}

@article{fischer2020sobolev,
  title={Sobolev norm learning rates for regularized least-squares algorithms},
  author={Fischer, Simon and Steinwart, Ingo},
  journal={Journal of Machine Learning Research},
  volume={21},
  number={205},
  pages={1--38},
  year={2020}
}

@book{evans2010partial,
  title={Partial differential equations},
  author={Evans, Lawrence C},
  volume={19},
  year={2010},
  publisher={American mathematical society}
}

@article{fu2023forces,
    title={Forces are not Enough: Benchmark and Critical Evaluation for Machine Learning Force Fields with Molecular Simulations},
    author={Xiang Fu and Zhenghao Wu and Wujie Wang and Tian Xie and Sinan Keten and Rafael Gomez-Bombarelli and Tommi Jaakkola},
    journal={Transactions on Machine Learning Research},
    year={2023},
}

@misc{baptista_solving_2025,
	title = {Solving Roughly Forced Nonlinear {PDEs} via Misspecified Kernel Methods and Neural Networks},
	archivePrefix = {arXiv},
	author = {Baptista, Ricardo and Calvello, Edoardo and Darcy, Matthieu and Owhadi, Houman and Stuart, Andrew M. and Yang, Xianjin},
	year = {2025},
	eprint = {2501.17110},
}

@misc{tropp2012user,
  title={User-friendly tools for random matrices: An introduction},
  author={Tropp, Joel A},
  year={2012}
}

@article{rudi2013sample,
  title={On the sample complexity of subspace learning},
  author={Rudi, Alessandro and Canas, Guillermo D and Rosasco, Lorenzo},
  journal={Advances in Neural Information Processing Systems},
  volume={26},
  year={2013}
}

@article{hermite1878formule,
  title={Sur la formule d'interpolation de Lagrange},
  author={Hermite, M Ch and Borchardt, M},
  journal={Journal f{\"u}r die reine und angewandte Mathematik (Crelles Journal)},
  volume={1878},
  number={84},
  pages={70--79},
  year={1878},
  publisher={De Gruyter}
}

@article{birkhoff1906general,
  title={General mean value and remainder theorems with applications to mechanical differentiation and quadrature},
  author={Birkhoff, George David},
  journal={Transactions of the American Mathematical Society},
  volume={7},
  number={1},
  pages={107--136},
  year={1906},
  publisher={JSTOR}
}

@article{zongmin1992hermite,
  title={Hermite-Birkhoff interpolation of scattered data by radial basis functions},
  author={Zongmin, Wu},
  journal={Approximation Theory and its Applications},
  volume={8},
  number={2},
  pages={1--10},
  year={1992},
  publisher={Springer}
}

@article{kansa1990multiquadrics,
  title={Multiquadrics—A scattered data approximation scheme with applications to computational fluid-dynamics—II solutions to parabolic, hyperbolic and elliptic partial differential equations},
  author={Kansa, Edward J},
  journal={Computers \& mathematics with applications},
  volume={19},
  number={8-9},
  pages={147--161},
  year={1990},
  publisher={Elsevier}
}

@book{engl1996regularization,
  title={Regularization of inverse problems},
  author={Engl, Heinz Werner and Hanke, Martin and Neubauer, Andreas},
  volume={375},
  year={1996},
  publisher={Springer Science \& Business Media}
}

@article{hanke1992regularization,
  title={Regularization with differential operators: an iterative approach},
  author={Hanke, Martin},
  journal={Numerical functional analysis and optimization},
  volume={13},
  number={5-6},
  pages={523--540},
  year={1992},
  publisher={Taylor \& Francis}
}

@article{slepcev2019analysis,
  title={Analysis of p-Laplacian regularization in semisupervised learning},
  author={Slepcev, Dejan and Thorpe, Matthew},
  journal={SIAM Journal on Mathematical Analysis},
  volume={51},
  number={3},
  pages={2085--2120},
  year={2019},
  publisher={SIAM}
}

@inproceedings{zhou2005regularization,
  title={Regularization on discrete spaces},
  author={Zhou, Dengyong and Sch{\"o}lkopf, Bernhard},
  booktitle={Joint Pattern Recognition Symposium},
  pages={361--368},
  year={2005},
  organization={Springer}
}

@inproceedings{zhu2003semi,
  title={Semi-supervised learning using gaussian fields and harmonic functions},
  author={Zhu, Xiaojin and Ghahramani, Zoubin and Lafferty, John D},
  booktitle={Proceedings of the 20th International conference on Machine learning (ICML-03)},
  pages={912--919},
  year={2003}
}

@article{cai2021physics,
  title={Physics-informed neural networks (PINNs) for fluid mechanics: A review},
  author={Cai, Shengze and Mao, Zhiping and Wang, Zhicheng and Yin, Minglang and Karniadakis, George Em},
  journal={Acta Mechanica Sinica},
  volume={37},
  number={12},
  pages={1727--1738},
  year={2021},
  publisher={Springer}
}

@article{rasht2022physics,
  title={Physics-informed neural networks (PINNs) for wave propagation and full waveform inversions},
  author={Rasht-Behesht, Majid and Huber, Christian and Shukla, Khemraj and Karniadakis, George Em},
  journal={Journal of Geophysical Research: Solid Earth},
  volume={127},
  number={5},
  pages={e2021JB023120},
  year={2022},
  publisher={Wiley Online Library}
}

@article{sahli2020physics,
  title={Physics-informed neural networks for cardiac activation mapping},
  author={Sahli Costabal, Francisco and Yang, Yibo and Perdikaris, Paris and Hurtado, Daniel E and Kuhl, Ellen},
  journal={Frontiers in Physics},
  volume={8},
  pages={42},
  year={2020},
  publisher={Frontiers Media SA}
}

@article{rackauckas2020universal,
  title={Universal differential equations for scientific machine learning},
  author={Rackauckas, Christopher and Ma, Yingbo and Martensen, Julius and Warner, Collin and Zubov, Kirill and Supekar, Rohit and Skinner, Dominic and Ramadhan, Ali and Edelman, Alan},
  journal={arXiv preprint arXiv:2001.04385},
  year={2020}
}

@article{rudi2015less,
  title={Less is more: Nystr{\"o}m computational regularization},
  author={Rudi, Alessandro and Camoriano, Raffaello and Rosasco, Lorenzo},
  journal={Advances in neural information processing systems},
  volume={28},
  year={2015}
}

@article{rahimi2007random,
  title={Random features for large-scale kernel machines},
  author={Rahimi, Ali and Recht, Benjamin},
  journal={Advances in neural information processing systems},
  volume={20},
  year={2007}
}

@article{smola98splines,
title = {The connection between regularization operators and support vector kernels},
journal = {Neural Networks},
volume = {11},
number = {4},
year = {1998},
doi = {https://doi.org/10.1016/S0893-6080(98)00032-X},
author = {Alex J. Smola and Bernhard Schölkopf and Klaus-Robert Müller},
}

@article{poggio90splines,
author = {T. Poggio  and F. Girosi },
title = {Regularization Algorithms for Learning That Are Equivalent to Multilayer Networks},
journal = {Science},
volume = {247},
number = {4945},
year = {1990},
doi = {10.1126/science.247.4945.978}
}

@article{owhadi2015bayesian,
  title={Bayesian numerical homogenization},
  author={Owhadi, Houman},
  journal={Multiscale Modeling \& Simulation},
  volume={13},
  number={3},
  pages={812--828},
  year={2015},
  publisher={SIAM}
}

@article{arridge2019solving,
    title={Solving inverse problems using data-driven models},
    author={Arridge, Simon and Maass, Peter and {\"O}ktem, Ozan and Sch{\"o}nlieb, Carola-Bibiane},
    journal={Acta numerica},
    volume={28},
    year={2019},
    publisher={Cambridge University Press}
}

@article{shi2010hermite,
    title={Hermite learning with gradient data},
    author={Shi, Lei and Guo, Xin and Zhou, Ding-Xuan},
    journal={Journal of computational and applied mathematics},
    volume={233},
    number={11},
    pages={3046--3059},
    year={2010},
}

@article{cabannes2021overcoming,
    title={Overcoming the curse of dimensionality with Laplacian regularization in semi-supervised learning},
    author={Cabannes, Vivien and Pillaud-Vivien, Loucas and Bach, Francis and Rudi, Alessandro},
    journal={Advances in Neural Information Processing Systems},
    volume={34},
    pages={30439--30451},
    year={2021}
}

@article{zeinhofer2025unified,
    title={A unified framework for the error analysis of physics-informed neural networks},
    author={Zeinhofer, Marius and Masri, Rami and Mardal, Kent--Andr{\'e}},
    journal={IMA Journal of Numerical Analysis},
    volume={45},
    number={5},
    year={2025},
}

@article{azzimonti2015blood,
  title={Blood flow velocity field estimation via spatial regression with PDE penalization},
  author={Azzimonti, Laura and Sangalli, Laura M and Secchi, Piercesare and Domanin, Maurizio and Nobile, Fabio},
  journal={Journal of the American Statistical Association},
  volume={110},
  number={511},
  pages={1057--1071},
  year={2015},
  publisher={Taylor \& Francis}
}

@article{sangalli2021spatial,
  title={Spatial regression with partial differential equation regularisation},
  author={Sangalli, Laura M},
  journal={International Statistical Review},
  volume={89},
  number={3},
  pages={505--531},
  year={2021},
  publisher={Wiley Online Library}
}

@article{arnone2022some,
  title={Some first results on the consistency of spatial regression with partial differential equation regularization},
  author={Arnone, Eleonora and Kneip, Alois and Nobile, Fabio and Sangalli, Laura M},
  journal={Statistica Sinica},
  volume={32},
  number={1},
  pages={209--238},
  year={2022},
  publisher={JSTOR}
}

@article{zhou2008derivative,
  title={Derivative reproducing properties for kernel methods in learning theory},
  author={Zhou, Ding-Xuan},
  journal={Journal of computational and Applied Mathematics},
  volume={220},
  number={1-2},
  pages={456--463},
  year={2008},
  publisher={Elsevier}
}

@article{cuomo2022scientific,
  title={Scientific machine learning through physics--informed neural networks: Where we are and what’s next},
  author={Cuomo, Salvatore and Di Cola, Vincenzo Schiano and Giampaolo, Fabio and Rozza, Gianluigi and Raissi, Maziar and Piccialli, Francesco},
  journal={Journal of Scientific Computing},
  volume={92},
  number={3},
  year={2022},
}

@article{karniadakis2021physics,
  title={Physics-informed machine learning},
  author={Karniadakis, George Em and Kevrekidis, Ioannis G and Lu, Lu and Perdikaris, Paris and Wang, Sifan and Yang, Liu},
  journal={Nature Reviews Physics},
  volume={3},
  number={6},
  year={2021}
}

@article{rathore2024challenges,
  title={Challenges in training PINNs: A loss landscape perspective},
  author={Rathore, Pratik and Lei, Weimu and Frangella, Zachary and Lu, Lu and Udell, Madeleine},
  journal={arXiv preprint arXiv:2402.01868},
  year={2024}
}

@article{wang2021understanding,
  title={Understanding and mitigating gradient flow pathologies in physics-informed neural networks},
  author={Wang, Sifan and Teng, Yujun and Perdikaris, Paris},
  journal={SIAM Journal on Scientific Computing},
  volume={43},
  number={5},
  pages={A3055--A3081},
  year={2021},
  publisher={SIAM}
}

@article{doumeche2025fast,
  title={Fast kernel methods: Sobolev, physics-informed, and additive models},
  author={Doum{\`e}che, Nathan and Bach, Francis and Biau, G{\'e}rard and Boyer, Claire},
  journal={arXiv preprint arXiv:2509.02649},
  year={2025}
}

@article{doumeche2025physics,
  title={Physics-informed kernel learning},
  author={Doum{\`e}che, Nathan and Bach, Francis and Biau, G{\'e}rard and Boyer, Claire},
  journal={Journal of Machine Learning Research},
  volume={26},
  number={124},
  pages={1--39},
  year={2025}
}

@article{chen2025sparse,
  title={Sparse Cholesky factorization for solving nonlinear PDEs via Gaussian processes},
  author={Chen, Yifan and Owhadi, Houman and Sch{\"a}fer, Florian},
  journal={Mathematics of Computation},
  volume={94},
  number={353},
  pages={1235--1280},
  year={2025}
}

@article{eriksson2018scaling,
  title={Scaling Gaussian process regression with derivatives},
  author={Eriksson, David and Dong, Kun and Lee, Eric and Bindel, David and Wilson, Andrew G},
  journal={Advances in neural information processing systems},
  volume={31},
  year={2018}
}

@article{padidar2021scaling,
  title={Scaling gaussian processes with derivative information using variational inference},
  author={Padidar, Misha and Zhu, Xinran and Huang, Leo and Gardner, Jacob and Bindel, David},
  journal={Advances in Neural Information Processing Systems},
  volume={34},
  pages={6442--6453},
  year={2021}
}

@inproceedings{de2021high,
  title={High-dimensional Gaussian process inference with derivatives},
  author={De Roos, Filip and Gessner, Alexandra and Hennig, Philipp},
  booktitle={International Conference on Machine Learning},
  pages={2535--2545},
  year={2021},
  organization={PMLR}
}

@article{quarteroni25review,
    author = {Quarteroni, Alfio and Gervasio, Paola and Regazzoni, Francesco},
    title = {Combining physics-based and data-driven models: advancing the frontiers of research with scientific machine learning},
    journal = {Mathematical Models and Methods in Applied Sciences},
    volume = {35},
    number = {04},
    pages = {905-1071},
    year = {2025},
    doi = {10.1142/S0218202525500125},
}

@inproceedings{steinwart2009optimal,
  title={Optimal Rates for Regularized Least Squares Regression.},
  author={Steinwart, Ingo and Hush, Don R and Scovel, Clint},
  booktitle={COLT},
  pages={79--93},
  year={2009}
}

@inproceedings{zhang2023optimality,
  title={On the optimality of misspecified kernel ridge regression},
  author={Zhang, Haobo and Li, Yicheng and Lu, Weihao and Lin, Qian},
  booktitle={International Conference on Machine Learning},
  pages={41331--41353},
  year={2023},
  organization={PMLR}
}

@article{lin2020optimal,
  title={Optimal rates for spectral algorithms with least-squares regression over Hilbert spaces},
  author={Lin, Junhong and Rudi, Alessandro and Rosasco, Lorenzo and Cevher, Volkan},
  journal={Applied and Computational Harmonic Analysis},
  volume={48},
  number={3},
  pages={868--890},
  year={2020},
  publisher={Elsevier}
}

@article{wynne2021convergence,
  title={Convergence guarantees for Gaussian process means with misspecified likelihoods and smoothness},
  author={Wynne, George and Briol, Fran{\c{c}}ois-Xavier and Girolami, Mark},
  journal={Journal of Machine Learning Research},
  volume={22},
  number={123},
  pages={1--40},
  year={2021}
}

@article{wang2022gaussian,
  title={Gaussian process regression: Optimality, robustness, and relationship with kernel ridge regression},
  author={Wang, Wenjia and Jing, Bing-Yi},
  journal={Journal of Machine Learning Research},
  volume={23},
  number={193},
  pages={1--67},
  year={2022}
}

@article{schrader2012extended,
  title={An extended error analysis for a meshfree discretization method of Darcy's problem},
  author={Schr{\"a}der, Daniela and Wendland, Holger},
  journal={SIAM Journal on Numerical Analysis},
  volume={50},
  number={2},
  pages={838--857},
  year={2012},
  publisher={SIAM}
}

@article{de2005learning,
  title={Learning from Examples as an Inverse Problem.},
  author={De Vito, Ernesto and Rosasco, Lorenzo and Caponnetto, Andrea and De Giovannini, Umberto and Odone, Francesca and Bartlett, Peter},
  journal={Journal of Machine Learning Research},
  volume={6},
  number={5},
  year={2005}
}

@book{stein1970singular,
  title={Singular integrals and differentiability properties of functions},
  author={Stein, Elias M},
  year={1970},
  publisher={Princeton university press}
}

@article{tocano25pikans,
	author = {Toscano, Juan Diego and Oommen, Vivek and Varghese, Alan John and Zou, Zongren and Ahmadi Daryakenari, Nazanin and Wu, Chenxi and Karniadakis, George Em},
	doi = {10.1007/s44379-025-00015-1},
	journal = {Machine Learning for Computational Science and Engineering},
	number = {1},
	title = {From PINNs to PIKANs: recent advances in physics-informed machine learning},
	volume = {1},
	year = {2025},
}

@inproceedings{zhao2023pinnsformer,
  title={Pinnsformer: A transformer-based framework for physics-informed neural networks},
  author={Zhao, Zhiyuan and Ding, Xueying and Prakash, B Aditya},
  booktitle={International Conference on Learning Representations (ICLR)},
  year={2024}
}

@inproceedings{rahaman2019spectral,
  title={On the spectral bias of neural networks},
  author={Rahaman, Nasim and Baratin, Aristide and Arpit, Devansh and Draxler, Felix and Lin, Min and Hamprecht, Fred and Bengio, Yoshua and Courville, Aaron},
  booktitle={International conference on machine learning},
  year={2019},
  organization={PMLR}
}

@article{kashinath2021physics,
  title={Physics-informed machine learning: case studies for weather and climate modelling},
  author={Kashinath, Karthik and Mustafa, Mustafa and Albert, Adrian and Wu, Jean-Luc and Jiang, C and Esmaeilzadeh, Soheil and Azizzadenesheli, Kamyar and Wang, R and Chattopadhyay, Ashesh and Singh, Aakanksha and others},
  journal={Philosophical Transactions of the Royal Society A},
  volume={379},
  number={2194},
  pages={20200093},
  year={2021},
  publisher={The Royal Society Publishing}
}

@article{kissas2020machine,
  title={Machine learning in cardiovascular flows modeling: Predicting arterial blood pressure from non-invasive 4D flow MRI data using physics-informed neural networks},
  author={Kissas, Georgios and Yang, Yibo and Hwuang, Eileen and Witschey, Walter R and Detre, John A and Perdikaris, Paris},
  journal={Computer methods in applied mechanics and engineering},
  volume={358},
  pages={112623},
  year={2020},
  publisher={Elsevier}
}

@article{kovacs2022conditional,
  title={Conditional physics informed neural networks},
  author={Kovacs, Alexander and Exl, Lukas and Kornell, Alexander and Fischbacher, Johann and Hovorka, Markus and Gusenbauer, Markus and Breth, Leoni and Oezelt, Harald and Yano, Masao and Sakuma, Noritsugu and others},
  journal={Communications in Nonlinear Science and Numerical Simulation},
  volume={104},
  pages={106041},
  year={2022},
  publisher={Elsevier}
}

@article{de2005model,
  title={Model selection for regularized least-squares algorithm in learning theory},
  author={De Vito, Ernesto and Caponnetto, Andrea and Rosasco, Lorenzo},
  journal={Foundations of Computational Mathematics},
  volume={5},
  number={1},
  pages={59--85},
  year={2005},
  publisher={Springer}
}

@article{blanchard2018optimal,
  title={Optimal rates for regularization of statistical inverse learning problems},
  author={Blanchard, Gilles and M{\"u}cke, Nicole},
  journal={Foundations of Computational Mathematics},
  volume={18},
  number={4},
  pages={971--1013},
  year={2018},
  publisher={Springer}
}

@article{smale2007learning,
  title={Learning theory estimates via integral operators and their approximations},
  author={Smale, Steve and Zhou, Ding-Xuan},
  journal={Constructive approximation},
  volume={26},
  number={2},
  pages={153--172},
  year={2007},
  publisher={Springer}
}

\end{document}